\definecolor{Amaranth}{rgb}{0.9, 0.17, 0.31}
\def\eqref#1{equation~\ref{#1}}
\def\1{\mathbbm{1}}
\def\vmu{{\bm{\mu}}}
\def\vx{{\bm{x}}}
\def\eva{{a}}
\def\mX{{\bm{X}}}
\def\mSigma{{\bm{\Sigma}}}
\DeclareMathAlphabet{\mathsfit}{\encodingdefault}{\sfdefault}{m}{sl}
\SetMathAlphabet{\mathsfit}{bold}{\encodingdefault}{\sfdefault}{bx}{n}
\newcommand{\R}{\mathbb{R}}
\newtheorem{definition}{Definition}
\newtheorem{theorem}{Theorem}
\newtheorem{lemma}{Lemma}
\newtheorem{proposition}{Proposition}
\newtheorem{corollary}{Corollary}
\newtheorem{lemma*}{Lemma}
\newcommand{\X}{\mathcal{X}}
\newcommand{\Y}{\mathcal{Y}}
\newcommand{\cN}{\mathcal{N}}
\newcommand{\cK}{\mathcal{K}}
\newcommand{\cH}{\mathcal{H}}
\newcommand{\cM}{\mathcal{M}}
\newcommand{\cZ}{\mathcal{Z}}
\newcommand{\cS}{\mathcal{S}}
\def\cC{\mathcal{C}}
\newcommand{\NN}{\mathbb{N}}
\newcommand{\RR}{\mathbb{R}}
\newcommand{\OT}{\mathbf{OT}}
\newcommand{\TD}{d_{\rm IDM}}
\newcommand{\PD}{\mathbf{PD}}
\newcommand{\TMD}{\delta_{\rm DIDM}}
\newcommand{\TPD}{\mathbf{TPD}}
\newcommand{\PP}{\mathbf{P}}
\newcommand{\GS}{\mathcal{WL}}
\newcommand{\f}{\mathbf{f}}
\newcommand{\g}{\mathbf{g}}
\newcommand{\A}{\mathcal{A}}
\newcommand{\hh}{\mathfrak{h}}
\newcommand{\ff}{\mathfrak{f}}
\newcommand{\gf}{\mathfrak{g}}
\newcommand{\HH}{\mathfrak{H}}
\newcommand{\FF}{\mathfrak{F}}
\newcommand{\GF}{\mathfrak{G}}
\newcommand{\hhp}{\mathfrak{h}(\varphi)}
\newcommand{\ffp}{\mathfrak{f}(\varphi)}
\newcommand{\gfp}{\mathfrak{g}(\varphi)}
\newcommand{\af}{\hat{\mathfrak{f}}}
\newcommand{\ag}{\hat{\mathfrak{g}}}
\newcommand{\AF}{\hat{\mathfrak{F}}}
\newcommand{\AG}{\hat{\mathfrak{G}}}
\newcommand{\abs}[1]{\left|#1\right|}
\newcommand{\norm}[1]{\left\|#1\right\|}
\newcounter{RonCounter}
\newcounter{LeviCounter}
\title{Generalization, Expressivity, and Universality of Graph Neural Networks on Attributed Graphs}
\author{Levi Rauchwerger$^1$, Stefanie Jegelka$^{2,3}$, Ron Levie$^1$\\
$^1$Technion – IIT, Faculty of Mathematics, $^2$MIT, Dept of EECS and CSAIL,\\ $^3$TUM, School of CIT, MCML, MDSI 
\\
\texttt{levi.r@campus.technion.ac.il}, \quad\texttt{stefanie.jegelka@tum.de}, \\
\texttt{levieron@technion.ac.il}
}
\date{ }
\begin{document}
\maketitle
\vspace{-0.3cm}
\begin{abstract}
\vspace{-0.3cm}
We analyze the universality and generalization of graph neural networks (GNNs) on attributed graphs, i.e., with node attributes. To this end, we propose pseudometrics over the space of all attributed graphs that describe the fine-grained expressivity of GNNs. Namely, GNNs are both Lipschitz continuous with respect to our pseudometrics and can separate attributed graphs that are distant in the metric. Moreover, we prove that the space of all attributed graphs is relatively compact with respect to our metrics. Based on these properties, we prove a universal approximation theorem for GNNs and generalization bounds for GNNs on any data distribution of attributed graphs. The proposed metrics compute the similarity between the structures of attributed  graphs via a hierarchical optimal transport between computation trees. Our work extends and unites previous approaches which either derived theory only for graphs with no attributes, derived compact metrics under which GNNs are continuous but without separation power, or derived metrics under which GNNs are continuous and separate points but the space of graphs is not relatively compact, which prevents universal approximation and generalization analysis.
\vspace{-0.29cm}
\end{abstract}
\vspace{-0.3cm}
\section{Introduction} 
\vspace{-0.3cm}
Graph neural networks (GNNs) have become a widely used tool in science and industry due to  their ability to capture complex relationships  in graph-structured data. This makes them particularly useful \citep{zhou2018graph} in domains such as computational biology \citep{STOKES2020688,bio02}, molecular chemistry \citep{wang2022graph}, network analysis \citep{yang2023ptgb}, recommender systems \citep{fan2019graph}, weather forecasting \citep{keisler2022forecasting} and learnable optimization \citep{qian2023exploring, cappart2023combinatorial}. As a result, there has been substantial interest in understanding theoretical properties of GNNs, such as expressivity \citep{xu2018powerful}, stability \citep{ruiz2021graph} or robustness \citep{ruiz2021graph}, and generalization \citep{verma2019stability, yehudai2020local,NEURIPS2020_dab49080, Li2022GeneralizationGO, pmlr-v202-tang23f, signal23, maskey2022generalization, maskey2024generalizationboundsmessagepassing}.

Initial works analyzing expressivity of GNNs focused on the Weisfeiler-Leman (WL) graph isomorphism test as a criterion to distinguish graphs \citep{xu2018powerful, comptreeImportant1, zhang2023complete, zhang2024expressive}, others used criteria such as subgraph or homomorphism counts or biconnectivity \citep{zhang2024beyond, zhang2023rethinking,chen2020can,pmlr-v206-tahmasebi23a}. However, the WL test merely considers distinguishability of graphs, while analyses of robustness or generalization need a metric, i.e., a quantification of similarity. Usually, for graphs, this is a pseudometric, since GNNs typically cannot distinguish all graphs. 

In this paper, we define a pseudometric for attributed graphs which is highly related to the type of computation message passing GNNs (MPNNs) perform. Our  construction enables a unified analysis of expressivity, universal approximation and generalization of MPNNs on graphs with node attributes. This work extends and unifies several prior approaches and, to the best of our knowledge, this is the first work to enable this analysis in such a general setting.
Both generalization and expressivity analysis rely on a few prerequisites that we need to attain via an appropriate (pseudo-)metric: one must identify the finest topology in which  (1) MPNNs \emph{separate points}, i.e., for any two different points in space, there exists an MPNN that can distinguish between them; (2) MPNNs are \emph{Lipschitz continuous}; (3) the space of inputs, i.e., attributed graphs, is a \emph{compact space}. 

Given the interest in understanding GNN robustness and generalization, a few recent works studied pseudometrics on graphs. Most of them reflect the computation structure of MPNNs in defining the distance, essentially aiming to view graphs from the viewpoint of the GNN. Message passing, when unrolled, leads to a tree-structured \emph{computation tree} rooted at each node for computing the feature of that node. The pseudometrics then define distances between computation trees, often via optimal transport (Wasserstein distance). \cite{chen2022weisfeiler, pmlr-v221-chen23a} prove that MPNNs separate points and are Lipschitz continuous over the Weisfeiler-Lehman (WL) distance between hierarchies of probability measures. Since the space of graphs is not compact under their metric, they achieve universal approximation by limiting the analysis to an arbitrary compact subspace.
To quantify stability and domain transfer of GNNs, \cite{tree22} define the \emph{Tree Mover's Distance} between finite attributed graphs. However, the above pseudometrics on finite graphs do not yield the desired compactness, and hence no universal approximation over the entire space of attributed graphs. Moreover, a robustness-type generalization theorem over the entire space, which depends on the space having a finite covering number, is not attainable. To solve this, we focus on graph limit theories in which the space of graphs is completed to a compact space, i.e., graphon theory. Inspired by \cite{chen2022weisfeiler}, \cite{expr23} took the first steps towards solving the aforementioned problem, by extending the expressivity analysis to graphons and using \emph{iterated degree measures} \citep{frac21} to represent an analog of computation trees and the 1-WL test on infinite objects. While this enables proofs that MPNNs separate points and are Lipschitz over a compact space, and hence  have universal approximation, their pseudometric is restricted to graphs without attributes. In contrast, \cite{signal23} defined a limit object of attributed graphs, i.e., \emph{graphon-signals}. Their pseudometric, an extension of the \emph{cut distance}, a common metric between attributed graphs, allows to analyze generalization via compactness and Lipschitz continuity, but is too fine to allow MPNNs to separate points, so does not allow universal approximation. The position paper \citep{morris2024future} identifies these limitations, posing them as open problems. For additional related work, see \cref{appendix:related_works}.

In this work, we close these gaps via a unified approach that allows for an analysis of expressivity, universal approximation, and generalization. Inspired by prior works, we too base our pseudometrics on Wasserstein distance (or Prokhorov metric) between distributions of computation tree analogs. To accommodate attributed graphs and graphons, we extend the theory of iterated degree measures -- a hierarchy of measures that reflects a computation tree structure -- to graphon-signals, and then define an appropriate extension of MPNNs and appropriate distance between our continuous analogs of computation trees. We then prove that our pseudometric leads to a topology with the three desiderata above: (1) MPNNs separate points; (2) MPNNs are Lipschitz continuous; and (3) the input space of attributed graphons is compact. This enables us to invoke the Stone-Weierstrass theorem to show universal approximation for continuous functions on attributed graphons, and hence, graphs. Compactness and Lipschitzness enable a uniform Monte Carlo estimate to compute a generalization bound for MPNNs. Our generalization bound makes no distributional assumptions on the data and number of parameters of the MPNN. Empirically, our pseudometric correlates with output perturbations of the MPNN, allowing to judge stability.

\textbf{Contributions.} We propose the first metric for attributed graphs under which the space of attributed graphs is compact and MPNNs are Lipschitz continuous and separate points. Our construction leads to the first theory of MPNNs that unifies expressivity, universality, and generalization on any data distribution of attributed graphs. In detail:
\vspace{-0.3cm}
\begin{itemize} 
    \item  We show a \emph{fine-grained} metric version of the separation power of MPNNs, extending the results of \cite{expr23} to attributed graphs: two graph-signals are close in our metric if and only if the outputs of all MPNNs on the two graphs are close. Hence, the geometry of graphs (with respect to our metric) is equivalent to the geometry of the graphs' representations via MPNNs (with respect to the Euclidean metric).
    \item We prove that the space of attributed graphs with our metric is compact and MPNNs are Lipschitz continuous (and separate points). This leads to two theoretical applications: (1) a universal approximation theorem, i.e., MPNNs can approximate any continuous function over the space of attributed graphs; (2) a generalization bound for MPNNs, akin to robustness bounds \citep{Xu2012}, requiring no assumptions on the data distribution or the number of GNN weights.
\end{itemize}
\vspace{-0.35cm}
\section{Background} 
\label{section:background}
\vspace{-0.2cm}
 We begin with some background and notation, for additional background and fundamental concepts in topology see \cref{AdditionalBackground}. An index is available in \cref{notatinos}.

\textbf{Basic Notation.}\label{subsection:notations}  Throughout this text, $\lambda$ denotes the \emph{Lebesgue measure} on $[0,1]$, and we consider measurability with respect to the Borel $\sigma$-algebra. For any metric space $\mathcal{X}$, we denote by $\mathcal{B}(\mathcal{X})$ its \emph{standard Borel $\sigma$-algebra}. Given a measure $\mu$ on $\mathcal{X}$, we define its \emph{total mass} as $\|\mu\| := \mu(\mathcal{X})$. For a standard Borel space $(\mathcal{Y}, \mathcal{B}(\mathcal{Y}))$ and a measurable map $f : \mathcal{X} \to \mathcal{Y}$, we define the \emph{push-forward} $f_*\mu$ of $\mu$ via $f$ as $f_*\mu(\A) := \mu(f^{-1}(\A))$ for any $\A \in \mathcal{B}(\mathcal{Y})$. \emph{Inequality between two measures} $\mu\leq\nu$ on some space $\X$ means that for any set $\A\subseteq\X$, it holds that $\mu(\A)\leq\nu(\A)$. Given a vector $\vec{x}=(x_\alpha)_{\alpha\in\Lambda}$, where $\Lambda$ can be any countable set, we denote by $x_{\alpha_0}$ and $x(\alpha_0)$ the element at index $\alpha_0\in\Lambda$ of $\vec{x}$. For a finite set $\A$, $|\A|$ is the cardinality. For $K\in\NN$, we denote $[K]=\{0, 1,\ldots,K\}$.

In our notation, a function is denoted by $f : \A \to \cC$. When $f$ is evaluated at a point $x \in \A$  we write $f(x)$ or $f_x$. We may also write $f(-)$ or $f_-$, which simply means $f$. We define $\|f\|_{\infty} := \sup_{x\in[0,1]} |f(x)|$. The \emph{covering number} of a metric space $(\X, d)$ is the smallest number of open balls of radius \(\epsilon\) needed to cover $\X$. The notation $\cK$ represents any \emph{compact space}, i.e., a topological space $\X$ in which every cover that consists only of open sets of $\X$ has a finite subcover. Note that compact spaces always have finite covering number. We consider throughout the paper the following fixed compact space: $\mathbb{B}^{d}_r:= \{x\in \mathbb{R}^d : \|x\|_2\leq r \}\subset \mathbb{R}^d$\label{page:notation} for a fixed $r>0$. 

\textbf{The weak$^*$ Topology.}\label{subsection:theWeakTopology} Let $\mathscr{M}_{\leq 1}(\mathcal{X})$ and $\mathscr{P}(\mathcal{X})$ denote the space of all nonnegative Borel measures with total mass at most one, and the space of all Borel probability measures on $\mathcal{X}$, respectively. We use $C_b(\mathcal{X})$ to denote the set of all bounded continuous real-valued functions on $\mathcal{X}$. We endow $\mathscr{M}_{\leq 1}(\mathcal{X})$ and $\mathscr{P}(\mathcal{X})$ with the topology generated by the maps $\mu \mapsto \int_\mathcal{X} f d\mu$ for $f \in C_b(\mathcal{X})$, called the weak$^*$ topology in functional analysis \citep[Section 17.E]{setTheory95}, \citep[Chapter 8]{bogachev2007measure}. Under this topology, both spaces are standard Borel spaces, and if $\cK$ is a compact metric space, then $\mathscr{M}_{\leq 1}(\cK)$, $\mathscr{P}(\cK)$ are compact metrizable \citep[Theorem 17.22]{setTheory95}. See \cref{appendix:compactness} for more details.
Under the weak$^*$ topology, for a sequence of measures $(\mu_i)_i$ and a measure $\mu$, we have convergence $\mu_i \to \mu$ if and only if $\int_\mathcal{X} f d\mu_i \to \int_\mathcal{X} f d\mu$ for every $f \in C_b(\mathcal{X})$. Similarly, for measures $\mu$ and $\nu$, we have equality $\mu = \nu$ if and only if $\int_\mathcal{X} f d\mu = \int_\mathcal{X} f d\nu$ for every $f \in C_b(\mathcal{X})$.

\textbf{Optimal Transport.}\label{notation:OT} We will use Optimal Transport to construct a metric between graphs and between graph limits. \emph{Unbalanced Optimal Transport} \citep{OT23}, also called \emph{Unbalanced Earth Mover's Distance} and \emph{Unbalanced Wasserstein Distance}, is a distance function defined by the minimal transportation cost between two distributions. The transport is described by a \emph{coupling} $\gamma$  between two measures $\mu$, $\nu$ on measure spaces $\X$, $\Y$, respectively, i.e., a nonnegative joint measure on $\X \times \Y$ such that $(p_\X)_*\gamma=\mu$, $(p_\Y)_*\gamma\leq\nu$, given that $\norm{\mu}\leq\norm{\nu}$. Here, $p_\X$ and $p_\Y$ are the projections from $\X \times \Y$ to $\X$ and $\Y$ , i.e., to the first and the second component, respectively,  $(p_\X)_*\gamma(A)= \gamma(A\times \Y)$ and $(p_\Y)_*\gamma(B)= \gamma(\X\times B)$.

\begin{definition}[Unbalanced Optimal Transport/Wasserstein Distance]\label{definition:OT}
Let $(\X,d)$ be a metric Polish space. The \emph{Unbalanced Earth Mover's Distance} between two measures $\mu,\nu$ $\in \mathscr{M}_{\leq1}(\X,d)$ is
\begin{equation*}
    \OT_d(\mu, \nu) = \underset{\gamma \in \Gamma(\mu,\nu)}{\mathrm{inf}}\left( \int_{\X \times \X} d(x,y) \text{d} \gamma(x,y) \right) +\abs{\norm{\mu}-\norm{\nu}},
\end{equation*}
where $\Gamma(\mu,\nu)$ is the set of all couplings of $\mu$ and $\nu$. 
\end{definition}
An intuitive way to think about such couplings is as transportation plans from one distribution of points in a space to another, where the cost is given by the overall travel distance.

\textbf{Product Metric.} A \emph{product metric} $d$ is a metric on the Cartesian product of finitely many metric spaces 
$ (\X_1, d_{X_1}), \ldots , (\X_n, d_{X_n}) $ which metrizes the product topology (see \cref{Metric Spaces and Topology} and \cref{Product Topology}). Here we use product metrics that are defined as the $\ell_1$-norm of the vector of distances measured in $ n $ subspaces: 
$
d((x_1, \ldots , x_n), (y_1, \ldots , y_n)) := \| (d_{X_1}(x_1, y_1), \ldots , d_{X_n}(x_n, y_n)) \| _1$.

\textbf{Graph- and Graphon-Signals.} 
A graph $G=(V,E)$ consists of a set of nodes (vertices) $V=V(G)$ connected by edges  $E=E(G)\subseteq V\times V$. We denote by $\cN(v)$ the set of neighbors of  $v\in V$. A \emph{graph-signal} (alternatively \emph{attributed graph}) $(G,\f)$ is a graph $G$ with node set $V=\{1,\ldots,N\}$, and a signal $\f=(\f_j)_{j=1}^N\in \prod^{N}_{j=1}\mathbb{B}^{d}_r$ that assigns the value $\f_j\in\mathbb{B}^{d}_r$ to each node $j\in\{1,\ldots,N\}$.

A graphon \citep{lovasz2012large} may be viewed as a generalization of a graph, where instead of discrete sets of nodes and edges, there are an infinite sets indexed by the sets $V(W):=[0,1]$ for nodes and  $E(W):=V(W)^2=[0,1]^2$ for edges. A graphon is defined as a measurable symmetric function $W:E(W)\rightarrow [0,1]$, i.e., $W(x,y)=W(y,x)$. Each value $W(x,y)$ describes the probability or intensity of a connection between points \( x \) and \( y \). The graphon is used to study limit behavior of large graphs \citep{lovasz2012large}.  \label{notations:graphon-signal}
A \emph{graphon-signal} \cite{signal23, signal25} is a pair $(W,f)$ where $W$ is a graphon and $f : V(W) \mapsto \mathbb{B}^{d}_r$ is  a measurable function with respect to the Borel $\sigma$-algebra $\mathcal{B}(V(W))$. Note that any graph/graphon without a signal can be seen as a graph/graphon-signal by simply setting the signal to the constant map, i.e., $\forall x \in V(W): f(x) = c$. We denote by $\GS^d_r$ \emph{the space of all graphon-signals}, with signals $f : V(W) \mapsto \mathbb{B}^{d}_r$.

Any graph-signal can be identified with a corresponding graphon-signal as follows. Let $(G,\f)$ be a graph-signal with $\{1,\ldots,N\}$, as its a node set and $\mathbf{A}=\{a_{i,j}\}_{i,j\in\{1,\ldots,N\}}$, as its adjacency matrix.
Let $\{I_k\}_{k=1}^{N}$ with $I_k=[(k-1)/N,k/N)$ be the equipartition of  $[0,1]$ into $N$ intervals. The graphon-signal $(W,f)_{(G,\f)} = (W_G,f_{\f})$ induced by $(G,\f)$ is defined by
$W_{G}(x,y)=\sum_{i,j=1}^{N} a_{ij} \mathds{1}_{I_i}(x) \mathds{1}_{I_j}(y)$ and $f_{\f}(z)=\sum_{i=1}^{N} f_i \mathds{1}_{I_i}(z)$, where $\mathds{1}_{I_i}$ is the indicator function of the set $I_i\subset[0,1]$. We write $(W,f)_{(G,\f)} = (W_G,f_{\f})$ and identify any graph-signal with its induced graphon-signal.

\textbf{Message Passing Neural Networks.} \emph{Message Passing Neural Networks (MPNNs)} \citep{gilmer2017neural} are a class of neural networks designed to process graph-structured data, where nodes may have attributes. Via a message passing process, MPNNs iteratively update each node's features by aggregating (processed) features from its neighbors. 
Various aggregation methods exist, e.g., summation, averaging, or coordinate-wise maximum. Our focus  is on normalized sum aggregation, which in practice achieves comparable performance to standard sum aggregation \citep{signal23}. 

\textbf{Computation Trees.} Computation trees capture and characterize local structure of graphs by describing the data propagation through neighboring nodes via MPNNs' successive layers \citep{comptreeImportant1,comptreeImportant2,comptreeImportant3,comptreeImportant4}. Since graphons can be seen as graphs with uncountable number of nodes, the concept of computation trees can be extended in a natural way to graphons, by recursively defining computation trees as objects composed of a root node and a distribution of sub-trees induced by the node adjacency of the graphon. The resulting hierarchy of probability measures  \citep{chen2022weisfeiler, expr23} connects to iterated degree measures. These easily integrate with MPNNs and allow us to identify the finest topology in which MPNNs separate points, which is needed to prove a universal approximation theorem for graphons.

\textbf{Iterated Degree Measures and the $1$-WL Test for Graphons.} \cite{frac21} define iterated degree measures (IDMs) to generalize the $1$-Weisfeiler-Leman graph isomorphism test (1-WL) (\cref{Weisfeiler Leman Graph Isomorphism Test}) and its characterizations to graphons. The 1-WL test performs message passing to uniquely encode (color) the type of computation tree rooted at each node. The collection of trees helps determine graph isomorphism for many pairs of graphs. Initially, the unattributed nodes are indistinguishable, and the test starts with a constant coloring. For graphons, measures replace node colorings, and iterated measures encode computation trees. Analogous to constant colorings, the base measure is defined as $\widetilde{\cM}^0:=\{*\}$, where $*$ is any value, e.g., $0$. At level $L \geq 0$, the tree is encoded as the product $\widetilde{\cH}^L := \prod\nolimits_{j \leq L} \widetilde{\cM}^j$ over levels, and the space of next-level features as a measure over features of level $\leq L$: $\widetilde{\cM}^{L+1} := \mathscr{M}_{\leq 1}(\widetilde{\cH}^{L})$. We endow $\widetilde{\cH}^L$ with the product topology and $\widetilde{\cM}^{L+1}$ with the weak$^*$ topology, as these are natural topologies for product spaces and spaces of measures. 

\cite{frac21} connect each graphon  $W$ with its corresponding node colorings through the maps $\widetilde{\gamma}_{W,L}: [0, 1] \mapsto \widetilde{\cH}^L$, which we call \emph{computation iterated degree measures} (computation IDMs). This differs from both \cite{frac21} and \cite{expr23} use the name IDM for infinite sequences of Borel measures. These maps send each graphon node $x\in[0,1]$ to its iterated degree measure, i.e., its coloring. We start with a constant map to a tree that encodes a single color, $\widetilde{\gamma}_{W,0}: [0, 1] \mapsto \widetilde{\cH}^0$. The ``color'' $\widetilde{\gamma}_{W,L}(x)$ at level $L>0$ is a vector, where the $j$-th entry encodes the color of node $x$ after $j$ coloring iterations. That is, letting $\alpha(x)(j)$ be the $j$-th entry of the vector $\alpha(x)$, we set $\widetilde{\gamma}_{W,L}(x)(j) = \widetilde{\gamma}_{(W,f),L-1}(x)(j)$, for every $j < L$. Reflecting the WL-test, the last entry $\widetilde{\gamma}_{W,t}(x)(L)$ is a measure recursively defined as $\widetilde{\gamma}_{W,t}(x)(L)(A) = \int\nolimits_{\widetilde{\gamma}_{W,L-1}^{-1}(A)} W(x, -) \, d\mu$ for any subset $\A \subseteq \widetilde{\cH}^{L-1}$ of the $(L-1)$-th level colors. Namely, we are aggregating colors over the neighborhood of node $x$, according to the  connectivity encoded by $W(x,-)$, obtaining a measure over level $(L-1)$ colors, analogous to the discrete 1-WL algorithm. Analogously to computation trees, computation IDMs capture the graphons' connectivity. In \cref{section:metricsDefinition}, we generalize IDMs to additionally incorporate signal information, thus moving from graphon analysis to graphon-signal analysis. We note that this definition is taken from \cite{frac21}. It differs from the 1-WL test in \cite{expr23}, where they do not concatenate all previous colorings.
\vspace{-0.2cm}
\section{Graphon-Signal Metrics through Iterated Degree 
Measures}\label{section:metricsDefinition}
\vspace{-0.1cm}
To analyze expressivity and generalization, we need to define an appropriate metric between attributed graphons. Hence, in this section, we first extend the IDM definition in \cref{section:background} to capture both signal values and graphon topology. We then define distributions of iterated degree measures (DIDMs) and metrics between IDMs and DIDMs. These induce distance measures on the space of attributed graphs/graphons, which are polynomial time computable. We essentially transition from our computation trees that capture only graphon structure to computation trees that capture attributes as well. In \cref{section:MPNNs}, we will see how this allows us to analyze MPNNs on attributed graphs.
\begin{figure}[ht]
\begin{center}
    \includegraphics[width=15cm]{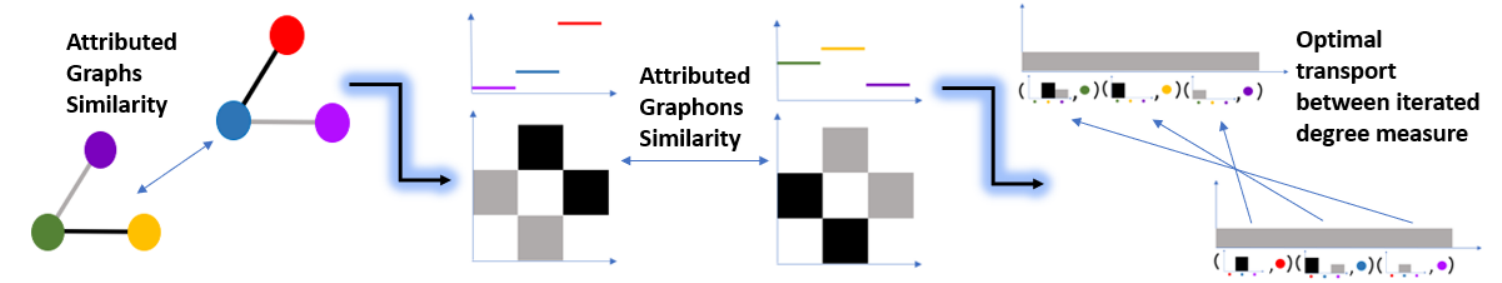} 
\end{center}
\caption{Measuring similarity between graph-signals on the left is translated into measuring similarity between graphon-signals and, lastly, to computing optimal transport between two IDMs, which comprise in the figure of a signal value and a distribution over signal values induced by the graphon's adjacency. Edges colors depict edge weights and node colors depict signal values.}
\vspace{-0.2cm}
\end{figure}

\textbf{Computation IDMs and Distributions of IDMs.}\label{notation:comp_D_IDM} We first expand the definitions of IDM and computation IDM of \cref{section:background} from unattributed to attributed graphons. In hindsight, our approach is in line with an idea outlined in Section 5 of \cite{expr23}. We change the IDMs' base space from a one point space, $\{*\}$, to the space of node attributes $\mathbb{B}^{d}_r= \{x\in \mathbb{R}^d : \|x\|_2\leq r \}\subset \mathbb{R}^d$ (for a fixed $r>0$). Thus, incorporating signal values inherently into the IDMs' structure. Explicitly, we define the \emph{space of iterated degree measures} of order-$L$, $\cH^L$, inductively by first defining $\mathcal{M}^0 := \mathbb{B}^{d}_r$. Then, for every $L\geq 0$, let $\mathcal{H}^L = \prod_{i \leq L} \mathcal{M}^i$ and $\mathcal{M}^{L+1} = \mathscr{M}_{\leq 1}(\mathcal{H}^{L})$, where the topologies of $\mathcal{H}^L$  and $\mathcal{M}^{L+1}$ are the product and the weak$^*$ topology, respectively. We call $\mathscr{P}(\cH^L)$ the \emph{space of distributions of iterated degree measures (DIDMs)}  of order-$L$. We denote by $p_{L,j}: \mathcal{H}^L \mapsto \mathcal{H}^j$ and  $p_{L} : \mathcal{H}^L \rightarrow \mathcal{M}^L$ the canonical projections where $j\leq L < \infty$. That is, $p_{L,j}$ sends the first $j$ entries of an order-$L$ IDM in $\cH^L$ to $\cH^j$ (the output is an order-$j$ IDM), whereas $p_L$ sends the last entry of an order-$L$ IDM in $\cH^L$ to $\mathcal{M}^L$, (the output is a measure on the space of order-$L$ IDMs, $\cH^L$). Recall that $\alpha(x)(j)$, refers to the $j$-th entry of the vector $\alpha(x)$. Next, we define a graphon-signal 1-WL analog.
\begin{definition}\label{graphonIDMs}
Let $[0,1]$ be the interval with  the standard Borel $\sigma$-algebra  $\mathcal{B}$ and let $(W,f)$ be a graphon-signal. We define $\gamma_{(W,f),0} : [0,1] \rightarrow \mathcal{H}^0$ to be the map  $\gamma_{(W,f),0}(x) := f(x)$ for every $x \in [0, 1]$. Inductively, we define $\gamma_{(W,f),L+1} : [0,1] \rightarrow \mathcal{H}^{L+1}$ such that
\begin{itemize}
    \item[(a)] $\gamma_{(W,f),L+1}(x)(j) = \gamma_{(W,f),L}(x)(j)$, for every $j \leq L$ and
    \item[(b)] $\gamma_{(W,f),L+1}(x)(L + 1)(A) = \int_{\gamma_{(W,f),L}^{-1}(A)} W(x, -) \, d\mu$, whenever $\A \subseteq \mathcal{H}^L$ is a Borel set.
\end{itemize}
Finally, for every $L\in\NN$, let $\Gamma_{(W,f),L}$ be the push-forward of $\lambda$ via $\gamma_{(W,f),L}$. We call $\gamma_{(W,f),L}$ a \emph{computation iterated degree measure (computation IDM)} of order-$L$ and $\Gamma_{(W,f),L}$ a \emph{distribution of computation iterated degree measures (computation DIDM)} of order-$L$.
\end{definition}   
In \cref{appendix:compactness} we prove that the spaces of IDMs and DIDMs are compact, which, together with the continuity of MPNNs  (\cref{theorem:lipschitz}), and the separation power of MPNNs (\cref{theorem:deltaEpsilon}), allows us to use the Stone-Weierstrass theorem (\cref{stone-weierstrass}) to show universality and to use uniform Monte Carlo estimation to compute a generalization bound for MPNNs.
\begin{restatable}[]{theorem}{theoremCompact}~\label{theorem:compactness}
The spaces $\cH^L$ and $\mathscr{P}(\cH^L)$ are compact spaces for any $L\in\NN$. 
\end{restatable}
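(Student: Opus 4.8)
The plan is to prove the two assertions simultaneously by induction on $L$, using two preservation facts for compact metrizability. First, a finite product of compact metrizable spaces, with the product topology, is again compact metrizable; since the $\ell_1$ product metric of \cref{section:background} metrizes precisely this product topology, this is just the statement that finite products of compact metric spaces are compact. Second, by \citep[Theorem 17.22]{setTheory95}, recalled in \cref{subsection:theWeakTopology}, if $\cK$ is a compact metrizable space then both $\mathscr{M}_{\leq 1}(\cK)$ and $\mathscr{P}(\cK)$, in the weak$^*$ topology, are compact metrizable. The essential bookkeeping point is to carry ``compact metrizable'' (not merely ``compact'') through the induction, so that at each stage $\cH^L$ is a legitimate compact metric space on which the constructions $\mathscr{M}_{\leq 1}(\cdot)$ and $\mathscr{P}(\cdot)$ may be formed.

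For the base case $L=0$, $\cH^0 = \mathcal{M}^0 = \mathbb{B}^{d}_r$ is a closed bounded subset of $\mathbb{R}^d$, hence compact by Heine--Borel and metrizable as a subspace of Euclidean space; applying the second fact with $\cK = \cH^0$ shows $\mathscr{P}(\cH^0)$ is compact metrizable. For the inductive step, assume $\cH^L$ is compact metrizable. Then $\mathcal{M}^{L+1} = \mathscr{M}_{\leq 1}(\cH^L)$ is compact metrizable by the second fact. Since $\cH^{L+1} = \prod_{i \le L+1}\mathcal{M}^i$ is canonically $\cH^L \times \mathcal{M}^{L+1}$ with the product topology, the first fact applied to these two compact metrizable factors shows $\cH^{L+1}$ is compact metrizable; a second application of the second fact, now with $\cK = \cH^{L+1}$, shows $\mathscr{P}(\cH^{L+1})$ is compact metrizable. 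This closes the induction and yields the theorem for all $L \in \NN$.

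The only ingredient that is not purely formal is verifying that the hypotheses of \citep[Theorem 17.22]{setTheory95} really apply at each level: that the weak$^*$ topology on $\mathcal{M}^{L+1} = \mathscr{M}_{\leq 1}(\cH^L)$ is the topology in which compactness is claimed, and that $\mathscr{M}_{\leq 1}$ of a compact metric space (not just $\mathscr{P}$) is compact. The latter follows, e.g., by identifying $\mathscr{M}_{\leq 1}(\cK)$ with the closed subset of $\mathscr{P}(\cK \sqcup \{\ast\})$ obtained by sending $\mu$ to $\mu + (1 - \|\mu\|)\delta_\ast$, where $\cK \sqcup \{\ast\}$ is again compact metrizable, so the claim reduces to the $\mathscr{P}$ case. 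I expect this verification, rather than any genuine difficulty, to be the main thing to pin down; the detailed argument is deferred to \cref{appendix:compactness}.
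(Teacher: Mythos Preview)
Your proposal is correct and follows essentially the same inductive scheme as the paper: compactness of $\mathbb{B}^d_r$ for the base, then alternate between ``compact metrizable $\cK$ $\Rightarrow$ $\mathscr{M}_{\leq 1}(\cK)$, $\mathscr{P}(\cK)$ compact metrizable'' and ``finite product of compact metrizable spaces is compact metrizable'' (the paper invokes Tychonoff, which is of course more than needed for finite products). The only minor difference is that you supply the one-point-compactification embedding $\mu \mapsto \mu + (1-\|\mu\|)\delta_\ast$ to reduce $\mathscr{M}_{\leq 1}$ to $\mathscr{P}$, whereas the paper instead cites the Riesz representation and Banach--Alaoglu route from \citep[Section~17]{setTheory95}; both are standard and either works.
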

The proof of \cref{theorem:compactness} inductively uses the fact that given any compact space $\cK$, the spaces $\mathscr{M}_{\leq1}(\cK)$ and $\mathscr{P}(\cK)$ endowed with the weak$^*$ topology are compact metrizable spaces. Tychonoff’s theorem, which states that the product of any collection of compact topological spaces is compact with respect to the product topology, completes the argument.

\textbf{DIDM Mover's Distance.} Next, we define a distance between graphons, viewed as distributions of computation IDMs. Inspired by the tree mover's distance of \cite{tree22}, we do this by optimal transport with a ground metric between IDMs, i.e., trees, as both computational IDMs and trees can be seen to represent the MPNNs' computational procees. Thus, we explicitly construct a metric that metrizes the topology of $\mathcal{H}^L$. We define the \emph{IDM distance} of order-$0$ on $\cM^0 = \cH^0=\mathbb{B}^{p}_r$, for $p\in\NN$, by $\TD^0:=\norm{x-y}_2$, and denote by  $\OT_{\TD^0}$ the optimal transport distance on $\cM^1=\mathscr{M}_{\leq1}(\cH^0).$ We define $\TD^L$ recursively as the product metric on $\cH^L=\prod_{j \leq L} \mathcal{M}^j$ when the distance on $\mathcal{M}^{j} = \mathscr{M}_{\leq 1}(\mathcal{H}^{j-1})$ for $0<j<L-1$ is $\OT_{\TD^{j-1}}$. \label{notation:TD} Explicitly written,
\begin{equation*}
{\TD^L(\mu, \nu)} := 
\begin{cases}
    \norm{\mu_0 - \nu_0}_2 + \sum^L_{j=1} \mathbf{OT}_{\TD^{j-1}}(\mu_j, \nu_j)  &\text{ }:\text{if } \infty>L > 0\\
       \norm{\mu_0 - \nu_0}_2 &\text{ }:L=0
\end{cases}
\end{equation*}
for $\mu=(\mu_j)^L_{j=0}$, $\nu=(\nu_j)^L_{j=0} \in \mathcal{H}^L.$ The next theorem states that, for every $L\in\NN$, both the IDM distance $\TD^L$ and optimal transport distance $\OT_{\TD^L}$ fit naturally to the topologies of IDMs and DIDMs defined in \cref{section:background}.
\begin{restatable}[]{theorem}{theoremWellDefine}~\label{theorem:wellDefined}
    Let $L\in\NN$. The metrics $\TD^L$ on $\mathcal{H}^L$ and $\OT_{\TD^{L}}$ on $\mathscr{P}(\mathcal{H}^L)$ and $\mathscr{M}_{\leq1}(\cH^L)$ are well-defined. Moreover, $\OT_{\TD^{L}}$ metrizes the weak$^*$ topologies of $\mathscr{M}_{\leq1}(\mathcal{H}^L)$ and $\mathscr{P}(\mathcal{H}^L)$.
\end{restatable}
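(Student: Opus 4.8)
The plan is to prove the two assertions — well-definedness and metrization of the weak$^*$ topology — simultaneously by induction on $L$. The base case $L=0$ is immediate: $\TD^0(\mu,\nu) = \norm{\mu_0 - \nu_0}_2$ is just the Euclidean metric on the compact set $\mathbb{B}^d_r$, hence a genuine metric, and $\OT_{\TD^0}$ on $\mathscr{M}_{\leq 1}(\mathbb{B}^d_r)$ is the classical unbalanced Wasserstein distance on a compact Polish space, which is known to metrize the weak$^*$ topology (this is the standard fact underlying \cref{theorem:compactness}; cf.\ the balanced case in \citep{setTheory95} combined with the mass-difference term). So the real content is the inductive step.

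For the inductive step, assume $\TD^{L-1}$ is a well-defined metric on $\cH^{L-1}$ that metrizes its (compact, by \cref{theorem:compactness}) topology. First I would check that $(\cH^{L-1}, \TD^{L-1})$ is a compact — hence Polish — metric space, so that \cref{definition:OT} legitimately applies and $\OT_{\TD^{L-1}}$ is a well-defined metric on $\mathscr{M}_{\leq 1}(\cH^{L-1}) = \cM^L$ and on $\mathscr{P}(\cH^{L-1})$. Next, invoke the standard theorem (e.g.\ \citep[Chapter 6]{villani}, adapted to the unbalanced setting as in \citep{OT23}) that on a compact metric space the unbalanced Wasserstein distance metrizes the weak$^*$ topology; this gives that $\OT_{\TD^{L-1}}$ metrizes the weak$^*$ topology on $\cM^L$. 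Now $\cH^L = \prod_{j\leq L}\cM^j$ carries the product topology, and $\TD^L$ is by construction the $\ell_1$-product of the metrics $\norm{\cdot}_2$ on $\cM^0$ and $\OT_{\TD^{j-1}}$ on $\cM^j$ for $1\le j\le L$; since each factor metric metrizes the corresponding factor topology, the $\ell_1$-product metric metrizes the product topology on $\cH^L$ (this is the standard fact about finite products recorded as \cref{Product Topology} in the excerpt). Hence $\TD^L$ is a well-defined metric metrizing $\cH^L$, and in particular $(\cH^L, \TD^L)$ is again compact Polish, closing the induction and simultaneously licensing the definition of $\OT_{\TD^L}$ at the next level. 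Finally, applying the same "Wasserstein metrizes weak$^*$ on a compact space" theorem one more time with ground space $(\cH^L,\TD^L)$ yields that $\OT_{\TD^L}$ metrizes the weak$^*$ topology on $\mathscr{M}_{\leq 1}(\cH^L)$ and on $\mathscr{P}(\cH^L)$, which is the last claim.

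The main obstacle I anticipate is not the topological bookkeeping but verifying carefully that the \emph{unbalanced} transport distance of \cref{definition:OT} — with the asymmetric coupling constraint $(p_\X)_*\gamma = \mu$, $(p_\Y)_*\gamma \le \nu$ under $\norm{\mu}\le\norm{\nu}$, plus the mass-defect term $|\norm{\mu}-\norm{\nu}|$ — genuinely satisfies the metric axioms (symmetry is obscured by the constraint being stated asymmetrically, and the triangle inequality for unbalanced OT requires the standard gluing-of-couplings argument together with a careful treatment of the mass term) and genuinely metrizes the weak$^*$ topology on $\mathscr{M}_{\leq 1}(\cK)$ rather than merely on $\mathscr{P}(\cK)$. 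One clean way to handle this uniformly is the homogenization trick: adjoin an isolated "graveyard" point to $\cK$ at distance $1$ from everything (or at distance equal to the diameter, after rescaling the ground metric if necessary), lift each subprobability measure to a probability measure on $\cK \cup \{\star\}$ by placing the missing mass at $\star$, and observe that $\OT_{\TD^{L-1}}$ then agrees with the balanced Wasserstein distance on the compact space $\cK\cup\{\star\}$ up to the choice of how the graveyard-distance is weighted; the balanced theory then transfers all the needed properties. I would present this reduction once, in the $L=0$ case or as a standalone lemma in the appendix, and then the induction goes through verbatim. A secondary, purely technical point to confirm is measurability of the maps $\mu\mapsto\int f\,d\mu$ and of the recursively defined $\gamma_{(W,f),L}$ at each level, but this is already implicit in the weak$^*$-measurable-space structure set up in \cref{section:background} and in \cref{graphonIDMs}, so it should require only a brief remark.
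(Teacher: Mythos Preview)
Your inductive skeleton is exactly the paper's: base case on $\mathbb{B}^d_r$, then at each step use that $(\cH^{L-1},\TD^{L-1})$ is complete separable (indeed compact), deduce that $\OT_{\TD^{L-1}}$ metrizes the weak$^*$ topology on $\cM^L$, and then observe that the $\ell_1$-product metric $\TD^L$ metrizes the product topology on $\cH^L$. That is also how the paper closes the induction (it explicitly follows \cite[Lemma~24]{expr23}).

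The only substantive divergence is in how you propose to establish the key lemma ``unbalanced $\OT_d$ is a metric on $\mathscr{M}_{\leq 1}(\cK)$ and metrizes its weak$^*$ topology.'' The paper does \emph{not} use a graveyard reduction. Instead it imports two results from \cite{expr23}: first, that $\OT_d$ is a genuine metric on $\mathscr{M}_{\leq 1}$ (their Corollary~21, following \cite{pele2008linear}), and second, the chain of inequalities
\[
\mathbf{BL}(\mu,\nu)\ \leq\ \mathbf{K}(\mu,\nu)\ \leq\ \OT_d(\mu,\nu)\ \leq\ 2\,\PP(\mu,\nu)\ \leq\ 4\sqrt{\mathbf{BL}(\mu,\nu)}
\]
(their Lemma~22, extending \cite{schay_1974,garcia_palomares_gine_1977} to subprobabilities). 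Since $\mathbf{BL}$ and $\PP$ are already known to metrize the weak$^*$ topology on $\mathscr{M}_{\leq 1}$ over a complete separable space, sandwiching $\OT_d$ between them finishes the job at once. This route is shorter and sidesteps the diameter bookkeeping your graveyard trick needs (you correctly flagged that the cemetery distance has to be matched to the diameter for the lifted balanced $W_1$ to coincide with the paper's $\OT_d$; otherwise one only gets topological equivalence, not equality). Your approach would still work, but the comparison-chain argument is the one the paper actually uses and is worth knowing since it gives the metric axioms and metrization simultaneously without any augmentation of the ground space.
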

We now use the distance between IDMs to define a distance between graphons, viewed as distributions of computation IDMs. Specifically, we define the \emph{DIDM Mover’s Distance} between graphons as the optimal transport cost between their DIDMs, with ground metric $\TD^L(\cdot, \cdot)$:
\begin{definition}[DIDM Mover’s Distance]\label{definition:TMD}
Given two graphon-signals $(W_a,f_a)$, $(W_b,f_b)$ and $L \geq 1$, the \emph{DIDM Mover’s Distance} between $(W_a, f_a)$ and $(W_b, f_b)$ is defined as 
\begin{equation*}
    {\TMD^L((W_a,f_a), (W_b,f_b)) }:=\mathbf{OT}_{\TD^L} (\Gamma_{(W_a,f_a),L}, \Gamma_{(W_b,f_b),L}).
\end{equation*}
\end{definition}
Intuitively, $\TMD^L$ is the minimum cost required to transport node-wise IDMs from one graphon to another. 

Given two attributed graphs, $(G,\f)$ and $(H,\g)$, the DIDM mover's distance, $$\TMD^L((G,\f),(H,\g)):=\TMD^L((W_G,f_{\f}), (W_H,f_{\g}))$$ can be computed in polynomial time, as shown next.
\begin{restatable}[]{theorem}{TMDcomputabilityTheorem}~\label{theorem:TMDcomputability}
For any fixed $L\in \NN$, $\TMD^L$ between any two graph-signals $(G,\f)$ and $(H,\g)$
 can be computed in time polynomial in $L$ and the size of $G$ and $H$, namely $O(L\cdot N^5\log(N))$, where $N= \max(\abs{V(G)},\abs{V(H)})$.
\end{restatable}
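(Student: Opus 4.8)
The plan is to use that a finite graph-signal $(G,\f)$ on $N$ nodes induces a step-function graphon-signal $(W_G,f_{\f})$ on the equipartition $\{I_k\}$, so that the computation IDM $\gamma_{(W_G,f_{\f}),L}$ is constant on each $I_k$; write $\gamma^G_m(i)$ for its value on $I_i$ and $\Gamma^G_L := \Gamma_{(W_G,f_{\f}),L}$ (similarly for $H$). Then $\Gamma^G_L = \tfrac1N\sum_{i}\delta_{\gamma^G_i}$ is the uniform atomic measure on the $N$ nodewise IDMs, and by \cref{definition:TMD}, $\TMD^L((G,\f),(H,\g)) = \OT_{\TD^L}(\Gamma^G_L,\Gamma^H_L)$ is a \emph{discrete} (balanced, total mass $1$) transportation problem whose $N\times N$ cost matrix is $D_L[i][j] := \TD^L(\gamma^G_i,\gamma^H_j)$. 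Thus it suffices to (i) compute the cross-cost matrices $D_0,D_1,\ldots,D_L\in\R^{N\times N}$ and (ii) solve one $N\times N$ transportation problem with cost $D_L$.

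The engine is a dynamic program over levels. Unwinding \cref{graphonIDMs}: part (a) says the first $m$ coordinates of $\gamma^G_m(i)$ equal $\gamma^G_{m-1}(i)$, and part (b), together with $W_G\equiv a_{ik}$ on $I_i\times I_k$ and $\lambda(I_k)=1/N$, says its last coordinate is the finitely supported measure $\nu^{G,i}_m := \tfrac1N\sum_{k\in\cN(i)}\delta_{\gamma^G_{m-1}(k)}$ (the preimages of $\gamma^G_{m-1}$ are unions of the $I_k$, since $\gamma^G_{m-1}$ is $I_k$-piecewise constant). Plugging this into the product-metric formula for $\TD^m$ telescopes to
\[
D_m[i][j] \;=\; D_{m-1}[i][j] \;+\; \OT_{\TD^{m-1}}\!\big(\nu^{G,i}_m,\nu^{H,j}_m\big),
\]
and, crucially, the ground cost between the atoms $\gamma^G_{m-1}(k)$ and $\gamma^H_{m-1}(l)$ appearing here is exactly $D_{m-1}[k][l]$, already computed. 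Hence no IDM is ever materialized: the algorithm manipulates only the $N\times N$ matrices $D_m$ and the two adjacency matrices.

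For the complexity: $D_0[i][j]=\|\f_i-\g_j\|_2$ costs $O(N^2 d)$. For each level $m=1,\ldots,L$ and each of the $N^2$ pairs $(i,j)$, evaluating $\OT_{\TD^{m-1}}(\nu^{G,i}_m,\nu^{H,j}_m)$ reduces to a min-cost flow / transportation problem with at most $N$ sources and $N$ sinks and known arc costs (entries of $D_{m-1}$): the unbalanced term $|\deg_G(i)-\deg_H(j)|/N$ is an additive constant, and the one-sided constraint $(p_Y)_*\gamma\le\nu$ becomes an upper bound on the sink-arc capacities; this is solvable in $O(N^3\log N)$, e.g.\ by cost-scaling min-cost flow. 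Summing gives $O(L\cdot N^2\cdot N^3\log N)=O(LN^5\log N)$, which also absorbs the final transportation problem for $\OT_{\TD^L}(\Gamma^G_L,\Gamma^H_L)$. Unequal sizes $|V(G)|\neq|V(H)|$ are handled identically with $N=\max(|V(G)|,|V(H)|)$, using the respective equipartitions; all masses remain rationals with denominator $O(N^2)$.

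The main obstacle — and the only part needing real care — is rigorously establishing the level recursion: one must verify that the push-forward defining the last coordinate of $\gamma_{(W_G,f_{\f}),m}$ genuinely collapses to the atomic measure $\nu^{G,i}_m$, and that the resulting telescoping of the product metric is legitimate; this rests on $\TD^{m-1}$ being a bona fide metric and $\OT_{\TD^{m-1}}$ being well defined on $\mathscr{M}_{\le1}(\cH^{m-1})$, which is precisely \cref{theorem:wellDefined}, and on the measurability of $\gamma^G_{m-1}$ established alongside \cref{graphonIDMs}. A secondary, minor point is the model of computation: since the ground costs involve Euclidean norms, the statement should be read in the real-arithmetic model (or up to any prescribed additive accuracy, in which case the transportation LPs have polynomial bit-size), which does not affect the stated polynomial dependence on $L$ and $N$.
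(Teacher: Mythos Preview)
Your proposal is correct and follows essentially the same approach as the paper: a level-by-level dynamic program on the $N\times N$ cross-cost matrices $D_m$, with the recursion $D_m=D_{m-1}+\text{(entrywise unbalanced OT with cost }D_{m-1}\text{)}$, and the same $O(L\cdot N^5\log N)$ accounting. The paper packages the step you flag as ``the main obstacle'' into a separate technical lemma (an unbalanced extension of the fact that $\OT_d((\phi_\X)_*\mu,(\phi_\Y)_*\mu)$ equals the node-indexed transport with cost $d(\phi_\X(\cdot),\phi_\Y(\cdot))$), whereas you argue it directly from the atomic structure of the step-graphon IDMs; both routes are valid and yield the same algorithm.
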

The theorem is proven in \cref{appendix:computability} with the following reasoning. While  $\TMD^L$ is defined using the induced graphon, it is computed directly on the graph. Moreover, we do not use a data-structure for representing IDMs directly. Instead, each time, we compute cost matrices derived from the cost matrices of the previous layer, avoiding explicit representations of the IDMs. We first compute a cost matrix $D_0$, containing the $L_2$ distances between the nodes' attributes, and then use it as a cost matrix of an OT problem. We repeatedly solve OT problems based on the previous cost matrix and the adjacency matrices of the graphs using linear programming and sum the results with the previous cost matrix to get the next cost matrix. Each OT problem is solved in $O(N^3 \log(N))$ time \citep{flamary2021pot, chapel2020partial}. At each step we solve $N^2$ problems. This means $O(N^2 \cdot N^3 \log(N))$ per step. After computing $D_L$, we use it to solve a single OT problem between two uniform distributions, one over $V(G)$ and the other over $V(H)$ to get $\TMD^L$'s value, which again takes $O(N^3 \log(N))$. Hence, the total computation time is  $O(L \cdot N^2 \cdot N^3 \log(N))$.  In \cref{section:mainResults}, we evaluate the DIDM Mover’s Distance empirically.

\textbf{Alternative Approach.} For the sake of completeness, we present in \cref{appendix:TPD} the Prokhorov metric, which can be used for defining alternative metrics to optimal transport which metrize $\mathcal{H}^L$, similarly to the constuction of \cite{expr23}. All of our results can be equivalently stated with the alternative metrics.
\vspace{-0.2cm}
\section{MPNNs and Their Relation to DIDM Mover's Distance}
\vspace{-0.1cm}
\label{section:MPNNs}
We next integrate MPNNs into our framework and define a general message passing scheme for computing features of attributed graphons, which generalizes standard graph MPNNs. Equivalently, by defining aggregated features via IDMs and DIDMs, we can define the MPNN directly on IDMs and DIDMs. In \cref{appendix:MPNNEquivalency}, we show the equivalence between MPNNs defined on graphon-signals and MPNNs defined on IDMs and DIDMs. By analyzing MPNNs on IDMs and DIDMs, we prove \cref{theorem:lipschitz} and \cref{theorem:deltaEpsilon}, which state that two graphon-signals are close to each other in our metrics if and only if the two outputs of any MPNN on the two graphon-signals are close-by in $L_2$ distance. 

An MPNN consists of feature initialization, which is a learnable Lipschitz continuous mapping $\varphi^{(0)}: \RR^{p} \mapsto \RR^{d_{0}}$, followed by $L$ layers, each of which consists of two steps: a \emph{message passing layer} (MPL) that aggregates neighborhood information, followed by a node-wise \emph{update layer}. Here, we assume the MPL to be normalized sum pooling when applied on graph-signals or graphon-signals. The update layer consists of a learnable Lipschitz continuous mapping $\varphi^{(t)}: \RR^{2d_{t-1}} \mapsto \RR^{d_{t}}$ where $0<t\leq L$ is the layer's index. Each layer computes a representation of each node. For predictions on the full graph, a \emph{readout layer} aggregates the node representations into a single graph feature and transforms it by a learnable Lipschitz function $\psi:\R^{d_L}\mapsto \R^d$ for some $d\in \NN$. For the readout, with use average pooling.

\begin{restatable}[MPNN Model]{definition}{defmppnmodel}~\label{definition:MPNNmodel}
Let $L\in\NN$ and $p, d_0,\ldots,d_L,d\in\NN$. We call any sequence $\varphi=(\varphi^{(t)})_{t=0}^{L}$ of Lipschitz continuous functions $\varphi^{(0)}:\R^{p}\mapsto\R^{d_{0}}$, and $\varphi^{(t)}:\R^{2d_{t-1}}\mapsto\R^{d_{t}}$, for $1\leq t\leq L$, an \emph{$L$-layer MPNN model}, and call $\varphi^{(t)}$  \emph{update functions}. For Lipschitz continuous $\psi: \RR^{d_{L}}\mapsto \RR^d$, we call the tuple $(\varphi, \psi)$ an \emph{MPNN model with readout}, where $\psi$ is called a \emph{readout function}. We call $L$ the \emph{depth} of the MPNN, $p$ the \emph{input feature dimension}, $d_0,\ldots, d_L$ the \emph{hidden feature dimensions}, and $d$ the \emph{output feature dimension}.
\vspace{-0.1cm}
\end{restatable}
An MPNN model processes graph-signals as a function as follows.
\begin{definition}[MPNNs on graph-signals]\label{definition:graphFeat}
Let $(\varphi,\psi)$ be an $L$-layer MPNN model with readout, and $(G,\f)$ be a graph-signal where $\f:V(G) \mapsto \mathbb{B}^{p}_r$. The \emph{application} of the MPNN on $(G,\f)$ is defined as follows: initialize $\mathfrak{g}_{-}^{(0)} := \varphi^{(0)}(\f(-))$ and compute the hidden node representations $\mathfrak{g}_-^{(t)}: V(G) \to \mathbb{R}^{d_{t}}$ at layer $t$, with $1\leq t \leq L$ and the graph-level output $\mathfrak{G} \in \mathbb{R}^d$  by
\[\mathfrak{g}_v^{(t)} := \varphi^{(t)}\Big(\mathfrak{g}^{(t-1)}_v,\frac{1}{|V(G)|} \sum_{u\in \cN(v)} \mathfrak{g}_u^{(t-1)}\Big) \quad \text{and} \quad {\mathfrak{G}} := \psi\Big(\frac{1}{|V(G)|} \sum_{v\in V(G)} \mathfrak{g}_v^{(L)}\Big).\]
\end{definition}
\vspace{-0.1cm}
{To clarify the dependence of $\mathfrak{g}$ and $\mathfrak{G}$ on $\varphi$ and $(G,\mathbf{f})$, we often denote $\gfp_v^{(t)}$ or $\mathfrak{g}(\varphi,G,\mathbf{f})_v^{(t)}$, and $\mathfrak{G}(\varphi,\psi)$  or $\mathfrak{G}(\varphi,\psi,G,\mathbf{f})$.} Here, we use normalized sum aggregation over neighborhoods to be directly compatible with the graphon version.
To extend this MPNN to graphons, we transition from a discrete set of nodes to a continuous set by converting the normalized sum into an integral.

\begin{definition}[MPNNs on Graphon-Signals]\label{definition:graphonFeat}
Let $(\varphi,\psi)$ be an $L$-layer MPNN model  with readout, and $(W,f)$ be a graphon-signal where $f:V(W) \mapsto \mathbb{B}^{p}_r$. The \emph{application} of the MPNN on $(W,f)$ is defined as follows: initialize $\ff_{-}^{(0)} := \varphi^{(0)}(f(-))$, and compute the hidden node representations $\ff_-^{(t)}: V(W) \to \mathbb{R}^{d_t}$ at layer $t$, with $1\leq t \leq L$ and the graphon-level output $\mathfrak{F} \in \mathbb{R}^d$  by
\begin{align*}
 \ff^{(t)}_x := \varphi^{(t)} \Big(\ff_x^{(t-1)} , \int_{[0,1]} W(x,y) \ff^{(t-1)}_y \text{d} \lambda (y) \Big) && \text{and} &&  {\mathfrak{F}}:= \psi \Big( \int_{[0,1]} \ffp^{(L)}_x \text{d} \lambda (x) \Big).
\end{align*}
\end{definition}
{As before, we often denote $\ffp_v^{(t)}$ or $\mathfrak{f}(\varphi,W,f)_v^{(t)}$, and $\mathfrak{F}(\varphi,\psi)$  or $\mathfrak{F}(\varphi,\psi,W,f)$.}

The following definition generalizes MPNNs to IDMs and DIDMs using the canonical projections $p_{L,j}: \mathcal{H}^L \mapsto \mathcal{H}^j$  and $p_{L} : \mathcal{H}^L \rightarrow \mathcal{M}^L$, where $j\leq L < \infty$.

\begin{definition}[MPNNs on IDMs and DIDMs]\label{definition:idmFeat}
Let $(\varphi,\psi)$ be an $L$-layer MPNN model with readout. The \emph{application} of the MPNN on IDMs and DIDMs is defined as follows: initialize $\hh_{-}^{(0)} := \varphi^{(0)}(-)$, and compute the hidden IDM representations $\hh^{(t)}_{-}:\cH^t \rightarrow\R^{d_t}$ on any order-$t$ IDM $\tau\in\cH^t$, and the DIDM-level putput $\HH \in \R^d$ on an order-$L$ DIDM $\nu\in\mathscr{P}(\cH^L)$, by
\begin{align*}
    \hh^{(t)}_{\tau} := \varphi^{(t)} \left( \hh^{(t-1)}_{p_{t,t-1}(\tau)} , \int_{\mathcal{H}^{t-1}} \hh_-^{(t-1)} \text{d} p_{t}(\tau) \right) && \text{and} && {\HH} := \psi \left( \int_{\mathcal{H}^{L}} \hh^{(L)}_- \text{d} \nu \right).
\end{align*}
\end{definition}  
{We also denote $\hhp_\tau^{(t)}$ and $\HH(\varphi,\psi)$  or $\HH(\varphi,\psi,\nu)$.} We name MPNNs' hidden representations and outputs \emph{features}. MPNNs on IDMs and DIDMs are canonical extensions of MPNNs on graphon-signals as follows. 

\begin{restatable}[]{lemma}{MPNNeqvie}~\label{lemma:MPNN_eqvie}
Let $(W,f)$ be a graphon-signal and $(\varphi,\psi)$ an $L$-layer MPNN model with readout. Then, given the computation IDMs $\{\gamma_{(W,f),t}\}_{t=0}^L$ and DIDM $\Gamma_{(W,f),L}$, we have that $\ff(\varphi,W,f)^{(t)}_x = \hh(\varphi)_{\gamma_{(W,f),t}(x)}^{(t)}$ for any $t \in[L]$,
$x\in[0,1]$. Similarly, $\FF(\varphi,\psi,W,f) = \HH(\varphi,\psi, \Gamma_{(W,f),L})$.   
\end{restatable}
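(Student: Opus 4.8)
The plan is to prove the two identities by induction on the layer index $t$, using \Cref{definition:graphonFeat,definition:idmFeat} together with the inductive structure of the computation IDMs from \Cref{graphonIDMs}. The key bookkeeping fact is that the pushforward $\gamma_{(W,f),L}$ transports the Lebesgue measure $\lambda$ on $[0,1]$ into the measure-theoretic data carried by the IDM, so that integrals over $[0,1]$ against $W(x,-)$ turn into integrals over $\cH^{t-1}$ against the measure $p_t(\gamma_{(W,f),t}(x))$.

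First I would establish the base case $t=0$: by definition $\ff^{(0)}_x = \varphi^{(0)}(f(x))$ and $\hh^{(0)}_\tau = \varphi^{(0)}(\tau)$, while $\gamma_{(W,f),0}(x) = f(x) \in \cH^0$, so $\hh^{(0)}_{\gamma_{(W,f),0}(x)} = \varphi^{(0)}(f(x)) = \ff^{(0)}_x$. Next, assuming the claim for $t-1$, I would verify the two arguments fed to $\varphi^{(t)}$ match. For the first argument: $\gamma_{(W,f),t}(x)$ restricted to its first $t$ entries equals $\gamma_{(W,f),t-1}(x)$ by part (a) of \Cref{graphonIDMs}, i.e. $p_{t,t-1}(\gamma_{(W,f),t}(x)) = \gamma_{(W,f),t-1}(x)$, so by the inductive hypothesis $\hh^{(t-1)}_{p_{t,t-1}(\gamma_{(W,f),t}(x))} = \hh^{(t-1)}_{\gamma_{(W,f),t-1}(x)} = \ff^{(t-1)}_x$. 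For the second (aggregation) argument, I would apply the change-of-variables/pushforward formula: by part (b) of \Cref{graphonIDMs}, the measure $p_t(\gamma_{(W,f),t}(x)) = \gamma_{(W,f),t}(x)(t)$ is exactly the pushforward of the measure $W(x,-)\,d\lambda$ on $[0,1]$ under $\gamma_{(W,f),t-1}$. Hence
\[
\int_{\cH^{t-1}} \hh^{(t-1)}_\sigma \, d\big(p_t(\gamma_{(W,f),t}(x))\big)(\sigma)
= \int_{[0,1]} \hh^{(t-1)}_{\gamma_{(W,f),t-1}(y)} \, W(x,y)\, d\lambda(y)
= \int_{[0,1]} W(x,y)\, \ff^{(t-1)}_y \, d\lambda(y),
\]
using the inductive hypothesis in the last step. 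Feeding both matched arguments into $\varphi^{(t)}$ gives $\hh^{(t)}_{\gamma_{(W,f),t}(x)} = \ff^{(t)}_x$, completing the induction. Finally, for the readout, $\Gamma_{(W,f),L}$ is by definition the pushforward of $\lambda$ under $\gamma_{(W,f),L}$, so the same change-of-variables argument gives $\int_{\cH^L} \hh^{(L)}_\sigma\, d\Gamma_{(W,f),L}(\sigma) = \int_{[0,1]} \hh^{(L)}_{\gamma_{(W,f),L}(x)}\, d\lambda(x) = \int_{[0,1]} \ff^{(L)}_x\, d\lambda(x)$, and applying $\psi$ yields $\HH(\varphi,\psi,\Gamma_{(W,f),L}) = \FF(\varphi,\psi,W,f)$.

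The main obstacle is the measure-theoretic justification of the change-of-variables step: one must check that $\hh^{(t-1)}_-$ is measurable on $\cH^{t-1}$ (so the integral is well-defined) and that the pushforward identity in part (b) of \Cref{graphonIDMs} is applied to the correct (possibly unnormalized) measure $W(x,-)\,d\lambda$ rather than a probability measure, tracking the fact that these live in $\mathscr{M}_{\leq 1}$. Measurability of $\hh^{(t-1)}_-$ follows since it is a composition of the continuous $\varphi$'s with projections and an integration map $\tau \mapsto \int \hh^{(t-2)}_- \, dp_{t-1}(\tau)$, the latter being measurable in the weak$^*$ topology whenever $\hh^{(t-2)}_-$ is bounded measurable; this can be folded into the induction as an auxiliary claim. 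Once measurability is in hand, the pushforward formula $\int g\, d(h_*\mu) = \int (g\circ h)\, d\mu$ is standard, and the rest is routine.
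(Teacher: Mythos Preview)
Your proposal is correct and follows essentially the same approach as the paper: induction on $t$, with the base case immediate from the definitions, the first argument of $\varphi^{(t)}$ handled via part (a) of \Cref{graphonIDMs}, and the aggregation argument handled by recognizing $p_t(\gamma_{(W,f),t}(x))$ as the pushforward of the weighted measure $W(x,-)\,d\lambda$ under $\gamma_{(W,f),t-1}$ and applying the change-of-variables formula. The paper makes this last step slightly more explicit by introducing the notation $\nu_{W(x,-)}(A):=\int_A W(x,y)\,d\lambda(y)$ and invoking a named absolute-continuity lemma together with the pushforward change-of-variables theorem, but the substance is identical; your additional remarks on measurability of $\hh^{(t-1)}_-$ are a welcome bit of care that the paper leaves implicit.
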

That is, graphon-signals' hidden representations are computed through their computation IDMs and DIDMs, i.e., the product of the graphon-signals 1-WL algorithm (\cref{graphonIDMs}). Similarly, graph-signals' hidden representations are computed through their induced graphon-signal.

\cref{corollary:universalApproximation} is a consequence of \cref{theorem:universalApproximation}, one of our main results, presented in \cref{section:mainResults}. It reveals a strong connection between MPNN features and the weak$^*$ topology of $\mathscr{P}(\cH^L)$.

\begin{restatable}{corollary}{universalApproximationCorollary}~\label{corollary:universalApproximation} Let $L\in \NN$ and $d > 0$ be fixed. Let $\nu \in \mathscr{P}(\mathcal{H}^L)$ and $(\nu_i)_i$ be a sequence with $\nu_i \in \mathscr{P}(\mathcal{H}^L)$. Then, $\nu_i \rightarrow \nu$ if and only if $\HH(\varphi,\psi,\nu_i) \rightarrow \HH(\varphi,\psi,\nu)$ for all $L$-layer MPNN models $\varphi$ with a readout function $\psi : \mathbb{R}^{d_L} \rightarrow \mathbb{R}^d$.
\end{restatable}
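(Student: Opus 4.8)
The plan is to derive \cref{corollary:universalApproximation} from \cref{theorem:compactness}, \cref{theorem:wellDefined}, \cref{lemma:MPNN_eqvie}, the Lipschitz continuity of MPNNs on DIDMs (\cref{theorem:lipschitz}), and the separation/universality results (\cref{theorem:deltaEpsilon}, \cref{theorem:universalApproximation}). The ``only if'' direction should be the easy one: by \cref{theorem:wellDefined}, the weak$^*$ topology on $\mathscr{P}(\cH^L)$ is metrized by $\OT_{\TD^L}$, and by \cref{theorem:lipschitz} the map $\nu \mapsto \HH(\varphi,\psi,\nu)$ is Lipschitz continuous with respect to $\OT_{\TD^L}$ (for each fixed MPNN-with-readout $(\varphi,\psi)$). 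Hence $\nu_i \to \nu$ in weak$^*$ immediately gives $\HH(\varphi,\psi,\nu_i) \to \HH(\varphi,\psi,\nu)$ in $\R^d$; quantitatively, $\norm{\HH(\varphi,\psi,\nu_i) - \HH(\varphi,\psi,\nu)}_2 \le C_{\varphi,\psi}\,\OT_{\TD^L}(\nu_i,\nu) \to 0$.

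For the ``if'' direction, I would argue by contradiction combined with compactness. Suppose $\HH(\varphi,\psi,\nu_i) \to \HH(\varphi,\psi,\nu)$ for every $L$-layer MPNN model $\varphi$ and every readout $\psi:\R^{d_L}\to\R^d$, but $\nu_i \not\to \nu$ in the weak$^*$ topology. Then, passing to a subsequence, there is $\eps > 0$ with $\OT_{\TD^L}(\nu_i,\nu) \ge \eps$ for all $i$. Since $\mathscr{P}(\cH^L)$ is compact (\cref{theorem:compactness}) and metrizable (\cref{theorem:wellDefined}), this subsequence has a further subsequence converging (in weak$^*$) to some $\nu' \in \mathscr{P}(\cH^L)$ with $\OT_{\TD^L}(\nu',\nu) \ge \eps$, so $\nu' \ne \nu$. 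Now apply the ``only if'' direction along this convergent subsequence: for every $(\varphi,\psi)$, $\HH(\varphi,\psi,\nu_i) \to \HH(\varphi,\psi,\nu')$; but by hypothesis the whole sequence converges to $\HH(\varphi,\psi,\nu)$, so $\HH(\varphi,\psi,\nu') = \HH(\varphi,\psi,\nu)$ for \emph{all} $L$-layer MPNN models $\varphi$ with readout $\psi$. This contradicts the separation power of MPNNs on DIDMs (\cref{theorem:deltaEpsilon}, or equivalently the point-separation statement used in \cref{theorem:universalApproximation}), which guarantees that distinct elements of $\mathscr{P}(\cH^L)$ are separated by some MPNN-with-readout. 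Hence $\nu_i \to \nu$.

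I expect the main obstacle to be purely bookkeeping rather than conceptual: making sure the version of the separation theorem I invoke is stated for DIDMs $\nu \in \mathscr{P}(\cH^L)$ (and for a fixed output dimension $d$), rather than only for computation DIDMs $\Gamma_{(W,f),L}$ arising from actual graphon-signals. If the separation result in the paper is phrased only for graphon-signals, I would instead route through \cref{theorem:universalApproximation} (universal approximation of continuous functions on the compact space $\mathscr{P}(\cH^L)$ by MPNNs): if $\HH(\varphi,\psi,\nu') = \HH(\varphi,\psi,\nu)$ for all $(\varphi,\psi)$, then every continuous function on $\mathscr{P}(\cH^L)$ takes the same value at $\nu'$ and $\nu$, forcing $\nu' = \nu$ since $\mathscr{P}(\cH^L)$ is a compact Hausdorff space (continuous functions separate points). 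The other minor point to check is that a single readout dimension $d$ (as quantified in the statement) suffices — which it does, since e.g. $d=1$ plus varying $\psi$ already separates, and the Lipschitz bound is per-$(\varphi,\psi)$ so no uniformity over the model class is needed in either direction.
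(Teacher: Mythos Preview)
Your argument is correct in outline, but watch the dependency graph: in the paper, \cref{theorem:deltaEpsilon} is \emph{derived from} \cref{corollary:universalApproximation}, so invoking \cref{theorem:deltaEpsilon} here would be circular. Your fallback via \cref{theorem:universalApproximation} (or more precisely the point-separation property of $\cN\cN^1_L$ established in its proof) is the right non-circular choice; the concern you raise about ``graphon-signals versus DIDMs'' is not the real issue.

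The paper's route is different and more direct. It does not go through compactness of $\mathscr{P}(\cH^L)$, Lipschitz bounds, or a subsequence contradiction. Instead it argues straight from the definition of the weak$^*$ topology: restricting to the identity readout, $\HH(\varphi,\mathrm{id},\nu) = \int_{\cH^L} \hh(\varphi)^{(L)}\,d\nu$, and since $\cN^1_L$ is uniformly dense in $C(\cH^L,\R)$ (\cref{theorem:universalApproximation}) while all the measures have total mass $\le 1$, convergence of these integrals for every MPNN is equivalent to convergence of $\int f\,d\nu_i$ for every $f \in C(\cH^L,\R)$, which is exactly weak$^*$ convergence. General readouts $\psi$ and output dimension $d>1$ are then handled by continuity of $\psi$ and coordinate projections. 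Your compactness-plus-separation argument works too and is a perfectly standard pattern; it just uses heavier machinery (\cref{theorem:compactness}, \cref{theorem:lipschitz}) where the paper needs only density and the definition of weak$^*$.
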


We now present \cref{theorem:lipschitz} and \cref{theorem:deltaEpsilon}. Together, they establishe a bidirectional fundamental connection between our metrics and the outputs of all possible MPNNs, where the second direction is phrased as a delta-epsilon relation. Specifically, \cref{theorem:lipschitz} states a Lipschitz bound for MPNNs with respect to the IDM and DIDM mover's distance. This quantifies stability as in the finite case in \cite{tree22}.
\begin{restatable}[]{theorem}{theoremLipschitzness}~\label{theorem:lipschitz}
Let $\varphi$ be an $L$-layer MPNN model. Then there exists a constant $C_{\varphi}$ that depends only on the number of layers  $L$ and the Lipschitz constants of the update functions, such that
\[
\|\hh(\varphi,\alpha)^{(L)} - \hh(\varphi,\beta)^{(L)}\|_2 \leq C_\varphi \cdot \TD^L(\alpha, \beta)
\]
for all $\alpha, \beta \in \mathcal{H}^L$. If $\varphi$ has a readout function $\psi$, then, for all $\mu, \nu \in \mathscr{P}(\mathcal{H}^L)$, there exists a constant $C_{(\varphi,\psi)}$ that depends only on $C_\varphi$ and the Lipschitz constant of the model's readout function, such that
\[
\|\HH(\varphi,\psi,\mu) - \HH(\varphi,\psi,\nu)\|_2 \leq C_{(\varphi,\psi)} \cdot \OT_{\TD^L}(\mu, \nu).
\]
\end{restatable}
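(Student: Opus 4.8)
The plan is to prove both bounds by induction on the layer index $t$, treating the IDM bound first and then deriving the DIDM bound from it by one more optimal-transport estimate. Fix an $L$-layer MPNN model $\varphi$, and let $\mathrm{Lip}(\varphi^{(t)})$ denote the Lipschitz constant of the update function $\varphi^{(t)}$ (with respect to the $\ell_1$-product norm on $\R^{2d_{t-1}}$, say, to match the product metric convention used for $\TD^L$). I would prove the stronger statement that for every $t \in [L]$ there is a constant $C_t$, depending only on $\mathrm{Lip}(\varphi^{(0)}),\dots,\mathrm{Lip}(\varphi^{(t)})$, such that $\|\hh(\varphi,\alpha)^{(t)} - \hh(\varphi,\beta)^{(t)}\|_2 \le C_t\cdot \TD^t\big(p_{L,t}(\alpha), p_{L,t}(\beta)\big)$ for all $\alpha,\beta\in\cH^L$ — equivalently, working directly with $\alpha,\beta\in\cH^t$. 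The base case $t=0$ is immediate: $\hh(\varphi,\cdot)^{(0)} = \varphi^{(0)}$ and $\TD^0(\mu_0,\nu_0) = \|\mu_0-\nu_0\|_2$, so $C_0 = \mathrm{Lip}(\varphi^{(0)})$ works.

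For the inductive step, write $\alpha = (\alpha_j)_{j\le t+1}$, $\beta=(\beta_j)_{j\le t+1}\in\cH^{t+1}$, so $p_{t+1,t}(\alpha)=(\alpha_j)_{j\le t}$ and $p_{t+1}(\alpha)=\alpha_{t+1}\in\cM^{t+1}=\mathscr{M}_{\le 1}(\cH^t)$. By \Cref{definition:idmFeat},
\[
\hh^{(t+1)}_{\alpha} - \hh^{(t+1)}_{\beta} = \varphi^{(t+1)}\Big(\hh^{(t)}_{p_{t+1,t}(\alpha)},\ \textstyle\int_{\cH^t}\hh^{(t)}_-\, d\alpha_{t+1}\Big) - \varphi^{(t+1)}\Big(\hh^{(t)}_{p_{t+1,t}(\beta)},\ \textstyle\int_{\cH^t}\hh^{(t)}_-\, d\beta_{t+1}\Big).
\]
Applying the Lipschitz bound of $\varphi^{(t+1)}$ splits this into two terms. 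The first is controlled by the inductive hypothesis: $\|\hh^{(t)}_{p_{t+1,t}(\alpha)} - \hh^{(t)}_{p_{t+1,t}(\beta)}\|_2 \le C_t\cdot \TD^t(p_{t+1,t}(\alpha),p_{t+1,t}(\beta))$, which is one of the summands in $\TD^{t+1}(\alpha,\beta)$. The second term requires bounding $\big\|\int \hh^{(t)}_-\, d\alpha_{t+1} - \int \hh^{(t)}_-\, d\beta_{t+1}\big\|_2$ by $\OT_{\TD^t}(\alpha_{t+1},\beta_{t+1})$, which is the remaining summand (at level $t+1$) in $\TD^{t+1}(\alpha,\beta)$. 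This is the crux of the argument and I expect it to be the main obstacle: one takes a near-optimal coupling $\gamma\in\Gamma(\alpha_{t+1},\beta_{t+1})$, writes the difference of integrals as $\int (\hh^{(t)}_x - \hh^{(t)}_y)\, d\gamma(x,y)$ plus a mass-defect term coming from $(p_{\cH^t})_*\gamma \le \beta_{t+1}$ (handled using that $\hh^{(t)}$ is bounded on the compact $\cH^t$, by \Cref{theorem:compactness} and continuity of $\hh^{(t)}$), then bounds $\|\hh^{(t)}_x-\hh^{(t)}_y\|_2 \le C_t\cdot\TD^t(x,y)$ by the inductive hypothesis and integrates against $\gamma$ to get $C_t\int \TD^t(x,y)\, d\gamma$; taking the infimum over $\gamma$ yields $\le \max(C_t, M_t)\cdot\OT_{\TD^t}(\alpha_{t+1},\beta_{t+1})$, where $M_t = \sup_{\cH^t}\|\hh^{(t)}\|_2$. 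Collecting, $C_{t+1} := \mathrm{Lip}(\varphi^{(t+1)})\cdot\max(C_t, M_t)$ works (and $M_t$ itself is bounded in terms of $C_0,\dots,C_t$ and $r$ via the same compactness, so $C_{t+1}$ depends only on $L$ and the update Lipschitz constants as claimed).

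Finally, for the readout bound, fix $\mu,\nu\in\mathscr{P}(\cH^L)$ and a near-optimal coupling $\gamma\in\Gamma(\mu,\nu)$ (here $\|\mu\|=\|\nu\|=1$, so the mass term vanishes). Then
\[
\HH(\varphi,\psi,\mu) - \HH(\varphi,\psi,\nu) = \psi\Big(\textstyle\int_{\cH^L}\hh^{(L)}_-\, d\mu\Big) - \psi\Big(\textstyle\int_{\cH^L}\hh^{(L)}_-\, d\nu\Big),
\]
so by $\mathrm{Lip}(\psi)$ and the same integral-difference-against-a-coupling estimate as above (now with no mass defect), $\|\HH(\varphi,\psi,\mu)-\HH(\varphi,\psi,\nu)\|_2 \le \mathrm{Lip}(\psi)\cdot C_L\int_{\cH^L\times\cH^L}\TD^L(x,y)\, d\gamma(x,y) \le \mathrm{Lip}(\psi)\cdot C_L\cdot\OT_{\TD^L}(\mu,\nu)$ after taking the infimum over $\gamma$. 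Setting $C_{(\varphi,\psi)} := \mathrm{Lip}(\psi)\cdot C_L$ and $C_\varphi := C_L$ finishes the proof. The only points needing care are the normalization conventions relating the product $\ell_1$-norm to the $\ell_2$-norms appearing in the statement (absorbed into constants), and ensuring the mass-defect term in the unbalanced OT is genuinely controlled by $|\,\|\alpha_{t+1}\| - \|\beta_{t+1}\|\,| \le \OT_{\TD^t}(\alpha_{t+1},\beta_{t+1})$ times the sup-bound $M_t$.
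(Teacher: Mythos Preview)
Your proposal is correct and follows essentially the same approach as the paper: induction on the layer index, with the key step being to bound $\|\int \hh^{(t)}\,d\alpha_{t+1} - \int \hh^{(t)}\,d\beta_{t+1}\|_2$ by a constant times $\OT_{\TD^t}(\alpha_{t+1},\beta_{t+1})$ via a coupling argument that separates the transport cost from the mass-defect term. The paper packages this step as a cited bounded-Lipschitz lemma (yielding constants $\|\hh^{(t-1)}\|_{\mathbf{BL}} = \|\hh^{(t-1)}\|_\infty + \|\hh^{(t-1)}\|_{\rm L}$ rather than your $\max(C_t,M_t)$), but the content is identical; one small caveat is that $M_t$ in fact also depends on the formal biases $\|\varphi^{(j)}(0)\|_2$, not just the Lipschitz constants and $r$---the paper shares this imprecision in the theorem statement.
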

In our analysis, \cref{theorem:lipschitz} is vital for the generalization analysis in \cref{section:mainResults}. The following theorem is roughly the ``topological converse'' of \cref{theorem:lipschitz}, and is based on \cref{corollary:universalApproximation}. 
\begin{restatable}{theorem}{deltaEpsilonTheorem}~\label{theorem:deltaEpsilon}
Let $d > 0$ be fixed. For every $\varepsilon > 0$, there are $L \in \NN$, $C > 0$, and $\delta > 0$ such that, for all DIDMs $\mu$, $\nu$ $\in \mathscr{P}(\mathcal{H}^L)$, if $\|\HH(\varphi,\psi,\mu) - \HH(\varphi,\psi,\nu)\|_2 \leq \delta$ holds for every $L$-layer MPNN model $\varphi$ with readout function $\psi: \mathbb{R}^{d_L} \to \mathbb{R}^d$ when $C_{(\varphi,\psi)} \leq C$, then $\OT_{\TD^L}(\mu, \nu) \leq \varepsilon$.
\end{restatable}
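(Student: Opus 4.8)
The plan is to argue by contradiction, converting the pointwise separation and continuity already available (\cref{theorem:lipschitz} and \cref{corollary:universalApproximation}) into a uniform $\delta$–$\varepsilon$ statement via compactness of the space of DIDMs (\cref{theorem:compactness}, with $\OT_{\TD^L}$ metrizing the weak$^*$ topology by \cref{theorem:wellDefined}). A first, cosmetic simplification: since the theorem only asserts the \emph{existence} of such an $L$, it suffices to prove, for an arbitrary fixed $L\in\NN$, that suitable $C$ and $\delta$ exist; then $\mu$ and $\nu$ range over the single compact metric space $(\mathscr{P}(\mathcal{H}^L),\OT_{\TD^L})$, which is in particular sequentially compact.

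So fix $L$ and $\varepsilon>0$ and suppose no pair $(C,\delta)$ works. Testing the negation against $C_n:=n$ and $\delta_n:=1/n$ yields, for each $n$, DIDMs $\mu_n,\nu_n\in\mathscr{P}(\mathcal{H}^L)$ with $\OT_{\TD^L}(\mu_n,\nu_n)>\varepsilon$ but $\|\HH(\varphi,\psi,\mu_n)-\HH(\varphi,\psi,\nu_n)\|_2\le 1/n$ for every $L$-layer MPNN model $\varphi$ with readout $\psi$ such that $C_{(\varphi,\psi)}\le n$. Passing to a subsequence, $\mu_n\to\mu$ and $\nu_n\to\nu$ in $\OT_{\TD^L}$ for some $\mu,\nu\in\mathscr{P}(\mathcal{H}^L)$, and by continuity of the metric $\OT_{\TD^L}(\mu,\nu)\ge\varepsilon>0$, so $\mu\ne\nu$.

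I would then show $\HH(\varphi,\psi,\mu)=\HH(\varphi,\psi,\nu)$ for \emph{every} MPNN model with readout $(\varphi,\psi)$. Indeed, set $C^\ast:=C_{(\varphi,\psi)}$; for all $n\ge C^\ast$ this model is admissible, so $\|\HH(\varphi,\psi,\mu_n)-\HH(\varphi,\psi,\nu_n)\|_2\le 1/n\to 0$, while by \cref{theorem:lipschitz} the map $\HH(\varphi,\psi,-)$ is $\OT_{\TD^L}$-Lipschitz, hence $\HH(\varphi,\psi,\mu_n)\to\HH(\varphi,\psi,\mu)$ and $\HH(\varphi,\psi,\nu_n)\to\HH(\varphi,\psi,\nu)$; passing to the limit gives the equality. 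But since $\mu\ne\nu$, the constant sequence equal to $\mu$ does not converge to $\nu$ in $(\mathscr{P}(\mathcal{H}^L),\OT_{\TD^L})$, so \cref{corollary:universalApproximation} supplies an $L$-layer model with readout $(\varphi_0,\psi_0)$ with $\HH(\varphi_0,\psi_0,\mu)\ne\HH(\varphi_0,\psi_0,\nu)$ — a contradiction, which proves the theorem.

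The argument is short because all the substance sits in the cited results; the points needing care are purely bookkeeping. One must send the complexity threshold to infinity (so that every fixed model is eventually admissible) while simultaneously sending the output tolerance to zero, and one must extract separation of points from \cref{corollary:universalApproximation} by feeding it a constant sequence. The single genuine idea — and the reason compactness is indispensable — is that although the whole family of MPNN output maps induces the weak$^*$ topology and separates points, no complexity-bounded subfamily need do either; compactness is precisely what upgrades this pointwise separation into the required uniform $\delta$–$\varepsilon$ bound.
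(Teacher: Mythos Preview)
Your proof is correct and follows essentially the same contradiction-plus-compactness route as the paper: extract convergent subsequences in the compact space $(\mathscr{P}(\mathcal{H}^L),\OT_{\TD^L})$, show the limits must coincide because all MPNN outputs agree on them (via \cref{corollary:universalApproximation}), and obtain a contradiction with $\OT_{\TD^L}(\mu_n,\nu_n)>\varepsilon$. Your explicit choice $C_n=n\to\infty$ to ensure every fixed model is eventually admissible is in fact cleaner than the paper's somewhat terse handling of the constant $C$.
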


Note that the constants $L$, $C$, and $\delta$ are independent of the DIDMs $\mu$ and $\nu$. The combination of \cref{theorem:lipschitz} and \cref{theorem:deltaEpsilon} implies that we can not only bound MPNNs' output perturbations with $\OT_{\TD^L}$, but also estimate closeness in $\OT_{\TD^L}$ via MPNNs' output closeness.

\textbf{Empirical Evaluation.} As a proof of concept, we empirically test the correlation between $\TMD^L$ and distance in the output of MPNNs. For the graphs, we use stochastic block models (SBMs), which are random graph generative models. We generated a sequence of $50$ random graphs $\{G_i\}^{49}_{i=0}$, each with 30 vertices. Each graph is generated from a SBM with two blocks (communities) of size $15$ with $p=0.5$ and $q_i=0.1+0.4i/49$ probabilities of having an edge between each pair of nodes from the same block different blocks, respectively. We denote $G:=G_{49}$, which is an Erdős–Rényi model. We plot $\TMD^2(G_i,G)$ against distance in the output of randomly initialized MPNNs. {We conducted the experiment twice, once with a constant feature for all nodes and once with
a signal which has a different constant value on each community of the graph. Each of these two values is randomly sampled from a uniform distribution over $[0,1].$ \cref{figure:SBM} shows the results when varying the hidden dimension of the GNN.} The results show a strong correlation between input distance and GNN output distance. More empirical results are presented in \cref{appendix:experiments}. The code is available at \url{https://github.com/levi776/GNN-G-E-U}.
\begin{figure}[ht] 
\begin{center}
\includegraphics[width=15cm]{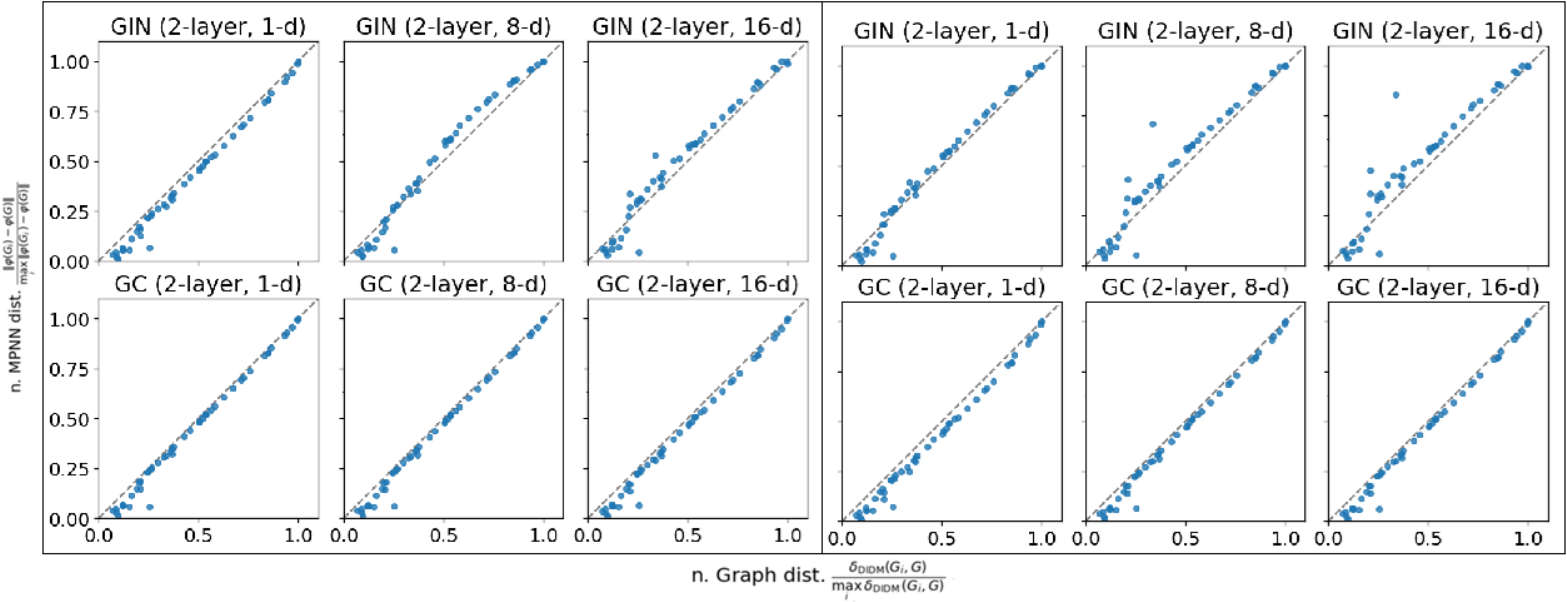}
\vspace{-0.5cm}
\end{center}
\caption{Correlation between $\TMD^2$ and distance in the output of a randomly initialized MPNN. A convergent sequence of graphs. The graphs are generated by a stochastic block model. In the six leftmost figures the signal is constant. In the six rightmost figures, the signal values are constant each graph's community. Each signal value is sampled from a uniform distribution over $[0,1]$.}
\label{figure:SBM}
\vspace{-0.5cm}
\end{figure}
\vspace{-0.5cm}
\section{Theoretical Applications}
\label{section:mainResults}
\vspace{-0.3cm}
Next, we state the main theoretical applications of the compactness of the space of attributed graphs and the Lipschitz and separation power of MPNNs with respect to DIDM mover's distance. First, we show a universal approximation theorem for MPNNs on IDMs and DIDMs, which means that MPNNs can approximate any continuous function on IDMs and DIDMs. This entails universal approximation of MPNNs on attributed graphs/graphons as well, generalizing the results of \cite{expr23} to attributed graphs. Second, we introduce a uniform generalization bound for MPNNs. 

\textbf{Universal Approximation.} We define the set $$\mathcal{N}^{d_L}_L := \{\hh(\varphi)^{(L)}_-: \mathcal{H}^L \mapsto \R^{d_L}|\varphi\text{ is an $L$-layer MPNN model}\} \subseteq C(\mathcal{H}^L, \RR^{d_L}),$$ where $C(\mathcal{H}^L, \RR^{d_L})$ is the set of all continuous functions, $\cH^L\mapsto \R^{d_L}$. Similarly we define the set 
\begin{align*}
    \mathcal{NN}^{d}_L := \{\HH(\varphi,\psi, -):\mathscr{P}(\mathcal{H}^L) \mapsto \mathbb{B}^{d}_r | (\varphi,\psi) \text{ is an $L$-layer MPNN mod}&\text{el with readout}\}\\& \subseteq C(\mathscr{P}(\mathcal{H}^L), \R^{d}).
\end{align*}
\begin{restatable}[Universal Approximation]{theorem}{universalApproximation}~\label{theorem:universalApproximation}
    Let $L\in\NN$. Then, the set $\cN^1_L$ is uniformly dense in $C(\mathcal{H}^L,\RR)$ and the set $\cN\cN^1_L$ is uniformly dense in $C(\mathscr{P}(\mathcal{H}^L),\RR)$.
    \vspace{-0.1cm}
\end{restatable}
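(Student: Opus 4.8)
The plan is to deduce both density claims from the Stone--Weierstrass theorem, for which the essential inputs are that $\cH^L$ and $\mathscr{P}(\cH^L)$ are compact and metrizable (compactness is \cref{theorem:compactness}; metrizability follows since $\mathscr{M}_{\leq1}(\cK)$ is compact metrizable whenever $\cK$ is, and finite products of metrizable spaces are metrizable). So it suffices to verify that the families $\cN^1_L\subseteq C(\cH^L,\R)$ and $\cN\cN^1_L\subseteq C(\mathscr{P}(\cH^L),\R)$ are subalgebras containing the constant functions and separating points; the only substantial ingredient is point separation, which I will obtain by induction on $L$.

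\emph{Subalgebra structure.} First I would record the purely structural fact that $\cN^1_L$ is an $\R$-subalgebra of $C(\cH^L,\R)$ containing the constants, and similarly for $\cN\cN^1_L$. Given two $L$-layer MPNN models one builds a single model that runs the two in parallel, with update functions acting blockwise on the two streams; since normalized-sum aggregation of a concatenated vector is the concatenation of the aggregations, its feature at each level $t<L$ is the concatenation of the two individual level-$t$ features. As $\cH^L$ is compact, all intermediate features take values in a fixed compact box, on which every polynomial is Lipschitz; hence replacing the last update function by (a Lipschitz extension of) a polynomial of degree at most $2$ realizes sums, products and scalar multiples of level-$L$ features, while a constant $\varphi^{(0)}$ realizes constant functions. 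The same construction — absorbing the polynomial into the readout $\psi$ and using that the functionals $\nu\mapsto\int_{\cH^L}\hh(\varphi)^{(L)}_-\,\mathrm d\nu$ are bounded — shows $\cN\cN^1_L$ is a subalgebra containing constants. By Stone--Weierstrass it then remains only to prove that $\cN^1_L$ separates the points of $\cH^L$ and $\cN\cN^1_L$ separates the points of $\mathscr{P}(\cH^L)$.

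\emph{Induction on $L$ for $\cN^1_L$.} For the base case $L=0$ we have $\cH^0=\mathbb{B}^d_r$, and $\cN^1_0$ is the set of restrictions to $\mathbb{B}^d_r$ of Lipschitz functions $\R^d\to\R$, which is uniformly dense in $C(\mathbb{B}^d_r,\R)$ (every continuous function on a compact metric space is a uniform limit of Lipschitz functions, e.g.\ by inf-convolution, and Lipschitz functions on $\mathbb{B}^d_r$ extend to $\R^d$). For the inductive step I identify $\cH^{L+1}=\cH^L\times\mathscr{M}_{\leq1}(\cH^L)$ with its projections $p_{L+1,L}$ and $p_{L+1}$. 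For $g\in C(\cH^L)$, the inductive hypothesis gives an $L$-layer model $\varphi'$ with $\|\hh(\varphi')^{(L)}-g\|_\infty$ as small as desired; appending the last layer $\varphi^{(L+1)}(h,a):=h$ shows $g\circ p_{L+1,L}$ is a uniform limit of functions in $\cN^1_{L+1}$, and appending $\varphi^{(L+1)}(h,a):=a$ shows the same for $\sigma\mapsto\int_{\cH^L}g\,\mathrm d p_{L+1}(\sigma)$ (using $\|p_{L+1}(\sigma)\|\leq1$). The functions $g\circ p_{L+1,L}$ separate IDMs with distinct $\cH^L$-components because $C(\cH^L)$ separates points of the compact Hausdorff space $\cH^L$, and the functions $\sigma\mapsto\int g\,\mathrm d p_{L+1}(\sigma)$ separate IDMs sharing the $\cH^L$-component but with distinct measure components, since two Borel measures on $\cH^L$ with equal integrals against every element of $C(\cH^L)$ coincide. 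As $\cN^1_{L+1}$ is a subalgebra containing constants, its uniform closure contains the subalgebra generated by all these separating functions, which by Stone--Weierstrass equals $C(\cH^{L+1},\R)$; hence $\cN^1_{L+1}$ is uniformly dense in $C(\cH^{L+1},\R)$, closing the induction.

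\emph{The readout family, and the main obstacle.} For $\cN\cN^1_L$ one performs the analogous last step over $\mathscr{P}(\cH^L)$: given $g\in C(\cH^L)$, the first part of the theorem supplies an $L$-layer model $\varphi'$ with $\hh(\varphi')^{(L)}\approx g$, and taking readout $\psi=\mathrm{id}_{\R}$ yields $\HH(\varphi',\psi,\nu)=\int_{\cH^L}\hh(\varphi')^{(L)}_-\,\mathrm d\nu\approx\int_{\cH^L}g\,\mathrm d\nu$ uniformly in $\nu$, so every such integral functional lies in the uniform closure of $\cN\cN^1_L$; these functionals separate the points of $\mathscr{P}(\cH^L)$, so the closure of the subalgebra $\cN\cN^1_L$ contains the algebra they generate, which is $C(\mathscr{P}(\cH^L),\R)$. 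I expect the main obstacle to be making the parallel/blockwise composition of MPNNs and the ``polynomial absorption'' into the last update or readout fully rigorous: one must verify that aggregation commutes with concatenation of node features at every layer and that a polynomial or projection may legitimately be used as a (Lipschitz) final update or readout — which is exactly where compactness of $\cH^L$ and $\mathscr{P}(\cH^L)$ from \cref{theorem:compactness} is used, to confine all intermediate features to a fixed compact set on which polynomials are Lipschitz. The remaining steps — the weak$^*$ characterization of measure equality, inf-convolution for the base case, and error propagation through Lipschitz maps — are routine.
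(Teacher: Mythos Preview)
Your proposal is correct and follows essentially the same route as the paper: both arguments invoke Stone--Weierstrass on the compact metrizable spaces $\cH^L$ and $\mathscr{P}(\cH^L)$, establish the subalgebra structure of $\cN^1_L$ and $\cN\cN^1_L$ via a parallel/blockwise construction with a polynomial (Lipschitz on the relevant compact range) absorbed into the final update or readout, and prove point separation by induction on $L$ using the two-case split according to whether the $\cH^L$-component or the measure component differs. The only organizational difference is that the paper shows directly that $\cN^1_{L+1}$ itself separates points (using its Corollary on separating measures applied to $\cN^1_L$), whereas you show that certain separating functions lie in the uniform closure of $\cN^1_{L+1}$ and then apply Stone--Weierstrass to that closure; both are valid and amount to the same idea.
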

Combining \cref{theorem:universalApproximation} with  \cref{lemma:MPNN_eqvie} leads in a very straightforward way to universal approximation of continuous functions from graph-signals to graph-signals embeddings. Our theory implies that if we use all MPNNs, we can distinguish between all attributed graphs with positive graph distance (since they could take different function values). Indeed, Table 1 and 3 in \cite{expr23} illustrate that using sufficiently many MPNNs provides enough discriminative power for graph classification tasks on both attributed and unattributed graphs. Our results supply the theoretical background for their experiments with attributed graphs. 

Note that \cref{theorem:universalApproximation} states that any continuous function from DIDMs to scalars can be approximated by an MPNN on DIDMs. To infer a universal approximation result for functions from graph-signals to vectors we emphasize the following considerations.  
Define the set 
\begin{align*}
\mathcal{NN}^{d}_L&(\GS^d_r)\\&:=\{\FF(\varphi,\psi,-,-):\GS^d_r\mapsto\R^d|(\varphi,\psi) \text{ is an $L$-layer MPNN model with readout}\}.
\end{align*}
First, note that the space of computation DIDMs, $\Gamma_{(W,f),L}(\GS^d_r)$, is a strict subset of $\mathscr{P}(\mathcal{H}^L)$, which is not dense (w.r.t $\TMD^L$) in view of \cref{theorem:graphon_signals_compactness}. Indeed, there are DIDMs that do not come from any graphon-signal, and a closed strict subset cannot be dense. So, the space of DIDMs of graph-signals is also not dense. Hence, \cref{theorem:universalApproximation} does not imply that any continuous function (w.r.t $\TMD^L$) from graph-signal to vector can be approximated using an MPNN model. Rather, any function on graph-signals that can be extended to a continuous function on DIDMs can be approximated can be approximated using an MPNN model, which is a weaker form of universality. Fortunately, we can directly prove a universal approximation theorem directly for the space $\GS^d_r$, which in terms gives a universal approximation theorem for continuous functions from graph-signals to vectors by a density argument. For this, in \cref{appendix:universalitygraphs} we prove that graph-signals are dense in $\GS^d_r$ w.r.t. $\TMD^L$.
\begin{restatable}[]{theorem}{universalApproximationForGraphons}~\label{theorem:universalApproximationForGraphons}
    Let $L\in\NN$. Then, the set $\cN\cN^1_L(\GS^d_r)$ is uniformly dense in $C(\GS^d_r,\RR)$.
\vspace{-0.1cm}
\end{restatable}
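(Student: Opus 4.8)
The plan is to prove \cref{theorem:universalApproximationForGraphons} by combining the universal approximation result on DIDMs (\cref{theorem:universalApproximation}) with the density of graph-signals in $\GS^d_r$ and the Lipschitz continuity of MPNNs. The key structural fact is that the map $(W,f) \mapsto \Gamma_{(W,f),L}$ sends $\GS^d_r$ onto a closed subset $\Gamma_L(\GS^d_r) \subseteq \mathscr{P}(\cH^L)$, and via \cref{lemma:MPNN_eqvie} this map \emph{intertwines} the action of MPNNs: $\FF(\varphi,\psi,W,f) = \HH(\varphi,\psi,\Gamma_{(W,f),L})$. So morally the problem on $\GS^d_r$ should pull back from the problem on $\mathscr{P}(\cH^L)$. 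The subtlety, already flagged in the text, is that $\Gamma_L(\GS^d_r)$ is a \emph{strict} closed subset of $\mathscr{P}(\cH^L)$, so a continuous $F$ on $\GS^d_r$ need not extend to a continuous function on all of $\mathscr{P}(\cH^L)$, and we cannot simply quote \cref{theorem:universalApproximation} as a black box.

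First I would work directly with the Stone–Weierstrass theorem on the compact metric space $\GS^d_r$ (compactness coming from \cref{theorem:graphon_signals_compactness}, which the text references; note the quotient metric $\TMD^L$ is a pseudometric, so strictly one passes to the metric identification, but MPNN outputs are constant on $\TMD^L$-classes by \cref{theorem:lipschitz}, so this is harmless). The candidate subalgebra is the algebra $\mathcal{A}$ generated by $\cN\cN^1_L(\GS^d_r)$ together with constants — or, more cleanly, one shows $\cN\cN^1_L(\GS^d_r)$ already separates points and then closes it up under the operations Stone–Weierstrass needs. The three hypotheses to check are: (i) every function in $\cN\cN^1_L(\GS^d_r)$ is continuous on $(\GS^d_r,\TMD^L)$ — this is exactly the readout version of \cref{theorem:lipschitz} composed with the (Lipschitz, by construction of $\TMD^L$ and \cref{theorem:wellDefined}) map $(W,f)\mapsto \Gamma_{(W,f),L}$; (ii) the constant functions are obtainable, which follows by taking $\psi$ constant; (iii) point separation: for $(W_a,f_a)$ and $(W_b,f_b)$ with $\TMD^L((W_a,f_a),(W_b,f_b)) > 0$, i.e. $\OT_{\TD^L}(\Gamma_{(W_a,f_a),L},\Gamma_{(W_b,f_b),L}) > 0$, there must exist an MPNN with readout whose outputs differ. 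This last point is where \cref{theorem:deltaEpsilon} (the $\delta$–$\varepsilon$ converse of Lipschitzness) does the work: if \emph{all} MPNN outputs agreed, then by \cref{theorem:deltaEpsilon} the two DIDMs would be within every $\varepsilon$, hence equal, contradiction.

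To upgrade from "$\cN\cN^1_L(\GS^d_r)$ separates points and contains constants" to "its uniform closure is all of $C(\GS^d_r,\RR)$" I need a subalgebra, so I would take the algebra $\mathcal{A}$ generated by finitely many MPNN-with-readout functions; the point is that sums and products of such functions are again realizable (up to uniform approximation) as MPNN-with-readout functions on $\GS^d_r$ — concatenating the MPNN computations in parallel as separate coordinate blocks and then composing with a Lipschitz map $\RR^{d_1+d_2}\to\RR$ implementing addition or multiplication on the relevant compact range (the outputs live in a fixed bounded set since $\GS^d_r$ is compact and the functions are continuous). Because product and sum are Lipschitz on compact sets, these composite functions lie in $\overline{\cN\cN^1_L(\GS^d_r)}$, so the uniform closure is already a subalgebra, and Stone–Weierstrass (\cref{stone-weierstrass}) applies. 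An alternative, cleaner route that sidesteps the algebra bookkeeping: prove the analogous "$\cN\cN^1_L$ separates points and the closure is an algebra" statement abstractly once, as presumably done in the proof of \cref{theorem:universalApproximation}, and observe the same argument runs verbatim on $\GS^d_r$ because all the inputs it uses — compactness, continuity of MPNNs, separation via \cref{theorem:deltaEpsilon} — hold on $\GS^d_r$ just as on $\mathscr{P}(\cH^L)$.

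I expect the main obstacle to be the point-separation step, and specifically making sure it is genuinely available on $\GS^d_r$ rather than only on $\mathscr{P}(\cH^L)$: one must confirm that \cref{theorem:deltaEpsilon}, which is stated for DIDMs $\mu,\nu\in\mathscr{P}(\cH^L)$, applies to the DIDMs $\Gamma_{(W_a,f_a),L}, \Gamma_{(W_b,f_b),L}$ arising from graphon-signals (it does, since these are particular elements of $\mathscr{P}(\cH^L)$), and — more delicately — that the depth $L$ produced by \cref{theorem:deltaEpsilon} can be absorbed: \cref{theorem:deltaEpsilon} quantifies $L$ after $\varepsilon$, but for Stone–Weierstrass on a fixed space $\GS^d_r$ we want point separation at the \emph{fixed} $L$ in the statement. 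The resolution is that $\TMD^L$ is monotone in $L$ in the sense that $\TMD^L((W_a,f_a),(W_b,f_b)) = 0$ for all $L$ is what forces equality in the graphon-signal sense, so: if the order-$L$ distance is positive we already have separation at that $L$ by \cref{theorem:deltaEpsilon} applied with any $\varepsilon$ below that distance; we only ever need \emph{one} $L$, namely the one in the hypothesis. The remaining routine-but-nontrivial piece is the parallel-composition construction showing the closure is an algebra, where one must be careful that the auxiliary Lipschitz maps implementing $+$ and $\times$ are defined on a compact set containing all relevant outputs — guaranteed by compactness of $\GS^d_r$ and continuity — and that readout functions are allowed to be arbitrary Lipschitz maps, which \cref{definition:MPNNmodel} grants.
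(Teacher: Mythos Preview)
Your proposal ends up at the paper's proof: Stone--Weierstrass on the compact space $(\GS^d_r,\TMD^L)$ (compactness from \cref{theorem:graphon_signals_compactness}), with the subalgebra property obtained by the parallel-concatenation construction (exactly as in \cref{lemma:universalApproximation}) and point separation pulled back from $\mathscr{P}(\cH^L)$ via \cref{lemma:MPNN_eqvie}.

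One caveat on your first point-separation argument: invoking \cref{theorem:deltaEpsilon} does not quite work as you describe. There the depth $L$ is existentially quantified \emph{after} $\varepsilon$, so choosing $\varepsilon<\TMD^L((W_a,f_a),(W_b,f_b))$ only yields some $L'$, $C$, $\delta$ for which the implication holds on $\mathscr{P}(\cH^{L'})$; your monotonicity remark does not force $L'=L$. The clean fix---which is precisely the paper's route and your own ``alternative, cleaner route''---is to use that $\cN\cN^1_L$ already separates points of $\mathscr{P}(\cH^L)$ at the \emph{given} $L$ (established inside the proof of \cref{theorem:universalApproximation} via \cref{separatingMeasures}, or equivalently \cref{corollary:universalApproximation} applied to a constant sequence). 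Then $\TMD^L>0$ means $\Gamma_{(W_a,f_a),L}\neq\Gamma_{(W_b,f_b),L}$, so some $\HH(\varphi,\psi,-)$ distinguishes them, and by \cref{lemma:MPNN_eqvie} so does $\FF(\varphi,\psi,-,-)\in\cN\cN^1_L(\GS^d_r)$.
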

\textbf{Generalization Bounds for MPNNs.}\label{notation:risk}
Consider $K$-class classification, i.e., the data is drawn from a distribution $(\mathcal{X}\times \{0,1\}^K, \Sigma, \tau)$, where $\Sigma$ is the Borel $\sigma$-algebra and $\tau$ is a  probability measure, where  $\mathcal{X}$ is either $(\mathscr{P}(\cH^L),\OT_{\TD^L})$ or $(\GS^d_r,\TMD^{L})$,   and $\mathcal{E}$ is a Lipschitz loss function with a Lipschitz constant ${\rm C}_{\mathcal{E}}$. Define the formal bias of a function $f: \R^{d_1}\mapsto\R^{d_2}$ to be $\norm{f(0)}_2$ \citep{maskey2022generalization} and denote the smallest Lipschitz constants of $f$ by $\norm{f}_{\rm L}$ (see \cref{lpischit}). Fix $L\in\mathbb{N}$ and ${\rm A}_1,{\rm A}_2>0$. Let $\Theta$ be the set of all $L$-layer MPNN models with readout $\left(\left(\varphi^{(t)}\right)_{t\in[L]},\psi\right)$ such that the Lipschitz constants 
$\norm{\varphi^{(t)}}_{\rm L}$
and $\norm{\psi}_{\rm L}$
are bounded by \({\rm A}_1\) and the formal biases 
\(\norm{\varphi^{(t)}(0)}_{2}\) and \(\norm{\psi(0)}_{2}\) are bounded by \({\rm A}_2\). Let $\mathrm{Lip}(\X, {\rm C},{\rm B})$ be the set of all continuous functions $f:\X\mapsto\R^{d_L}$ with bounded Lipschitz constants $\norm{f}_{\rm L}\leq{\rm C}$ and  bounded norms $\norm{f}_\infty\leq{\rm B}$. For every function $\mathfrak{M}$ in the hypothesis class ${\rm Lip}(\X,C,B)$, 
define the \emph{statistical risk} as \(\mathcal{R}(\mathfrak{M})
= \mathbb{E}_{(\nu,y)\sim\tau}\!\left[ \mathcal{E}(\mathfrak{M}(\nu),y) \right] = \int \mathcal{E}(\mathfrak{M}(\nu),y)\, d\tau(\nu,y).\) Given i.i.d. samples $\mathbf{X}:=\left(X_i\right)_{i=1}^N \sim \tau^N$, where $X_i=(\nu_i,Y_i)\in\X\times\{0,1\}^K$, we define the \emph{empirical risk} as \(\hat{\mathcal{R}}(\mathfrak{M}_{\mathbf{X}},\mathbf{X}) 
= \frac{1}{N}\sum_{i=1}^{N}\, \mathcal{E}\!\left(\mathfrak{M}_{\mathbf{X}}(\nu_i),\, Y_i\right).\)\label{p:risk}

In \cref{appendix:glob_lip}, we show that there exist ${\rm C}_{\Theta},{\rm B}_{\Theta}>0$ that depend on $L,A_1,A_2$ such that $\Theta\subseteq\mathrm{Lip}(\X, {\rm C}_{\Theta},{\rm B}_{\Theta})$.  As a result, if we prove a generalization bound for the hypothesis class $\mathrm{Lip}(\X, {\rm C}_{\Theta},{\rm B}_{\Theta})$, the bound would also be satisfied for the hypothesis class $\Theta$. 
The following generalization theorem uses techniques from \cite{signal23}, deriving generalization bounds for Lipschitz continuous functions over domains with finite covering. For us, both the spaces of DIDMs and of graphon-signals have finite covering numbers as they are compact, and MPNNs are Lipschtiz over these domains. The technique is also similar to robustness-type bounds \citep{Xu2012}. 
\vspace{-0.3cm}
\begin{restatable}[MPNN generalization theorem]{theorem}{MPNNsGeneralizationTheorem}~\label{theorem:generalization}
Consider the above classification setting with $\mathcal{X}$ being either $(\mathscr{P}(\cH^L),\OT_{\TD^L})$ or $(\GS^d_r,\TMD^{L})$. Let ${\rm C}: = {\rm C}_{\mathcal{E}}\max({\rm C}_{\Theta},1)$, ${\rm B}:={\rm C}_{\mathcal{E}}({\rm B}_{\Theta}+1)+\abs{\mathcal{E}(0,0)}$, and $\{X_i\}^N_{i=1}$ be independent random samples from the data distribution $(\mathcal{X}\times \{0,1\}^K, \Sigma, \tau)$. Then, for every $p > 0$, there exists an event $\mathcal{U}^p \subset (\mathcal{X} \times \{0,1\}^K)^N$ regarding the choice of $\mathbf{X}=(X_1,\ldots , X_N)$, with probability $\tau^N (\mathcal{U}^p) \geq 1-p$, in which for every function $\mathfrak{M}_\mathbf{X}$ in the hypothesis class $Lip(\mathcal{X},{\rm C}_{\Theta},{\rm B}_{\Theta})$, we have
\vspace{-0.1cm}
\begin{align}
\label{statisticalLearningEquation}
\left| \mathcal{R}(\mathfrak{M}_{\mathbf{X}}) - \hat{\mathcal{R}}_\mathbf{X}(\mathfrak{M}_{\mathbf{X}}) \right| \leq \xi^{-1} (N) \left( 2{\rm C} + \frac{1}{\sqrt{2}} {\rm B} \left(1 + \sqrt{\log(2/p)}\right)\right),
\vspace{-0.3cm}
\end{align}
where $\xi(\epsilon) =\frac{ \kappa(\epsilon)^2 \log(\kappa(\epsilon))}{\epsilon^2}$, $\kappa$ is the covering number of the compact space $\mathcal{X}\times\{0,1\}^K$ and $\xi^{-1}$ is the inverse function of $\xi$.
\end{restatable}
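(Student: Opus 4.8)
The plan is to reduce the statement to a standard concentration-of-measure argument for a Lipschitz hypothesis class over a compact metric space with finite covering number, following the robustness-based strategy of \cite{Xu2012} as adapted in \cite{signal23}. First I would record that, by \cref{theorem:compactness} (and, for the graphon-signal case, \cref{theorem:graphon_signals_compactness} together with the density of graph-signals mentioned in \cref{appendix:universalitygraphs}), the space $\mathcal{X}$ is compact, hence so is $\mathcal{X}\times\{0,1\}^K$ with any product metric that handles the discrete label factor; call its $\epsilon$-covering number $\kappa(\epsilon)<\infty$. Next I would observe that for any $\mathfrak{M}_\mathbf{X}\in\mathrm{Lip}(\mathcal{X},{\rm C}_\Theta,{\rm B}_\Theta)$, the map $(\nu,y)\mapsto\mathcal{E}(\mathfrak{M}_\mathbf{X}(\nu),y)$ is Lipschitz with constant at most ${\rm C}:={\rm C}_{\mathcal{E}}\max({\rm C}_\Theta,1)$ and bounded in absolute value by ${\rm B}:={\rm C}_{\mathcal{E}}({\rm B}_\Theta+1)+|\mathcal{E}(0,0)|$ — this is just the triangle inequality applied to $|\mathcal{E}(\mathfrak{M}(\nu),y)-\mathcal{E}(0,0)|$ using $\|\mathfrak{M}\|_\infty\le{\rm B}_\Theta$ and $\|y\|\le 1$.

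The core of the argument is the covering-based decomposition. Fix $\epsilon>0$ and a minimal $\epsilon$-net of $\mathcal{X}\times\{0,1\}^K$ of size $\kappa(\epsilon)$, inducing a partition $\{C_1,\dots,C_{\kappa(\epsilon)}\}$ into cells of diameter $\le 2\epsilon$. For the given sample $\mathbf{X}$, write $N_j=|\{i: X_i\in C_j\}|$. Then for any $\mathfrak{M}_\mathbf{X}$ in the hypothesis class, one bounds $|\mathcal{R}(\mathfrak{M}_\mathbf{X})-\hat{\mathcal{R}}_\mathbf{X}(\mathfrak{M}_\mathbf{X})|$ by splitting the difference over cells: within each cell the loss varies by at most $2{\rm C}\epsilon$ by Lipschitzness, contributing a term of order ${\rm C}\epsilon$, while the discrepancy between the empirical cell-masses $N_j/N$ and the true masses $\tau(C_j)$ contributes a term of order ${\rm B}\sum_j|N_j/N-\tau(C_j)|$. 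The latter is controlled, uniformly over the hypothesis class (it does not depend on $\mathfrak{M}$), by a Bretagnolle–Huber–Carol / Breiman-type inequality for multinomial vectors: with probability at least $1-p$, $\sum_j|N_j/N-\tau(C_j)|\le\sqrt{2\kappa(\epsilon)/N}\,(1+\sqrt{\log(2/p)})$ — up to the exact constants this is what produces the $\frac{1}{\sqrt 2}{\rm B}(1+\sqrt{\log(2/p)})$ factor. Defining $\mathcal{U}^p$ to be the event on which this multinomial deviation bound holds gives $\tau^N(\mathcal{U}^p)\ge 1-p$, and on $\mathcal{U}^p$ the two contributions sum to at most $2{\rm C}\epsilon + {\rm B}\sqrt{\kappa(\epsilon)/N}\,\cdot\frac{1}{\sqrt2}(1+\sqrt{\log(2/p)})$.

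The final step is to optimize over $\epsilon$: one wants $\sqrt{\kappa(\epsilon)/N}\approx\epsilon$, i.e. $\kappa(\epsilon)^2/\epsilon^2\approx N$; the logarithmic correction from the union/entropy term sharpens this to the condition $\xi(\epsilon)=\kappa(\epsilon)^2\log\kappa(\epsilon)/\epsilon^2 = N$, so that choosing $\epsilon=\xi^{-1}(N)$ makes both terms of order $\xi^{-1}(N)$ and yields exactly the stated bound with the factor $2{\rm C}+\frac{1}{\sqrt2}{\rm B}(1+\sqrt{\log(2/p)})$. I would then invoke the containment $\Theta\subseteq\mathrm{Lip}(\mathcal{X},{\rm C}_\Theta,{\rm B}_\Theta)$ established in \cref{appendix:glob_lip} to conclude the bound holds a fortiori for all MPNNs in $\Theta$. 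The main obstacle is bookkeeping rather than conceptual: one must verify that $\xi$ is invertible on the relevant range (it is eventually monotone since $\kappa(\epsilon)\to\infty$ as $\epsilon\to 0$ and $\xi$ is decreasing there), and one must be careful that the metric chosen on $\mathcal{X}\times\{0,1\}^K$ keeps the loss Lipschitz while keeping $\kappa$ finite — handling the discrete label coordinate cleanly (e.g. by a weighted sum metric) is the one point that needs care. Everything else is the standard Lipschitz-class-on-a-compact-space generalization template, so I would cite \cite{signal23} for the precise constants and present only the adaptation to our two choices of $\mathcal{X}$.
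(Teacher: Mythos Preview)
Your proposal is correct and follows essentially the same strategy as the paper: bound the Lipschitz constant and sup-norm of the composed loss $(\nu,y)\mapsto\mathcal{E}(\mathfrak{M}(\nu),y)$ by ${\rm C}$ and ${\rm B}$ (this is exactly \cref{lemma:lipinfbound_for_gen}), then apply a uniform Monte Carlo estimate for bounded-Lipschitz functions over a compact metric space with finite covering number. The only difference is granularity: the paper invokes this estimate as a black box (\cref{theorem:monteCarlo}, restating \citealp[Theorem~G.3]{signal23}), whereas you sketch its proof via the covering/multinomial-concentration decomposition and the optimization over $\epsilon$---so your write-up is more self-contained but not a different argument.
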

\vspace{-0.1cm}
\cref{theorem:generalization} means that when minimizing the empirical risk on a training set drawn from the data distribution, the statistical risk is guaranteed to be close to the empirical risk in high probability, where no assumptions on the data and number of parameters of the MPNN is required (the only assumption is Lipschitz continuity of the update and readout functions). Indeed, the term $\xi^{-1}(N/2C)$ in \cref{statisticalLearningEquation} approaches zero when we take $N$ to infinity. Moreover, since the space $\mathcal{X}\times\{0,1\}^C$ is compact (\cref{tychonoffTheorem}), for $\mathcal{X}\in\{\mathscr{P}(\cH^L),\GS^d_r\}$, its covering number $\kappa$ is finite. 
\vspace{-0.3cm}
\section{Discussion}
\vspace{-0.2cm}
{MPNNs were historically defined constructively, as specific types of computations on graphs, without a proper theory of MPNN function spaces over properly defined domains. Our work provides a \emph{functional basis} for MPNNs, which  leads to \emph{machine learning results}  like universal approximation and generalization for attributed graphs.}  


The only non-standard part of our construction is the choice of normalized sum aggregation in our MPNN architecture, while most MPNNs use sum, mean, or maximum aggregation. We justify this choice as follows. First, experimentally, in \citealp[Tables 1, 3]{expr23} and \citealp[Table 2]{signal23}, it is shown that MPNNs with normalized sum aggregation generalize well, compared to MPNNs with sum aggregation. We believe that this is the case since in most datasets most graphs have roughly the same order of vertices. Hence, the normalization by $N^2$ is of a constant order of magnitude and can be swallowed by the weights of the MPNN, mimicking the behavior of an MPNN with sum aggregation. Future work may explore extensions of our theory to other aggregation schemes (see \cref{appendix:aggBackground} and \cref{appendix:aggDiscussion} for more details). 

{Our theory is meaningful only for dense graphs, as sparse graphs are always considered close to the empty graph under our metrics (see \cref{appendix:sparse}). Future work may focus on deriving fine-grained expressivity analyses for sparse attributed graphs.} {Moreover, the current theory is designed around graph level tasks. However, potentially, one can also use IDMs to study node level tasks, as IDMs represent the computational structure corresponding to each node. For example, one can potentially use our construction for analyzing the stability of node-level GNNs to perturbations to the structure of the graph and its features, where the magnitude of the perturbation is modeled via IDM distance.} 
Lastly, since our proofs are specific to MPNNs with normalized sum aggregation, future research could provide extension of our theory to other aggregation functions. 

\section{Ethics Statement}
As this paper mainly focuses on theory, and uses only datasets that do not involve humans or animals, it does not present an apparent ethical problem.
\section{Reproducibility Statement}
We hereby declare that the proofs of all theorems presented in the article are available in the appendix. The code we used in this paper will be available upon the acceptance of this paper, with full explanation on how to perform all the experiments present in this article. This means that all the experiments presented both in the main article and the appendix could be reproduced. 
\section{Acknowledgments}
This research was supported by a grant from the United States-Israel Binational Science Foundation (BSF), Jerusalem, Israel, and the United States National Science Foundation (NSF), (NSF-BSF, grant No. 2024660) , by the Israel Science Foundation (ISF grant No. 1937/23),  and by an Alexander von Humboldt professorship.
\bibliography{bib_thesis}
\bibliographystyle{abbrvnat}
\appendix

\newpage
\section{Additional Background}
\label{AdditionalBackground}
We provide here additional background.
\subsection{Weisfeiler Leman Graph Isomorphism Test on Graphs}
\label{Weisfeiler Leman Graph Isomorphism Test}
The Weisfeiler-Leman-$1$ (WL-$1$) test, which was developed by Boris Weisfeiler and Andrei Leman, also known as color refinement, is an algorithm which aimed to effectively approximate the solution of the graph isomorphism problem.As graph isomorphism problem is not known to be solvable in polynomial time, the algorithm cannot distinguish all non-isomorphic graphs.

Given a graph $G = (V, E)$ with initial labeling $L_0:V\mapsto \NN$ (also referred to as coloring), composed of $V$, a the set of nodes and $E\subseteq V\times V$, a the set of edges, the WL-$1$ algorithm iteratively updates the nodes labels based on the labels of neighboring vertices.
\begin{definition}[Weisfeiler-Leman Iteration]
Given a graph $G=(V,E)$ with an initial labeling $L_0: V \to \NN$, each iteration $t > 0$ of the WL algorithm computes a new labeling $L_t$ as follows:

\[
L_t(v) = \text{Hash}\left(L_{t-1}(v), \{\!\!\{L_{t-1}(u) : u \in \cN(v)\}\!\!\}\right)
\]

where $\cN(v)$ denotes the neighbors of vertex $v$, and Hash is an injective function mapping the previous label and multiset of neighbor labels to a new unique label.
\end{definition}

This process runs on two graphs in parallel and continues on until a stable labeling of one of the two graphs is achieved, meaning that for some $t\in\NN$ the label of one of the graphs doesn't change, or a discrepancy is found between the label of the two graphs being compared. One of these two events is always reached.

\subsection{Message Passing Neural Networks on Graphs}
\label{Message Passing Graph Neural Networks}
Message Passing Neural Networks (MPNNs) \cite{gilmer2017neural} are a neural networks class designed to process graph-structured data with and without attributes. Given a graph $G = (V, E)$, and a signal $\f :V \mapsto \R^d$ (in the case of a graph without attributes the signal is the constant function $\f(v)=1$, MPNNs iteratively update node embeddings through the exchange of messages between nodes through edges. We restate here \cref{definition:MPNNmodel}.
\defmppnmodel*
Let $\varphi$ be an $L$-layer MPNN model. We recall that $\mathcal{N}(v)$ denotes the set of neighbors of node $v$.  Each node $v \in V$ starts with an initial feature vector $\gf(\varphi)_v^{(0)}=\f(v)$, which are updated iteratively through the the layer of the message passing model. Each layer consists of two main steps: \textbf{aggregation} and \textbf{update}.

At each layer $0<t\leq L$, each node $v$ aggregates information from its neighbors by:
\begin{equation*}
    \mathfrak{m}(\varphi,G,\f)^{(t)}_v := \text{Agg} \left(G ,\gf(\varphi,G,\f)_-^{(t-1)}\right),
\end{equation*}
where $\text{Agg}$ is a differentiable aggregation function that process the graph and its hidden node representations on each layer. In this article we focus on 
normalized sum aggregation, which is defined as follows.
\begin{definition}[Normalized Sum Aggregation on Graph-Signals]\label{definition:normSumAgg}
Let $(G,\f)$ be a graph-signal, then normalized sum aggregation with respect to the $(G,\f)$ and a node $v\in V(G)$ is defined as $$Agg(G,\gf(\varphi,G,\f)^{(t)}_-)(v)=\frac{1}{|V|} \sum_{u\in \cN(v)}\gf(\varphi,G,\f)^{(t)}_u$$     
\end{definition}
The node embeddings are then updated based on the message $\mathfrak{m}(\varphi,G,\f)^{(k)}_v$ and the node feutures from the privious layer $\gf(\varphi,G,\f)_v^{(k-1)}$:
\begin{equation}
    \gf(\varphi,G,\f)_v^{(k)} = \varphi^{(k)} \left( \gf(\varphi,G,\f)_v^{(k-1)}, \mathfrak{m}(\varphi,G,\f)^{(k)}_v \right),
\end{equation}
where $\varphi^{(k)}$ is called the update function which is a learnable Lipschitz function like an MLP (Multi-Layer Perceptron). After $L$ layers, the last layer's hidden node representations $\gf(\varphi,G,\f)_-^{(L)}$ are processed by an update layer.  This layer is computed by applying a readout function $\psi:\R^{d_L}\mapsto \R^d$ and outputs a single graph-signal-level representation:
\begin{equation}
    \GF(\varphi,\psi,G,\f) := \psi\left(\frac{1}{|V(G)|} \sum_{v\in V(G)} \gf(\varphi,G,\f)_v^{(L)}\right)
\end{equation}
$\psi$ is typically a permutation-invariant function, that ensures the graph features is independent of any node ordering.

\subsubsection{Graph Isomorphism Network (GIN)}\label{appendix:gin}
Graph Isomorphism Network (GIN) \cite{xu2018powerful} is a popular MPNN model for graphs, which is as expressive as the Weisfeiler-Lehman graph isomorphism test. GIN is defined with sum aggregation but we use normalized sum aggregation. As an update function,it uses a multi layer preceptron (MLP) composed on addition of the massage $\mathfrak{m}(\varphi,G,\f)^{(t)}_-:=\mathfrak{m}^{(t)}_-$ and $\gf(\varphi,G,\f)^{(t-1)}_-:=\gf^{(t-1)}_-$ that depends on a contant $\epsilon$, that controls the weighting between the node's own embedding and its neighbors' embeddings, as follows:

\begin{equation*}
    \mathfrak{m}_v^{(t)} := \frac{1}{|V|}\sum_{u \in \mathcal{N}(v)} \gf_u^{(t-1)},
\end{equation*}

\begin{equation*}
    \gf_v^{(t)} := \text{MLP}^{(t)} \left( (1 + \epsilon) \gf_v^{(t-1)} + \mathbf{m}_v^{(t)} \right).
\end{equation*}

The readout after $L$ layers (which we also normalize in contrast to the original definition) is:
\begin{equation*}
    \GF(\varphi,\psi,G,\f):=\GF := \frac{1}{|V|}\sum_{v \in V} \gf_v^{(L)}.
\end{equation*}

GIN's ability to distinguish between non-isomorphic graphs makes it compatible for tasks, such as graph classification and node-level prediction.

{
\subsection{Sum and Mean Aggregation}\label{appendix:aggBackground}
We present here sum and mean aggregation schemes on graph-signals. This allows further discussion on the different aggregation schemes in \cref{appendix:ExtAgg}}.
{
\begin{definition}[Sum Aggregation on Graph-Signals]\label{definition:sumAgg}
Let $\varphi$ be an $L$-layer MPNN model, $(G,\f)$ be a graph-signal, and $t\in[L-1]$ then sum aggregation of $(G,\f)$'s $t$-level features with respect to the node $v\in V(G)$ is defined as $$Agg(G,\gf(\varphi,G,\f)^{(t)}_-)(v)= \sum_{u\in \cN(v)}\gf(\varphi,G,\f)^{(t)}_u.$$
\end{definition}
\begin{definition}[Mean Aggregation on Graph-Signals]\label{definition:meanAgg}
Let $\varphi$ be an $L$-layer MPNN model, $(G,\f)$ be a graph-signal, and $t\in[L-1]$ then mean aggregation of $(G,\f)$'s $t$-level features with respect to the node $v\in V(G)$ is defined as $$Agg(G,\gf(\varphi,G,\f)^{(t)}_-)(v)=\frac{1}{\cN(v)} \sum_{u\in \cN(v)}\gf(\varphi,G,\f)^{(t)}_u.$$
\end{definition}}

\subsection{Topology Basics}
Topology is a fundamental branch of mathematics, in which spatial properties that remain unchanged under continuous deformations are studied formally. Here, we introduce key concepts of topology.

\begin{definition}[Topological Space]
A topological space is a pair $(\X, \tau)$, where $\X$ is a set and $\tau$ is a collection of subsets of $\X$ satisfying:
\begin{enumerate}
    \item Both $\emptyset$ and $\X$ are in $\tau$
    \item $\tau$ is closed under finite intersections
    \item $\tau$ is closed under arbitrary unions
\end{enumerate}
The elements of $\tau$ are called open sets.
\end{definition}

\subsubsection{Continuity}
Continuity, in topology, generalizes the notion of continuity from calculus to any arbitrary topological space.

\begin{definition}[Continuous Function]
Given topological spaces $(\X, \tau_X)$ and $(\Y, \tau_Y)$, a function $f: \X \to \Y$ is continuous if the preimage of every open set in $\Y$ is open in $\X$. Formally, $f^{-1}(U) \in \tau_X$ for all $U \in \tau_Y$.
\end{definition}

\subsubsection{The Product Topology}
\label{Product Topology}
The product topology is a topology which is naturally defined on a Cartesian product topological spaces. Although it can be defined for Cartesian products of any number of spaces, in our analysis, we are only interested in the finite case.

Let $\{(\X_i, \tau_i)\}_{i \in I}$ be an finite set of topological spaces, then the product space of the set $\{(\X_i, \tau_i)\}_{i \in I}$ is denoted by $\prod_{i \in I} \X_i$ and defined as the set of all vectors $(x_i)_{i \in I}$ where $x_i \in \X_i$ for each $i \in I$. 
The product topology is then generated by the basis $\{(U_i)_{i\in I}: U_i \in \tau_i \}.$

For each $i \in I$, the projection map $\pi_i: \prod_{j \in I} \X_j \to \X_i$ is defined by $\pi_i((x_j)_{j \in I}) = x_i$. In the product topology, all projection maps $\pi_i$ are continuous. Moreover, a sequence in the product space converges if and only if its projections onto each factor space converge.

\subsubsection{Compactness and Separability}
Compactness and separability are both important properties in topology.

\begin{definition}[Compact Space]\label{defintion:compact}
A topological space $(\X, \tau)$ is compact if every open cover of $\X$ has a finite subcover.
\end{definition}

\begin{definition}[Separable Space]
A topological space is separable if it contains a countable dense subset.
\end{definition}

\begin{theorem}[Tychonoff's Theorem]\label{tychonoffTheorem}

Let $\{ X_i \}_{i \in I}$ be a family of compact topological spaces. Then the product space  
$
\prod_{i \in I} X_i
$ 
is compact in the product topology.

\end{theorem}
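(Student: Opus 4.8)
The plan is to reduce compactness of $\prod_{i\in I} X_i$ to a statement about covers by \emph{subbasic} open sets, via the Alexander Subbase Lemma, and then peel off one coordinate using the compactness of the corresponding factor. (An alternative, equally short route runs through the characterization ``a space is compact iff every ultrafilter on it converges''; I sketch that variant at the end.)

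\emph{Step 1 (Alexander Subbase Lemma).} First I would invoke, or prove, the lemma: if $\mathcal{S}$ is a subbasis for a space $X$ and every cover of $X$ by members of $\mathcal{S}$ has a finite subcover, then $X$ is compact. The argument is an application of Zorn's lemma. Assuming $X$ is not compact, the family of open covers with no finite subcover, ordered by inclusion, is nonempty and every chain has an upper bound (the union of such a chain is again a cover, and a finite subcover would already lie in one chain element); let $\mathcal{C}$ be a maximal element. One checks $\mathcal{C}\cap\mathcal{S}$ still covers $X$: given $x$, choose $C\in\mathcal{C}$ with $x\in C$ and subbasic $S_1,\dots,S_k$ with $x\in S_1\cap\cdots\cap S_k\subseteq C$; if no $S_j$ lay in $\mathcal{C}$, maximality would give for each $j$ a finite $\mathcal{F}_j\subseteq\mathcal{C}$ with $X=S_j\cup\bigcup\mathcal{F}_j$, whence $\{C\}\cup\bigcup_j\mathcal{F}_j$ is a finite subcover of $\mathcal{C}$, a contradiction; so some $S_j\in\mathcal{C}\cap\mathcal{S}$ contains $x$. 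Then $\mathcal{C}\cap\mathcal{S}$ is a subbasic cover, hence has a finite subcover, hence so does $\mathcal{C}$ — contradiction.

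\emph{Step 2 (coordinatewise reduction).} If some $X_i=\emptyset$ then $\prod_i X_i=\emptyset$ is compact, so assume every factor nonempty; then each projection $\pi_i$ is surjective. Take the standard subbasis $\mathcal{S}=\{\pi_i^{-1}(U):i\in I,\ U\subseteq X_i\text{ open}\}$ of the product topology and let $\mathcal{C}\subseteq\mathcal{S}$ be any cover of $\prod_i X_i$. For each $i$ put $\mathcal{C}_i=\{U\subseteq X_i\text{ open}:\pi_i^{-1}(U)\in\mathcal{C}\}$. I claim $\mathcal{C}_{i_0}$ covers $X_{i_0}$ for some $i_0$: otherwise, by the Axiom of Choice pick $x_i\in X_i\setminus\bigcup\mathcal{C}_i$ for every $i$, and the point $x=(x_i)_i$ lies in no member of $\mathcal{C}$ — since $x\in\pi_j^{-1}(U)$ would force $x_j\in U\in\mathcal{C}_j$ — contradicting that $\mathcal{C}$ covers. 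Fix such $i_0$; by compactness of $X_{i_0}$ there are $U_1,\dots,U_n\in\mathcal{C}_{i_0}$ with $X_{i_0}=\bigcup_m U_m$, and then $\{\pi_{i_0}^{-1}(U_1),\dots,\pi_{i_0}^{-1}(U_n)\}\subseteq\mathcal{C}$ covers $\prod_i X_i$ by surjectivity of $\pi_{i_0}$. So every subbasic cover admits a finite subcover, and Step 1 yields compactness of $\prod_i X_i$.

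\emph{Main obstacle and remarks.} The substantive ingredients are (i) the Alexander Subbase Lemma, which needs Zorn's lemma, and (ii) the simultaneous choice of the points $x_i$, which needs the Axiom of Choice; the latter is genuinely unavoidable, since Tychonoff's theorem is equivalent to AC. Everything else is bookkeeping with the definition of the product topology. The ultrafilter variant avoids Step 1 entirely: a space is compact iff every ultrafilter on it converges; given an ultrafilter $\mathcal{U}$ on $\prod_i X_i$, each projection $\pi_i$ pushes $\mathcal{U}$ forward to an ultrafilter on $X_i$, which converges to some $x_i$ by compactness (choose one such $x_i$ per coordinate), and then $x=(x_i)_i$ is a limit of $\mathcal{U}$, because each subbasic neighborhood $\pi_i^{-1}(U)$ of $x$ already belongs to $\mathcal{U}$ and a filter is closed under finite intersection and supersets. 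There the Ultrafilter Lemma plays the role of Zorn's lemma.
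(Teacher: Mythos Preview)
Your proof is correct and follows the standard Alexander Subbase route (with a correct ultrafilter alternative sketched at the end). The paper, however, does not prove Tychonoff's theorem at all: it is stated in the background appendix as a known result and then invoked in the proof of \cref{theorem:compactness}, so there is no ``paper's own proof'' to compare against.

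One minor remark: in Step~2 the phrase ``by surjectivity of $\pi_{i_0}$'' is superfluous --- once $X_{i_0}=\bigcup_m U_m$, the equality $\prod_i X_i=\bigcup_m \pi_{i_0}^{-1}(U_m)$ follows just from preimages commuting with unions, regardless of surjectivity. Also note that the paper only ever applies Tychonoff to \emph{finite} products $\mathcal{H}^L=\prod_{i\le L}\mathcal{M}^i$, for which the result is elementary (induction on the two-factor case) and does not require the Axiom of Choice; your full-strength version is more than is strictly needed here.
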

A metric space is a topological space, where the topology is induced by a distance function called a metric.
\subsubsection{Norm Spaces}
A normed vector space is a vector space over the real or complex numbers on which a norm is defined. 
\begin{definition}
    A \emph{normed vector space} is a pair \( (V, \|\cdot\|) \) where \( V \) is a vector space over \( \mathbb{R} \) or \( \mathbb{C} \) and \( \|\cdot\|: V \to \mathbb{R} \) is a function, called the \emph{norm}, satisfying for all \( x, y \in V \) and all scalars \( \alpha \):
    \begin{enumerate}
\item \( \|x\| \geq 0 \) and \( \|x\| = 0 \) if and only if \( x = 0 \).
\item \( \|\alpha x\| = |\alpha| \|x\| \).
\item \( \|x + y\| \leq \|x\| + \|y\| \).
    \end{enumerate}
\end{definition}
\subsubsection{Metric and Pseudometric Spaces}
\label{Metric Spaces and Topology}
A pseudometric space is a topological space, where the topology is induced by a distance function called a pseudometric.
\begin{definition}[Pseudometric Space]
A metric space is a pair $(\X, d)$ where $\X$ is a set and $d: \X \times \X \to \mathbb{R}$ is a function satisfying:
\begin{enumerate}
    \item $d(x,y) \geq 0$, for all $x,y \in \X$.
    \item $d(x,y) = d(y,x)$, for all $x,y \in \X$.
    \item $d(x,z) \leq d(x,y) + d(y,z)$, for all $x,y,z \in \X$.
\end{enumerate}
\end{definition}
Notice that it is possible that \( d(x, y) = 0 \) for some \( x \neq y \), meaning distinct points can have zero distance. A metric is a pseudometric that satisfy the following additional requirement. $$\forall x,y\in \X:d(x,y)=0\iff x=y.$$ The pair $(\X,d)$, where $\X$ is a set and $d:\X\times\X\to\R$ is a metric is called a metric space. The topology induced by a metric/pseudometric is generated by a base containing all open balls defined by $B(x,r) = \{y \in \X : d(x,y) < r\}$.
\subsubsection{Covering Number}\label{appendix:coveringNumber}
The covering number provides a measure of a set's size and complexity.
\begin{definition}\label{definition:coveringNumber}
A metric space $\mathcal{X}$ is said to have a covering number $\kappa : (0, \infty) \to \NN$ if, for any $\epsilon > 0$, $\mathcal{X}$ can be covered by $\kappa(\epsilon)$ balls of radius $\epsilon$ (see \cref{appendix:coveringNumber}).
\end{definition}
Note that if a metric space $\mathcal{X}$ is compact (see \cref{defintion:compact}), then it can be covered by a finite number of balls of radius $\epsilon$, for any $\epsilon>0$.
\subsubsection{Metric Identification}\label{appendix:metricIden}
Let $(\X,d)$ be a pseudometric space. Consider the equivalence relation: $$\forall x,y\in \X: x \sim_d y \quad {\rm if} \quad d(x,y) = 0.$$ 
Any equivalence relation $\sim_d$ is part of a class of equivalence relations called \emph{metric identifications}. A metric identification converts a pseudometric space into a metric space, while preserving the induced topologies. This identification is captured through the quotient map $\pi:\sim\mapsto\X / \sim $ defined as the map $x\to[x]$.
The open sets in the pseudometric space are exactly the sets of the form $\pi^{-1}(\A)$, where $\A\in\X/\sim$ is open. Namely, if an open set $\A\in\X$ contains $x$, it has to contain all of the other elements in $[x]$. The \emph{pseudometric topology} is the topology generated by the open balls
\[
B_r(p) = \{x \in \X : d(p, x) < r\},
\]
which form a basis for the topology. 
\subsubsection{Complete Metric Spaces}
\begin{definition}[Complete Metric Space]
    A metric space $(\X, d)$ is called \emph{complete} if every Cauchy sequence in $\X$ converges to a point in $\X$. A sequence $\{x_n\} \subset \X$ is a \emph{Cauchy sequence} if for every $\epsilon > 0$, there exists an integer $N \in \NN$ such that for all $m, n \geq N$, the following holds:
    \begin{equation}
        d(x_n, x_m) < \epsilon.
    \end{equation}
\end{definition}
In other words, if the elements of a sequence get arbitrarily close to each other as the sequence progresses, then that sequence also converges to a point within the space. If a topological space is metrizable to a complete metric space, we say that it is \emph{completely metrizable}.
\subsubsection{Metrizable  Spaces}
\begin{definition}[Metrizable  Space]
    A topological space $(\X, \tau)$ is called \emph{metrizable} if there exists a metric $d: \X \times \X \to \mathbb{R}$ such that the topology induced by the metric $d$ coincides with the topology $\tau$ on $\X$. 
\end{definition}
In other words, the open sets in the topology $\tau$ are exactly the open sets with respect to the metric $d$.
\subsubsection{Measure Spaces}
Measure theory generalizes and formalizes geometrical measures such as length, area, and volume as well as other notions, such as magnitude, mass, and probability of events.

\begin{definition}[$\sigma$-algebra]
    Let $\X$ be a set. A collection $\Sigma$ of subsets of $\X$ is called a \emph{$\sigma$-algebra} if the following properties are satisfied:
\begin{enumerate}
        \item $\X \in \Sigma$.
        \item $\Sigma$ is closed under complement, i.e., if $\A \in \Sigma$, then $\X \setminus \A \in \Sigma$,
        \item $\Sigma$ is closed under countable unions, i.e., if $\{\A_n\}_{n=1}^{\infty}$ is a countable collection of sets in $\Sigma$, then
\begin{equation}
\bigcup_{n=1}^{\infty} \A_n \in \Sigma.
\end{equation}
    \end{enumerate}
\end{definition}
The pair $(\X, \Sigma)$ is called a \emph{measurable space}. A $\sigma$-algebra provides the foundation for the construction of measures,
\begin{definition}[Measure]
    Let $(\X, \Sigma)$ be a measurable space, where $\X$ is a set and $\Sigma$ is a $\sigma$-algebra on $\X$. A function $\mu: \Sigma \to [0, \infty]$ is called a \emph{measure} if it satisfies the following properties:
    \begin{enumerate}
        \item \emph{Non-negativity}: For all $\A \in \Sigma$, $\mu(\A) \geq 0$.
        \item \emph{Null empty set}: $\mu(\emptyset) = 0$.
        \item \emph{Countable additivity (or $\sigma$-additivity)}: For any countable collection of pairwise disjoint sets $\{\A_n\}_{n=1}^{\infty} \subset \Sigma$, the measure of their union is the sum of their measures:
        \begin{equation}
            \mu\left(\bigcup_{n=1}^{\infty} \A_n\right) = \sum_{n=1}^{\infty} \mu(\A_n).
        \end{equation}
    \end{enumerate}
\end{definition}
A measure $\mu$ assigns a non-negative extended real number to each set in the $\sigma$-algebra $\Sigma$. The triple $(\X, \Sigma, \mu)$ is called a \emph{measure space}.
    
\subsubsection{Polish Spaces}
\label{Polish Spaces}
Polish spaces are topological spaces, that have a "nice" topology in the following way.
\begin{definition}[Polish Space]
A topological space is Polish if it is separable and completely metrizable, i.e., there exists a complete metric that generates its topology.
\end{definition}
\subsection{Stone–Weierstrass Theorem}
\label{stone-weierstrass}
Given a set $\A$, we denote by $1_\A:\A\to \R$ a non-zero constant function. We present a variation of the Stone-Weierstass Theorem (see \citealp[Theorem 7.32]{mathAnal76}) we use in \cref{appendix:universalApproximation} to prove a universal approximation theorem.
\begin{theorem}[Real Stone–Weierstrass]\label{theorem:stone-weierstrass}
 Let $\cK$ be a compact metric space and $A \subseteq C(\cK, \RR)$ be a sub-algebra that contains $1_\cK$ and separates points, i.e., for every $k \neq l \in \cK$ there is $f \in A$ such that $f(k) \neq f(l)$. Then $A$ is uniformly dense in $C(\cK, \RR)$.
\end{theorem}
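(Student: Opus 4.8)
The plan is to prove \cref{theorem:stone-weierstrass} via the classical lattice argument. First I would pass to the uniform closure $B := \overline{A} \subseteq C(\cK,\RR)$. Since $\cK$ is compact, every continuous function on it is bounded, so uniform limits of sums, scalar multiples and products of such functions again lie in $C(\cK,\RR)$ and respect the algebra operations; hence $B$ is a subalgebra, it is closed, it contains $1_\cK$, and it still separates points. It therefore suffices to show $B = C(\cK,\RR)$, and concretely I will show that every $f \in C(\cK,\RR)$ can be approximated within any $\epsilon>0$ by an element of $B$, which by closedness of $B$ forces $f\in B$.

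The technical core is the closure of $B$ under absolute value: if $g \in B$ then $|g| \in B$. To see this without circularity I would exhibit an explicit sequence of real polynomials $p_n$ with $p_n(0)=0$ converging uniformly to $\sqrt{t}$ on $[0,1]$, e.g.\ $p_0 \equiv 0$ and $p_{n+1}(t) = p_n(t) + \tfrac12\bigl(t - p_n(t)^2\bigr)$, and check by induction that $0 \le p_n(t) \le p_{n+1}(t) \le \sqrt t$ on $[0,1]$, so that $p_n$ increases to $\sqrt{t}$ pointwise and hence, by Dini's theorem, uniformly. Rescaling $g$ so that $\|g\|_\infty \le 1$, the functions $p_n(g^2)$ lie in the algebra $B$ (a polynomial with zero constant term applied to an element of $B$), and they converge uniformly to $\sqrt{g^2} = |g|$, giving $|g| \in B$. (Alternatively one may quote the one-variable Weierstrass approximation theorem for $|t|$ on $[-1,1]$ as a black box.) It follows that $B$ is a sublattice: for $f,g \in B$ one has $\max(f,g) = \tfrac12\bigl(f+g+|f-g|\bigr) \in B$ and $\min(f,g) = \tfrac12\bigl(f+g-|f-g|\bigr) \in B$, and by induction so are all finite maxima and minima of elements of $B$.

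Next I would record the two-point interpolation property: for distinct $x,y \in \cK$ and any $a,b \in \RR$ there is $h \in A$ with $h(x)=a$ and $h(y)=b$; indeed, pick $\phi \in A$ with $\phi(x)\neq\phi(y)$ by separation and set $h = a\cdot 1_\cK + (b-a)\frac{\phi - \phi(x)1_\cK}{\phi(y)-\phi(x)}$, which lies in $A$ since $A$ is a unital algebra. Then, given $f \in C(\cK,\RR)$ and $\epsilon > 0$: for each fixed $x$ and each $y$ choose $h_{x,y} \in B$ agreeing with $f$ at $x$ and $y$; by continuity $h_{x,y} < f + \epsilon$ on an open neighbourhood $U_y$ of $y$, and compactness of $\cK$ yields a finite subcover $U_{y_1},\dots,U_{y_n}$, so $h_x := \min_i h_{x,y_i} \in B$ satisfies $h_x < f+\epsilon$ everywhere and $h_x(x)=f(x)$. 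By continuity $h_x > f - \epsilon$ on an open neighbourhood $V_x$ of $x$; compactness gives a finite subcover $V_{x_1},\dots,V_{x_m}$, and $g := \max_j h_{x_j} \in B$ then satisfies $f - \epsilon < g < f + \epsilon$ on all of $\cK$, i.e.\ $\|f - g\|_\infty \le \epsilon$. Hence $f \in \overline{B}=B$, and $B = C(\cK,\RR)$.

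The main obstacle is the absolute-value/lattice step: the rest is bookkeeping with compactness, but the passage from ``subalgebra'' to ``sublattice'' genuinely requires an analytic input — the explicit polynomial recursion above, or equivalently the scalar Weierstrass theorem — and one must keep the approximating polynomials constant-term-free (or invoke $1_\cK\in B$) so that $p_n(g^2)$ stays in $B$. A secondary point to handle carefully is that the uniform closure of a subalgebra is again a subalgebra, which is exactly where compactness of $\cK$ enters (boundedness of continuous functions, so that products of uniformly convergent sequences converge uniformly).
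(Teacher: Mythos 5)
Your proof is correct: the paper does not prove this statement itself but cites it as a classical result (Rudin, Theorem 7.32), and your argument — closing $A$, establishing $|g|\in\overline{A}$ via the polynomial recursion for $\sqrt{t}$ (with Dini), deducing the lattice property, and finishing with two-point interpolation plus the standard compactness covering with $\min$/$\max$ — is exactly the standard lattice-based proof contained in that reference. No gaps; the one point you rightly flag (constant-term-free polynomials, or alternatively invoking $1_\cK\in\overline{A}$) is handled correctly.
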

 The following corollary (taken from \cite{frac21}) is another useful description of the Stone-Weierstrass theorem, and is used as well in the universal approximation proof in \cref{appendix:universalApproximation}.
\begin{corollary}[Separating Measures]\label{separatingMeasures}
    Let $\cK$ be a compact metric space and $\mathcal{E} \subseteq C(\cK, \RR)$ be closed under multiplication, contain $1_\cK$, and separate points. Then for every $\mu\neq\nu \in \mathscr{M}_{\leq1}(\cK)$ there is $f \in \mathcal{E}$ such that $$\int_{\cK}f d\mu \neq \int_{\cK}fd\nu$$ i.e., the linear functionals that correspond to elements of $\mathcal{E}$ separate points in $\mathcal{M}_{\leq1}(\cK)$.
\end{corollary}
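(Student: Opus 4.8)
The plan is to derive this corollary directly from the Stone--Weierstrass theorem (\cref{theorem:stone-weierstrass}) by passing to the linear span of $\mathcal{E}$, invoking uniform density there, and then descending back to $\mathcal{E}$ itself using linearity of integration.

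First I would set $A := \operatorname{span}_{\RR}(\mathcal{E})$, the set of all finite real linear combinations of elements of $\mathcal{E}$, and verify that $A$ satisfies the hypotheses of \cref{theorem:stone-weierstrass}. It is a linear subspace of $C(\cK,\RR)$ by construction; it contains $1_\cK$ since $1_\cK \in \mathcal{E} \subseteq A$; and it separates points since $\mathcal{E}$ already does. The only nontrivial point is closure under multiplication: because $\mathcal{E}$ is closed under multiplication, an easy induction shows every finite product of elements of $\mathcal{E}$ again lies in $\mathcal{E}$, so for $f = \sum_i a_i f_i$ and $g = \sum_j b_j g_j$ with $f_i, g_j \in \mathcal{E}$ we get $fg = \sum_{i,j} a_i b_j (f_i g_j) \in A$. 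Hence $A$ is a point-separating subalgebra of $C(\cK,\RR)$ containing $1_\cK$, and \cref{theorem:stone-weierstrass} yields that $A$ is uniformly dense in $C(\cK,\RR)$.

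Next I would fix $\mu \neq \nu$ in $\mathscr{M}_{\leq 1}(\cK)$. Since $\cK$ is compact, $C_b(\cK) = C(\cK,\RR)$, so by the characterization of equality of measures under the weak$^*$ topology recalled in \cref{section:background} there is some $g \in C(\cK,\RR)$ with $\eps := \bigl|\int_\cK g\,d\mu - \int_\cK g\,d\nu\bigr| > 0$. Using uniform density I would choose $h \in A$ with $\|h - g\|_\infty < \eps/4$; since $\|\mu\|,\|\nu\| \leq 1$, this gives $\bigl|\int_\cK (h-g)\,d\mu\bigr| \leq \|h-g\|_\infty < \eps/4$ and likewise for $\nu$, whence the triangle inequality forces $\bigl|\int_\cK h\,d\mu - \int_\cK h\,d\nu\bigr| \geq \eps - \eps/4 - \eps/4 = \eps/2 > 0$. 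Finally, writing $h = \sum_{i=1}^n c_i f_i$ with $c_i \in \RR$ and $f_i \in \mathcal{E}$, linearity of the integral gives $\sum_i c_i \int_\cK f_i\,d\mu \neq \sum_i c_i \int_\cK f_i\,d\nu$, so the two sums cannot agree term by term, and some index $i$ satisfies $\int_\cK f_i\,d\mu \neq \int_\cK f_i\,d\nu$; then $f := f_i \in \mathcal{E}$ is the desired function.

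I do not anticipate a real obstacle here: the argument is essentially bookkeeping around Stone--Weierstrass. The two steps that deserve care are (i) checking that $\operatorname{span}(\mathcal{E})$ is genuinely an algebra --- this is exactly where the hypothesis that $\mathcal{E}$ is closed under multiplication is used, ensuring no products outside $\mathcal{E}$ are manufactured in the span --- and (ii) the final descent from a separating element of $\operatorname{span}(\mathcal{E})$ to a separating element of $\mathcal{E}$, which is the pigeonhole-via-linearity step above.
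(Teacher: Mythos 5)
Your argument is correct, and it is essentially the standard proof of this fact: the paper itself does not reprove the corollary (it cites it from \cite{frac21}), but the cited proof runs through exactly the same steps—pass to $\operatorname{span}(\mathcal{E})$, which is a point-separating subalgebra containing the constants, apply Stone--Weierstrass, and use $\|\mu\|,\|\nu\|\leq 1$ to transfer uniform approximation to integrals. Your direct phrasing with the $\eps/4$ bookkeeping and the final pigeonhole step is just the contrapositive of the usual formulation (``if all $f\in\mathcal{E}$ integrate equally, then by linearity and density all of $C(\cK,\RR)$ does, hence $\mu=\nu$''), so there is no substantive difference.
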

\subsection{The Cut Norm and the Cut Metric}\label{appendix:cutDistance}
The \emph{cut norm} was introduced by \cite{frieze1999quick} and provides the central notion of convergence in the theory of dense graph limits. \cite{signal23,signal25} extended the cut norm to graphons with d-channel signals, i.e., graphon-signals in $\GS^d_r.$
\begin{definition}[Cut Norm and Cut Metric]\label{definition:multidimCutNorm}
The \emph{cut norm} of a measurable $W:[0,1]^2\mapsto \R$, and the cut norm of a measurable function $f:[0,1]\mapsto \R^d$ is defined as
\[
\|W\|_{\square} := \sup_{A,B\in[0,1]} \left| \int_{A \times B} W(x,y) \, dxdy \right|\quad {\rm and}\quad \|f\|_\square := \frac{1}{d}\sup_{S \subset [0, 1]} \norm{\int_S f(x)\,d\lambda(x)}_{1},
\]
 where the supremum is taken over the measurable subsets $A,B\subseteq [0,1]$ and $S\subseteq [0,1]$, respectively. We define the \emph{graphon-signal cut norm}, for a graphon signal $(W,f) \in \GS^d_r$, by
\[
\|(W, f)\|_\square := \|W\|_\square + \|f\|_\square.
\]
We name the metric induced by the cut norm the \emph{cut metric}, which is defined to be $$d_{\square}((W,f),(V,g)) := \norm{(W,f)-(V,g)}_{\square}$$ for any two graphon-signals $(W,f),(V,g)\in \GS^d_r$.
\end{definition}

\subsection{The Cut Distance}
\cite{lovasz2012large} first defined the \emph{cut distance} for graphons via the graphon cut norm. This notion was later extended to graphon-signals by \cite{signal23, signal25} using the graphon-signal cut norm.

Let $S'_{[0,1]}$ be the space of measurable bijections between co-null sets of $[0,1]$. Namely, 
\[S'_{[0,1]} := \left\{\phi:A\to B\ |\ A,B \text{\ co-null\ in\ }[0,1],\ \ \text{and}\ \ \forall S\in A,\ \mu(S)= \mu(\phi(S))\right\} ,\] 
where $\phi$ is a measurable bijection and $A,B,S$ are measurable.
Let $\phi$ be a measurable bijection in
$S'_{[0,1]}$, for any graphon, $W$, and a signal $f$, we define $W^{\phi}(x,y):=W(\phi(x),\phi(y))$ and $f^{\phi}(z):=f(\phi(z))$, respectively. We then define $(W,f)^\phi:=(V^\phi,g^\phi)$. Note that $W^{\phi}$ and $f^{\phi}$ are only define up to a null-set, therefore, we (arbitrarily) set $W,W^{\phi},f,f^{\phi}$ to $0$ in this null-set. This does not affect our analysis, as the cut norm is not affected by changes to the values of functions on a null sets. 
\begin{definition}[Cut Distance]The \emph{cut distance}\label{eq:gs-metric} is defined to be 
    \[\delta_{\square}((W,f),(V,g)): = \inf_{\phi\in S'_{[0,1]}} d_{\square}((W,f),(V,g)^\phi).\] 
\end{definition}
\subsection{The Quotient Metric Spaces of Graphon-Signals}\label{appendix:cutdistanceMetId}
The graphon-signal cut distance $\delta_{\square}$ is a pseudometric. Consider the equivalence relations: $(W, f) \sim (V, g)$ if $\delta_{\square}((W, f), (V, g)) = 0$. For any graphon-signals $(W, f)\in \mathcal{W\mathcal{L}}^d$, denote the equivalent class of $(W, f)$, with respect to $\sim$, as $[(W, f)]$. Then, the quotient space
\[
\widetilde{\GS^d_r} := \mathcal{W\mathcal{L}} / \sim
\]
of the equivalence classes $[(W, f)]$ is a metric space, with the metric $\delta_{\square}([(W, f)], [(V, g)]) := \delta_{\square}((W, f), (V, g))$. Notice that the equivalence relation $\sim$ is a metric identification (\cref{appendix:metricIden}).
\subsection{The Compactness of the Cut Distance}
The space of graphon-signals is compact w.r.t. the cut distance $\delta_{\square}$ (\cref{eq:gs-metric}), as proven in \cite{signal23, signal25}.
\begin{theorem}[\cite{signal23}, Theorem 3.6.;\cite{signal25}, Theorem 3.9.]\label{theorem:cutcompact}
    The Pseudometric space $(\widetilde{\GS^d_r},\delta_{\square})$ and the metric space $(\GS^d_r,\delta_{\square})$ are compact.
Moreover,  
given $r>0$ and $c>1$, for every sufficiently small  $\epsilon>0$, they can be covered by 
\begin{equation}
\label{eq:WLrCover}
\kappa(\epsilon)=  2^{k^2}
\end{equation}
balls of radius $\epsilon$, where $k=\lceil 2^{\frac{9c}{4\epsilon^2}}\rceil$.
\end{theorem}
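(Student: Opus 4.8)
Since a metric space is compact exactly when it is complete and totally bounded, I would prove both properties for the metric space $(\widetilde{\GS^d_r},\delta_{\square})$ obtained by metric identification (\cref{appendix:metricIden}); compactness of the pseudometric space $(\GS^d_r,\delta_{\square})$ then follows, as it and its identification have the same open sets. The quantitative covering number $\kappa(\epsilon)$ comes from the total-boundedness half of the argument — a weak regularity lemma for graphon-signals followed by discretizing the resulting step objects — while completeness follows from a martingale convergence argument.

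The key ingredient is a \emph{weak regularity lemma for graphon-signals}: for every $\eta>0$ there is $K(\eta)\in\NN$ such that every $(W,f)\in\GS^d_r$ admits an equipartition $\mathcal{P}$ of $[0,1]$ into at most $K(\eta)$ parts for which the step graphon-signal $(W_{\mathcal{P}},f_{\mathcal{P}})$ — obtained by averaging $W$ over each block $P_i\times P_j$ and $f$ over each $P_i$ — satisfies $\norm{(W,f)-(W_{\mathcal{P}},f_{\mathcal{P}})}_{\square}\le\eta$, and hence $\delta_{\square}((W,f),(W_{\mathcal{P}},f_{\mathcal{P}}))\le\eta$. I would prove this by an energy-increment argument on the combined object: assign to a partition the energy $\sum_i\lambda(P_i)\big\|\tfrac{1}{\lambda(P_i)}\int_{P_i}f\,d\lambda\big\|_2^2+\sum_{i,j}\lambda(P_i)\lambda(P_j)\big(\tfrac{1}{\lambda(P_i)\lambda(P_j)}\int_{P_i\times P_j}W\big)^2$, which is nonnegative, bounded above by $r^2+1$, and nondecreasing under refinement; whenever a partition violates the $\eta$-approximation, the suprema defining the graphon-signal cut norm (\cref{definition:multidimCutNorm}) provide witness sets whose addition to $\mathcal{P}$ refines it and raises the energy by at least a fixed multiple of $\eta^2$, so the process halts after $O(1/\eta^2)$ refinements, each multiplying the number of parts by a bounded factor; a final equipartitioning step costs another bounded factor. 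Bookkeeping the constants as in the scalar graphon case (cf.\ \cite{signal23,signal25}) gives $K(\eta)\le 2^{9/(4\eta^2)}$ for all sufficiently small $\eta$.

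For total boundedness, fix small $\epsilon>0$. The lemma places every class in $\widetilde{\GS^d_r}$ within cut-norm distance well below $\epsilon$ of a step graphon-signal on some equipartition into $m\le K(\epsilon)\le 2^{9/(4\epsilon^2)}$ parts, and since $\delta_{\square}$ quotients out measure-preserving rearrangements we may take this partition to be the standard one $\{[(i-1)/m,i/m)\}_{i=1}^{m}$. Such an object is determined by a symmetric $m\times m$ matrix over $[0,1]$ together with $m$ values in $\mathbb{B}^d_r$; rounding the matrix entries and the signal values to grids of mesh $\Theta(\epsilon)$ keeps the total error at most $\epsilon$, so the resulting finitely many rounded step graphon-signals, ranging over all $m\le K(\epsilon)$, form an $\epsilon$-net. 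Its cardinality is $2^{\Theta(K(\epsilon)^2\log(1/\epsilon))}$, and for every fixed $c>1$ this is bounded by $2^{k^2}$ with $k=\lceil 2^{9c/(4\epsilon^2)}\rceil$ once $\epsilon$ is small enough — raising the exponent from $9/(4\epsilon^2)$ to $9c/(4\epsilon^2)$ overwhelms the extra logarithmic and polynomial factors in the count. Hence $\kappa(\epsilon)\le 2^{k^2}$.

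Finally, for completeness: given a $\delta_{\square}$-Cauchy sequence $((W_n,f_n))_n$, pass to a rapidly Cauchy subsequence, apply to each term its regularity equipartition, and align consecutive partitions via near-optimal measure-preserving rearrangements; the block- and part-averages then form, up to these rearrangements, convergent martingales on $[0,1]^2$ and on $[0,1]$, whose $L^1$-limits define a graphon $W$ and a signal $f$ valued in $\mathbb{B}^d_r$ — the latter because $\mathbb{B}^d_r$ is closed and convex, hence stable under conditional expectations and $L^1$-limits — with $\delta_{\square}((W_n,f_n),(W,f))\to 0$, and Cauchyness upgrades this to convergence of the whole original sequence. Completeness together with total boundedness gives compactness of $(\widetilde{\GS^d_r},\delta_{\square})$, hence of $(\GS^d_r,\delta_{\square})$. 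I expect the main obstacle to be the weak regularity lemma with the advertised explicit constant: the energy increment must be harvested simultaneously from the $W$- and $f$-parts on a single shared partition, and one must check that this sharing does not degrade the $O(1/\eta^2)$ step count; once it is in place, the discretization and counting collapse to $2^{k^2}$ thanks to the deliberately generous choice of $k$.
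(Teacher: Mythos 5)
This theorem is not proved in the present paper at all: it is imported verbatim from \cite{signal23,signal25}, and your sketch follows essentially the same route as the cited proof — the graphon-signal weak regularity lemma (cf.\ \cref{lem:gs-reg-lem3}) obtained by a joint energy-increment on $W$ and $f$, an $\epsilon$-net of quantized step graphon-signals whose count is absorbed into $2^{k^2}$ via the slack $c>1$, and a martingale-convergence construction of the limit object. Your repackaging as completeness plus total boundedness, rather than the direct Lov\'asz--Szegedy-style extraction of a convergent subsequence, is only a cosmetic difference (and note the martingale is indexed by the refining partitions after passing to $n\to\infty$ limits of the step values, not literally by the block averages of the $W_n$ themselves).
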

\subsection{The Cut Metric Regularity Lemma for Graphon-Signals}
To formulate our regularity lemma, we first define spaces of step functions, based on \cite{signal23, signal25}. 
\begin{definition}[\cite{signal23}, Definition 3.3.; \cite{signal25}, Definition 3.6.]
\label{def:step}
Given a partition $\mathcal{P}_k$, and $d\in\NN$, we define the space $\mathcal{S}^{p\to d}_{\mathcal{P}_k}$ of \emph{step functions} $\R^p\mapsto\R^d$ over the partition $\mathcal{P}_k$ to be the space of functions $F:[0,1]^p\rightarrow\RR^d$ of the form
\begin{equation}
\label{eq:Sd}
   F(x_1,\ldots,x_p)=\sum_{j=(j_1,\ldots,j_p)\in [k]^p} \left(\prod_{l=1}^p\mathds{1}_{P_{j_l}}(x_l)\right)c_j, 
\end{equation}
   for any choice of $\{c_j\in \RR^d\}_{j\in [k]^p}$.
\end{definition}
We call any graphon, that also belongs to the set $\mathcal{S}^{2\to 1}_{\mathcal{P}_k}$ a \emph{step graphon} (also called a  \emph{stochastic block model (SBM)}) with respect to $\mathcal{P}_k$. We call any signal,  that also belongs to the set $\mathcal{S}^{1\to d}_{\mathcal{P}_k}$ a \emph{step signal}. We define the space of (graphon-signal) SBMs with respect to $\mathcal{P}_k$ as $\GS^d_r\cap\left(\mathcal{S}^{2\rightarrow 1}_{\mathcal{P}_k}\times\mathcal{S}^{1\rightarrow d}_{\mathcal{P}_k}\right)$. Next, we define the projection of a graphon-signal upon a partition.
\begin{definition}[\cite{signal23} Definition B.10; \cite{signal25}, Definition 3.7.]
\label{def:proj}
Let $\mathcal{P}_n=\{P_1,\ldots,P_n\}$ be a partition of $[0,1]$, and $(W,f)\in\GS^d_r$. We define the \emph{projection} of $(W,f)$ upon $\GS^d_r\cap\left(\mathcal{S}^{2\rightarrow 1}_{\mathcal{P}_k}\times\mathcal{S}^{1\rightarrow d}_{\mathcal{P}_k}\right)$ to be the step graphon-signal $(W,f)_{\mathcal{P}_n}=(W_{\mathcal{P}_n},f_{\mathcal{P}_n})$ that attains the value
    \[W_{\mathcal{P}_n}(x,y) = \int_{[0,1]^2}W(x,y)\mathds{1}_{P_i\times P_j}(x,y)dxdy \ , \quad f_{\mathcal{P}_n}(x) = \int_{[0,1]}f(x)\mathds{1}_{P_i}(x)dx\]
    for every $(x,y)\in P_i\times P_j$ and $1\leq i,j\leq n$. 
\end{definition}
Note that we use the notation $(W_{\mathcal{P}_n},f_{\mathcal{P}_n})$ and the notation $([W]_{\mathcal{P}_n},[f]_{\mathcal{P}_n})$ interchangeably. The following version of the regularity lemma states that any graphon-signal can be approximated using the average values of both the graphon and the signal within some partition. 
\begin{theorem}[\cite{signal23}, Theorem 3.4  and Corollary B.11; \cite{signal25}, Theorem 3.8, Regularity Lemma for Graphon-Signals]\label{lem:gs-reg-lem3}
For any $c>1$, $r>0$, and any sufficiently small $\epsilon>0$, for every $n \geq 2^{\lceil \frac{8c}{\epsilon^2}\rceil}$ and every $(W,f) \in \GS^d_r$,  we have
\[
\delta_{\square}\left(~\big(W,f\big)~,~\big(W,f\big)_{\mathcal{I}_n}~\right)\leq \epsilon,
\]
where $\mathcal{I}_n$ is the equipartition  of $[0,1]$ into $n$ intervals.
\end{theorem}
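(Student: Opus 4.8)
This is the weak (Frieze--Kannan) regularity lemma for graphon-signals, so the plan is to run an energy-increment argument on the graphon channel together with the $d$ signal channels, and then pass from the resulting partition to an equitable partition into intervals. For a partition $\mathcal{P}$ of $[0,1]$ let $(W,f)_\mathcal{P}$ be the projection of \cref{def:proj} (equivalently, the channel-wise conditional expectations of $W$ and $f$ onto the $\sigma$-algebras generated by $\mathcal{P}\times\mathcal{P}$ and $\mathcal{P}$), and define the energy $e(\mathcal{P}) := \norm{W_\mathcal{P}}_{L^2([0,1]^2)}^2 + \tfrac1d\norm{f_\mathcal{P}}_{L^2([0,1];\RR^d)}^2$. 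Since $\norm{W}_\infty\le 1$ and $f$ maps into $\mathbb{B}^d_r$, Jensen's inequality bounds $e(\mathcal{P})\le 1+r^2$ uniformly in $\mathcal{P}$, and the Pythagoras identity for nested projections shows $e$ is nondecreasing under refinement, with $e(\mathcal{P}')-e(\mathcal{P}) = \norm{W_{\mathcal{P}'}-W_\mathcal{P}}_{L^2}^2+\tfrac1d\norm{f_{\mathcal{P}'}-f_\mathcal{P}}_{L^2}^2$ whenever $\mathcal{P}'$ refines $\mathcal{P}$.

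The iteration I would run is the standard one. Starting from the trivial partition, as long as $d_\square\big((W,f),(W,f)_\mathcal{P}\big)>\epsilon$, either $\norm{W-W_\mathcal{P}}_\square>\epsilon/2$ or $\norm{f-f_\mathcal{P}}_\square>\epsilon/2$; in the first case I pick witnessing sets $A,B\subseteq[0,1]$ and refine $\mathcal{P}$ by $\{A,A^c\}$ and $\{B,B^c\}$ (at most quadrupling the number of classes), in the second I pick a coordinate $k$ and a witnessing set $S$ and refine by $\{S,S^c\}$ (at most doubling). Since $A\times B$ (resp.\ $S$) is measurable with respect to the refined $\sigma$-algebra, $\int_{A\times B}(W-W_\mathcal{P})=\int_{A\times B}(W_{\mathcal{P}'}-W_\mathcal{P})$, and $\norm{\cdot}_{L^1}\le\norm{\cdot}_{L^2}$ on a probability space then forces $e$ to jump by more than $\epsilon^2/4$ (up to the $d$-normalization in the signal case). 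As $e$ is bounded by $1+r^2$, the loop halts after $O((1+r^2)/\epsilon^2)$ rounds with a partition $\mathcal{P}^\ast$ satisfying $d_\square\big((W,f),(W,f)_{\mathcal{P}^\ast}\big)\le\epsilon$ and $\abs{\mathcal{P}^\ast}\le 2^{O((1+r^2)/\epsilon^2)}$; tuning the $\epsilon/2$ split and the constants so that the final count fits inside $2^{\lceil 8c/\epsilon^2\rceil}$ for $c>1$ and $\epsilon$ small is then bookkeeping.

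It remains to upgrade $\mathcal{P}^\ast$ to the \emph{interval} equipartition $\mathcal{I}_n$ for any $n\ge 2^{\lceil 8c/\epsilon^2\rceil}$. I would first refine $\mathcal{P}^\ast$ to an equipartition $\mathcal{Q}$ into $n$ classes by splitting each class as evenly as a grid of size $n$ permits; the error this sub-refinement incurs on a step graphon-signal supported on $\mathcal{P}^\ast$-cells is controlled by $\abs{\mathcal{P}^\ast}/n$, which is negligible as $n$ dominates $\abs{\mathcal{P}^\ast}$. Then I would take a measure-preserving bijection $\phi$ of $[0,1]$ mapping the classes of $\mathcal{Q}$ onto the intervals of $\mathcal{I}_n$: by measure-preservation the projection of $(W^\phi,f^\phi)$ onto $\mathcal{I}_n$ coincides with $\big((W,f)_\mathcal{Q}\big)^\phi$, hence $d_\square\big((W^\phi,f^\phi),(W^\phi,f^\phi)_{\mathcal{I}_n}\big)=d_\square\big((W,f),(W,f)_\mathcal{Q}\big)\le\epsilon$, and invariance of $\delta_\square$ under relabelings of $[0,1]$ transfers this to $\delta_\square\big((W,f),(W,f)_{\mathcal{I}_n}\big)\le\epsilon$. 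I expect the \textbf{main obstacle} to be exactly this last passage: ensuring the approximant can be taken to be the \emph{canonical} projection $(W,f)_{\mathcal{I}_n}$ rather than merely \emph{some} equitable step graphon-signal close to $(W,f)$, while keeping the bound sharp enough to reach every $n$ above the threshold. This is where the interplay between the cut \emph{distance} $\delta_\square$ (which quotients out relabelings) and the cut \emph{norm} $d_\square$ does the work, and it is the part handled carefully in the cited references; by contrast the energy increment and the bound $e\le 1+r^2$ that makes the number of rounds depend only on $\epsilon$ and $r$ are routine adaptations of the graphon case.
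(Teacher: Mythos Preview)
The paper does not prove this statement; it is quoted from the external references \cite{signal23,signal25}, so there is no in-paper argument to compare against. Your energy-increment scheme is the standard route to weak regularity for graphon-signals and is essentially correct (one small slip: the $1/d$ weight you put on the signal energy makes the per-step gain only $\epsilon^2/(4d)$, introducing a spurious $d$-dependence in the final partition size; drop the $1/d$ and the energy is still bounded by $1+r^2$, while the gain becomes $\epsilon^2/4$).

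The real problem is the last passage, which you yourself flag. You obtain $d_\square\big((W^\phi,f^\phi),\,(W^\phi,f^\phi)_{\mathcal{I}_n}\big)\le\epsilon$ and then assert that ``invariance of $\delta_\square$ under relabelings'' yields $\delta_\square\big((W,f),\,(W,f)_{\mathcal{I}_n}\big)\le\epsilon$. This inference fails: the projection $(\cdot)_{\mathcal{I}_n}$ does not commute with relabeling, so in general $(W^\phi)_{\mathcal{I}_n}\neq (W_{\mathcal{I}_n})^\phi$, and $\delta_\square$-invariance only lets you move $\phi$ across the pseudometric, not across the projection. What your chain of inequalities actually proves is $\delta_\square\big((W,f),\,((W,f)^\phi)_{\mathcal{I}_n}\big)\le\epsilon$, i.e.\ closeness to \emph{some} $\mathcal{I}_n$-step graphon-signal rather than to the canonical one. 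The gap is not repairable: for any prescribed $n$, take $A=\bigcup_{k=0}^{n-1}\big[\tfrac{2k}{2n},\tfrac{2k+1}{2n}\big)$ and $W=\mathds{1}_{A\times A}+\mathds{1}_{A^c\times A^c}$; then $W_{\mathcal{I}_n}\equiv 1/2$ while $\delta_\square(W,1/2)\ge 1/8$, so the inequality as literally written cannot hold uniformly in $(W,f)$. The version you \emph{do} establish---every graphon-signal is $\delta_\square$-close to some step graphon-signal on $\mathcal{I}_n$---is precisely what the paper uses downstream (density of graph-signals in $(\GS^d_r,\delta_\square)$ and the covering-number bound), so your argument is adequate for those applications even if it does not yield the displayed formula verbatim.
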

\subsection{Weighed Product Metric}
 We say that a product metric is weighed if there is a vector of weights $\vec{w}=(w_i)^n_{i=1}$ with positive entries ($w_i>0$) such that 
\[ 
d_1((x_1, \ldots , x_n), (y_1, \ldots , y_n)) := \| (w_1\cdot d_{X_1}(x_1, y_1), \ldots , w_n\cdot d_{X_n}(x_n, y_n)) \| _1 .
\]
\subsection{The Bounded-Lipschitz Distance} \label{lpischit}
\begin{definition}\label{definition:LipSeminorm}
    Let $\X$, $\Y$ be two metric spaces. Let $f \in {\rm Lip}(\X, \Y)$, where ${\rm Lip}(\X, \Y)$ is the space of Lipschitz continuous mappings $\X \mapsto \Y.$
    We define the function $\norm{\cdot}_{\rm L}:{\rm Lip}(\X,\Y)\rightarrow\RR_+$ by
$$\norm{f}_{\rm L}=\inf_{\rm L}\{L: \text{L is a Lipschitz constant of }f\}.$$ 
 \end{definition}Note that $\norm{\cdot}_{\rm L}$ is a seminorm over ${\rm Lip}(\X,\Y)$. Moreover, notice that $\norm{f}_{\rm L}$ can also be expressed as
\[
\norm{f}_{\rm L} = \sup_{x \neq y} \frac{|f(x) - f(y)|}{d(x,y)}.
\]
Since the set of all valid Lipschitz constants is closed from below in $\R$ and non-empty (for a Lipschitz function), the infimum is attainable in the sense that it still satisfies the Lipschitz condition:
\[
|f(x) - f(y)| \leq \norm{f}_{\rm L} d(x,y).
\]
Thus, \( \norm{f}_{\rm L} \) is indeed a Lipschitz constant - it is the smallest one, also known as the \emph{optimal Lipschitz constant} or the \emph{best Lipschitz constant} of \( f \).
\begin{definition}\label{definition:intfyNorm}
Let $(\X,\Sigma,\mu)$ be a measure space and $(\Y,\norm{\cdot}_\Y)$ be a complete normed vector space and let $f:\X\mapsto \Y$ be a measurable function. The $\ell_\infty$-norm of \( f \) is defined as  
\[
\|f\|_{\infty} = {\rm ess}\sup_{x \in \mathcal{X}} \|f(x)\|_{\mathcal{Y}}.
\]
\end{definition}
\begin{definition}[The Bounded-Lipschitz Function]\label{The Bounded-Lipschitz Function}
Let $(\X,\Sigma,\mu, d)$ be a measure space, with a metric $d$ and $(\Y,\norm{\cdot}_\Y)$ be complete normed space. Let ${\rm Lip}(\X, \Y)$ be the space of Lipschitz continuous mappings $\X \mapsto \Y.$ We define the \emph{Bounded-Lipschitz function} over ${\rm Lip(\X, \Y)}$ as   $$\|\cdot\|_{\rm \mathbf{BL}}:=\norm{\cdot}_\infty + \norm{\cdot}_{\rm L},$$ when $\norm{\cdot}_{\rm L}$ and $\norm{\cdot}_\infty$ are defined in \cref{definition:LipSeminorm} and \cref{definition:intfyNorm}, respectively.
\end{definition}
 Note that as $\norm{\cdot}_\infty$ is a norm and $\norm{\cdot}_{\rm L}$ is a semi norm then $\norm{\cdot}_{\rm \mathbf{BL}}$ is a seminorm as well. 
\begin{definition}[The Bounded-Lipschitz Distance]\label{The Bounded-Lipschitz Distance}
Let $\X$ be a measurable metric space, $\Y$ be a metric space, $\mu, \nu \in \mathcal{M}_1(\X)$. and $f : S \to \mathbb{R}$ be a $\mu$-measurable and $\nu$-mesurable function. Let
\begin{equation*}
{\rm \mathbf{BL}}(\mu, \nu) := \sup_{\|f\|_{\rm \mathbf{BL}} \leq 1} \left| \int f d\mu - \int f d\nu \right|.
\end{equation*}
be the \emph{Bounded-Lipschitz distance} of $\mu$ and $\nu$, respectively, where $\|f\|_{\mathbf{BL}}$ is the Bounded-Lipschitz function.
\end{definition}
\subsection{The Kantorovich-Rubinshtein Distance}
\begin{definition}[The Kantorovich-Rubinshtein Distance]\label{The Kantorovich-Rubinshtein Distance}
Let $\X$ be a measurable metric space, $\Y$ be a metric space, $\mu, \nu \in \mathcal{M}_1(\X)$. and $f : S \to \mathbb{R}$ be a $\mu$-measurable and $\nu$-mesurable function. Let
\begin{equation*}
{\rm \mathbf{K}}(\mu, \nu) := \sup_{\|f\|_{\rm L} \leq 1} \left| \int f d\mu - \int f d\nu \right|.
\end{equation*}
be the \emph{Kantorovich-Rubinshtein distance} of $\mu$ and $\nu$, respectively.
\end{definition}
\subsection{The Prokhorov Metric}
\begin{definition}[The Prokhorov Metric]\label{The Prokhorov Metric}
Let $(\X, \mathcal{B}(\X)$ be a measurable metric space with Borel $\sigma$-algebra. We define $A^\epsilon := \{y \in S \mid d(x,y) \leq \epsilon \text{ for some } x \in A\}$ for a subset $A \subseteq \X$ and $\epsilon \geq 0$. Then, the \emph{Prokhorov metric} $\PP$ on $\mathscr{M}_{\leq1}(\X,\mathcal{B})$ is given by
\end{definition}
\begin{equation*}
\PP(\mu,\nu) := \inf\{\epsilon > 0 \mid \mu(A) \leq \nu(A^\epsilon) + \epsilon \text{ and } \nu(A) \leq \mu(A^\epsilon) + \epsilon \text{ for every } A \in \mathcal{B}\}.
\end{equation*}

\section{Additional Related Work}\label{appendix:related_works}
\vspace{-0.15cm}
We provide here an extended discussion on related work. Several recent works consider pseudo metrics on graphs,  aiming to capture structural properties of the graph and the computational procedure of message passing. The latter is often described by \emph{computation trees}, hierarchical structures resulting from unrolling message passing \citep{comptreeImportant1, comptreeImportant2, comptreeImportant3, comptreeImportant4, tree22, comp_tree}. Similarly, hierarchical structures of measures have been used for the analysis of MPNNs \citep{chen2022weisfeiler,pmlr-v221-chen23a, expr23, maskey2022generalization}. Although these structures may look different at first glance, they can describe the same iterative message passing mechanism. Other approaches include \cite{pmlr-v97-titouan19a} graph metric defined a using both Wasserstein distance and Gromov-Wasserstein distance \cite{Facundo2011}. This approach, just like classic graph metrics \cite{BUNKE1998255, 6313167} requires using approximation. In addition, several graph kernels have been proposed \cite{JMLR:v11:vishwanathan10a, 10.1561/2200000076}. Here, we focus on the viewpoint of computation trees, as they closely align with MPNNs. 
A number of existing works study generalization for GNNs, e.g., via VC dimension, Rademacher complexity or PAC-Bayesian analysis  \citep{NEURIPS2020_dab49080, pmlr-v202-tang23f, Li2022GeneralizationGO, comptreeImportant3, maskey2022generalization, morris2023wl, maskey2022generalization,liao21}. Most need assumptions on the data distribution, and often on the MPNN model too. \cite{signal23, signal25} use covering number (\cref{definition:coveringNumber}), for a wide range of data distributions. We expand this result to a more general setting.
\vspace{-0.2cm}

\section{Additional Discussion}

{Here, we provide additional high level discussion on different aspects of our construction.}

\subsection{The Limitation of Our Construction to Dense Graphs}\label{appendix:sparse}
{
 For sparse graphs, the number $E$ of edges  is much smaller than the number $N^2$ of vertices squared. As a result, the induced graphon is supported on a set of small measure. Since graphon are bounded by $1$, this means that induced graphons from sparse graphs are close in $L_1([0,1]^2)$ to the $0$ graphon.}
{
DIDM mover's distance gives a courser topology than cut distance, which is courser than $L_1([0,1]^2)$. Hence, since all sparse graph sequences converge to $0$ in $L_1([0,1]^2)$, they also converge to $0$ in DIDM distance.}

{
Therefore, we can only use our theory for datasets of graphs that roughly have the same sparsity level $S\in\NN$, i.e., $N^2/E$ is on the order of some constant $S$ for most graphs in the dataset. For this, one can scale our distance by $S$, making it appropriate to graphs with $E \ll N^2$ edges, in the sense that the graphs will not all be trivially close to $0$. Our theory does not solve the problem of sequences of graph asymptotically converging to $0$.}

{
In future work, one may develop a fine-grained expressivity theory based sparse graph limit theories.  There are several graph limit theories that apply to sparse graphs, including Graphing theory \cite{Lovasz2020}, Benjamini–Schramm limits \cite{Abrt2014BenjaminiSchrammCA, Béla2011, Hatami2014}, stretched graphons \cite{sparsegraphs1, sparsegraphs2}, $L^p$ graphons  \cite{Borgs2014, Borgs20142}, and graphop theory \cite{Backhausz2020}, which extends all of the aforementioned approaches .} {Future work may extend our theory to sparse graph limits.}

\subsection{Comparison of Sum Mean, and Normalized Sum Aggregations}\label{appendix:aggDiscussion}
{
See \cref{appendix:aggBackground} for the definition of sum and mean aggregation on attributed graphs. First, (unnormalized) sum aggregation (\cref{definition:sumAgg})  does not work in the context of our analysis. Indeed, given an MPNN that simply sums the features of the neighbors and given the sequence of complete graphs of size $N\in \NN$, then, the output of the MPNN on these graphs diverges to infinity as $n\to\infty$. As a result, equivalency of the metric at the output space of MPNNs with a compact metric on the space of graphs is not possible.
}

{Another popular aggregation scheme is mean aggregation, defined canonically on graphon-signal as follows \citep{maskey2022generalization, maskey2024generalizationboundsmessagepassing}. 
\begin{definition}[Mean Aggregation on Graphon-Signals]\label{definition:meanAggGraphonSignal}
Let $\varphi$ be an $L$-layer MPNN model, $(W,f)$ be a graph-signal, and $t\in[L-1]$. Then mean aggregation of $(W,f)$'s $t$-level features with respect to the node $x \in V(W)$ is defined as $$Agg(W,\ff(\varphi,W,f)^{(t)}_-)(x)=\frac{1}{\int_{[0,1]}W(x,y)d\lambda(y)} \int_{[0,1]}W(x,y)\ff(\varphi,W,f)^{(t)}_y d\lambda(y).$$
\end{definition}}

{
The theory could potentially extend to mean aggregation using two avenues. One approach is to do this under a limiting assumption: restricting the space to graphs/graphons with minimum node degree bounded from below by a constant. This is like an idea outlined in \cite{maskey2024generalizationboundsmessagepassing}.}

{
A second option is to redefine $\TMD$ using balanced OT. In this paper, $\TMD$ highly relates to the type of computation MPNNs with normalized sum aggregation perform. We used unbalanced OT (\cref{definition:OT}) as the basis of $\TMD$ due to the fact that MPNNs with normalized sum aggregation do not average incoming messages, which means they can separate nodes of different degrees within a graph. MPNNs with mean aggregation, in contrast, do average incoming messages. Hence, an appropriate version of optimal transport, in this case, could be based on averaging. I.e., using balanced OT on normalized measures could serve as a base for defining metrics in the analysis of MPNNs with mean aggregation.}

\subsection{Conparison of Our Generalization Bound to Related Works}\label{appendix:genRelatedWorks}
Both our work and \cite{signal23, signal25} do not make any assumptions on the graphs and allow a general MPL scheme. Our classification learning setting generalizes that of \cite{signal23}, which assumes a ground truth deterministic class per input, while we consider a joint distribution over the input and label as \cite{signal25}.

In comparison to other recent works on generalization, \cite{comptreeImportant3, gen2} assume bounded degree graphs, \cite{morris2023wl} assumes graphs with bounded color complexity, and \cite{maskey2022generalization,maskey2024generalizationboundsmessagepassing} assume the graphs are sampled from a small set of graphons. Moreover, \cite{comptreeImportant3, gen2, morris2023wl} do not allow a general MPL s presented in this paper, so their dependence on $N$ is $N^{-1/2}$. This means their generalization bound decays faster as a function of the training set size $N$. {\cite{Li2022GeneralizationGO} analysis is restricted to graph convolution networks which are a special case of MPNNs. \cite{pmlr-v202-tang23f} does not focus on graph classification tasks but on node classification. \cite{NEURIPS2020_dab49080} focus on transductive learning in contrast to our inductive learning analysis.}
\section{MPNN Architectures: Standard and Alternative Formulations}
\label{appendix:ExtAgg}
\cite{signal23} suggest a different MPNN definition for the analysis of MPNNs on graphon-signals. This definition includes, in addition to update and readout functions, functions called message functions.  In this section we show that, although our MPNNs definition is, in it essence, a simplified version of the MPNNs in \cite{signal23}, the two definition are equivalent in terms of expressivity. We start with a recap on our standard MPNNs definition.
{
\subsection{Standard MPNNs}
An MPNN as defined in \cref{definition:MPNNmodel} consists of  an initial layer which updates the features, via a learnable Lipschitz continuous mapping $\varphi^{(0)}: \RR^{p} \mapsto \RR^{d_{0}}$, followed by $L$ layers, each of which consists of two steps: a \emph{message passing layer} (MPL) that aggregates neighborhood information, followed by a node-wise \emph{update layer}. Here, when we apply the MPNN on graph-signals or graphon-signals, we assume the MPL to be normalized sum pooling or integration pooling, respectively. The normalized sum we use can be defined for graphon-signals as follows.
\begin{definition}[Normalized Sum Aggregation on Graphon-Signals]\label{definition:normSumAggGraphon}
Let $\varphi$ be an $L$-layer MPNN model, $(W,f)$ be a graph-signal, and $t\in[L-1]$ then normalized sum aggregation of $(W,f)$'s $t$-level features with respect to the node $x \in V(W)$ is defined as $$Agg(W,\ff(\varphi,W,f)^{(t)}_-)(x)= \int_{[0,1]}W(x,y)\ff(\varphi,W,f)^{(t)}_y d\lambda(y).$$
\end{definition}
The update layer consists of a learnable Lipschitz continuous mapping $\varphi^{(t)}: \RR^{2d_{t-1}} \mapsto \RR^{d_{t}}$ where $1\leq t\leq L$ is the layer's index. Each layer computes a representation for each node. For predictions on the full graph, a \emph{readout layer} aggregates the node representations into a single graph feature and transforms it by a learnable Lipschitz function $\psi:\R^{d_L}\mapsto \R^d$ for some $d\in \NN$. For the readout on graph-signals and graphon-signals, we use average pooling. The readout layer aggregates representations across all nodes when a single graph representation is required (e.g. for graph classification). }
\subsection{Alternative MPNNs}
We next show how to extend our normalized sum aggregation, used in this paper, to an aggregation scheme with a function called message function. This aggregation scheme is used in \cite{signal23}, for the analysis of MPNNs on graphon-signals. The idea is that general message functions $\phi$ depend both and the feature $b$ at the transmitting node and the feature $a$ at the receiving node of the message, i.e., $\phi(a,b)$. In \cite{signal23}, such a general $\phi(a,b)$ was approximated by a linear combination of simple tensors of the form $\xi_{\rm rec}(a)\xi_{\rm trans}(a)$ to accommodate the analysis.
\begin{definition}[Message Function]\label{messagefunction}
Let $K \in N$. For every $1 \leq k \leq K$, let $\xi_{k,\rm rec} , \xi _{k,\rm trans}: \mathbb{R}^d \mapsto \mathbb{R}^p$ be Lipschitz continuous functions that we call the \emph{receiver} and \emph{transmitter message functions}, respectively. The corresponding  \emph{message function} $\phi:\mathbb{R}^{2d} \mapsto \mathbb{R}^p$ is the function  $$\phi(a, b) = \sum^K_{k=1} \xi_{k,\rm rec}(a) \xi_{k,\rm trans}(b),$$
where the multiplication is element-wise along the feature dimension. 
\end{definition}
 Given some signal $f$ over the domain $\X$, we see the point $x\in\X$  as the receiver of the message  $\phi(f(x), f(y))$, and $y$ as the transmitter, and call $\phi(f(-),f(-)):\X^2\mapsto \RR^p$ the message kernel.

Just like with MPNN models, for predictions on the full graph, a \emph{readout layer} aggregates the node representations into a single graph feature and transforms it by a learnable Lipschitz function $\psi:\R^{d_L}\mapsto \R^d$ for some $d\in \NN$. For the readout, we use average pooling. 
We now define the alternative MPNN model.

\begin{restatable}[Alternative MPNN Model]{definition}{defalternativemppnmodel}~\label{definition:AltMPNNmodel}
Let $L\in\NN$ and $p, d_0,\ldots,d_L,p_0, \ldots, p_{L-1},d\in\NN$. We call the tuple $(\varphi, \phi)$ such that $\varphi$ is any sequence $\varphi=(\varphi^{(t)})_{t=0}^{L}$ of Lipschitz continuous functions $\varphi^{(0)}:\R^{p}\mapsto\R^{d_{0}}$ and $\varphi^{(t)}:\R^{d_{t-1}\times p_{t-1}}\mapsto\R^{d_{t}}$, for $1\leq t\leq L$, and $\phi$ is any sequence $\phi=(\phi^{(t)})_{t=1}^{L}$ of (Lipschitz continuous) message functions  $\phi^{(t)}:\R^{2d_{t-1}}\mapsto\R^{p_{t-1}}$, for $1\leq t\leq L$, an \emph{$L$-layer alternative MPNN model}, and call $\varphi^{(t)}$  \emph{update functions}. For Lipschitz continuous $\psi: \RR^{d_{L}}\mapsto \RR^d$, we call the tuple $(\varphi, \phi, \psi)$ an \emph{alternative MPNN model with readout}, where $\psi$ is called a \emph{readout function}. We call $L$ the \emph{depth} of the MPNN, $p$ the \emph{input feature dimension}, $d_0,\ldots, d_L$ the \emph{hidden node feature dimensions}, $p_0,\ldots, p_{L-1}$ the \emph{hidden edge feature dimensions}, and $d$ the \emph{output feature dimension}.
\end{restatable}

It is possible to define the application of alternative MPNN models not only on graphon-signals, but on graph-signals, IDMs and DIDMs as well. In this discussion our purpose is to show that our aggregation schemes are equivalent on graph-signals and graphon-signals.

For our purpose, the application of the alternative MPNN model on graph-signals and graphon-signals is enough. 
\begin{definition}[Alternative MPNNs on Graph-Signals]\label{definition:altGraphFeat}
Let $(\varphi,\phi,\psi)$ be an $L$-layer alternative MPNN model  with readout, and $(G,\f)$ be a graphon-signal where $\f:V(G) \mapsto \R^{p}$. The \emph{application} of the MPNN on $(G,\f)$ is defined as follows: initialize $\ag_{-}^{(0)} := \varphi^{(0)}(\f(-))$, and compute the hidden node representations $\ag_-^{(t)}: V(G) \to \mathbb{R}^{d_t}$ at layer $t$, with $1\leq t \leq L$ and the graphon-level output $\mathfrak{F} \in \mathbb{R}^d$  by
\begin{align*}
    \ag^{(t)}_v := \varphi^{(t)} \Big(\ag_v^{(t-1)} , \frac{1}{|V(G)|} \sum_{u\in \cN(v)}\phi^{(t)}(\ag^{(t-1)}_v, \ag^{(t-1)}_u) \Big) && \text{and} &&  \AG:= \psi \Big(\frac{1}{|V(G)|} \sum_{v\in V(G)} \ag^{(L)}_v\Big).
\end{align*}
\end{definition}
\begin{definition}[Alternative MPNNs on Graphon-Signals]\label{definition:altGraphonFeat}
Let $(\varphi,\phi,\psi)$ be an $L$-layer alternative MPNN model  with readout, and $(W,f)$ be a graphon-signal where $f:V(W) \mapsto \R^{p}$. The \emph{application} of the MPNN on $(W,f)$ is defined as follows: initialize $\af_{-}^{(0)} := \varphi^{(0)}(f(-))$, and compute the hidden node representations $\af_-^{(t)}: V(W) \to \mathbb{R}^{d_t}$ at layer $t$, with $1\leq t \leq L$ and the graphon-level output $\mathfrak{F} \in \mathbb{R}^d$  by
\begin{align*}
    \af^{(t)}_x := \varphi^{(t)} \Big(\af_x^{(t-1)} , \int_{[0,1]} W(x,y) \phi^{(t)}(\af^{(t-1)}_x, \af^{(t-1)}_y) \text{d} \lambda (y) \Big) && \text{and} &&  \AF:= \psi \Big( \int_{[0,1]} \af^{(L)}_x \text{d} \lambda (x) \Big).
\end{align*}
\end{definition}

As with the standard MPNN features, to clarify the dependence of $\af$ and $\AF$ on $(\varphi,\phi)$ and $(w,f)$, we often denote $\af(\varphi,\phi)_v^{(t)}$ or $\af(\varphi,\phi,W,f)_v^{(t)}$, and $\AF(\varphi,\phi,\psi)$  or $\AF(\varphi,\phi,\psi,W,f)$. 
\begin{definition}[Alternative Normalized Sum Aggregation on Graphon-Signals]\label{definition:altAgg}
Let $\varphi$ be an $L$-layer MPNN model, $(W,f)$ be a graph-signal, and $t\in[L-1]$ then normalized sum aggregation of $(W,f)$'s $t$-level features with respect to the node $x \in V(W)$ is defined as $$Agg(W,\ff(\varphi,W,f)^{(t)}_-)(x)= \int_{[0,1]}W(x,y)\phi^{(t)}(\af^{(t-1)}_x, \af^{(t-1)}_y) d\lambda(y).$$
\end{definition}

Many well-known MPNNs architectures can be  easily expressed as alternative MPNNs. We now present an examples taken from \cite{signal23} of a spectral convolutional network.

\begin{definition}[Vector Concatenation]
Let \(\mathbf{a} \in \mathbb{R}^m\) and \(\mathbf{b} \in \mathbb{R}^n\) be two vectors. 
The \emph{concatenation} of \(\mathbf{a}\) and \(\mathbf{b}\), denoted as \([\mathbf{a}; \mathbf{b}]\), 
is a vector in \(\mathbb{R}^{m+n}\) defined as:
\[
[\mathbf{a}; \mathbf{b}] = 
\begin{bmatrix}
a_1 \\
a_2 \\
\vdots \\
a_m \\
b_1 \\
b_2 \\
\vdots \\
b_n
\end{bmatrix}.
\]
\end{definition}

Given a graph-signal $(G, \mathbf{f})$, with $\mathbf{f} \in \mathbb{R}^{n \times d}$ with adjacency matrix $A \in \mathbb{R}^{n \times n}$, a spectral convolutional layer based on a polynomial filter ${\rm filter}(\lambda) = \sum_{j=0}^J \lambda^j C_j$, where $C_j \in \mathbb{R}^{d \times p}$, is defined to be

\[{\rm filter}(A)\mathbf{f} = \sum_{j=0}^J \frac{1}{n^j}A^j\mathbf{f}C_j,\]

followed by a pointwise non-linearity like ReLU. Such a convolutional layer can be seen as $J+1$ MPLs, where each MPL is of the form

\[\mathbf{f} \to [\mathbf{f}; \frac{1}{n}A\mathbf{f}].\]

Notice that the action $\mathbf{f} \to \frac{1}{n}A\mathbf{f}$ is simply the action of a normalized sum aggregation. We first define $\varphi^{(0)}$ as the identity function, and then, we define $\varphi^{(t)}=[\cdot;\cdot]$ and $\phi{(t)}(a, b) = b$ for $0<t\leq J$, to get the desired action. Lastly, we define
\[\varphi^{(t)}(\mathbf{f}) = {\rm ReLu}(\mathbf{f}C)\]
for some $C \in \mathbb{R}^{(J+1)d \times p}$, where ${\rm ReLu}(x)=\max(x,0)$ is a pointwise non-linearity. 

\subsection{Aggregation Schemes Expressivity Equivalency}
We now show that alternative MPNN models have the same expressive power as MPNNs with our normalized sum aggregation. Denote 
\begin{align*}
\mathcal{A}^{d_L}_L:= \{\af(\varphi, \phi)^{(L)}_-: \mathcal{H}^L \mapsto \R^{d_L}&|(\varphi,\phi)\text{ is an $L$-layer MPNN model}\}
\\&{\rm and}\\ \mathcal{S}^{d_L}_L:=\{\ff(\varphi)^{(L)}_-: \mathcal{H}^L \mapsto \R^{d_L}&|\varphi\text{ is an $L$-layer MPNN model}\}.
\end{align*} It is clear that the alternative MPNN models are as expressive as our standard MPNN models. If we set the message function to be $\phi(a,b):=b$, the alternative normalized sum aggregation (\cref{definition:altAgg}) is equal to the one in \cref{definition:normSumAggGraphon}. This means that $\mathcal{A}^{d_L}\subseteq\mathcal{S}^{d_L}_L$. We now prove \cref{proposition:aggEqui}, that shows $\mathcal{S}^{d_L}_L\subseteq\mathcal{A}^{d_L}_L$. It follows immediately that the sets 
\begin{align*}
\{\AF(\varphi,\phi,\psi,-,-):\mathscr{P}(\mathcal{H}^L) \mapsto \mathbb{B}^{d}_r &| (\varphi, \phi,\psi) \text{ is an $L$-layer alternative}\\&\hspace{1.5cm}\text{ MPNN model with readout}\}
\\&{\rm and} \\\{\FF(\varphi,\psi,-,-): \mathcal{H}^L \mapsto \R^{d_L}&|(\varphi,\psi)\text{ is an $L$-layer MPNN model}\}
\end{align*}
are equal. We start by defining \emph{function concatenation} and \emph{function Cartesian product}, which we use in the proof of \cref{proposition:aggEqui}. 
\begin{definition}[Function Concatenation] \label{definition:functionConc}    Let $f:\X \mapsto \Y$ and $g :\X \mapsto \cZ$ be two functions. We define \emph{function concatenation} as the function $f \mathbin\Vert g: \X \mapsto \Y \times \cZ$ such that $p_\Y \circ (f \mathbin\Vert g) = f$ and $p_\cZ \circ (f \mathbin\Vert g) = f$ where $p_\Y$, $p_\cZ$ are the canonical projections from $\Y \times \cZ$ to $\Y$ and $\cZ$, respectively.
\end{definition}
Given $\{f_k\}^K_{k=1}$ a set of functions $f:\X_k \mapsto \Y_k$, we shortly denote $f_1{\mathbin\Vert}\ldots{\mathbin\Vert}f_K$ as ${\mathbin\Vert}^{K}_{k=1}f_k.$ 
\begin{proposition}\label{proposition:aggEqui} Let $(\varphi,\phi)$ be an $L$-layer alternative MPNN model. Then, there exists an MPNN model $\varphi'$ such that,\[\af(\varphi,\phi,-,-)^{(L)}_-=\ff(\varphi',-,-)^{(L)}_-.\]
\end{proposition}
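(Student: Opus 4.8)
The plan is to exploit that a message function (\cref{messagefunction}) $\phi^{(t)}(a,b)=\sum_{k=1}^{K_t}\xi^{(t)}_{k,\mathrm{rec}}(a)\,\xi^{(t)}_{k,\mathrm{trans}}(b)$ is \emph{linear in the transmitter feature, up to a receiver-local rescaling}. Since $\xi^{(t)}_{k,\mathrm{rec}}$ is evaluated at the receiver's own feature, which is constant with respect to the aggregation variable, the alternative aggregation (\cref{definition:altAgg}) satisfies, for any IDM $\tau$,
\[
\int \phi^{(t)}\!\big(\af^{(t-1)}_{p_{t,t-1}(\tau)},\af^{(t-1)}_{-}\big)\,dp_t(\tau)=\sum_{k=1}^{K_t}\xi^{(t)}_{k,\mathrm{rec}}\!\big(\af^{(t-1)}_{p_{t,t-1}(\tau)}\big)\odot\Big(\int \xi^{(t)}_{k,\mathrm{trans}}\!\big(\af^{(t-1)}_{-}\big)\,dp_t(\tau)\Big),
\]
and the same identity holds for the integral / normalized-sum aggregations on graphon- and graph-signals. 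Each $\int \xi^{(t)}_{k,\mathrm{trans}}(\af^{(t-1)}_{-})\,dp_t(\tau)$ is precisely a \emph{standard} aggregation, provided we have already computed, node-wise, the auxiliary map $\sigma\mapsto\xi^{(t)}_{k,\mathrm{trans}}(\af^{(t-1)}_\sigma)$ and carry it alongside $\af^{(t-1)}_{-}$. So I would take $\varphi'$ to have the \emph{same depth} $L$ and hidden dimensions $d'_s:=d_s+K_{s+1}p_s$ for $s<L$ and $d'_L:=d_L$, and arrange that its level-$t$ feature equals the concatenation (\cref{definition:functionConc}) $\af^{(t)}_{-}\mathbin\Vert\big({\mathbin\Vert}_{k=1}^{K_{t+1}}\xi^{(t+1)}_{k,\mathrm{trans}}(\af^{(t)}_{-})\big)$ for $t<L$, and $\af^{(L)}_{-}$ for $t=L$.

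Concretely, the construction would be: $\varphi'^{(0)}(v):=\varphi^{(0)}(v)\mathbin\Vert\big({\mathbin\Vert}_{k=1}^{K_1}\xi^{(1)}_{k,\mathrm{trans}}(\varphi^{(0)}(v))\big)$; and for $1\le t\le L$, the update $\varphi'^{(t)}:\R^{2d'_{t-1}}\to\R^{d'_t}$ takes the receiver's carried level-$(t-1)$ feature together with the aggregated level-$(t-1)$ feature, recovers $\af^{(t-1)}$ from the first $d_{t-1}$ coordinates of the former, reads the blocks $g_k:=\int\xi^{(t)}_{k,\mathrm{trans}}(\af^{(t-1)}_{-})\,dp_t(\tau)$ off the latter, forms $m:=\sum_{k=1}^{K_t}\xi^{(t)}_{k,\mathrm{rec}}(\af^{(t-1)})\odot g_k$, and returns $\varphi^{(t)}(\af^{(t-1)},m)$, further concatenated (when $t<L$) with its images under $\xi^{(t+1)}_{1,\mathrm{trans}},\dots,\xi^{(t+1)}_{K_{t+1},\mathrm{trans}}$. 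An induction on $t$ — using the displayed identity to identify $m$ with the alternative model's aggregated message, and using that the standard aggregation of the carried feature reproduces the $g_k$ — then yields $\ff(\varphi')^{(t)}_{-}=\af(\varphi,\phi)^{(t)}_{-}\mathbin\Vert(\cdots)$ for every $t$, hence $\ff(\varphi')^{(L)}_{-}=\af(\varphi,\phi)^{(L)}_{-}$. The same computation carried out node-by-node, together with the equivalence of IDM-, graphon-signal- and graph-signal-level features (in the spirit of \cref{lemma:MPNN_eqvie}), gives the identity in the stated form; keeping the readout $\psi$ unchanged then yields the corresponding statement with readout. The converse — that a standard model is the alternative model with $\phi^{(t)}(a,b):=b$ — is immediate.

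The step I expect to be the real obstacle is verifying that each $\varphi'^{(t)}$ is Lipschitz \emph{on all of} $\R^{2d'_{t-1}}$, as \cref{definition:MPNNmodel} requires: forming $m$ involves coordinatewise \emph{products} $\xi^{(t)}_{k,\mathrm{rec}}(\af^{(t-1)})\odot g_k$, and products of globally Lipschitz maps need not be globally Lipschitz. The fix is to use boundedness: a direct induction (from $\|f\|_\infty\le r$, $0\le W\le 1$, $\lambda$ a probability measure, and the Lipschitz constants and formal biases of $\varphi$ and $\phi$), or equivalently the compactness of $\mathcal{H}^L$ (\cref{theorem:compactness}), shows that all features and all aggregated messages arising in computing $\af(\varphi,\phi)$ and $\ff(\varphi')$ on any input lie in a fixed ball $\bar B(0,R)$. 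I would then precompose every argument that enters a product inside $\varphi'^{(t)}$ with the nearest-point retraction $\rho_R:\R^n\to\bar B(0,R)$, which is $1$-Lipschitz and restricts to the identity on $\bar B(0,R)$; with this clamping each $\varphi'^{(t)}$ becomes a composition of projections, sums, globally Lipschitz maps, and products of \emph{bounded} globally Lipschitz maps — hence globally Lipschitz — while its values on the tuples that actually occur in the recursion are unaffected, so the identity above is preserved. Modulo this device and the routine induction, the construction is complete.
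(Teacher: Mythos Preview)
Your approach is correct and is essentially the paper's: both embed the transmitter evaluations $\xi^{(t)}_{k,\mathrm{trans}}(\af^{(t-1)})$ into the carried node feature so that the standard aggregation produces $\int \xi^{(t)}_{k,\mathrm{trans}}(\af^{(t-1)}_-)\,dp_t(\tau)$, after which the message is rebuilt inside the next update via the receiver factors. The paper organizes this by induction on $L$ (modifying only the top two layers at each step, and also packing in the receiver evaluations, which is harmless but unnecessary), whereas you build all layers at once; your explicit clamping to a ball of radius $R$ to secure global Lipschitzness of the product step is a point the paper passes over, so your treatment is in fact slightly more careful there.
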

\begin{proof}
The proof is by induction.

\emph{Induction Base.} For $L=0$, $(\varphi,\psi)=(\varphi^{(0)})$, that is, the feature do not depend of the aggregation and therefore, the statement is trivial and we can jsut define $\varphi'=(\varphi'^{(0)})$ such that $\varphi'^{(0)}=\varphi^{(0)}$.

\emph{Induction Assumption.} Presume that for any $L$-layer alternative MPNN model $(\varphi,\phi)$ there exists an $L$-layer MPNN model $\varphi'$ such that $\af(\varphi,\phi,-,-)^{(L)}_-=\ff(\varphi',-,-)^{(L)}_-.$

\emph{Induction Step.} Let $(\varphi,\phi)=((\varphi^{(t)})_{t\in[L+1]},(\phi^{(t)})^{L+1}_{t=1})$ be an $L$+$1$-layers alternative MPNN model. Then, $((\varphi^{(t)})_{t\in[L]},(\phi^{(t)})^L_{t=1})$ is an $L$-layer MPNN model. By the induction assumption, there exists an $L$-layer MPNN model $\varphi'$ such that $\af(\varphi,\phi,-,-)^{(L)}_-=\ff(\varphi',-,-)^{(L)}_-.$ Following the message function definition, we can write $\phi^{(L)}(a,b)=\sum^K_{k=1} \xi_{k,\rm rec}(a) \xi _{k,\rm trans}(b)$ for some  Lipschitz continuous functions $\{\xi_{k,\rm rec}\}^K_{k=1}$ and $\{\xi _{k,\rm trans}\}^K_{k=1}$. 

Define $\varphi''$, an $L$+$1$-layers MPNN model, as follows: for $0\leq t<L$ set $\varphi''^{(t)}:=\varphi'^{(t)}$, and set $\varphi''^{(L)}:= \zeta \circ 
\varphi'^{(L)}$, when $\zeta(x):=(x)\mathbin\Vert ({\mathbin\Vert}^{K}_{k=1}\xi_{k,\rm rec}(x))\mathbin\Vert({\mathbin\Vert}^{K}_{k=1}\xi _{k,\rm trans}(x))\in\R^{(2K+1)p}$. Define $\varphi''^{(L+1)}=\varphi^{(L+1)}\circ\sigma$, where $\sigma:\R^{2(2K+1)p}\mapsto \R^{2p}$ is defines as follows; let $v=(v_i)^{2(K+1)p}_{i=1}\in\R^{2(2K+1)p}$ be a vector, then $\sigma(v)=((v_i)^p_{i=1}, \sum^K_{j=1} v_{p+j}v_{(3K+2)p+j)}).$

Let $(W,f)$ be a graphon-signal. In the following equations we use a shorten notation and do not write explicitly the dependence of the features on the graphon-signal, as all features depend on $(W,f)$. Then,
\begin{align*}
 \ff(\varphi'')^{(L+1)}_x & =\varphi'^{(L+1)}\Big(\ff(\varphi'')_x^{(L)}, \int_{[0,1]} W(x,y)\ff(\varphi'')^{(L)}_y \text{d} \lambda (y) \Big)
 \\&=\varphi^{(L+1)}\circ \sigma \big(\zeta(\ff(\varphi')^{(L)}_x),\int_{[0,1]} W(x,y)(\zeta(\ff(\varphi')^{(L)}_y))d y)\big) 
\\&=\varphi^{(L+1)}\circ \sigma \big(\zeta(\hat{\ff}(\varphi,\phi)^{(L)}_x),\int_{[0,1]} W(x,y)(\zeta(\hat{\ff}(\varphi,\phi)^{(L)}_y))d y)\big)  
\\&=\varphi^{(L+1)}\big(\hat{\ff}(\varphi,\phi)^{(L)}_x,\sum^K_{k=1} \xi_{k,\rm rec}(\hat{\ff}(\varphi,\phi)^{(L)}_x)\int_{[0,1]} W(x,y)\xi _{k,\rm trans}(\hat{\ff}(\varphi,\phi)^{(L)}_y dy)\big)\\&= 
\varphi^{(L+1)}\big(\hat{\ff}(\varphi,\phi)^{(L)}_x,\int_{[0,1]} W(x,y)\sum^K_{k=1} \xi_{k,\rm rec}(\hat{\ff}(\varphi,\phi)^{(L)}_x) \xi _{k,\rm trans}(\hat{\ff}(\varphi,\phi)^{(L)}_y )dy)\big)\\&= \varphi^{(L+1)}\Big(\hat{\ff}(\varphi,\phi)^{(L)}_x,\int_{[0,1]} W(x,y)\phi^{(L+1)}(\hat{\ff}(\varphi,\phi)^{(L)}_x,\hat{\ff}(\varphi,\phi)^{(L)}_y)dy \Big)
\\&=\hat{\ff}(\varphi,\phi)^{(L+1)}_x
\end{align*}
\end{proof}
\subsection{Lipschitz Continuity with Respect to the Cut Norm}

\citealp[Theorem 4.1]{signal23}  (also \citealp[Theorem 5.1.]{signal25}) states that alternative MPNNs (\cref{definition:AltMPNNmodel}) over the space of graphon-signal with respect to the cut distance $\delta_{\square}$ (\cref{eq:gs-metric}), without the Lipschitz continuous function $\varphi^{(0)}$, are Lipschitz continuous. \cref{definition:AltMPNNmodel} without the first update function $\varphi^{(0)}$ is equivalent to the definition of MPNNs in \cite{signal23, signal25}. The existence of the Lipschitz continuous function $\varphi^{(0)}$ in the model does affect the Lipschitz constant value, but does not affect its existence, since a composition of two Lipschitz continuous functions is a Lipschitz continuous function.

Since any standard MPNN (\cref{definition:MPNNmodel}) can be easily formulated as an alternative MPNN (\cref{definition:AltMPNNmodel}), \citealp[Theorem 5.1]{signal25} holds for standard MPNNs. Therefore, we rephrase \citealp[Theorem 5.1]{signal25} in terms of our definition of standard MPNNs. We slightly adjust the notations to be consistent with our own notations.
\begin{theorem}[MPNN Lipschitz Continuity with Respect to the Cut Distance]\label{theorem:graphonSignalLip}
Let $(\varphi, \psi)$ be an $L$-Layer MPNN with readout. 
Then, there exists a constant $C_{(\varphi,\psi)}$, that depends only on $L$, the number of layers, and the Lipschitz constants of model's update functions, such that
\[
\|\FF(\varphi,\psi,W,f) - \FF(\varphi,\psi,V,g)\|_2 \leq C_{(\varphi,\psi)} \cdot \delta_{\square}((W,f),(V,g)).
\]
for all $(W,f), (V,g) \in \GS^d_r$.
\end{theorem}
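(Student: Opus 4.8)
\textbf{Proof proposal for \cref{theorem:graphonSignalLip}.}

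The plan is to derive this statement directly from the already-cited Lipschitz continuity result for alternative MPNNs, \citealp[Theorem 4.1]{signal23} (equivalently \citealp[Theorem 5.1]{signal25}), by exhibiting every standard MPNN as a particular alternative MPNN. First I would recall from the preceding discussion (\cref{definition:AltMPNNmodel} and the text immediately following \cref{proposition:aggEqui}) that a standard MPNN model $(\varphi,\psi)$ is the special case of an alternative MPNN model $(\varphi,\phi,\psi)$ obtained by choosing, at each layer $t$, the trivial message function $\phi^{(t)}(a,b) := b$; indeed, with this choice the alternative normalized sum aggregation of \cref{definition:altAgg} coincides verbatim with the normalized sum aggregation of \cref{definition:normSumAggGraphon}, so that $\ff(\varphi,W,f)^{(t)}_x = \af(\varphi,\phi,W,f)^{(t)}_x$ for all $t$ and all $x$, and hence $\FF(\varphi,\psi,W,f) = \AF(\varphi,\phi,\psi,W,f)$. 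The trivial message function $\phi^{(t)}(a,b)=b$ is obviously Lipschitz, so $(\varphi,\phi,\psi)$ is a bona fide $L$-layer alternative MPNN model with readout.

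Next I would address the one genuine discrepancy between our \cref{definition:MPNNmodel} and the MPNN definition used in \citep{signal23,signal25}: the latter has no separate initial update function $\varphi^{(0)}$ (it applies message passing directly to the raw signal). This is handled exactly as the excerpt already notes — a composition of Lipschitz maps is Lipschitz, so folding $\varphi^{(0)}$ into the model only changes the value of the resulting Lipschitz constant, not its existence. Concretely, one either absorbs $\varphi^{(0)}$ into the first message/update layer, or observes that $(W,f)\mapsto \varphi^{(0)}\circ f$ is a map into $\GS^{d_0}_{r'}$ for a suitable radius $r'$ depending on $\|\varphi^{(0)}\|_{\mathrm{L}}$ and $\|\varphi^{(0)}(0)\|$, which does not increase $\delta_{\square}$ (cut distance is non-expansive under a common Lipschitz relabeling of signal values, up to the Lipschitz constant), and then applies the cited theorem to the relabeled graphon-signal. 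Either route yields a constant $C_{(\varphi,\psi)}$ depending only on $L$ and the Lipschitz constants $\|\varphi^{(t)}\|_{\mathrm{L}}$, $\|\psi\|_{\mathrm{L}}$.

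Finally, I would assemble the chain: for any $(W,f),(V,g)\in\GS^d_r$,
\[
\|\FF(\varphi,\psi,W,f) - \FF(\varphi,\psi,V,g)\|_2 = \|\AF(\varphi,\phi,\psi,W,f) - \AF(\varphi,\phi,\psi,V,g)\|_2 \leq C_{(\varphi,\psi)}\cdot \delta_{\square}((W,f),(V,g)),
\]
where the inequality is \citealp[Theorem 5.1]{signal25} applied to the alternative MPNN $(\varphi,\phi,\psi)$ constructed above. Since this alternative model's update functions are exactly the $\varphi^{(t)}$ of the original standard model (and the added message functions are the fixed, $1$-Lipschitz projections $b\mapsto b$), the constant depends only on $L$ and the Lipschitz constants of the $\varphi^{(t)}$ and $\psi$, as claimed. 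The main —and essentially only— obstacle is bookkeeping the role of $\varphi^{(0)}$ so that the resulting constant is still expressed purely in terms of the model's Lipschitz data and $L$; there is no hard analytic content beyond the already-cited theorem, since the entire weak$^*$/cut-norm machinery is inherited.
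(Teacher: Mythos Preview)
Your proposal is correct and follows exactly the approach the paper takes: the paper does not give an independent proof of this theorem but simply cites \citealp[Theorem 4.1]{signal23} / \citealp[Theorem 5.1]{signal25}, observes that every standard MPNN is an alternative MPNN with the trivial message function $\phi^{(t)}(a,b)=b$, and notes that the extra initial map $\varphi^{(0)}$ only affects the value of the Lipschitz constant since compositions of Lipschitz functions are Lipschitz. Your write-up reproduces precisely this reasoning.
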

\section{Basic Metric Properties}
\subsection{The Weak* Topology}
Let $L \in \NN$. we motivate the need to prove that $\TD^L$ and $\OT_{\TD^L}$ 
are well-defined, by the fact that the measures in $\cM^L = \mathscr{M}_{\leq1}(\cH^L) = \mathscr{M}_{\leq1}(\cH^L, \mathcal{B}(\cH^L))$ (and $\mathscr{P}(\cH^L) = \mathscr{P}(\cH^L, \mathcal{B}(\cH^L))$) are defined as functions $\mu: \mathcal{B}(\cH^L) \to \mathbb{R}$. But for any topological space $\X$, its $\sigma$-algebra $\mathcal{B}(\X)$ depends on the topology of $\X$. In the case of $\cH^L$, the topology is the product topology, when $\cM^i$, for $i\in [L]$ has the weak$^*$ topology, which makes it crucial for $\OT_{\TD^i}$ to metrize the weak$^*$ 
topology on $\mathscr{M}_{\leq1}(\cH^L)$ and $\mathscr{P}(\cH^L)$; otherwise the sets $\mathscr{M}_{\leq1}(\cH^L, \mathcal{B}(\cH^L))$ and $\mathscr{M}_{\leq1}(\cH^L, \TD^L)$ might be two different sets.

 The well definiteness of the metric $\TD^L$ on $\cH^L$
 and $\OT_{\TD^L}$ on $\mathscr{P}(\cH^L)$ follow similar arguments used in \cite{expr23}. Let $(\X,d)$ be a complete separable metric space. \cite{expr23} showed that the unbalaced optimal transport, as we define it (see \cref{definition:OT}) is indead a metric on $\mathscr{M}_{\leq1}(S)$ by following the proof of \cite{pele2008linear}.

\begin{lemma}[\cite{expr23}, Corollary 21.]
 Let $(S, d)$ be a separable metric space. Then, $\OT_d$ is a metric on $\mathscr{M}_{\leq 1}(S, d)$.
\end{lemma}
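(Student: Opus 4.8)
The plan is to verify the four metric axioms for $\OT_d$ on $\mathscr{M}_{\leq1}(S,d)$ directly, following the argument of \cite{pele2008linear} (as used in \cite{expr23}), writing $\OT_d(\mu,\nu)=T(\mu,\nu)+\big|\|\mu\|-\|\nu\|\big|$ with transport part $T(\mu,\nu):=\inf_{\gamma\in\Gamma(\mu,\nu)}\int_{S\times S}d\,d\gamma$. First I would pin down the convention: since a coupling presupposes that the first argument has no larger mass, I read $\Gamma(\mu,\nu)$ as the set of nonnegative Borel $\gamma$ on $S\times S$ with $(p_1)_*\gamma\le\mu$, $(p_2)_*\gamma\le\nu$ and $\|\gamma\|=\min(\|\mu\|,\|\nu\|)$, so the smaller marginal is attained exactly; this set is nonempty (e.g. a suitably rescaled product measure), and separability of $S$ ensures the product $\sigma$-algebra on $S\times S$ coincides with its Borel $\sigma$-algebra, so marginals behave well. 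Symmetry is then immediate: the coordinate flip $\sigma(x,y)=(y,x)$ is a bijection $\Gamma(\mu,\nu)\to\Gamma(\nu,\mu)$ preserving $\int d\,d\gamma$ because $d$ is symmetric, and $\big|\|\mu\|-\|\nu\|\big|$ is symmetric. Non-negativity is clear; finiteness holds whenever $d$ is bounded, in particular on the compact spaces $\mathcal{H}^L$ to which the lemma is applied (\cref{theorem:compactness}), and otherwise $\OT_d$ is an extended metric.

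For identity of indiscernibles: if $\mu=\nu$, the gap term vanishes and the diagonal coupling $\gamma_\Delta=(\mathrm{id},\mathrm{id})_*\mu$ has $\int d\,d\gamma_\Delta=\int d(x,x)\,d\mu(x)=0$, so $\OT_d(\mu,\mu)=0$. Conversely, if $\OT_d(\mu,\nu)=0$ then $\|\mu\|=\|\nu\|$, so every $\gamma\in\Gamma(\mu,\nu)$ has both marginals exact, and there is a sequence $\gamma_n\in\Gamma(\mu,\nu)$ with $\int d\,d\gamma_n\to0$. For any bounded $L$-Lipschitz $f:S\to\RR$ I would estimate
\[
\Big|\int f\,d\mu-\int f\,d\nu\Big|=\Big|\int_{S\times S}\big(f(x)-f(y)\big)\,d\gamma_n\Big|\le L\int_{S\times S}d(x,y)\,d\gamma_n\xrightarrow[n\to\infty]{}0,
\]
so $\int f\,d\mu=\int f\,d\nu$ for every bounded Lipschitz $f$; since such functions determine finite Borel measures on a metric space (approximate $\mathbf{1}_A$ for closed $A$ by $x\mapsto\max\{0,1-n\,\mathrm{dist}(x,A)\}$ and use bounded convergence), we conclude $\mu=\nu$.

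The hard part is the triangle inequality $\OT_d(\mu_1,\mu_3)\le\OT_d(\mu_1,\mu_2)+\OT_d(\mu_2,\mu_3)$. The gap terms are subadditive, $\big|\|\mu_1\|-\|\mu_3\|\big|\le\big|\|\mu_1\|-\|\mu_2\|\big|+\big|\|\mu_2\|-\|\mu_3\|\big|$, but the transport part $T$ alone is not subadditive once the masses differ (a dominated measure forces no transport), so the gap terms must compensate — this interplay is the crux. The plan is a gluing construction: take $\varepsilon$-optimal $\gamma_{12}\in\Gamma(\mu_1,\mu_2)$ and $\gamma_{23}\in\Gamma(\mu_2,\mu_3)$, set $\nu_1:=(p_2)_*\gamma_{12}\le\mu_2$ and $\nu_2:=(p_1)_*\gamma_{23}\le\mu_2$, form the measure-theoretic minimum $\nu_1\wedge\nu_2$ via Radon–Nikodym densities against $\mu_2$, disintegrate $\gamma_{12}$ over $\nu_1$ and $\gamma_{23}$ over $\nu_2$, restrict both to the part lying over $\nu_1\wedge\nu_2$, glue them into a measure on $S\times S\times S$, and let $\gamma_{13}$ be its $(1,3)$-marginal, trimmed if needed to mass $\min(\|\mu_1\|,\|\mu_3\|)$. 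On the glued mass, $\int d\,d\gamma_{13}\le\int d\,d\gamma_{12}+\int d\,d\gamma_{23}$ by the triangle inequality for $d$; the mass of $\mu_1$ and $\mu_3$ that could not be glued is not transported by $\gamma_{13}$ and instead appears in the gap term, and one checks by bookkeeping — over the three cases according to where $\|\mu_2\|$ sits relative to $\|\mu_1\|$ and $\|\mu_3\|$, using symmetry to assume $\|\mu_1\|\le\|\mu_3\|$ — that this leftover is exactly covered by $\big|\|\mu_1\|-\|\mu_2\|\big|+\big|\|\mu_2\|-\|\mu_3\|\big|-\big|\|\mu_1\|-\|\mu_3\|\big|$, as in \cite{pele2008linear}; on the bounded-diameter spaces $\mathcal{H}^L$ this leftover contribution is controlled by its mass, which is what makes the accounting close. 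Letting $\varepsilon\to0$ completes the argument, and I expect this case analysis — tracking which marginal constraints are equalities and how much mass is genuinely left over — to be the main obstacle.
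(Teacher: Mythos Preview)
Your approach — verifying the four metric axioms directly, with a gluing/disintegration argument for the triangle inequality — is exactly the route the paper defers to: the paper gives no proof of its own here, only citing \cite{expr23}, which in turn follows \cite{pele2008linear}. So on method you match the paper.

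There is one genuine gap. Your aside that for unbounded $d$ one still obtains ``an extended metric'' is wrong, and it is not a question of infinite values. On $S=\mathbb{R}$ take $\mu_1=\delta_0$, $\mu_2=0$, $\mu_3=\delta_M$: then $\OT_d(\mu_1,\mu_2)=\OT_d(\mu_2,\mu_3)=1$ (pure mass-gap penalty, no transport) while $\OT_d(\mu_1,\mu_3)=M$, so the triangle inequality fails for every $M>2$ with all three values finite. You effectively sense this when you say the bookkeeping closes only ``on the bounded-diameter spaces $\mathcal{H}^L$'': the \cite{pele2008linear} argument needs the unit mass-gap penalty to be at least half the diameter, i.e.\ $\mathrm{diam}(S,d)\le 2$. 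The lemma as stated for arbitrary separable $(S,d)$ therefore does not hold without that hypothesis, and your write-up should make the diameter assumption explicit rather than invoke an extended metric. A smaller slip in the same place: the glued $\gamma_{13}$ may have mass \emph{below} $\min(\|\mu_1\|,\|\mu_3\|)$ (as in the counterexample, where it is zero), so it must be \emph{augmented} by an arbitrary coupling of the leftover parts of $\mu_1$ and $\mu_3$, not ``trimmed''; the cost of that augmentation is exactly what the surplus in the gap penalties has to absorb, and this is precisely where the diameter bound enters.
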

Moreover, they show that this metric metrizes the weak$^*$ topology of $\mathscr{M}_{\leq1}(S)$ by proving inequalities, that are known to hold for probability measures, for measures with total mass smaller then one.
\begin{lemma}[\cite{expr23}, Lemma 22.]\label{lemma:metricinequalities} Let $(\X, d)$ be a complete separable metric space. Then, for all $\mu, \nu \in \mathscr{M}_{\leq 1}(\X)$,
\begin{equation*}
{\rm \mathbf{BL}}(\mu, \nu) \leq {\rm \mathbf{K}}(\mu, \nu) \leq {\OT_d}(\mu, \nu) \leq 2{\rm \PP}(\mu, \nu) \leq 4\sqrt{{\rm \mathbf{BL}}(\mu, \nu)}.
\end{equation*}
\end{lemma}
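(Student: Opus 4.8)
The plan is to prove the four comparisons in the chain $\mathbf{BL}(\mu,\nu)\le\mathbf{K}(\mu,\nu)\le\OT_d(\mu,\nu)\le 2\PP(\mu,\nu)\le 4\sqrt{\mathbf{BL}(\mu,\nu)}$ one link at a time, from left to right; the two outer links are bookkeeping, while the two inner ones are the unbalanced-mass analogues of the classical Wasserstein--Prokhorov comparisons and carry the real content. For the first link, note that $\norm{f}_{\rm L}\le\norm{f}_\infty+\norm{f}_{\rm L}=\norm{f}_{\mathbf{BL}}$, so the test family $\{f:\norm{f}_{\mathbf{BL}}\le 1\}$ defining $\mathbf{BL}(\mu,\nu)$ is contained in the family $\{f:\norm{f}_{\rm L}\le 1\}$ (or its bounded version, see below) defining $\mathbf{K}(\mu,\nu)$; hence $\mathbf{BL}(\mu,\nu)\le\mathbf{K}(\mu,\nu)$.

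For $\mathbf{K}\le\OT_d$, the elementary direction of Kantorovich--Rubinstein duality, I would read the Kantorovich--Rubinstein functional with test functions additionally bounded by $1$ (the convention under which it is finite when $\norm\mu\neq\norm\nu$; otherwise adding constants to $f$ makes it $+\infty$). Fix $\mu,\nu$ with $\norm\mu\le\norm\nu$, take any $\gamma\in\Gamma(\mu,\nu)$, and write $\nu=\gamma_2+\rho$ where $\gamma_2$ is the second marginal of $\gamma$ and $\rho\ge 0$ has mass $\norm\nu-\norm\mu$. For such $f$, $\int f\,d\mu-\int f\,d\nu=\int\!\big(f(x)-f(y)\big)\,d\gamma(x,y)-\int f\,d\rho$, so $\abs{\int f\,d\mu-\int f\,d\nu}\le\int d(x,y)\,d\gamma+\norm{f}_\infty\norm\rho\le\int d(x,y)\,d\gamma+\abs{\norm\mu-\norm\nu}$; taking the infimum over $\gamma$ and then the supremum over $f$ gives $\mathbf{K}(\mu,\nu)\le\OT_d(\mu,\nu)$.

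For $\OT_d\le 2\PP$ I would invoke a Strassen-type coupling theorem for sub-probability measures: if $\PP(\mu,\nu)\le\varepsilon$ then there is $\gamma\in\Gamma(\mu,\nu)$ under which all but at most $\varepsilon$ of the mass is transported a distance $\le\varepsilon$, the single budget $\varepsilon$ simultaneously covering the ``far'' mass and the defect $\abs{\norm\mu-\norm\nu}$ (the latter is itself $\le\varepsilon$, by taking $A=\X$ in the definition of $\PP$). Splitting $\int d\,d\gamma$ over $\{d\le\varepsilon\}$ and its complement, bounding the far part by $1$ times its mass — this is where the normalization $d\le 1$ is used, and indeed $\OT_d\le 2\PP$ fails for unbounded $d$ — and then adding $\abs{\norm\mu-\norm\nu}$, yields $\OT_d(\mu,\nu)\le 2\varepsilon$; let $\varepsilon\downarrow\PP(\mu,\nu)$. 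For the final link, $\PP\le 2\sqrt{\mathbf{BL}}$, I would test $\mathbf{BL}$ against Lipschitz cutoffs: for closed $A$ and $\delta>0$ put $f(x)=\max\!\big(0,\,1-d(x,A)/\delta\big)$, so $0\le f\le 1$, $f\equiv 1$ on $A$, $f$ vanishes off $A^\delta$, $\norm{f}_\infty\le 1$, $\norm{f}_{\rm L}\le 1/\delta$, hence $\norm{f}_{\mathbf{BL}}\le 1+1/\delta$. Then $\mu(A)\le\int f\,d\mu\le\int f\,d\nu+(1+1/\delta)\,\mathbf{BL}(\mu,\nu)\le\nu(A^\delta)+(1+1/\delta)\,\mathbf{BL}(\mu,\nu)$, and likewise with $\mu,\nu$ swapped; choosing $\delta=\varepsilon:=2\sqrt{\mathbf{BL}(\mu,\nu)}$ makes $(1+1/\delta)\,\mathbf{BL}(\mu,\nu)\le\varepsilon$ as soon as $\mathbf{BL}(\mu,\nu)\le 1$ (the case $\mathbf{BL}(\mu,\nu)>1$ is trivial since $\PP\le 1$ always), so $\mu(A)\le\nu(A^\varepsilon)+\varepsilon$ for every closed $A$, i.e. $\PP(\mu,\nu)\le 2\sqrt{\mathbf{BL}(\mu,\nu)}$.

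The step I expect to be the main obstacle is $\OT_d\le 2\PP$: it requires the appropriate sub-probability version of Strassen's theorem, together with the delicate accounting that lets the single Prokhorov budget $\varepsilon$ be shared between far transport and mass defect — a crude split only yields the constant $3$. Secondary points are the boundedness normalization $d\le 1$ (and the bounded-test-function reading of $\mathbf{K}$), under which the constants $2$ and $4$ in the statement hold; for the paper's applications only the mutual topological equivalence of the four quantities is used, so the precise constants are immaterial. As every step is classical for probability measures, one may instead simply cite \cite{expr23} (following \cite{pele2008linear}); the outline above reconstructs that argument.
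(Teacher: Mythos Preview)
The paper does not prove this lemma; it is quoted verbatim from \cite{expr23}, and the paper only remarks that \cite{expr23} ``follow proofs outlined in \cite{schay_1974} and \cite{garcia_palomares_gine_1977}, which use the duality of linear programming.'' Your closing sentence --- that one may simply cite \cite{expr23} --- is therefore exactly what the paper does.

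Your sketch goes further than the paper by actually outlining the argument, and it is sound. The approach differs slightly in emphasis from the cited sources: where \cite{expr23} leans on LP duality throughout, you give the more hands-on route (direct coupling for $\mathbf{K}\le\OT_d$, Strassen for $\OT_d\le 2\PP$, Lipschitz cutoffs for $\PP\le 2\sqrt{\mathbf{BL}}$). You also correctly flag the two normalization issues --- that $d\le 1$ is needed for the constant $2$ in $\OT_d\le 2\PP$, and that $\mathbf{K}$ must be read with bounded test functions when $\|\mu\|\neq\|\nu\|$ --- and you rightly observe that neither matters for the paper, which only uses the lemma to conclude that all four quantities metrize the same (weak$^*$) topology.
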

Here ${\rm \mathbf{K}}$ is the Kantorovich-Rubinshtein distance (\cref{The Kantorovich-Rubinshtein Distance}, $\rm \mathbf{BL}$ is the Bounded-Lipschitz distance (\cref{The Bounded-Lipschitz Distance}), $\rm \PP$ the Prokhorov metric (\cref{The Prokhorov Metric}). To prove \cref{lemma:metricinequalities} they follow proofs
outlined in \cite{schay_1974} and \cite{garcia_palomares_gine_1977}, which use the duality of linear programming. 

A direct result of \citealp[Theorem 8.3.2]{bogachev2007measure} and \citealp[Theorem 1.11]{prok99} is that if $(\X, d)$ is a complete separable space, then $\rm K$ and $\rm \PP$ metrize the weak$^{*}$ topology on $\mathscr{M}_{\leq1}(\X)$ and $\mathscr{P}(\X)$. These facts together with \cref{lemma:metricinequalities} entail the following. If $(\X, d)$ is separable, then $\rm \OT_d$ metrize the weak$^{*}$ topology on $\mathscr{M}_{\leq1}(\X)$ and $\mathscr{P}(\X)$. We summarize this result in \cref{lemma:Wmetric}.
 \begin{lemma}
\label{lemma:Wmetric}
     Let $(\X, d)$ be a complete separable metric space with Borel $\sigma$-algebra $\mathcal{B}$. Then $\OT_d$ is well defined and metrizes the weak$^*$ topology of $\mathscr{M}_{\leq1}(\X)$ and $\mathscr{P}(\X)$.
\end{lemma}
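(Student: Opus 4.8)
The plan is to assemble the statement from the two results of \cite{expr23} recalled above, together with the classical metrization theorems for the weak$^*$ topology. Well-definedness is immediate: since $(\X,d)$ is in particular separable, \citealp[Corollary 21]{expr23} gives that $\OT_d$ is a finite, symmetric metric on $\mathscr{M}_{\leq1}(\X,d)$ which vanishes exactly on the diagonal and satisfies the triangle inequality; it restricts to a metric on $\mathscr{P}(\X)\subseteq\mathscr{M}_{\leq1}(\X)$, where the mass-defect term $\abs{\norm{\mu}-\norm{\nu}}$ is identically zero. So the only substantive claim is that $\OT_d$ induces the weak$^*$ topology.

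\emph{Topological equivalence.} Here I would use the chain of inequalities from \citealp[Lemma 22]{expr23}, valid for all $\mu,\nu\in\mathscr{M}_{\leq1}(\X)$,
\[
\mathbf{BL}(\mu,\nu)\ \le\ \mathbf{K}(\mu,\nu)\ \le\ \OT_d(\mu,\nu)\ \le\ 2\PP(\mu,\nu)\ \le\ 4\sqrt{\mathbf{BL}(\mu,\nu)}.
\]
In particular $\mathbf{BL}\le\OT_d\le 4\sqrt{\mathbf{BL}}$, so for a sequence $(\mu_i)_i$ we have $\OT_d(\mu_i,\mu)\to0$ if and only if $\mathbf{BL}(\mu_i,\mu)\to0$; since all the metrics in question generate metrizable (hence first-countable) topologies, agreement on convergent sequences forces $\OT_d$, $\mathbf{K}$, $\mathbf{BL}$ and $\PP$ to induce one and the same topology on $\mathscr{M}_{\leq1}(\X)$, and likewise on $\mathscr{P}(\X)$. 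It then suffices to identify that common topology with the weak$^*$ one. For $\mathscr{P}(\X)$ this is exactly the classical fact that the Kantorovich–Rubinstein distance $\mathbf{K}$ and the Prokhorov metric $\PP$ metrize weak convergence on a complete separable metric space, i.e.\ \citealp[Theorem 8.3.2]{bogachev2007measure} and \citealp[Theorem 1.11]{prok99}; combined with the previous step this yields the claim for $\mathscr{P}(\X)$, and hence for $\OT_d$.

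\emph{From probabilities to sub-probabilities — the main obstacle.} The one point that needs care is that the classical references are phrased for probability measures, while we need the conclusion on all of $\mathscr{M}_{\leq1}(\X)$. The clean route is the one-point extension: set $\X^{\ast}=\X\sqcup\{\ast\}$ with a compatible complete separable metric, embed $\iota:\mathscr{M}_{\leq1}(\X)\to\mathscr{P}(\X^{\ast})$ by $\iota(\mu)=\mu+(1-\norm{\mu})\delta_\ast$, and verify that $\iota$ is a homeomorphism onto its (closed) image for both the weak$^*$ topology and for $\PP$ (equivalently $\mathbf{K}$); then pull back the metrization result on $\mathscr{P}(\X^{\ast})$. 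Alternatively, one can simply note that the proofs underlying \citealp[Lemma 22]{expr23} already establish the needed inequalities directly for sub-probability measures, so the argument for $\mathscr{P}(\X)$ transfers verbatim. Once this is settled, combining it with the topological-equivalence step proves that $\OT_d$ is well defined and metrizes the weak$^*$ topology on both $\mathscr{M}_{\leq1}(\X)$ and $\mathscr{P}(\X)$, which is the assertion.
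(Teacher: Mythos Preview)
Your proposal is correct and follows essentially the same approach as the paper: cite \citealp[Corollary 21]{expr23} for well-definedness, then combine the chain of inequalities from \citealp[Lemma 22]{expr23} with the classical metrization results \citealp[Theorem 8.3.2]{bogachev2007measure} and \citealp[Theorem 1.11]{prok99}. If anything, you are more careful than the paper, which simply asserts that the Bogachev and Prokhorov references already give the metrization on $\mathscr{M}_{\leq1}(\X)$; your discussion of the one-point extension (or the observation that the \cite{expr23} inequalities are proved directly for sub-probabilities) fills a gap the paper leaves implicit.
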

We can now follow the proof of Lemma 24. in \cite{expr23} and prove \cref{theorem:wellDefined}.
\theoremWellDefine*

\begin{proof}
    We start with the fact that $(\cH^0,\TD^0)=(\mathbb{B}^{p}_r, \norm{\cdot}_2)$ and therefore $\TD^0$ is well defined. As $\mathbb{B}^{p}_r$ is a complete separable metric space (a compact sub space of $\R^p$), by \cref{lemma:Wmetric},
convergence in $\OT_{\TD^0}$ is equivalent to weak$^*$ convergence on $\mathscr{M}_{\leq1}(\cH^0,\TD^0)$ and $\mathscr{P}(\cH^0,\TD^0)$ which is just weak$^*$ convergence on $\mathscr{M}_{\leq1}(\cH^0,\mathcal{B}(\cH^0))$ and $\mathscr{P}(\cH^0,\mathcal{B}(\cH^0))$, respectively. Hence, the topology induced by $\OT_{\TD^0}$ is equal to the weak$^*$ topology on $\mathscr{M}_{\leq1}(\cH^0,\mathcal{B}(\cH^0))$ and $\mathscr{P}(\cH^0,\mathcal{B}(\cH^0))$ as both spaces are metrizable.
The induction step follows the same arguments with the additional claim that $\TD^L$ is a product metric, which metrizes the product topology.
\end{proof}
\subsection{The Compactness of the Spaces of IDMs and DIDMs} \label{appendix:compactness}
Let $\cK$ be a compact space. A well-established result \citep[Section 17]{setTheory95} in measure theory states that the space, $\mathscr{M}_{\leq1}(\cK)$, is equivalent to the set of non-negative Radon measures of total mass at most $1$. The Riesz Representation Theorem \citep[Theorem 6.19]{realAndComplex87} establishes that these measures correspond precisely to the non-negative linear functionals with norm at most 1 in the dual space of continuous real-valued functions on $\cK$, $C(\cK, \RR)$.
The weak$^*$ topology on $\mathscr{M}_{\leq1}(\cK)$ is defined as the minimal topology that ensures continuity of the mappings:
\[
 \int_\cK f d\nu_n \mapsto \int_\cK f d\nu
\]
for all continuous real-valued functions $f$ on $\mathcal{\cK}$. A fundamental result asserts that $\mathscr{M}_{\leq1}(\cK)$, when endowed with the weak$^*$ topology, forms a compact metrizable space. Moreover, the Borel $\sigma$-algebra generated by this weak$^*$ topology is identical to the conventional Borel structure on $\mathscr{M}_{\leq1}(\cK)$ induced by the mappings:
\[
\A \mapsto \nu(\A), \quad \A \in \mathcal{B}(\cK)
\]
where $\mathcal{B}(\cK)$ denotes the Borel sets of $\cK$ \citep[Section 17]{setTheory95}.

\theoremCompact*
\begin{proof} The proof is done using induction.

\emph{Induction Base.} Recall that $\mathcal{H}^0=(\mathbb{B}^{p}_r,\norm{\cdot})$ is a compact metric space. As unbalanced optimal transport metrizes the weak$^*$ topology on $\mathscr{M}_{\leq1}(\mathcal{H}^0)$ and $\mathscr{P}(\mathcal{H}^0)$, they both form compact metrizable spaces. 

\emph{Induction Assumption.} 
Presume that for any $0<L$, the spaces $\cH^i$,  $\mathscr{M}_{\leq 1}(\cH^i)$, and $\mathscr{P}(\cH^i)$ are compact spaces for $i\in[L-1]$.

\emph{Induction Step.} Let $0<L$. Tychonoff's theorem (see \cref{tychonoffTheorem}) states that the product of any collection of compact topological spaces is compact with respect to the product topology. As $\TD^L$ metrize the product topology, we can 
combine Tychonoff's theorem with the induction assumption and conclude that $\mathcal{H}^L = \prod_{0 \leq i < L}\mathscr{M}_{\leq 1}(\mathcal{H}^{i})$ is compact. We can now use the same argument as in the induction base, i.e., unbalanced optimal transport metrizes the weak$^*$ topology on $\mathscr{M}_{\leq1}(\mathcal{H}^0)$ and $\mathscr{P}(\mathcal{H}^0)$, to conclude that both $\mathscr{M}_{\leq1}(\mathcal{H}^L)$ and $\mathscr{P}(\mathcal{H}^L)$ with the topology induced by $\OT_{\TD^L}$ form compact spaces. 
\end{proof}
\subsection{The Compactness of the Space of Graphon-Signals} 
\label{The Compactness of the Space of Graphon-Signals} 
We now show that the space of graphon-signals is compact under the DIDM mover's distance. 
The compactness of the graphon-signal space makes it possible to rephrase our generalization analysis directly on the graphon-signal space. More importantly, it allows us to approximate any real continuous function over the space of graphon-signals, rather than only functions over the space of graphon-signals that can be extended as real continuous functions over a space of  DIDMs (see \cref{appendix:universalApproximation}).  

To do so, we use results from \cite{signal23, signal25}. Note that in our setting, signals are functions that map the interval $[0,1]$ to a general compact subset of $\R^d$ rather than a sphere around $0$, as in \cite{signal23, signal25}. Nevertheless, their results are not affected by this change.

 The DIDM mover's distance $\TMD^L$ is a pseudometric on $\GS^d_r$. Consider the equivalence relation: $(W, f) \sim_L (V, g)$ if $\TMD^L((W, f), (V, g)) = 0$. Then, the quotient space
\[
\GS^d_r/_{\TMD^L} := \mathcal{W\mathcal{L}} / \sim_L
\]
of the equivalence classes $[(W, f)]_L$ is a metric space, with the metric $\TMD^L([(W, f)], [(V, g)]) := \TMD^L((W, f), (V, g))$. Notice that the equivalence relation $\sim$ is a metric identification (\cref{appendix:metricIden}). Notice that the equivalence relation $\sim_L$ is a metric identification (\cref{appendix:metricIden}). Recall that we similarly denote by $\widetilde{\GS^d_r}$ the graphon-signal space under  metric identification $\sim$ (\cref{appendix:cutdistanceMetId}) of the graphon-signal cut distance $\delta_{\square}$ (\cref{eq:gs-metric}).

The following theorem relates the convergence in 
$(\GS^d_r,\TMD^L)$ to the convergence in $(\GS^d_r,\delta_{\square})$.
\begin{theorem}\label{formalDistanceInequivalency}
For any fixed $L\in\NN$, a sequence of graphon-signals $\{(W_i,f_i)\}_{i\in\NN}\subset \GS^d_r$, and a graphon-signal $(W,f)$ the following holds, \begin{align*}
 &(W_i,f_i) \hspace{0.1cm}\xrightarrow{\delta_\square}\hspace{0.1cm} (W,f)\hspace{0.1cm}\Longrightarrow\hspace{0.1cm}(W_i,f_i)\hspace{0.3cm} \xrightarrow{\TMD^L} \hspace{0.1cm}(W,f)\\&\hspace{1.4cm}\Updownarrow\hspace{4.3cm}\Updownarrow\\
[&(W_i,f_i)]\xrightarrow{\delta_\square} [(W,f)]\Longrightarrow[(W_i,f_i)]_L\xrightarrow{\TMD^L} [(W,f)]_L.\end{align*}
\end{theorem}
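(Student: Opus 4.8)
The plan is to prove the top arrow, namely that $\delta_\square$-convergence implies $\TMD^L$-convergence, and then note that the vertical biconditionals are just the fact that $\sim$ and $\sim_L$ are metric identifications (\cref{appendix:metricIden}), so convergence of representatives is equivalent to convergence of equivalence classes in the respective quotient metric spaces. Concretely, the two downward implications in the diagram are the same statement read on representatives and on classes; the vertical double arrows hold by definition of the quotient metrics $\delta_\square([(W,f)],[(V,g)]) := \delta_\square((W,f),(V,g))$ and $\TMD^L([(W,f)]_L,[(V,g)]_L) := \TMD^L((W,f),(V,g))$, since $\TMD^L$ and $\delta_\square$ are each constant on $\sim_L$- and $\sim$-equivalence classes respectively. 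So the whole theorem reduces to the single implication at the top.

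For the top implication, I would first invoke the regularity lemma for graphon-signals (\cref{lem:gs-reg-lem3}): any $\delta_\square$-convergent sequence can be approximated, uniformly in $i$, by step graphon-signals on equipartitions, and the key point is that $\delta_\square$ is compact (\cref{theorem:cutcompact}). The cleanest route is via sequential compactness and a subsequence argument: suppose for contradiction that $(W_i,f_i)\xrightarrow{\delta_\square}(W,f)$ but $\TMD^L((W_i,f_i),(W,f))\not\to 0$. Pass to a subsequence with $\TMD^L((W_{i_k},f_{i_k}),(W,f))\geq\varepsilon_0 > 0$. The core claim I need is: if $(W_i,f_i)\xrightarrow{\delta_\square}(W,f)$, then the computation DIDMs converge, $\Gamma_{(W_i,f_i),L}\to\Gamma_{(W,f),L}$ in the weak$^*$ topology of $\mathscr P(\mathcal H^L)$; combined with \cref{theorem:wellDefined}, which says $\OT_{\TD^L}$ metrizes that weak$^*$ topology, this gives $\TMD^L((W_i,f_i),(W,f))=\OT_{\TD^L}(\Gamma_{(W_i,f_i),L},\Gamma_{(W,f),L})\to 0$, a contradiction. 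So the whole weight of the proof is on showing that cut-distance convergence of graphon-signals implies weak$^*$ convergence of their order-$L$ computation DIDMs.

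To establish that, I would argue by induction on $L$. For $L=0$, $\Gamma_{(W,f),0} = f_*\lambda$, the pushforward of Lebesgue measure by the signal, and one checks that $\|f_i\|_\square \to \|f\|_\square$ type control (more precisely, convergence of $f_i$ to $f$ in cut norm after the optimal measure-preserving rearrangements) forces $\int g\, d(f_i)_*\lambda = \int g(f_i(x))\,dx \to \int g(f(x))\,dx$ for bounded continuous $g$ — this uses that cut-norm convergence of signals implies convergence of all "generalized moments" $\int_S f_i \to \int_S f$ and a density/Stone–Weierstrass-type argument to upgrade to arbitrary $g\in C_b(\mathbb B_r^d)$. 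For the inductive step, I would use \cref{lemma:MPNN_eqvie}: MPNN features on $(W,f)$ equal the IDM features $\hh(\varphi)_{\gamma_{(W,f),L}(x)}^{(L)}$ evaluated along the computation IDM, and the readout $\mathfrak F(\varphi,\psi,W,f) = \HH(\varphi,\psi,\Gamma_{(W,f),L})$. Since $\delta_\square$-convergence implies MPNN-output convergence by \cref{theorem:graphonSignalLip} (MPNNs are Lipschitz in cut distance), we get $\HH(\varphi,\psi,\Gamma_{(W_i,f_i),L})\to\HH(\varphi,\psi,\Gamma_{(W,f),L})$ for every MPNN model with readout; then \cref{corollary:universalApproximation} (MPNN outputs determine weak$^*$ convergence of DIDMs, via universal approximation) yields exactly $\Gamma_{(W_i,f_i),L}\to\Gamma_{(W,f),L}$ in weak$^*$. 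This actually bypasses the explicit induction and is the slick path: cut-convergence $\Rightarrow$ all MPNN outputs converge $\Rightarrow$ (Corollary~\ref{corollary:universalApproximation}) computation DIDMs converge weak$^*$ $\Rightarrow$ ($\OT_{\TD^L}$ metrizes weak$^*$) $\TMD^L$-convergence.

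The main obstacle is the very first link — showing $\delta_\square((W_i,f_i),(W,f))\to 0$ implies $\mathfrak F(\varphi,\psi,W_i,f_i)\to\mathfrak F(\varphi,\psi,W,f)$ for every MPNN with readout — but this is precisely \cref{theorem:graphonSignalLip} (rephrased from \citealp{signal25}), so under our "may assume earlier results" license it is immediate. The only genuinely delicate point left is making sure \cref{corollary:universalApproximation} applies: it is stated for sequences $\nu_i,\nu\in\mathscr P(\mathcal H^L)$, and $\Gamma_{(W,f),L}$ indeed lies in $\mathscr P(\mathcal H^L)$ (it is a probability measure, being a pushforward of $\lambda$), so there is no gap. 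I would also remark that the reverse implications (the converse arrows) are \emph{false} in general — indeed that asymmetry is the whole point of the paper, since $\TMD^L$ induces a strictly coarser topology than $\delta_\square$ — and no claim is made about them, so the one-directional statement is exactly what is provable.
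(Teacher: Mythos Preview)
Your proposal is correct and, after the exploratory detours (the contradiction setup and the abandoned induction on $L$), lands on exactly the paper's argument: $\delta_\square$-convergence $\Rightarrow$ MPNN outputs converge (\cref{theorem:graphonSignalLip}) $\Rightarrow$ via \cref{lemma:MPNN_eqvie} DIDM features converge $\Rightarrow$ weak$^*$ convergence of $\Gamma_{(W_i,f_i),L}$ (\cref{corollary:universalApproximation}) $\Rightarrow$ $\TMD^L$-convergence (\cref{theorem:wellDefined}), with the vertical arrows handled as metric identifications. The paper's proof is precisely this chain, without the preliminary false starts.
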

\begin{proof}
Let $(W_i,f_i) \xrightarrow{\delta_\square} (W,f)$, then, it follows from \cref{theorem:graphonSignalLip} that $\ff_{(W_i,f_i)} \to \ff_{(W,f)}$ for all MPNN model $\varphi$ with readout $\psi: \mathbb{R}^{d_L} \mapsto \mathbb{R}^d$, where $d > 0$, which entails, using \cref{lemma:MPNN_eqvie}, that $\hh_{\Gamma_{(W_i,f_i),L}} \to \hh_{\Gamma_{(W,f),L}}$ for all MPNN model $\varphi$ with readout $\psi: \mathbb{R}^{d_L} \mapsto \mathbb{R}^d$, where $d > 0$. By \cref{corollary:universalApproximation}, we have that $\Gamma_{(W_i,f_i),L} \to \Gamma_{(W,f),L}$ (in the weak$^*$ topology). According to \cref{theorem:wellDefined}, $\OT_{\TD^L}(\Gamma_{(W_i,f_i),L}, \Gamma_{(W,f),L}) \to 0$. By \cref{definition:TMD}, $(W_i,f_i) \xrightarrow{\TMD^L} (W,f)$. The other implications are straightforward,
\end{proof}
\begin{theorem}\label{theorem:graphon_signals_compactness}
The pseudometric space $(\GS^d_r, \TMD^L)$ and the metric space\\ $(\GS^d_r/_{\TMD^L}, \TMD^L)$ are compact.
\end{theorem}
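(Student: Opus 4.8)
The plan is to deduce compactness of $(\GS^d_r/_{\TMD^L}, \TMD^L)$ from the already-established compactness of the graphon-signal space under the cut distance (\cref{theorem:cutcompact}) by exhibiting a continuous surjection onto it. Concretely, consider the identity map on graphon-signals viewed as a map between the two quotient spaces, $\iota : (\widetilde{\GS^d_r}, \delta_\square) \to (\GS^d_r/_{\TMD^L}, \TMD^L)$, $[(W,f)] \mapsto [(W,f)]_L$. First I would check this map is well-defined: by \cref{formalDistanceInequivalency}, $\delta_\square((W,f),(V,g)) = 0$ implies $\TMD^L((W,f),(V,g)) = 0$ (take the constant sequence), so the $\delta_\square$-equivalence class is contained in the $\TMD^L$-equivalence class and $\iota$ is a well-defined (and obviously surjective) map. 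Second, $\iota$ is continuous: again by \cref{formalDistanceInequivalency}, $\delta_\square$-convergence implies $\TMD^L$-convergence, and since both quotient spaces are metric, sequential continuity suffices.

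Now invoke the topological fact that the continuous image of a compact space is compact: $(\widetilde{\GS^d_r}, \delta_\square)$ is compact by \cref{theorem:cutcompact}, hence $\iota(\widetilde{\GS^d_r}) = \GS^d_r/_{\TMD^L}$ is compact. This handles the metric space. For the pseudometric space $(\GS^d_r, \TMD^L)$: compactness is a topological property, and the metric identification quotient map $\pi : (\GS^d_r, \TMD^L) \to (\GS^d_r/_{\TMD^L}, \TMD^L)$ is a continuous surjection (indeed it is the canonical quotient map, and by the discussion in \cref{appendix:metricIden} the pseudometric topology is exactly the one pulled back from the quotient metric topology); equivalently, one observes directly that $(\GS^d_r,\TMD^L)$ and its metric identification have homeomorphic topologies. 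Since a space that maps continuously onto — or more precisely, shares its open sets with — a compact space is compact, $(\GS^d_r, \TMD^L)$ is compact as well. Alternatively, one can argue compactness of the pseudometric space directly: any sequence in $\GS^d_r$ has, by \cref{theorem:cutcompact}, a subsequence converging in $\delta_\square$ to some $(W,f) \in \GS^d_r$, and by \cref{formalDistanceInequivalency} the same subsequence converges to $(W,f)$ in $\TMD^L$; sequential compactness then gives compactness since the space is (pseudo)metrizable.

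The main obstacle — or rather the one point requiring care — is the direction of the implication in \cref{formalDistanceInequivalency}: it only guarantees $\delta_\square$-convergence $\Rightarrow$ $\TMD^L$-convergence, not the converse, which is exactly why we cannot run the argument the other way and why we must push compactness \emph{forward} from the finer cut-distance topology to the coarser $\TMD^L$ topology rather than pull it back. This is consistent with the paper's overall narrative that $\TMD^L$ is strictly coarser than $\delta_\square$. One should also be mildly careful that the target is genuinely a \emph{metric} space (so that Hausdorffness is available and "compact" is unambiguous), which is ensured because $\sim_L$ is a metric identification as noted just before the theorem statement. No separate verification of the Hausdorff/metrizable hypotheses is needed beyond citing that $\TMD^L$ descends to an honest metric on the quotient.
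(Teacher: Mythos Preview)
Your proposal is correct and uses essentially the same idea as the paper: both transfer compactness from $(\GS^d_r,\delta_\square)$ to $(\GS^d_r,\TMD^L)$ via \cref{formalDistanceInequivalency}. The only cosmetic difference is the order: the paper first argues that the $\TMD^L$-pseudometric topology on $\GS^d_r$ is coarser than the $\delta_\square$ one (hence compact), and then pushes compactness forward along the quotient map $\pi$ to $\GS^d_r/_{\TMD^L}$; you instead go through the quotient first via the map $\iota:(\widetilde{\GS^d_r},\delta_\square)\to(\GS^d_r/_{\TMD^L},\TMD^L)$ and then recover the pseudometric case from the bijection between open sets under metric identification. Your ``alternative'' sequential-compactness argument for the pseudometric space is exactly the paper's argument in different clothing.
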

\begin{proof}
For any $L\in\NN$, \cref{formalDistanceInequivalency} implies that the pseudometric topology of $(\GS^d_r,\TMD^L)$ is cooraser than pseudometric topology of $(\GS^d_r,\delta_{\square})$. Thus, since $(\GS^d_r,\delta_{\square})$ is compact (\cref{theorem:cutcompact}), $(\GS^d_r,\TMD^L)$ must be compact. 

Let $\{U_\alpha\}_\alpha$ be an open cover of $\GS^d_r/_{\TMD^L}$. The open sets in the pseudometric space are exactly the sets of the form $\pi^{-1}(\A)$, where $\A\in\X/\sim$ is open, thus $\{\pi^{-1}(U_\alpha)\}_\alpha$ is an open cover of $\GS^d_r$ and thus has a finite cover $\{\pi^{-1}(U_{\alpha_i})\}_i$. Since the quotient map is surjective, $$\GS^d_r/_{\TMD^L}=\pi(\pi^{-1}(\GS^d_r))=\pi(\pi^{-1}(\cup_i U_{\alpha_i}))\subseteq\cup_i U_{\alpha_i}.$$
This means $\{U_{\alpha_i}\}_i$ is a finite cover of $\GS^d_r/_{\TMD^L}$. 

Thus, the quotient space $(\GS^d_r/_{\TMD^L},\TMD^L)$ is also compact.
\end{proof}
\begin{corollary}\label{theorem:graphon_signals__image_compactness}
The space of computation DIDMs $\Gamma_{(W,f),L}(\GS^d_r)\subsetneq\mathscr{P}(\cH^L)$ is compact.
\end{corollary}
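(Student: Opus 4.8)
\textbf{Proof proposal for \cref{theorem:graphon_signals__image_compactness}.}

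The plan is to realize the space of computation DIDMs as the continuous image of a compact space, and invoke the fact that continuous images of compact spaces are compact. First I would observe that the assignment $(W,f)\mapsto\Gamma_{(W,f),L}$ is a map from $\GS^d_r$ into $\mathscr{P}(\cH^L)$ whose image is precisely $\Gamma_{(W,f),L}(\GS^d_r)$ by definition. By \cref{theorem:graphon_signals_compactness}, the domain $(\GS^d_r,\TMD^L)$ is compact. So it suffices to exhibit a compact space mapping continuously onto $\Gamma_{(W,f),L}(\GS^d_r)$; the natural candidate is $(\GS^d_r,\TMD^L)$ itself together with the DIDM-assignment map.

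The key step is continuity of $(W,f)\mapsto\Gamma_{(W,f),L}$ as a map from $(\GS^d_r,\TMD^L)$ to $(\mathscr{P}(\cH^L),\OT_{\TD^L})$. This, however, is essentially immediate from the definitions: by \cref{definition:TMD},
\[
\OT_{\TD^L}\bigl(\Gamma_{(W_a,f_a),L},\Gamma_{(W_b,f_b),L}\bigr)=\TMD^L\bigl((W_a,f_a),(W_b,f_b)\bigr),
\]
so the map is in fact an isometry onto its image (an $1$-Lipschitz map suffices), hence continuous. Here I use \cref{theorem:wellDefined}, which guarantees that $\OT_{\TD^L}$ genuinely metrizes the weak$^*$ topology of $\mathscr{P}(\cH^L)$, so that continuity in this metric is the right notion. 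Now the image of a compact space under a continuous map into a metric (hence Hausdorff) space is compact, and moreover closed; applying this with domain $(\GS^d_r,\TMD^L)$ and the isometric embedding above yields that $\Gamma_{(W,f),L}(\GS^d_r)$ is a compact subset of $\mathscr{P}(\cH^L)$.

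The only subtlety worth flagging — and the main obstacle, though a minor one — is ensuring that the weak$^*$ topology that $\mathscr{P}(\cH^L)$ carries as the ambient space agrees with the topology induced by the metric $\OT_{\TD^L}$ under which we have an isometry; this is exactly the content of \cref{theorem:wellDefined}, so no extra work is needed. One should also note that $\Gamma_{(W,f),L}(\GS^d_r)$ is a \emph{strict} subset of $\mathscr{P}(\cH^L)$ (not every DIDM arises from a graphon-signal), which is why the corollary is a genuine strengthening of \cref{theorem:compactness} rather than a triviality; the strictness itself plays no role in the compactness argument, but it is the reason compactness is not automatic from $\mathscr{P}(\cH^L)$ being compact — a priori a strict subset of a compact space need not be compact unless it is closed, and closedness is precisely what the isometry-plus-compact-domain argument delivers. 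I would close by remarking that the same argument, restricted to the dense subset of graph-signals, also shows their image is dense in $\Gamma_{(W,f),L}(\GS^d_r)$, consistent with the universality discussion preceding \cref{theorem:universalApproximationForGraphons}.
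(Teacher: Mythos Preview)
Your proposal is correct and follows essentially the same approach as the paper: both argue that $(\GS^d_r,\TMD^L)$ is compact by \cref{theorem:graphon_signals_compactness}, that the map $(W,f)\mapsto\Gamma_{(W,f),L}$ is continuous (indeed an isometry) by the very definition of $\TMD^L$, and that continuous images of compact spaces are compact. Your additional remarks about closedness, the role of \cref{theorem:wellDefined}, and the strictness of the inclusion are accurate but not needed for the bare compactness claim.
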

\begin{proof}
By \cref{theorem:graphon_signals_compactness}, the space of graphon-signals, $\GS^d_r$, is compact. By the definition of $\TMD^L$ (\cref{definition:TMD}) the mapping $\Gamma_{(W,f),L}:\GS^d_r\mapsto\mathscr{P}(\cH^L)$ is continuous. Thus $\Gamma_{(W,f),L}(\GS^d_r)$ is compact, since a continuous function between two metric spaces sends compact sets to compact sets.
\end{proof}

\section{The Computability of Our Metrics}

First, we extend \citealp[Proposition 4.5]{Schmitzer2013} for couplings between measures with unequal mass, by follows the steps in the original proof. We emphasize that in \cref{proposition:computablilityOT}, $\Gamma$ and $\gamma$ do not refer to \cref{graphonIDMs}.

\begin{proposition}\label{proposition:computablilityOT}
For two discrete sets $\A$, $\cC$ and two measurable maps $\phi_a : \X \rightarrow \A$, $\phi_b : \Y \rightarrow \cC$ denote by $\phi$ the product map $\phi(x, y) = (\phi_a(x), \phi_b(y))$. Then one finds
\[
\phi_*\Gamma(\mu, \nu) = \Gamma({\phi_a}_*\mu, {\phi_b}_*\nu)
\]
when $\phi_*\Gamma(\mu, \nu) :=\{ \gamma: \gamma=\phi_*\gamma' : \gamma'\in\Gamma(\mu,\nu)\}.$
\end{proposition}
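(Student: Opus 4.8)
The plan is to prove the set equality $\phi_*\Gamma(\mu,\nu) = \Gamma({\phi_a}_*\mu, {\phi_b}_*\nu)$ by mutual inclusion, following the structure of \citealp[Proposition 4.5]{Schmitzer2013} but tracking the unbalanced constraint carefully. Recall that in our setting, a coupling $\gamma' \in \Gamma(\mu,\nu)$ is a nonnegative measure on $\X\times\Y$ with $(p_\X)_*\gamma' = \mu$ and $(p_\Y)_*\gamma' \leq \nu$, assuming $\|\mu\|\leq\|\nu\|$; and similarly $\Gamma({\phi_a}_*\mu, {\phi_b}_*\nu)$ consists of nonnegative measures $\eta$ on $\A\times\cC$ with $(p_\A)_*\eta = {\phi_a}_*\mu$ and $(p_\cC)_*\eta \leq {\phi_b}_*\nu$. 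Since ${\phi_a}_*\mu$ and ${\phi_b}_*\nu$ have the same total masses as $\mu$ and $\nu$ respectively, the mass condition $\|{\phi_a}_*\mu\|\leq\|{\phi_b}_*\nu\|$ is inherited, so both coupling sets are well-defined.

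First I would prove the inclusion $\phi_*\Gamma(\mu,\nu)\subseteq\Gamma({\phi_a}_*\mu,{\phi_b}_*\nu)$. Take $\gamma'\in\Gamma(\mu,\nu)$ and set $\eta := \phi_*\gamma'$. I need to check the two marginal conditions. For the first, note $p_\A\circ\phi = \phi_a\circ p_\X$ as maps $\X\times\Y\to\A$, hence $(p_\A)_*\eta = (p_\A)_*\phi_*\gamma' = (\phi_a)_*(p_\X)_*\gamma' = (\phi_a)_*\mu$, using $(p_\X)_*\gamma' = \mu$. For the second, $p_\cC\circ\phi = \phi_b\circ p_\Y$, so $(p_\cC)_*\eta = (\phi_b)_*(p_\Y)_*\gamma'$; since $(p_\Y)_*\gamma'\leq\nu$ and push-forward is monotone with respect to the ordering of measures (for any Borel $B\subseteq\cC$, $(\phi_b)_*(p_\Y)_*\gamma'(B) = (p_\Y)_*\gamma'(\phi_b^{-1}(B)) \leq \nu(\phi_b^{-1}(B)) = (\phi_b)_*\nu(B)$), we get $(p_\cC)_*\eta \leq (\phi_b)_*\nu$. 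So $\eta\in\Gamma({\phi_a}_*\mu,{\phi_b}_*\nu)$.

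The reverse inclusion $\Gamma({\phi_a}_*\mu,{\phi_b}_*\nu)\subseteq\phi_*\Gamma(\mu,\nu)$ is where the work lies: given $\eta\in\Gamma({\phi_a}_*\mu,{\phi_b}_*\nu)$, I must construct $\gamma'\in\Gamma(\mu,\nu)$ with $\phi_*\gamma' = \eta$. Here I exploit that $\A$ and $\cC$ are discrete (in fact finite, in the application), so $\eta = \sum_{(a,c)\in\A\times\cC}\eta_{a,c}\,\delta_{(a,c)}$ with weights $\eta_{a,c}\geq 0$. On each "cell" $\X_a := \phi_a^{-1}(a)$ and $\Y_c := \phi_b^{-1}(c)$, I disintegrate: write $\mu = \sum_a \mu|_{\X_a}$ and choose probability measures $\hat\mu_a$ supported on $\X_a$ proportional to $\mu|_{\X_a}$ (when $\mu(\X_a)>0$; arbitrary otherwise), and similarly $\hat\nu_c$ proportional to $\nu|_{\Y_c}$. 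Then define $\gamma' := \sum_{a,c}\eta_{a,c}\,(\hat\mu_a\otimes\hat\nu_c)$. One checks $\phi_*\gamma' = \eta$ directly from the construction; that $(p_\X)_*\gamma' = \sum_a\big(\sum_c\eta_{a,c}\big)\hat\mu_a = \sum_a (\phi_a)_*\mu(\{a\})\,\hat\mu_a = \sum_a \mu(\X_a)\,\hat\mu_a = \mu$, using $(p_\A)_*\eta = (\phi_a)_*\mu$; and that $(p_\Y)_*\gamma' = \sum_c\big(\sum_a\eta_{a,c}\big)\hat\nu_c \leq \sum_c (\phi_b)_*\nu(\{c\})\,\hat\nu_c = \sum_c \nu(\Y_c)\,\hat\nu_c = \nu$, using $(p_\cC)_*\eta \leq (\phi_b)_*\nu$ coordinatewise and the fact that the $\hat\nu_c$ have disjoint supports. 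The main obstacle is handling cells of zero mass (where the "proportional" normalization is undefined) and verifying the inequality $(p_\Y)_*\gamma'\le\nu$ rigorously — the disjointness of the supports $\Y_c$ is what makes the coordinatewise comparison lift to a genuine measure inequality, and I would state this as a short lemma. Assembling these verifications completes both inclusions and hence the claimed equality.
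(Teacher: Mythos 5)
Your proposal is correct and follows essentially the same route as the paper: the forward inclusion via the marginal identities $(p_\A)_*\phi_*\gamma'=(\phi_a)_*(p_\X)_*\gamma'$ and monotonicity of push-forward, and the reverse inclusion by constructing the pre-image coupling as a weighted sum of product measures on the cells $\phi_a^{-1}(a)\times\phi_b^{-1}(c)$. Your $\sum_{a,c}\eta_{a,c}\,(\hat\mu_a\otimes\hat\nu_c)$ is exactly the paper's pointwise formula $\frac{\mu(x)\nu(y)}{((\phi_a)_*\mu)(a)((\phi_b)_*\nu)(c)}\rho(a,c)$ written in terms of normalized restrictions, and your handling of zero-mass cells (weights there are forced to vanish by the marginal constraints) closes the only delicate point.
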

\begin{proof}
Assume $\norm{\mu}\leq\norm{\nu}$. For any $\gamma \in \Gamma(\mu, \nu)$ we get

\begin{align*}
(\phi_*\gamma)(\cS) &= \gamma(\phi^{-1}(\cS)) \geq 0. \\
\end{align*}
when $\cS \subseteq\A \times \cC$ a measurable subset, and 
\begin{align*}
(\phi)_*\gamma(\cS_\A \times \cC) &= \gamma(\phi_a^{-1}(\cS_\A) \times \X) = (p_1)_*\gamma(\phi_a^{-1}(\cS_\A)) \\
&= \mu(\phi_a^{-1}(\cS_\A)) = (\phi_a)_*\mu(\cS_\A)
\end{align*}
when $\cS_\A \subseteq \A$ a measurable subset
and analogous for $\cS_\cC \subseteq \cC$ a measurable subset

\begin{align*}
(\phi)_*\gamma(\A \times \cS_\cC) &= \gamma(\X \times \phi_b(\cS_\cC) = (p_1)_*\gamma(\phi_a^{-1}(\cS_\cC)) \\
&\leq \nu(\phi_a^{-1}(\cS_\cC)) = (\phi_a)_*\nu(\cS_\cC).
\end{align*}

Thus $(\phi)_*\Gamma(\mu, \nu) \subseteq \Gamma((\phi_{a})_*\mu, (\phi_{b})_*\nu)$.  
 
We now show by construction for any $\rho \in 
\Gamma({\phi_a}_*\mu, {\phi_b}_*\nu)$ the existence of some $\gamma \in \Gamma(\mu, \nu) $ such that $\rho = (\phi)_*\gamma$. For any 
element $(a, c) \in \A \times \cC$ construct the pre-image measure

\[
\gamma_{(a,c)}(x,y) = \begin{cases}
0 & \text{if } \rho(a, c) = 0 \vee (a, c) \neq \phi(x,y) \\
\frac{\mu(x)\nu(y)}{((\phi_{a})_*\mu)(a)((\phi_{b})_*\nu)(c)}\rho(a, c) & \text{else}
\end{cases}
\]

where this element wise definition for each $(x,y)$ is extended to all subsets of $\X \times \Y$ by

\[
\gamma_{(a,c)}(\cS) = \sum_{(x,y)\in\cS} \gamma_{(a,c)}(x,y)
\]

when $\cS$ is any measurable subset of $\X\times\Y$. Now consider $\gamma = \sum_{(a,c)\in  \A \times \cC} \gamma_{(a,c)}$. First verify that it is indeed contained in 
$\Gamma(\mu, \nu)$:

\[
\gamma(\cS) \geq 0 : \forall \text{ measurable } \cS \subseteq \X\times\Y.
\]

since $\gamma(x,y) \geq 0$ for all $(x,y)$. Furthermore

\begin{align*}
\gamma(\cS_\X \times \Y) &= \sum_{\substack{x \in \cS_\X \\ y \in \Y}}
 \sum_{\substack{ \\ (a,c)\in\A\times\cC: \\ \phi(x,y)=(a,c), \\ \rho(a,c)>0}} \frac{\mu(x)\nu(y)}{((\phi_{a})_*\mu)(a)((\phi_{b})_*\nu)(c)}\rho(a,c) \\
&= \sum_{x\in\cS_\X} \sum_{\substack{c\in\cC: \\ \rho(\phi_a(x),c)>0}} \frac{\mu(x)(\sum_{y:\phi_b(y)=c} \nu(y))}{((\phi_{a})_*\mu)(\phi_a(x))((\phi_{b})_*\nu)(c)}\rho(\phi_a(x),c) \\
&= \sum_{x\in\cS_\X} \sum_{\substack{c\in\cC: \\ \rho(\phi_a(x),c)>0}} \frac{\mu(x)\nu(\phi_b^{-1}(c))}{((\phi_{a})_*\mu)(\phi_a(x))((\phi_{b})_*\nu)(c)}\rho(\phi_a(x),c) \\
&= \sum_{x\in\cS_\X} \frac{\mu(x)}{((\phi_{a})_*\mu)(\phi_a(x))} \sum_{\substack{c\in\cC: \\ \rho(\phi_a(x),c)>0}} \rho(\phi_a(x),c) \\
&= \sum_{x\in\cS_\X} \frac{\mu(x)}{((\phi_{a})_*\mu)(\phi_a(x))} (p_1)_*\rho(\phi_a(x)) \\
&=\sum_{x\in\cS_\X} \frac{\mu(x)}{((\phi_{a})_*\mu)(\phi_a(x))} (\phi_a)_*\mu(\phi_a(x)) = \sum_{x\in\cS_\X} \mu(x) = \mu(\cS_\X)
\end{align*}
for all measurable subsets $ \cS_\X \subseteq \X$ and likewise

\[
\gamma(X \times \cS_\Y) \leq \nu(\cS_\Y).
\]

\begin{align*}
\gamma(\X \times \cS_\Y) &= \sum_{\substack{y\in\cS_\Y \\ x\in \X}} \sum_{\substack{(a,c)\in \A\times\cC: \\ \phi(x,y)=(a,c), \\ \rho(a,c)>0}} \frac{\mu(x)\nu(y)}{((\phi_{a})_*\mu)(a)((\phi_{b})_*\nu)(c)}\rho(a,c) \\
&= \sum_{y\in\cS_\Y} \sum_{\substack{a\in\A: \\ \rho(a,\phi_b(y))>0}} \frac{ (\sum_{x:\phi_b(x)=a}\mu(x))\nu(y)}{((\phi_{a})_*\mu)(a)((\phi_{b})_*\nu)(\phi_b(y))}\rho(\phi_a(x),c) \\
&= \sum_{y\in\cS_\Y} \sum_{\substack{a\in\A: \\ \rho(a,\phi_b(y))>0}} \frac{\mu(\phi_a^{-1}(a))\nu(y)}{((\phi_{a})_*\mu)(a)((\phi_{b})_*\nu)(\phi_b(y))}\rho(a,\phi_b(y)) \\
&= \sum_{y\in\cS_\Y} \frac{\nu(y)}{((\phi_{b})_*\nu)(\phi_b(y))} \sum_{\substack{a\in\A: \\ \rho(a,\phi_b(y))>0}} \rho(a,\phi_b(y)) \\
&=\sum_{y\in\cS_\Y} \frac{\nu(x)}{((\phi_{b})_*\nu)(\phi_b(y))} (p_2)_*\rho(\phi_a(y))\\
&\leq \sum_{y\in\cS_\Y} \frac{\nu(y)}{((\phi_{b})_*\nu)(\phi_b(y))} ((\phi_{b})_*\nu)(\phi_b(y))
= \sum_{y\in\cS_\Y} \nu(x) = \nu(\cS_\A)
\end{align*}

for all measurable subsets $\cS_\Y \subseteq \Y$.
Now check whether $\phi_*\gamma = \rho$:

\begin{align*}
(\phi)_*\gamma(\cS) &= \gamma(\phi^{-1}(\cS)) = \sum_{(x,y)\in\phi^{-1}(\cS)} \gamma(x,y) \\
&= \sum_{\substack{(x,y)\in\phi^{-1}(\cS): \\ \rho(\phi(x,y))>0}} \frac{\mu(x)\nu(y)}{((\phi_{a})_*\mu)(\phi_a(x))((\phi_{b})_*\nu)(\phi_b(y))}\rho(\phi(x,y)) \\
&= \sum_{\substack{(a,c)\in\cS \\ \rho((a,c))>0}} \sum_{(x,y)\in\phi^{-1}((a,c))} \frac{\mu(x)\nu(y)}{((\phi_{a})_*\mu)(a)((\phi_{b})_*\nu)(c)}\rho(a,c) \\
&= \sum_{\substack{(a,c)\in\cS \\ \rho((a,c))>0}} \frac{(\sum_{x\in\phi_a^{-1}(a)} \mu(x))(\sum_{y\in\phi_b^{-1}(c)} \nu(y))}{((\phi_{a})_*\mu)(a)((\phi_{b})_*\nu)(c)}\rho(a,c) \\
&= \sum_{\substack{(a,c)\in\cS \\ \rho((a,c))>0}} \rho(a,c) = \rho(\cS)
\end{align*}

when $\cS$ is a measurable subset of $\X\times\Y$. Consequently any $\rho \in \Gamma((\phi_{a})_*\mu, (\phi_{b})_*\nu)$ is also contained in $\phi_*\Gamma(\mu, \nu)$ and the two sets are equal.
\end{proof} 
\citealp[Lemma A.1]{chen2022weisfeiler} is now easily extended to measures with unequal mass. Although the proof is equivalent to the one in \cite{chen2022weisfeiler}, we will add it here for completeness. 
\begin{lemma}\label{lemma:computabilityOT}
Let $\X, \Y$ be finite metric spaces and let $(\mathcal{Z},d_\cZ)$ be a complete and separable metric space. Let $\phi_\X : \X \to \mathcal{Z}$ and $\phi_\Y : \Y \to \mathcal{Z}$ be measurable maps. Consider any $\mu_\X \in \mathscr{M}_{\leq1}(\X)$ and $\mu_\Y \in \mathscr{M}_{\leq1}(\Y)$. Then, we have that
\[
\OT_d((\phi_\X)_*\mu_\X, (\phi_\Y)_*\mu_\Y) = \inf_{\gamma\in \Gamma(\mu_\X,\mu_\Y)} \int_{\X\times \Y} d(\phi_\X(x), \phi_\Y(y))\gamma(dx \times dy).
\]
\end{lemma}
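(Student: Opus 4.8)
The plan is to reduce the claimed identity to \cref{proposition:computablilityOT} by a discretization and limiting argument, exploiting that $\X$ and $\Y$ are finite while $\cZ$ is merely Polish. First I would fix $\mu_\X \in \mathscr{M}_{\leq 1}(\X)$, $\mu_\Y \in \mathscr{M}_{\leq 1}(\Y)$ and, without loss of generality, assume $\|\mu_\X\| \le \|\mu_\Y\|$; the other case is symmetric since $\OT_d$ is symmetric and reversing a coupling reverses the roles of the two marginal constraints. Write $\nu_\X := (\phi_\X)_*\mu_\X$ and $\nu_\Y := (\phi_\Y)_*\mu_\Y$. Since $\X,\Y$ are finite, both $\nu_\X$ and $\nu_\Y$ are finitely supported measures on $\cZ$, supported on $S_\X := \phi_\X(\X)$ and $S_\Y := \phi_\Y(\Y)$ respectively. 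The key structural fact is that any coupling $\gamma' \in \Gamma(\nu_\X,\nu_\Y)$ is then itself finitely supported on $S_\X \times S_\Y \subseteq \cZ \times \cZ$.

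The main step is to show the two optimization problems have the same value. For the inequality ``$\le$'': given any $\gamma \in \Gamma(\mu_\X,\mu_\Y)$, its push-forward $\phi_*\gamma$ under $\phi(x,y) = (\phi_\X(x),\phi_\Y(y))$ lies in $\Gamma(\nu_\X,\nu_\Y)$ — this is exactly the easy inclusion $\phi_*\Gamma(\mu,\nu) \subseteq \Gamma((\phi_\X)_*\mu,(\phi_\Y)_*\nu)$ established in the proof of \cref{proposition:computablilityOT}. By the change-of-variables formula for push-forwards,
\[
\int_{\cZ\times\cZ} d(z,w)\, \mathrm{d}(\phi_*\gamma)(z,w) = \int_{\X\times\Y} d(\phi_\X(x),\phi_\Y(y))\, \mathrm{d}\gamma(x,y),
\]
and since the mass defect $|\,\|\mu_\X\| - \|\mu_\Y\|\,| = |\,\|\nu_\X\| - \|\nu_\Y\|\,|$ is unchanged by push-forward, every $\gamma$ on the right-hand side yields a competitor on the left with the same objective value, giving $\OT_d(\nu_\X,\nu_\Y) \le \inf_\gamma \int d(\phi_\X(x),\phi_\Y(y))\,\mathrm{d}\gamma$. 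For the reverse inequality ``$\ge$'', I invoke \cref{proposition:computablilityOT} directly with $\A = S_\X$, $\cC = S_\Y$ (these are finite discrete sets because $\X,\Y$ are finite) and the maps $\phi_a = \phi_\X$, $\phi_b = \phi_\Y$: it gives $\phi_*\Gamma(\mu_\X,\mu_\Y) = \Gamma(\nu_\X,\nu_\Y)$, so every optimal (or near-optimal) coupling $\gamma'$ for $\OT_d(\nu_\X,\nu_\Y)$ is of the form $\phi_*\gamma$ for some $\gamma \in \Gamma(\mu_\X,\mu_\Y)$, and again the change-of-variables identity transfers the value back, plus the mass-defect terms coincide. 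Combining both directions yields the claimed equality.

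The one point requiring care — and the step I expect to be the main obstacle — is the measurability/attainment bookkeeping hidden in applying \cref{proposition:computablilityOT}: that proposition is phrased for ``discrete sets'' and its proof manipulates elementwise sums $\sum_{(x,y)}$, which is legitimate here precisely because $\X$ and $\Y$ are \emph{finite} (so all measures are atomic and all the reweighting constructions $\gamma_{(a,c)}$ make sense termwise and define genuine finite measures). I would therefore explicitly note that finiteness of $\X,\Y$ is what makes $S_\X, S_\Y$ valid choices for the discrete index sets $\A,\cC$, and that the infimum in the statement is over the (nonempty, since $\|\mu_\X\| \le \|\mu_\Y\|$) set $\Gamma(\mu_\X,\mu_\Y)$ of unbalanced couplings in the sense of \cref{notation:OT} — i.e. $(p_\X)_*\gamma = \mu_\X$ and $(p_\Y)_*\gamma \le \mu_\Y$ — matching exactly the constraint set used in \cref{proposition:computablilityOT}. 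Everything else is the routine push-forward change of variables and the observation that $d \circ (\phi_\X \times \phi_\Y)$ is a bounded measurable cost on the finite space $\X \times \Y$, so no integrability subtleties arise. A brief remark that $(\cZ,d_\cZ)$ being Polish is only needed so that $\OT_d$ on $\mathscr{M}_{\le 1}(\cZ)$ is a well-defined metric (via the earlier lemmas of \cite{expr23}) closes the argument.
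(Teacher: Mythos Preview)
Your proposal is correct and follows essentially the same route as the paper: both arguments apply \cref{proposition:computablilityOT} with the finite image sets $\phi_\X(\X)$ and $\phi_\Y(\Y)$ as the discrete index sets to obtain $\phi_*\Gamma(\mu_\X,\mu_\Y)=\Gamma((\phi_\X)_*\mu_\X,(\phi_\Y)_*\mu_\Y)$, and then conclude via the push-forward change of variables. Your write-up is more explicit about the mass-defect term and the role of the finiteness and Polish hypotheses, but the opening phrase ``discretization and limiting argument'' is a slight misnomer since no limit is actually taken.
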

\begin{proof}
Since $\X$ and $\Y$ are finite, $\phi_\X(\X)$ and $\phi_\Y(\Y)$ are discrete sets. Then, if we let $\phi := \phi_\X \times \phi_\Y$,
$$(\phi)_*\Gamma(\mu_\X,\mu_\Y)=\Gamma((\phi_\Y)_*\mu_\X. (\phi_\Y)_*\mu_\Y)$$
follows directly from \cref{proposition:computablilityOT}.

Hence,
\begin{align*}
\OT_d((\phi_\X)_*\mu_\X, (\phi_\Y)_*\mu_\Y) &= \inf_{\gamma\in \Gamma((\phi_\X)_*\mu_\X,(\phi_\Y)_*\mu_\Y)} \int_{\cZ \times \cZ} d_Z(z_1, z_2)\gamma(dz_1 \times dz_2) \\
&= \inf_{\gamma\in \Gamma(\mu,\nu)} \int_{\cZ \times \cZ} d_\cZ(z_1, z_2) \phi_*\gamma(dz_1 \times dz_2) \\
&= \inf_{\gamma\in \Gamma(\mu,\nu)} \int_{\X\times \Y} d_\cZ(\phi_\X(x), \phi_\Y(y))\gamma(dx \times dy).
\end{align*}
\end{proof}

Now,  following the construction of the algorithms used to compute the Weisfeiler-Lehman metric in \cite{chen2022weisfeiler} and the metrics presented in \cite{frac21}, we phrase an algorithm for computing $\TMD^L$ for $L\in\NN$. We note that instead of using a min-cost-flow algorithm \cite{pele2009fast} to solve the unbalanced optimal transport problem, we use linear programming \cite{flamary2021pot} as it is more convenient when working with real values (instead of integers). The unbalaced optimal transport problem can casted into a regular optimal transport problem by adding reservoir points in which the surplus mass is sent \cite{chapel2020partial}.
\TMDcomputabilityTheorem*
\begin{proof}
From \cref{lemma:computabilityOT} we have that
\begin{align*}
\TMD^{L}((G,\f),(H,\g)) &:=\OT\left(\Gamma_{(G,\f),L},\Gamma_{(H,\g),L}\right)= \OT \left(\left(\gamma_{(G,\f),L}\right)_{*} \lambda_{V(G)}, \left(\gamma_{(H,\g),L}\right)_{*} \lambda_{V(H)} \right) \\
&= \inf_{\gamma \in \Gamma(\lambda_{V(G)},\lambda_{V(H)})} \int_{V(G)\times V(H)} \TD^L\left(\gamma_{(G,\f),L}(x), \gamma_{(H,\g),L}(y)\right) \gamma(dx \times dy).
\end{align*}

In order to compute $\TMD^{L}((G,\f),(H,\g))$, we must first compute $$\TD^L\left(\gamma_{(G,\f),L}(x), \gamma_{(H,\g),L}(y)\right)$$ for each $x \in V(G)$ and $y \in V(H)$. To do this, we introduce some notation. For each $i = 1,\ldots,L$, we let $C_i$, $D_i$ denote the $|V(G)| \times |V(H)|$ matrices such that for each $x \in V(G)$ and $y \in V(H)$,
\begin{align*}
C_i(x,y) := \TD^i\left(\gamma_{(G,\f),i}(x), \gamma_{(H,\g),i}(y)\right), 
D_i(x,y) := \OT^i\left(\gamma_{(G,\f),i}(x)(i), \gamma_{(H,\g),i}(y)(i)\right). 
\end{align*}
We also let $C_0$ denote the matrix such that $C_0(x,y) := \|\f(x) - \g(y)\|_2$ for each $x \in V(G)$ and $y \in V(H)$. Then, our task is to compute the matrix $C_L$. For this purpose, we consecutively compute the matrices $C_i$ and $D_i$ for $i = 1,\ldots,L$. Given matrix $C_{i-1}$, since $\gamma_{(G,\f),i}(x)(i) = \left(\gamma_{(G,\f),i-1}(x)\right)_* \nu_{(G,\f)}$ and $\gamma_{(H,\g),i}(y)(i) = \left(\gamma_{(H,\g),i-1}(y)\right)_* \nu_{(H,\g)}$, computing
\begin{align*}
    \OT^i&\left(\gamma_{(G,\f),i}(x)(i),\gamma_{(H,\g),i}(y)(i) \right)\\& = \inf_{\gamma \in \Gamma(\nu_{(G,\f)},\nu_{(H,\g)})} \int_{V(G) \times V(H)} \TD^{i-1}\left(\gamma_{(G,\f),i-1}(x),\gamma_{(H,\g),i-1}(y)\right) \gamma(dx \times dy).
\end{align*}
is reduced to solving the optimal transport problem with $C_{i-1}$ as the cost matrix and $\nu_{(G,\f)}$ and $\nu_{(H,\g)}$ as the source and target distributions, which can be done in $O(N^3 \log(N))$ time \citep{flamary2021pot, chapel2020partial}. Thus, for each $i$, computing $D_i$ given that we know $C_{i-1}$ requires $O(N^2 \cdot N^3 \log(N))$. To get $C_i$, all that remains is to compute the sum $D_i+C_{i-1}$. Finally, we need $O(N^3 \log(N))$ time to compute $\TMD^{L}((G,\f),(H,\g))$ based on solving an optimal transport problem with cost matrix $C_L$ and with $\lambda_{V(G)}$ and $\lambda_{V(H)}$ being the source and target distributions.

Therefore, the total time needed to compute $\TMD^{L}((G,\f),(H,\g))$ is
\[
L \cdot O(N^5 \log(N)) + O(N^3 \log(N)) = O(L  \cdot  N^5 \log(N)).
\]
For any $N \in \NN$, $\TMD^{L}$ generates a distance between two graph-signls with number of vertices bounded by $N$.
\end{proof}

\section{Equivalency of MPNNs on Graphs, Graphons, and DIDMs}
\label{appendix:MPNNEquivalency}

 Here, we show that, first, for a graph-signal $(G,\f)$, the output of an MPNN on $G$ is equal to the output of the MPNN on the corresponding induced graphon-signal $(W_G,f_\f)$, similarly to of \citealp[Appendix C.1]{expr23}.

\begin{lemma}\label{graphEquality1} Let $(G,\f)$ be a graph-signal and $\varphi$ be an $L$-layer MPNN model. Let $(I_v)_{v\in V(G)}$ be the partition of $[0,1]$ used in the construction of $(W_G,f_{\f})$ from $(G,\f)$. Then, for all $t \in [L], v \in V(G)$, and $x \in I_v$,
\[\ff^{(t)}_x:=\ff(\varphi,W_G,f_\f)_x^{(t)} = \gf(\varphi,G,\f)_v^{(t)}=:\gf_v^{(t)}.\]
\end{lemma}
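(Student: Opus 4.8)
The plan is to proceed by induction on the layer index $t$, matching the recursive definition of MPNNs on graph-signals (\cref{definition:graphFeat}) against the recursive definition of MPNNs on graphon-signals (\cref{definition:graphonFeat}), and exploiting that the induced graphon-signal $(W_G,f_\f)$ is constant (block-wise) on the intervals $I_v = [(v-1)/N, v/N)$ where $N = |V(G)|$. The key structural fact I would isolate first is that $\lambda(I_v) = 1/N$ for every $v$, and that $W_G(x,y) = a_{vu}$ whenever $x\in I_v$, $y\in I_u$, and $f_\f(x) = \f_v$ whenever $x\in I_v$. This is exactly what converts the integral $\int_{[0,1]} W_G(x,y)\,\ff^{(t-1)}_y\,d\lambda(y)$ into the normalized sum $\frac{1}{N}\sum_{u\in\cN(v)}\gf^{(t-1)}_u$.

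For the \emph{base case} $t=0$: for $x\in I_v$ we have $\ff^{(0)}_x = \varphi^{(0)}(f_\f(x)) = \varphi^{(0)}(\f_v) = \gf^{(0)}_v$, directly from the initializations in the two definitions. For the \emph{inductive step}, assume the claim holds at level $t-1$, i.e.\ $\ff^{(t-1)}_y = \gf^{(t-1)}_u$ for all $u\in V(G)$ and $y\in I_u$; in particular $y\mapsto \ff^{(t-1)}_y$ is constant on each block $I_u$. Then for $x\in I_v$,
\begin{align*}
\int_{[0,1]} W_G(x,y)\,\ff^{(t-1)}_y\,d\lambda(y)
&= \sum_{u=1}^{N} \int_{I_u} a_{vu}\,\ff^{(t-1)}_y\,d\lambda(y)
= \sum_{u=1}^{N} a_{vu}\,\gf^{(t-1)}_u\,\lambda(I_u)\\
&= \frac{1}{N}\sum_{u=1}^{N} a_{vu}\,\gf^{(t-1)}_u
= \frac{1}{|V(G)|}\sum_{u\in\cN(v)} \gf^{(t-1)}_u,
\end{align*}
where the last equality uses $a_{vu}\in\{0,1\}$ with $a_{vu}=1 \iff u\in\cN(v)$. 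Plugging this and the inductive hypothesis $\ff^{(t-1)}_x = \gf^{(t-1)}_v$ into the update rule gives
\[
\ff^{(t)}_x = \varphi^{(t)}\!\Big(\ff^{(t-1)}_x,\ \int_{[0,1]} W_G(x,y)\,\ff^{(t-1)}_y\,d\lambda(y)\Big)
= \varphi^{(t)}\!\Big(\gf^{(t-1)}_v,\ \tfrac{1}{|V(G)|}\sum_{u\in\cN(v)}\gf^{(t-1)}_u\Big) = \gf^{(t)}_v,
\]
which closes the induction.

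I do not expect a genuine obstacle here — the lemma is essentially a bookkeeping verification that the ``continuous'' integral aggregation degenerates to the ``discrete'' normalized-sum aggregation on block-constant data. The only point requiring a little care is measurability/well-definedness: one must note that the hidden feature maps $\ff^{(t-1)}_-$ produced at each stage are measurable (indeed simple functions, constant on the finitely many blocks $I_u$), so the integrals above are legitimate and the constant $1/N$ factor from $\lambda(I_u)$ is what produces the intended normalization. If the construction of $(W_G,f_\f)$ uses weighted edges $a_{vu}\in[0,1]$ rather than $\{0,1\}$, the same computation goes through verbatim with $\sum_{u} a_{vu}\gf^{(t-1)}_u$ in place of $\sum_{u\in\cN(v)}\gf^{(t-1)}_u$; for simple graphs the two coincide.
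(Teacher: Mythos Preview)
Your proposal is correct and follows essentially the same approach as the paper's proof: induction on $t$, with the base case coming directly from the definition of $f_\f$ and the inductive step obtained by splitting the integral $\int_{[0,1]} W_G(x,y)\ff^{(t-1)}_y\,d\lambda(y)$ over the blocks $I_u$ and using that $W_G$ and $\ff^{(t-1)}_-$ are constant on each block. Your write-up is in fact slightly more explicit than the paper's (you spell out $\lambda(I_u)=1/N$ and the role of $a_{vu}\in\{0,1\}$), but the argument is the same.
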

\begin{proof} We prove the claim by induction on $t$.

\emph{Base of the induction.} For all $v \in V(G)$ and $x \in I_v$,
\[\ff_x^{(0)} =\varphi^{(0)} \circ f_\f(x) = \varphi^{(0)}\circ \f(v) = \gf_v^{(0)}.\]

 \emph{Induction step.} The induction assumption is that $\ff_x^{(t-1)} = \gf_v^{(t-1)}$ for all $v \in V(G)$ and $x \in I_v$. Then, for all $v \in V(G)$ and $x \in I_v$, we have
\begin{align*}
\ff_x^{(t)} = \varphi^{(t)}\left(\ff_x^{(t-1)},\int_{[0,1]} W_G(x,y)\ff_y^{(t-1)} dy\right) &= \varphi^{(t)}\left(\ff_x^{(t-1)},\sum_{u\in V(G)} \int_{I_u} W_G(x,y)\ff_y^{(t-1)} dy\right)
\\
&= \varphi^{(t)}\left(\gf^{(t-1)}_v, \sum_{u\in V(G)} \int_{I_u} W_G(x,y)\gf_u^{(t-1)} dy\right)
\\
&= \varphi^{(t)} \left(\gf_v^{(t-1)},\frac{1}{\abs{V(G)}} \sum_{u\in \cN(v)} \gf_u^{(t-1)}  \right) = \gf_v^{(t)}.
\end{align*}
\end{proof}
\begin{lemma} Let $(G,\f)$ be a graph, let $(\varphi,\psi)$ be an $L$-layer MPNN model with readout. Let $(W_G,f_{\f})$ be the induced graphon of $(G,\f)$. Then,
\[ \GF := \GF(\varphi,\psi,G,\f) = \FF(\varphi,\psi,W_G,f_\f)=: \FF.\]
\end{lemma}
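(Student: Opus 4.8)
The plan is to reduce everything to the node-level identity already established in Lemma~\ref{graphEquality1} and then push it through the readout layer. By \cref{definition:graphFeat} and \cref{definition:graphonFeat}, $\mathfrak{G}(\varphi,\psi,G,\mathbf{f}) = \psi\bigl(\tfrac{1}{|V(G)|}\sum_{v\in V(G)}\mathfrak{g}_v^{(L)}\bigr)$ and $\mathfrak{F}(\varphi,\psi,W_G,f_{\mathbf{f}}) = \psi\bigl(\int_{[0,1]}\mathfrak{f}_x^{(L)}\,d\lambda(x)\bigr)$, so it suffices to show that the two arguments of $\psi$ coincide, i.e. $\int_{[0,1]}\mathfrak{f}_x^{(L)}\,d\lambda(x) = \tfrac{1}{|V(G)|}\sum_{v\in V(G)}\mathfrak{g}_v^{(L)}$.

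First I would recall that the sets $\{I_v\}_{v\in V(G)}$ used to build $(W_G,f_{\mathbf{f}})$ form a measurable partition of $[0,1]$ with $\lambda(I_v)=1/N$ for every $v$, where $N=|V(G)|$, since $I_v=[(v-1)/N,v/N)$. By Lemma~\ref{graphEquality1}, for each $v$ and each $x\in I_v$ we have $\mathfrak{f}_x^{(L)}=\mathfrak{g}_v^{(L)}$; in particular $\mathfrak{f}_-^{(L)}$ is a step function, constant on each cell $I_v$. Splitting the integral over the partition and using this constancy yields $\int_{[0,1]}\mathfrak{f}_x^{(L)}\,d\lambda(x) = \sum_{v\in V(G)}\int_{I_v}\mathfrak{g}_v^{(L)}\,d\lambda(x) = \sum_{v\in V(G)}\lambda(I_v)\,\mathfrak{g}_v^{(L)} = \tfrac{1}{N}\sum_{v\in V(G)}\mathfrak{g}_v^{(L)}$, which is exactly the argument of $\psi$ in the graph-level readout. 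Applying the readout function $\psi$ to both sides then gives $\mathfrak{F}(\varphi,\psi,W_G,f_{\mathbf{f}})=\mathfrak{G}(\varphi,\psi,G,\mathbf{f})$.

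There is essentially no difficult step here: all the work is contained in Lemma~\ref{graphEquality1}, and what remains is the bookkeeping that the equipartition has cells of measure $1/N$ and that integrating a step function against $\lambda$ reproduces the normalized sum. The only point requiring a little care is matching the two normalization conventions — the graph-signal readout divides by $|V(G)|$, while the graphon-signal readout integrates against the probability measure $\lambda$ — but these agree precisely because $\lambda(I_v)=1/|V(G)|$ for each $v$.
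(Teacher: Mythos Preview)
Your proof is correct and follows essentially the same approach as the paper: both invoke Lemma~\ref{graphEquality1} to identify $\mathfrak{f}_x^{(L)}$ with $\mathfrak{g}_v^{(L)}$ on each cell $I_v$, split the integral over the equipartition, and use $\lambda(I_v)=1/|V(G)|$ to recover the normalized sum before applying $\psi$.
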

\begin{proof} Let $(I_v)_{v\in V(G)}$ be the partition of $[0,1]$ used in the construction $(W_G,f_{\f})$ from $(G,\f)$. With \cref{graphEquality1}, we get
\begin{align*}
\FF &= \psi\left(\int_{[0,1]} \ff_x^{(L)} d\lambda(x)\right) = \psi\left(\sum_{v\in V(G)} \int_{I_v} \ff_x^{(L)} d\lambda(x)\right) \\
&= \psi\left(\sum_{v\in V(G)} \int_{I_v} \gf_v^{(L)} d\lambda(x)\right) \\
&= \psi\left(\frac{1}{|V(G)|} \sum_{v\in V(G)} \gf_v^{(L)}\right) = \GF.
\end{align*}
\end{proof}
\begin{theorem}[Change of Variable Formula]\label{theorem:changeofvar}
Let $\X_1, \X_2$ be two measurable spaces and $\mu$ a measure on $\X_1$. A measurable function \( g \) on \( \X_2 \) is integrable with respect to the pushforward measure \( f_*(\mu)(\A)=\mu(f^{-1}(\A) \) if and only if the composition \( g \circ f \) is integrable with respect to the measure \( \mu \). In that case, the integrals coincide, i.e.,
\[
\int_{\X_2} g \, d(f_*\mu) = \int_{\X_1} g \circ f \, d\mu.
\]
Note that in the previous formula \( \X_1 = f^{-1}(\X_2) \).
\end{theorem}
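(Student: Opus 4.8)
The plan is to run the standard ``measure-theoretic machinery'' argument: I would first establish the identity for indicator functions, extend it by linearity to non-negative simple functions, pass to arbitrary non-negative measurable functions via the Monotone Convergence Theorem, and finally reduce the signed case to the non-negative one by decomposition. Since $f : \X_1 \to \X_2$ is measurable, $g \circ f$ is measurable whenever $g$ is, so all the integrals below are well-posed in $[0,\infty]$ for non-negative integrands. For the base case I take $g = \mathbf{1}_\A$ with $\A$ a measurable subset of $\X_2$; then $g \circ f = \mathbf{1}_{f^{-1}(\A)}$, and
\[
\int_{\X_2} \mathbf{1}_\A \, d(f_*\mu) = f_*\mu(\A) = \mu(f^{-1}(\A)) = \int_{\X_1} \mathbf{1}_{f^{-1}(\A)} \, d\mu,
\]
where the middle equality is exactly the definition of $f_*\mu$. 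Linearity of the integral then gives the identity for every non-negative simple function $g = \sum_{i=1}^n c_i \mathbf{1}_{\A_i}$, since $g \circ f = \sum_{i=1}^n c_i \mathbf{1}_{f^{-1}(\A_i)}$.

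Next, for a non-negative measurable $g : \X_2 \to [0,\infty]$, I would pick non-negative simple functions $g_n \uparrow g$ pointwise, so that $g_n \circ f \uparrow g \circ f$ pointwise on $\X_1$, and apply the Monotone Convergence Theorem on both sides. This yields $\int_{\X_2} g_n \, d(f_*\mu) \to \int_{\X_2} g \, d(f_*\mu)$ and $\int_{\X_1} g_n \circ f \, d\mu \to \int_{\X_1} g \circ f \, d\mu$; since the two sides agree for every $n$ by the simple-function case, their limits agree, and in particular $\int_{\X_2} g \, d(f_*\mu)$ is finite if and only if $\int_{\X_1} g \circ f \, d\mu$ is.

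Finally, for general measurable $g$ I would write $g = g^+ - g^-$ with $g^{\pm} \geq 0$ measurable, note $(g \circ f)^{\pm} = g^{\pm} \circ f$, and apply the non-negative case to $g^+$ and $g^-$ separately; this shows that $g$ is $f_*\mu$-integrable if and only if $g \circ f$ is $\mu$-integrable, and subtracting the two finite identities gives $\int_{\X_2} g \, d(f_*\mu) = \int_{\X_1} g \circ f \, d\mu$. This is a textbook fact and I do not expect a genuine obstacle; the only points that need a little care are the measurability of $g \circ f$ (immediate from measurability of $f$ and $g$) and tracking the integrability equivalence through the monotone-convergence step, which is why I keep the non-negative case explicit before moving to signed functions.
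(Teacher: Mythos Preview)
Your proof is correct and follows the standard measure-theoretic bootstrap (indicators $\to$ simple $\to$ non-negative via MCT $\to$ signed via $g = g^+ - g^-$). The paper itself does not prove this theorem; it is stated as a known result and immediately used in the proof of the lemma that identifies MPNN features on graphon-signals with those on IDMs. So there is no approach to compare against, and what you have written is exactly the textbook argument one would expect.
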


The following lemma is related to the \emph{absolute continuity} of weighted Lebesgue measures with respect to the Lebesgue measure.
\begin{lemma}[\cite{wellDif95}, Theorem 16.11.]\label{wellDifLemma}
Let $\delta : [0, 1] \mapsto \RR$ be a nonnegative measurable function and $\A\subseteq[0,1]$ be any measurable set. Denote the measure $\nu_\delta(\A):= \int_{\A} \delta \text{d} \lambda.$ Then, a measurable
function $f : [0, 1] \mapsto \RR$ is integrable with respect to $\nu_\delta(\A)$ if and only if $f \delta$ is integrable with respect to $\lambda$, in which case $\int_\A f d\nu_\delta = \int_\A f \delta \text{d} \lambda$.
\end{lemma}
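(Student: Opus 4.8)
The plan is to prove this by the standard measure-theoretic bootstrapping argument (the ``standard machine''): establish the identity $\int_\A f\,d\nu_\delta = \int_\A f\delta\,d\lambda$ together with the integrability equivalence first for indicator functions, then for nonnegative simple functions, then for arbitrary nonnegative measurable $f$, and finally for signed measurable $f$. First I would record that $\nu_\delta$, defined on all measurable $E\subseteq[0,1]$ by $\nu_\delta(E):=\int_E\delta\,d\lambda$, is indeed a (possibly infinite) measure: non-negativity and $\nu_\delta(\emptyset)=0$ are clear, and countable additivity over disjoint unions follows from the monotone convergence theorem applied to the partial sums of $\sum_n \delta\,\mathbbm{1}_{E_n}$. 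For $f=\mathbbm{1}_E$ the claimed identity is then immediate from the definition,
\[
\int_\A \mathbbm{1}_E\,d\nu_\delta = \nu_\delta(\A\cap E) = \int_{\A\cap E}\delta\,d\lambda = \int_\A \mathbbm{1}_E\,\delta\,d\lambda,
\]
and by linearity of both integrals it extends to every nonnegative simple function $s=\sum_{i} c_i\mathbbm{1}_{E_i}$ with $c_i\geq 0$.

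Next, for a general nonnegative measurable $f$ I would choose an increasing sequence of nonnegative simple functions $s_n\uparrow f$ pointwise; since $\delta\geq 0$ this also gives $s_n\delta\uparrow f\delta$ pointwise, so two applications of the monotone convergence theorem yield
\[
\int_\A f\,d\nu_\delta = \lim_n \int_\A s_n\,d\nu_\delta = \lim_n \int_\A s_n\delta\,d\lambda = \int_\A f\delta\,d\lambda
\]
as an equality in $[0,\infty]$; in particular $f\in L^1(\A,\nu_\delta)$ precisely when this common value is finite, i.e.\ precisely when $f\delta\in L^1(\A,\lambda)$. For signed measurable $f$ I would split $f=f^+-f^-$, apply the nonnegative case to each of $f^+,f^-$, and use $|f\delta|=f^+\delta+f^-\delta$ (valid pointwise since $\delta\geq 0$) to deduce that $f\delta$ is $\lambda$-integrable iff both $f^+\delta$ and $f^-\delta$ are, iff both $f^+$ and $f^-$ are $\nu_\delta$-integrable, iff $f$ is $\nu_\delta$-integrable; subtracting the two identities from the nonnegative case then gives $\int_\A f\,d\nu_\delta=\int_\A f\delta\,d\lambda$.

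I do not anticipate a genuine obstacle — this is the textbook argument, which is exactly why the paper merely cites it. The only points that call for mild care are that $\delta$ is taken $\RR$-valued (finite), so that the product $f\delta$ is well defined pointwise and the monotonicity $s_n\delta\uparrow f\delta$ holds without $0\cdot\infty$ ambiguities, and that the integrability equivalence must be phrased so as to cover the case where $\nu_\delta$ has infinite total mass; neither of these causes any real difficulty.
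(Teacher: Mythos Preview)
Your proposal is correct and is exactly the standard ``simple-functions $\to$ monotone convergence $\to$ signed case'' argument for this Radon--Nikodym-type identity. The paper does not give its own proof of this lemma at all; it simply cites it from \cite{wellDif95} (Theorem~16.11) and uses it as a black box in the proof of Lemma~\ref{theorem:MPNNequivalanceRT}, so there is nothing to compare against.
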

Now, we use \cref{wellDifLemma} to show that the output of an MPNN without readout on a graphon-signal $(W,f)$ equals the output of the MPNN on the corresponding distribution of computation IDMs $\Gamma_{(W,f)}$ of $(W,f)$. 
\begin{lemma}
    \label{theorem:MPNNequivalanceRT}
    Let $(W,f) \in \GS^d_r$ a graphon-signal and $\varphi$ be a $L$-layer MPNN model. Then, for every $t \in [L]$ and almost every $x \in [0, 1]$,
    \begin{align*}
\ff^{(t)}_x=\ff(\varphi,W,f)^{(t)}_x=\hh^{(t)}(\varphi)_{\gamma_{(W,f),t}(x)}=\hh^{(t)}_{\gamma_{(W,f),t}(x)}    
    \end{align*}
\end{lemma}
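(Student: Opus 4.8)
The plan is to induct on $t\in[L]$, comparing the MPNN run on the graphon-signal (\cref{definition:graphonFeat}) with the MPNN run on the computation IDMs (\cref{definition:idmFeat}). For each $x\in[0,1]$ I would write $\nu_x$ for the measure on $[0,1]$ with $\lambda$-density $W(x,-)$, i.e.\ $\nu_x(B)=\int_B W(x,y)\,d\lambda(y)$; since $0\le W\le 1$ we have $\nu_x\le\lambda$, so $\nu_x$ is absolutely continuous with respect to $\lambda$ and in particular shares its null sets. The two structural facts I will use, read off \cref{graphonIDMs}, are: \textbf{(a)} $p_{t,t-1}\big(\gamma_{(W,f),t}(x)\big)=\gamma_{(W,f),t-1}(x)$, because $\gamma_{(W,f),t}(x)(j)=\gamma_{(W,f),t-1}(x)(j)$ for $j\le t-1$; and \textbf{(b)} $p_t\big(\gamma_{(W,f),t}(x)\big)=\gamma_{(W,f),t}(x)(t)$ is precisely the push-forward $\big(\gamma_{(W,f),t-1}\big)_*\nu_x$, since $\gamma_{(W,f),t}(x)(t)(A)=\int_{\gamma_{(W,f),t-1}^{-1}(A)}W(x,y)\,d\lambda(y)=\nu_x\big(\gamma_{(W,f),t-1}^{-1}(A)\big)$.

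The base case $t=0$ is immediate: $\gamma_{(W,f),0}(x)=f(x)$ and $\hh^{(0)}_\tau=\varphi^{(0)}(\tau)$, so both sides equal $\varphi^{(0)}(f(x))$ for every $x$. For the inductive step I would assume $\ff^{(t-1)}_y=\hh^{(t-1)}_{\gamma_{(W,f),t-1}(y)}$ for all $y$ outside a $\lambda$-null set $E$, evaluate \cref{definition:idmFeat} at $\tau=\gamma_{(W,f),t}(x)$, and use (a), (b) to rewrite
\[
\hh^{(t)}_{\gamma_{(W,f),t}(x)}=\varphi^{(t)}\!\left(\hh^{(t-1)}_{\gamma_{(W,f),t-1}(x)},\ \int_{\mathcal{H}^{t-1}}\hh^{(t-1)}_-\,d\big((\gamma_{(W,f),t-1})_*\nu_x\big)\right).
\]
Then the change-of-variable formula (\cref{theorem:changeofvar}), applied to the measurable map $\gamma_{(W,f),t-1}$ and the bounded continuous integrand $\hh^{(t-1)}_-$ on the compact space $\mathcal{H}^{t-1}$ (\cref{theorem:compactness}), turns the integral into $\int_{[0,1]}\hh^{(t-1)}_{\gamma_{(W,f),t-1}(y)}\,d\nu_x(y)$, and \cref{wellDifLemma} rewrites this as $\int_{[0,1]}W(x,y)\,\hh^{(t-1)}_{\gamma_{(W,f),t-1}(y)}\,d\lambda(y)$. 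Since $\nu_x(E)=0$ for every $x$, the inductive hypothesis lets me swap $\hh^{(t-1)}_{\gamma_{(W,f),t-1}(y)}$ for $\ff^{(t-1)}_y$ inside this integral, and for $x\notin E$ we also have $\hh^{(t-1)}_{\gamma_{(W,f),t-1}(x)}=\ff^{(t-1)}_x$. So for $\lambda$-a.e.\ $x$,
\[
\hh^{(t)}_{\gamma_{(W,f),t}(x)}=\varphi^{(t)}\!\left(\ff^{(t-1)}_x,\ \int_{[0,1]}W(x,y)\,\ff^{(t-1)}_y\,d\lambda(y)\right)=\ff^{(t)}_x
\]
by \cref{definition:graphonFeat}. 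The exceptional null sets accumulate over only finitely many ($\le L$) steps, so the statement holds $\lambda$-a.e.\ for every $t\in[L]$.

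I expect the main obstacle to be the bookkeeping of the ``almost everywhere'' quantifier through the aggregation integral. One needs measurability of $x\mapsto\gamma_{(W,f),t}(x)$ --- implicit in \cref{graphonIDMs} since $\Gamma_{(W,f),t}$ is its push-forward --- so that $\hh^{(t-1)}_{\gamma_{(W,f),t-1}(-)}$ is a legitimate integrand and (b) is meaningful; continuity and boundedness of $\hh^{(t-1)}_-$ on $\mathcal{H}^{t-1}$ so that \cref{theorem:changeofvar} and \cref{wellDifLemma} apply; and, crucially, the uniformity that a single $\lambda$-null set $E$ simultaneously controls the pointwise term $\hh^{(t-1)}_{\gamma_{(W,f),t-1}(x)}$ and the integral against \emph{every} $\nu_x$ --- which is exactly where $\nu_x\le\lambda$ (from $W\le 1$) enters. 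Everything else is routine substitution.
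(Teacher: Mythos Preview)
Your proposal is correct and follows essentially the same approach as the paper: induction on $t$, with the inductive step hinging on the identifications $p_{t,t-1}(\gamma_{(W,f),t}(x))=\gamma_{(W,f),t-1}(x)$ and $p_t(\gamma_{(W,f),t}(x))=(\gamma_{(W,f),t-1})_*\nu_x$, together with \cref{theorem:changeofvar} and \cref{wellDifLemma}. The only cosmetic difference is that the paper starts from $\ff^{(t)}_x$ and works toward $\hh^{(t)}_{\gamma_{(W,f),t}(x)}$ while you run the computation in the opposite direction; your treatment of the ``almost everywhere'' bookkeeping (via $\nu_x\le\lambda$) is in fact more explicit than the paper's.
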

\begin{proof} We prove by induction on $t$.

\emph{Induction Base.} 
We have by definition     $$\ff^{(0)}_x=\varphi^{(0)} \circ f(x)=\hh^{(0)}_{\gamma_{(W,f),0}(x)}.$$
\emph{Induction Assumption.} Let $1\leq t \leq L$.  We assume that 
\begin{align*}
   \ff^{(t-1)}_x=\hh^{(t-1)}_{\gamma_{(W,f),t-1}(x)} 
\end{align*}
for almost every $x \in [0, 1]$.

\emph{Induction Step.} We have 
\begin{align*}
\ff^{(t)}_x = \varphi^{(t)} \left(\ff_x^{(t-1)} , \int_{[0,1]} W(x,y) \ff_y^{(t-1)} \text{d} \lambda (y) \right) 
\end{align*} 
Then, by \cref{wellDifLemma} $\forall x\in [0,1]:\ff_-^{(t-1)}$ is integrable with respect to $\nu_{W(x,-)}(\A):=\int_{\A}W(x,y) \text{d}\lambda(y)$ if and only if $\ff_-^{(t-1)} W(x,y)$ is integrable with respect to $\lambda$, in which case $\int_{\A} \ff_y^{(t-1)} d\nu_{W(x,-)}(y)= \int_{\A} \ff_y^{(t-1)} W(x,y) d\lambda(y)$. So,
\begin{align*}
\ff^{(t)}_x = \varphi^{(t)} \left(\ff_x^{(t-1)} , \int_{[0,1]} \ff_y^{(t-1)} d\nu_{W(x,-)}(y) \right) =(*)
\end{align*} 
Hence, by the induction assumption, i.e., $\forall x\in[0,1]:   \ff^{(t-1)}_x=\hh^{(t-1)}_{\gamma_{(W,f),t-1}(x)}$, we have
\begin{align*}
    (*)&=\varphi^{(t)} \left(\hh^{(t-1)}_{\gamma_{(W,f),t-1}(x)}, \int_{[0,1]} \hh^{(t-1)}_{\gamma_{(W,f),t-1}(y)} d\nu_{W(x,-)}(y) \right)\\&= \varphi^{(t)}\Big(\hh^{(t-1)}_{\gamma_{(W,f),t-1}(x)},\int_{[0,1]}\hh^{(t-1)}\circ \gamma_{(W,f),t-1}(y)d\nu_{(W,f)(x,-)}(y)\Big)\\&= \varphi^{(t)}\Big(\hh^{(t-1)}_{\gamma_{(W,f),t-1}(x)},\int_{\mathcal{H}^{t-1}}\hh_-^{(t-1)}d(\gamma_{(W,f),t-1})_* \nu_{(W,f)(x,-)}\Big)=(**)
    \end{align*}
Once again, by \cref{wellDifLemma} $\forall x\in [0,1]: x\mapsto x$ is integrable with respect to $\nu_{W(x,-)}(\A)$ if and only if $W(x,y)$ is integrable with respect to $\lambda$, in which case $\int_{\A} d\nu_{W(x,-)}= \int_{\A} W(x,y) d\lambda(y)$. So, for all $\cC \in \mathcal{B}(\cH^{t-1})$
\begin{align*}
(\gamma_{(W,f),t-1})_*\nu_{W(x,-)}(\cC)&= \int_{\gamma_{(W,f),t-1}^{-1}(\cC)} W(x,y)d\lambda(y) \\&= \int_{\gamma_{(W,f),t-1}^{-1}(\cC)}d\nu_{W(x,-)} = (\gamma_{(W,f),t-1})_*\nu_{W(x,-)}(\cC).
\end{align*}
where $\cC\in \cH^{t-1}:(\gamma_{(W,f),t-1})_* \nu_{(W,f)(\cC)(x,-)}=\nu_{(W,f)(x,-)} (\gamma_{(W,f),t-1}^{-1}(\cC))$ is the push forward measure of $\nu_{(W,f)(x,-)}$. The third equality result from the change of variable formula (\cref{theorem:changeofvar}). Therefore,
    \begin{align*}
(**)&= \varphi^{(t)}\Big(\hh^{(t-1)}_{\gamma_{(W,f),t-1}(x)},\int_{\mathcal{H}^{t-1}}\hh_-^{(t-1)}d  \gamma_{(W,f),t}(x)(t)\Big) \\&= \varphi^{(t)}\Big(\hh^{(t-1)}_{p_{t,t-1}(\gamma_{(W,f),t}(x))},\int_{\mathcal{H}^{t-1}}\hh_-^{(t-1)}d p_{t}( \gamma_{(W,f),t}(x))\Big)=\hh^{(t)}_{\gamma_{(W,f),t}(x)} 
\end{align*}
Hence, all in all, we have 
\begin{align*}
    \ff^{(t)}_x = \hh^{(t)}_{\gamma_{(W,f),t}(x)}
\end{align*}
\end{proof}
Next, we use \cref{theorem:MPNNequivalanceRT} to show that the output of an MPNN with readout on a graphon-signal $(W,f)$ equals the output of the MPNN on the corresponding distribution of computation IDMs $\Gamma_{(W,f)}$ of $(W,f)$. 
\begin{lemma}
    Let $(W,f) \in \GS^d_r$, let $(\varphi,\psi)$ be an $L$-layer MPNN model with readout, then
    $$\FF:=\FF(\varphi,\psi,W,f)=\HH(\varphi,\psi,\Gamma_{(W,f),L})=:\HH$$
\end{lemma}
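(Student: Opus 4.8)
The plan is to reduce the readout-level identity to the node-level identity already established in \cref{theorem:MPNNequivalanceRT}, via the change of variable formula for push-forward measures. First I would unwind the two definitions involved. By the application of an MPNN with readout on graphon-signals (\cref{definition:graphonFeat}), $\FF = \psi\big(\int_{[0,1]} \ff^{(L)}_x \, \mathrm{d}\lambda(x)\big)$, while the application on DIDMs (\cref{definition:idmFeat}) gives $\HH = \psi\big(\int_{\mathcal{H}^L} \hh^{(L)}_- \, \mathrm{d}\Gamma_{(W,f),L}\big)$. Since the outer function $\psi$ is the same in both expressions, it suffices to prove that the two integrals coincide, i.e.
\[
\int_{\mathcal{H}^L} \hh^{(L)}_- \, \mathrm{d}\Gamma_{(W,f),L} \;=\; \int_{[0,1]} \ff^{(L)}_x \, \mathrm{d}\lambda(x).
\]

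Next I would use that, by \cref{graphonIDMs}, $\Gamma_{(W,f),L}$ is exactly the push-forward $(\gamma_{(W,f),L})_*\lambda$ of the Lebesgue measure along the measurable map $\gamma_{(W,f),L}:[0,1]\to\mathcal{H}^L$. Applying the change of variable formula (\cref{theorem:changeofvar}) with $\X_1=[0,1]$, $\X_2=\mathcal{H}^L$, the map $f=\gamma_{(W,f),L}$, and the integrand $g=\hh^{(L)}_-$ yields
\[
\int_{\mathcal{H}^L} \hh^{(L)}_- \, \mathrm{d}\big((\gamma_{(W,f),L})_*\lambda\big) \;=\; \int_{[0,1]} \hh^{(L)}_{\gamma_{(W,f),L}(x)} \, \mathrm{d}\lambda(x).
\]
The integrability hypothesis of \cref{theorem:changeofvar} is met because $\hh^{(L)}_-$ is continuous on the compact space $\mathcal{H}^L$ (\cref{theorem:compactness}), hence bounded, and $\Gamma_{(W,f),L}$ has total mass at most one; equivalently, $\ff^{(L)}$ is bounded on $[0,1]$.

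Finally I would invoke \cref{theorem:MPNNequivalanceRT} in the case $t=L$: for almost every $x\in[0,1]$ we have $\hh^{(L)}_{\gamma_{(W,f),L}(x)} = \ff^{(L)}_x$. Because a Lebesgue integral is unaffected by changes on a $\lambda$-null set, this gives $\int_{[0,1]} \hh^{(L)}_{\gamma_{(W,f),L}(x)} \, \mathrm{d}\lambda(x) = \int_{[0,1]} \ff^{(L)}_x \, \mathrm{d}\lambda(x)$. Chaining the three displays and applying $\psi$ to both sides yields $\HH = \FF$. The only point demanding any care is verifying the measurability and integrability conditions needed to apply the change of variable formula, but these follow immediately from compactness of $\mathcal{H}^L$ and continuity of the MPNN maps, so I do not expect a genuine obstacle — this lemma is essentially a bookkeeping corollary of \cref{theorem:MPNNequivalanceRT} combined with the push-forward definition of $\Gamma_{(W,f),L}$.
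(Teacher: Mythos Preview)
Your proposal is correct and follows essentially the same approach as the paper: identify $\Gamma_{(W,f),L}$ as the push-forward $(\gamma_{(W,f),L})_*\lambda$, apply the change of variable formula (\cref{theorem:changeofvar}) to rewrite the $\mathcal{H}^L$-integral as an integral over $[0,1]$, and then invoke the node-level identity of \cref{theorem:MPNNequivalanceRT}. Your added remarks on integrability via compactness and continuity are more explicit than the paper's version but do not change the argument.
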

\begin{proof}
 Recall that for any $A \in \mathcal{B}(\mathscr{M}_{t-1})$,
$$\Gamma_{(W,f),L}(A) = \int_{\gamma^{-1}_{(W,f),L}(A)}y.$$
So,
$$\Gamma_{(W,f),L} = (\gamma_{(W,f),L})_*\lambda.$$
Equality follows from the above remark and \cref{theorem:MPNNequivalanceRT}.  
\begin{align*}
    \HH &= \psi \left( \int_{\mathcal{H}^L} \hh^{(L)}_- d \Gamma_{(W,f),L} \right) 
    \\&= \psi \left( \int_{\mathcal{H}^L }\hh^{(L)}_- \text{d} (\gamma_{(W,f),L})_* \lambda \right) 
    \\&= \psi \left( \int_{[0,1]} \hh^{(L)}_{\gamma_{(W,f),L}(x)} \text{d}\lambda(x) \right) 
    \\&= \psi \left( \int_{[0,1]} \ff^{(L)}_{x} \text{d}\lambda(x) \right) 
    \\&= \FF.
\end{align*}
\end{proof}
Hence, it suffices to consider MPNNs on DIDMs. We can summerazie the results of this section in the following lemma.

\MPNNeqvie*

\section{Lipschitz Continuity of MPNNs}
In this section, we prove that MPNNs are Lipschitz continuous. Note that here $\|f\|_{\infty}:={\rm ess}\sup_{x\in\X}\norm{f(x)}_2$, for a function $f:\X\mapsto\R^d$ (see \cref{definition:intfyNorm} in \cref{lpischit} for a more general definition).
\subsection{Upper Bounds of Hidden Representations and Outputs of MPNNs}\label{appendix:formalbias}We start by showing that the features of MPNNs are bounded. A fact we relay on, in the prof of \cref{theorem:lipschitz}. Recall that we defined the formal bias of a function $f: \R^{d_1}\mapsto\R^{d_2}$ to be $\norm{f(0)}_2$. Here, we use the notation ${\rm B}_\varphi^{(t)}:=\norm{\varphi^{(t)}(0)}_2$ to denote the formal bias of any update function $\varphi^{(t)}$ of an MPNN model $\varphi=(\varphi^{(t)})^L_{t=0}$.
\begin{theorem}\label{theorem:fet_bound}
Let $\varphi= (\varphi^{(t)})_{t=0}^L$ be an $L$-layer MPNN model. Then there exists a constant ${\rm B}_{\varphi}$ that depends only on $L$, the number of layers, $\norm{\varphi^{(t)}}_{\rm L}$, the Lipschitz constants of the update functions and $B_{\varphi^{(t)}}$, the formal bias of the update functions, such that
\[
\|\hh(\varphi)^{(t)}_-\|_\infty \leq {\rm B}_\varphi.\]
If $\varphi$ has a readout function $\psi$, then, there exists a constant ${\rm B}_{(\varphi,\psi)}$ that depends only on ${\rm B}_\varphi$ and $\norm{\psi}_{\rm L}$, the Lipschitz constant of the model's readout function, such that
\[
\|\HH(\varphi,\psi,-)\|_\infty \leq {\rm B}_{(\varphi,\psi)}.
\]
\end{theorem}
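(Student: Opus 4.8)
The plan is to prove the first inequality by induction on the layer index $t$, and then derive the statement for the model with readout by a single additional application of Lipschitz continuity. Throughout, recall that the IDMs and DIDMs live in compact spaces $\mathcal{H}^t$ and $\mathscr{P}(\mathcal{H}^t)$ (\cref{theorem:compactness}), and that every measure appearing in the aggregation step has total mass at most one, since $p_t(\tau) \in \mathscr{M}_{\leq 1}(\mathcal{H}^{t-1})$ by the definition of $\mathcal{M}^t$; this last fact is what keeps the integral term under control.

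For the base case $t=0$, $\hh(\varphi)^{(0)}_\tau = \varphi^{(0)}(\tau)$ where $\tau \in \mathcal{H}^0 = \mathbb{B}^p_r$, so $\|\hh(\varphi)^{(0)}_\tau\|_2 \le \|\varphi^{(0)}(0)\|_2 + \|\varphi^{(0)}\|_{\rm L}\,\|\tau\|_2 \le {\rm B}_{\varphi^{(0)}} + \|\varphi^{(0)}\|_{\rm L}\, r =: {\rm B}^{(0)}_\varphi$, using that $\mathbb{B}^p_r$ has radius $r$. For the inductive step, suppose $\|\hh(\varphi)^{(t-1)}_-\|_\infty \le {\rm B}^{(t-1)}_\varphi$. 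Given any order-$t$ IDM $\tau \in \mathcal{H}^t$, write $\mu := p_t(\tau) \in \mathscr{M}_{\leq 1}(\mathcal{H}^{t-1})$, so $\|\mu\| \le 1$. Then
\[
\Big\|\int_{\mathcal{H}^{t-1}} \hh^{(t-1)}_- \, d\mu\Big\|_2 \le \int_{\mathcal{H}^{t-1}} \|\hh^{(t-1)}_-\|_2 \, d\mu \le {\rm B}^{(t-1)}_\varphi \cdot \|\mu\| \le {\rm B}^{(t-1)}_\varphi,
\]
and $\|\hh^{(t-1)}_{p_{t,t-1}(\tau)}\|_2 \le {\rm B}^{(t-1)}_\varphi$ as well. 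Applying $\varphi^{(t)}$ and using Lipschitzness of $\varphi^{(t)}$ on $\R^{2d_{t-1}}$ together with the product-metric normalization,
\[
\|\hh^{(t)}_\tau\|_2 \le {\rm B}_{\varphi^{(t)}} + \|\varphi^{(t)}\|_{\rm L}\big(\|\hh^{(t-1)}_{p_{t,t-1}(\tau)}\|_2 + \|\textstyle\int \hh^{(t-1)}_- d\mu\|_2\big) \le {\rm B}_{\varphi^{(t)}} + 2\,\|\varphi^{(t)}\|_{\rm L}\, {\rm B}^{(t-1)}_\varphi =: {\rm B}^{(t)}_\varphi.
\]
Unrolling the recursion gives a closed-form constant ${\rm B}_\varphi := {\rm B}^{(L)}_\varphi$ depending only on $L$, the Lipschitz constants $\|\varphi^{(t)}\|_{\rm L}$ and the formal biases ${\rm B}_{\varphi^{(t)}}$, which proves the first claim. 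For the readout, given any DIDM $\nu \in \mathscr{P}(\mathcal{H}^L)$ we have $\|\int_{\mathcal{H}^L} \hh^{(L)}_- \, d\nu\|_2 \le {\rm B}_\varphi$ since $\|\nu\| = 1$, hence $\|\HH(\varphi,\psi,\nu)\|_2 \le \|\psi(0)\|_2 + \|\psi\|_{\rm L}\,{\rm B}_\varphi =: {\rm B}_{(\varphi,\psi)}$, a constant depending only on ${\rm B}_\varphi$ and $\|\psi\|_{\rm L}$.

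The only genuinely delicate point — rather than an obstacle — is making sure that the integrand $\hh^{(t-1)}_-$ is measurable so that the Bochner-type integral $\int \hh^{(t-1)}_- \, d\mu$ is well-defined and the inequality $\|\int \hh^{(t-1)}_- d\mu\|_2 \le \int \|\hh^{(t-1)}_-\|_2 \, d\mu$ is legitimate; this follows from continuity of $\hh^{(t-1)}_-$ on the compact metric space $\mathcal{H}^{t-1}$ (it is a composition of continuous maps and the continuous projections $p_{t,t-1}$, with the integral operator $\mu \mapsto \int \hh^{(t-2)}_- d\mu$ being weak$^*$-continuous because $\hh^{(t-2)}_- \in C_b(\mathcal{H}^{t-2})$), which can be recorded as a side remark or cited from the companion lemmas on MPNN continuity. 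Everything else is a routine induction, and no distributional or dimensional assumptions enter because compactness of the input spaces and $\|\mu\|\le 1$ do all the work.
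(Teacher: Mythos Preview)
Your proof is correct and follows essentially the same approach as the paper: induction on the layer index using the triangle inequality $\|\varphi^{(t)}(z)\|_2 \le \|\varphi^{(t)}(0)\|_2 + \|\varphi^{(t)}\|_{\rm L}\|z\|_2$, the bound $\|\int \hh^{(t-1)}_- d\mu\|_2 \le \|\hh^{(t-1)}_-\|_\infty$ from $\|\mu\|\le 1$, and the identical recursion ${\rm B}^{(t)}_\varphi = {\rm B}_{\varphi^{(t)}} + 2\|\varphi^{(t)}\|_{\rm L}{\rm B}^{(t-1)}_\varphi$. Your added remark on measurability/continuity of $\hh^{(t-1)}_-$ (so that the Bochner integral and the inequality $\|\int f\,d\mu\|_2 \le \int\|f\|_2\,d\mu$ are justified) is a detail the paper leaves implicit.
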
 
For an $L$-layer MPNN model $\varphi = (\varphi^{(t)})_{t=0}^L$, we define a constant, which (we later show) bounds $\hh(\varphi)^{(L)}_-$, by 
 ${\rm B}_\varphi := {\rm B}^L_{\varphi}$, when ${\rm B}^t_{\varphi}\geq0$ is inductively defined for $t\in \{0,\ldots,L\}$ by
\begin{align*}
{\rm B}^t_{\varphi} :=\begin{cases}
r\cdot\norm{\varphi^{(0)}}_{\rm L}+{\rm B}_{\varphi^{(0)}} &\text{ }:\text{if } t=0,\\2\norm{\varphi^{(t)}}_{\rm L}{\rm B}^{(t-1)}_{\varphi}+{\rm B}_{\varphi^{(t)}}&\text{ }:\text{if } 0<t\leq L,
\end{cases}
\end{align*}
when $\|\cdot\|_{\rm L}$ is defined in \cref{lpischit} (\cref{definition:LipSeminorm}) and $r>0$ is the radius of $\cH^L=\mathbb{B}^d_r$ (see \cref{section:background}). If additionally the MPNN model has a readout function $\psi$, then we define another constant, which (we later show)   bounds $\HH(\varphi,\psi,-)^{(L)}$, by 
\[
{\rm B}_{(\varphi,\psi)} :=\norm{\psi}_{\rm L} {\rm B}^L_{\varphi}+{\rm B}_{\psi}.
\]

\begin{proof} Let us now prove the first inequality of \cref{theorem:lipschitz} by induction. Let $L\in\NN$ and $\varphi=(\varphi)_{t\in[L]}$ $L$-layer MPNN model. 

\emph{Induction Base.} For $t = 0$, let $\tau\in\cH^0=\mathbb{B}^d_r$, the statement holds since
\begin{align*}
\norm{\hh^{(0)}_\tau}_2&=\norm{\varphi^{(0)}(\tau)}_2=\norm{\varphi^{(0)}(\tau)-\varphi^{(0)}(0)}_2+\norm{\varphi^{(0)}(0)}_2\\&\leq\norm{\varphi^{(0)}}_{\rm L}\norm{\tau}_2+\norm{\varphi^{(0)}}_{2}\leq r\cdot\norm{\varphi^{(0)}}_{\rm L}+{\rm B}_{\varphi^{(0)}}.
\end{align*}
\textit{Induction Assumption.} We assume $$\norm{\hh^{(t-1)}_-}_\infty\leq {\rm B}^{(t-1)}_\varphi,$$ 
for some $0<t\leq L$.

\textit{Induction Step.} Let $\tau\in\cH^{t}$, we have
\begin{align*}
\norm{\hh^{(t)}_{\tau} }_2 &=\norm{\varphi^{(t)} \left(\hh^{(t-1)}_{p_{t,t-1}(\tau)} , \int_{\mathcal{H}^{t-1}} \hh_-^{(t-1)} d p_{t}(\tau) \right)}_2\\&\leq\norm{\varphi^{(t)} \left(\hh^{(t-1)}_{p_{t,t-1}(\tau)} , \int_{\mathcal{H}^{t-1}} \hh_-^{(t-1)} d p_{t}(\tau) \right)-\varphi^{(t)}(0,0)}_2+\norm{\varphi^{(t)}(0,0)}_2\\&\leq \norm{\varphi^{(t)}}_{\rm L}\left(\norm{\hh^{(t-1)}_{p_{t,t-1}(\tau)}}_2+\norm{\int_{\mathcal{H}^{t-1}} \hh_-^{(t-1)} d p_{t}(\tau) }_2\right)+{\rm B}_{\varphi^{(t)}}\\&\leq \norm{\varphi^{(t)}}_{\rm L}\left({\rm ess}\sup_{\nu\in\cH^{t-1}}\norm{\hh^{(t-1)}_{\nu}}_2+\int_{\mathcal{H}^{t-1}}{\rm ess}\sup_{\nu\in\cH^{t-1}}\norm{\hh^{(t-1)}_{\nu}}_2d p_{t}(\tau) \right)+{\rm B}_{\varphi^{(t)}}\\&\leq 2\norm{\varphi^{(t)}}_{\rm L}\norm{\hh^{(t-1)}_{-}}_{\infty}+{\rm B}\leq2\norm{\varphi^{(t)}}_{\rm L}{\rm B}^{(t-1)}_{\varphi}+{\rm B}_{\varphi^{(t)}}={\rm B}^{(t)}_\varphi.
\end{align*}
As this is true for any $\tau$, we have
$$ \|\hh_{\alpha}^{(t)}\|_\infty\leq {\rm B}^{(t)}_{\varphi}.$$
Notice that, for $t=L$, we get
$$ \|\hh_{\alpha}^{(L)}\|_\infty\leq {\rm B}_{\varphi}.$$
The second part then follows from the first by a similar reasoning.
\begin{align*}
\|h(\varphi,\psi,\mu)\|_2 &= \left\|\psi \left(\int_{\mathcal{H}^{L}} \hh_-^{(L)} d \mu \right)\right\|_2 \leq\left\|\psi \left(\int_{\mathcal{H}^{L}} \hh_-^{(L)} d \mu \right) - \psi 
\left(0 \right)\right\|_2+\left\|\psi 
\left(0 \right)\right\|_2
\\& \leq \norm{\psi}_{\rm L} \norm{\int_{\mathcal{H}^{L}} \hh_-^{(L)} d \mu}_2+{\rm B}_{\psi}\leq\norm{\psi}_{\rm L} \norm{\hh_-^{(L)} }_\infty+{\rm B}_{\psi}\leq\norm{\psi}_{\rm L} {\rm B}^L_{\varphi}+{\rm B}_{\psi}
\end{align*}
\end{proof}
\subsection{Lipschitz Continuity of MPNNs with respect to Our Metrics}
\label{appendix:lipschitzness}
 To prove that MPNNs are Lipschitz continuous, we follow the proofs of \citealp[Appendix C.2.5.]{frac21}. The next claim is a trivial result of Claim 23. in \cite{expr23}.
\begin{lemma}\label{claim:helperLipschitz} Let $f: \cS \to \mathbb{R}^n$ be Lipschitz. Then,
\[\left\|\int_S fd\mu - \int_S fd\nu\right\|_2 \leq \|f\|_{\rm \mathbf{BL}} \cdot\left( (\|\mu\| - \|\nu\|) +  \int_{S\times S} d(x,y) d\gamma(x,y)\right)\] for every $\gamma \in \Gamma(\mu,\nu)$, where $\gamma$ is a coupling as defined in \cref{section:background} and $\|\cdot\|_{\rm \mathbf{BL}}$ is the Bounded-Lipschitz seminorm defined in \cref{lpischit} (\cref{The Bounded-Lipschitz Function}) over ${\rm Lip(\cS, \R^n)}$, when ${\rm Lip(\cS, \R^n)}$ is the space of Lipschitz continuous mappings ${\cS\mapsto\R^n}$.
\end{lemma}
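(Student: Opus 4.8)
The plan is to use the coupling $\gamma$ to transport both integrals onto the product space $S\times S$, where the pointwise Lipschitz estimate $\|f(x)-f(y)\|_2\le\|f\|_{\rm L}\,d(x,y)$ is available, and to absorb the mass imbalance into the nonnegative ``defect'' measure by which $\nu$ dominates the second marginal of $\gamma$. Since $\|f\|_{\rm \mathbf{BL}}=\|f\|_\infty+\|f\|_{\rm L}$ is finite, $f$ is bounded and Lipschitz, so all integrals appearing below are well defined.

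First I would rewrite $\int_S f\,d\mu$ using the change of variables formula (\cref{theorem:changeofvar}) for the projection $p_\X:S\times S\to S$: since $(p_\X)_*\gamma=\mu$,
\[
\int_S f\,d\mu=\int_{S\times S} f(x)\,d\gamma(x,y),
\]
and similarly $\int_S f\,d\big((p_\Y)_*\gamma\big)=\int_{S\times S} f(y)\,d\gamma(x,y)$. Next I would set $\rho:=\nu-(p_\Y)_*\gamma$; because $\gamma\in\Gamma(\mu,\nu)$ forces $(p_\Y)_*\gamma\le\nu$ as measures, $\rho$ is a genuine nonnegative finite measure with $\|\rho\|=\|\nu\|-\|(p_\Y)_*\gamma\|=\|\nu\|-\|\gamma\|=\|\nu\|-\|\mu\|$. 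Writing $\int_S f\,d\nu=\int_{S\times S} f(y)\,d\gamma(x,y)+\int_S f\,d\rho$ and subtracting the two displays, the triangle inequality (pushed through the integral) gives
\[
\Big\|\int_S f\,d\mu-\int_S f\,d\nu\Big\|_2\le\int_{S\times S}\|f(x)-f(y)\|_2\,d\gamma(x,y)+\Big\|\int_S f\,d\rho\Big\|_2 .
\]
I would then bound the first term by $\|f\|_{\rm L}\int_{S\times S} d(x,y)\,d\gamma(x,y)$ via the Lipschitz property, and the second by $\|f\|_\infty\|\rho\|=\|f\|_\infty(\|\nu\|-\|\mu\|)$, and conclude using $\max(\|f\|_{\rm L},\|f\|_\infty)\le\|f\|_{\rm \mathbf{BL}}$ to factor out $\|f\|_{\rm \mathbf{BL}}$, obtaining the claim (with the mass-defect term $\|\nu\|-\|\mu\|=\big|\|\mu\|-\|\nu\|\big|$, since couplings are only defined when $\|\mu\|\le\|\nu\|$).

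I do not expect any genuine obstacle: as the text indicates, this is essentially \cite{expr23}'s Claim 23 together with the bookkeeping needed to pass from probability measures to subprobability measures of possibly different total mass. The only two points requiring a sentence of care are that $\rho=\nu-(p_\Y)_*\gamma$ is a nonnegative measure — immediate from the defining inequality $(p_\Y)_*\gamma\le\nu$ of an unbalanced coupling — and that the change of variables applies, which is \cref{theorem:changeofvar} together with boundedness of $f$ guaranteeing integrability against the finite measures $\gamma$ and $\rho$; everything else is the triangle inequality and the definitions of the seminorms $\|\cdot\|_{\rm L}$, $\|\cdot\|_\infty$, and $\|\cdot\|_{\rm \mathbf{BL}}$.
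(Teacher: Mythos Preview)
Your proof is correct and is exactly the standard argument: the paper itself does not spell out a proof but simply cites Claim 23 of \cite{expr23} as making this a ``trivial result,'' and what you wrote is precisely the bookkeeping that claim encapsulates (rewrite via the coupling marginals, split off the nonnegative defect $\rho=\nu-(p_\Y)_*\gamma$, bound the two pieces by $\|f\|_{\rm L}$ and $\|f\|_\infty$ respectively). Your observation that the mass-defect term should be $\|\nu\|-\|\mu\|=\big|\|\mu\|-\|\nu\|\big|$ under the coupling convention $\|\mu\|\le\|\nu\|$ is also the right reading, matching the $|\|\mu\|-\|\nu\||$ in \cref{definition:OT}.
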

\theoremLipschitzness*

For an $L$-layer MPNN model $\varphi = (\varphi^{(t)})_{t=0}^L$, we define a constant, which we later show is a Lipschitz constant of $\hh(\varphi)^{(L)}_-$, by 
 $C_\varphi := C^L_{\varphi}$, when $C^t_{\varphi}\geq0$ is inductively defined for $t\in \{0,\ldots,L\}$ by
\begin{align*}
C^t_{\varphi} :=\begin{cases}
     \|\varphi^{(0)}\|_{\rm L} &\text{ }:\text{if } t=0,\\
     2\norm{ \varphi^{(t)} }_{\rm L}( \|\hh_{-}^{(t-1)}\|_{\infty}+ C^{t-1}_\varphi) &\text{ }:\text{if } 0<t\leq L,
\end{cases}
\end{align*}
when $\|\cdot\|_{\rm L}$ is defined in \cref{lpischit} (\cref{definition:LipSeminorm}). If additionally the MPNN model has a readout function $\psi$, then we define another constant, which we later show is a Lipschitz constant of $\HH(\varphi,\psi,-)^{(L)}$, by 
\[
C_{(\varphi,\psi)} :=\left\| \psi \right\|_{\rm L} (\norm{\hh^{(L)}_-}_{\infty} + C_\varphi).
\]
Since the features of MPNNs are bounded (\cref{theorem:fet_bound}), these constants are well defined. We now use \cref{claim:helperLipschitz} to prove the MPNN's Lipschitz property. We show that, given an L-layer MPNN model $\varphi$ and a readout function $\psi$, $C_\varphi$ and $C_{(\varphi,\psi)}$ are Lipschitz constants of $\hh(\varphi)^{(L)}_-$ and $\HH(\varphi,\psi,-)$, respectively. 
\begin{proof}
Let us now prove the first inequality of \cref{theorem:lipschitz} by induction. Let $L\in\NN$ and $\varphi=(\varphi)_{t\in[L]}$ $L$-layer MPNN model. 

\emph{Induction Base.} For $t = 0$, $\varphi=(\varphi^{(0)})$, the statement holds trivially since $\varphi^{(0)}$ is Lipschitz and $\hh^{(0)}_\alpha = \varphi^{(0)}(\alpha)$.

\emph{Induction Assumption.} We assume that the statement hold for $t-1$ for $0<t\leq L$.

\emph{Induction Step.} For the inductive step, we have, by \cref{claim:helperLipschitz} and the induction hypothesis, for all $\alpha$, $\beta$ $\in \mathcal{H}^t$.
\[
\begin{aligned}
& \|\hh_{\alpha}^{(t)} - \hh_{\beta}^{(t)}\|_2 \\
&= \left\|\varphi^{(t)} \left( \hh^{(t-1)}_{p_{t, t-1}(\alpha)} , \int_{\mathcal{H}^{t-1}} \hh_-^{(t-1)} \text{d} p_{t}(\alpha) \right) - \varphi^{(t)} \left( \hh^{(t-1)}_{p_{t, t-1}(\beta)} , \int_{\mathcal{H}^{t-1}} \hh_-^{(t-1)} \text{d} p_{t}(\alpha) \right)\right\|_2 \\
&\leq \norm{\varphi^{(t)}}_{\rm L} \left(\left\| \hh^{(t-1)}_{p_{t, t-1}(\alpha)}  -\hh^{(t-1)}_{p_{t, t-1}(\beta)}\right\|_2 +
\left\| \int_{\mathcal{H}^{t-1}} \hh_-^{(t-1)} d p_{t}(\alpha) -  \int_{\mathcal{H}^{t-1}} \hh_-^{(t-1)} d p_{t}(\beta) \right\|_2 \right) \\&\leq \norm{\varphi^{(t)}}_{\rm L} (\|\hh_{-}^{(t-1)}\|_{\rm L}\TD^{t-1}(p_{t, t-1}(\alpha), p_{t, t-1}(\beta)) + \|\hh_{-}^{(t-1)}\|_{\mathbf{BL}}\OT_{\TD^t}(p_t (\alpha), p_t (\beta)) \\&\leq 2\norm{\varphi^{(t)}}_{\rm L}\|\hh_{-}^{(t-1)}\|_{\mathbf{BL}}\TD^{t}(\alpha, \beta)=2\norm{ \varphi^{(t)} }_{\rm L}( \|\hh_{-}^{(t-1)}\|_{\infty}+ \|\hh_{-}^{(t-1)}\|_{\rm L})\cdot \TD^{t}(\alpha, \beta)\\&=2\norm{ \varphi^{(t)} }_{\rm L}( \|\hh_{-}^{(t-1)}\|_{\infty}+ C^{t-1}_\varphi)\cdot \TD^{t}(\alpha, \beta) =C^t_{\varphi}\TD^{t}(\alpha, \beta)
\end{aligned}
\]
The second inequality results from combining induction  with \cref{claim:helperLipschitz}. Hence, we get the first part of \cref{theorem:lipschitz}.
Notice that, for $t=L$, we get
$$ \|\hh_{\alpha}^{(L)} - \hh_{\beta}^{(L)}\|_2\leq C_{\varphi}\TD^{L}(\alpha, \beta).$$
The second part then follows from the first by a similar reasoning. For all $\mu$, $\nu$ $\in \mathscr{P}(\mathcal{H}^L)$ we have
\begin{align*}
\|h(\varphi,\psi,\mu) - h(\varphi,\psi,\nu)\|_2 &= \left\|\psi \left(\int_{\mathcal{H}^{L}} \hh_-^{(L)} d \mu \right) - \psi 
\left(\int_{\mathcal{H}^{L}} \hh_-^{(L)} d \nu \right)\right\|_2 
\\& \leq \norm{\psi}_{\rm L} \norm{\int_{\mathcal{H}^{L}} \hh_-^{(L)} d \mu  - \int_{\mathcal{H}^{L}} \hh_-^{(L)} d \nu }_2 
\\&\leq \norm{\psi}_{\rm L} \norm{\hh^{(L)}_-}_{\mathbf{BL}} \OT_{\TD^L}(\mu,\nu)
\\&=\norm{\psi}_{\rm L} \left( \norm{\hh^{(L)}_-}_{\infty} +  \norm{\hh^{(L)}_-}_{\text{L}}\right) \OT_{\TD^L}(\mu,\nu)\\
&=\norm{\psi}_{\rm L} \left( \norm{\hh^{(L)}_-}_{\infty} +  C^L_\varphi \right) \OT_{\TD^L}(\mu,\nu)
\\&=\norm{\psi}_{\rm L} \left( \norm{\hh^{(L)}_-}_{\infty} +  C_\varphi \right) \OT_{\TD^L}(\mu,\nu)\\
&= C_{(\varphi,\psi)} \OT_{\TD^L}(\mu,\nu).
\end{align*}
\end{proof}
The second inequality is a result of \cref{claim:helperLipschitz} and the Lipschitzness from the first part of \cref{theorem:lipschitz}. For the sake of completeness, we state \cref{theorem:lipschitz} as an epsilon-delta statement.
\begin{theorem}
Let $d > 0$ be fixed. For every $L \in \NN$, $C > 0$, and $\varepsilon > 0$, there is a $\delta > 0$ such that, for all order-$t$ DIDMs $\mu$ and $\nu$, if $\OT_{\TD^t}(\mu, \nu) \leq \delta$, then $\|\HH(\varphi,\psi,\mu) - \HH(\varphi,\psi,\nu)\|_2 \leq \varepsilon$ for every $L$-layer MPNN model $\varphi$ with readout function $\psi: \mathbb{R}^{d_L} \to \mathbb{R}^d$ with $C_{(\varphi,\psi)} \leq C$.
\end{theorem}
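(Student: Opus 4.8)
The plan is to obtain this statement as an immediate corollary of the uniform Lipschitz bound already proved in \cref{theorem:lipschitz}. That theorem gives, for every $L$-layer MPNN model $\varphi$ with readout function $\psi$, a constant $C_{(\varphi,\psi)}$ (finite by \cref{theorem:fet_bound}, since the MPNN features are bounded) such that
\[
\|\HH(\varphi,\psi,\mu) - \HH(\varphi,\psi,\nu)\|_2 \leq C_{(\varphi,\psi)} \cdot \OT_{\TD^L}(\mu,\nu)
\]
for all $\mu,\nu \in \mathscr{P}(\mathcal{H}^L)$. Hence there is no new analytic content to establish; the task is purely to restate this uniform estimate in $\varepsilon$--$\delta$ language.

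Concretely, given $L \in \NN$, $C > 0$, and $\varepsilon > 0$, I would set $\delta := \varepsilon / C$. Then for any $L$-layer MPNN model $\varphi$ with readout $\psi$ satisfying $C_{(\varphi,\psi)} \leq C$, and any DIDMs $\mu,\nu$ with $\OT_{\TD^L}(\mu,\nu) \leq \delta$, the chain
\[
\|\HH(\varphi,\psi,\mu) - \HH(\varphi,\psi,\nu)\|_2 \leq C_{(\varphi,\psi)} \cdot \OT_{\TD^L}(\mu,\nu) \leq C \cdot \delta = \varepsilon
\]
gives the claim. The key point, which I would emphasize, is that $\delta$ depends only on $C$ and $\varepsilon$ (and trivially on $L$), and not on the particular choice of $\varphi$, $\psi$, or of the DIDMs $\mu,\nu$ — precisely the uniformity demanded by the statement. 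This uniformity is exactly what \cref{theorem:lipschitz} supplies, because the bound there holds simultaneously for every model, and the hypothesis $C_{(\varphi,\psi)} \leq C$ is imposed on the very same constant $C_{(\varphi,\psi)} = \|\psi\|_{\rm L}\big(\|\hh^{(L)}_-\|_\infty + C_\varphi\big)$ appearing in that bound.

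There is essentially no obstacle here: the only thing to verify is the bookkeeping that the constant constrained in the hypothesis matches the Lipschitz constant produced by \cref{theorem:lipschitz}, which holds by construction. The statement is a cosmetic reformulation of the Lipschitz continuity of MPNNs on DIDMs, included for symmetry with the converse delta-epsilon statement in \cref{theorem:deltaEpsilon}.
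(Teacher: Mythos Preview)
Your proposal is correct and matches the paper's own proof, which states simply that the result ``follows immediately from \cref{theorem:lipschitz}.'' You have spelled out the choice $\delta = \varepsilon/C$ and the chain of inequalities explicitly, but the content is identical.
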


\begin{proof}
Follows immediately from \cref{theorem:lipschitz}.
\end{proof}

\section{Universality and Fine-Grained Expressivity of Message-Passing Neural Networks}
\label{appendix:universalApproximation}
In this section we first prove our universal approximation theorem for MPNNs on IDMs and DIDMs, showing the sets $\cN^1_t$ and $\cN\cN^1_t$ are dense in $C(\mathcal{H}^t,\RR)$ and $C(\mathscr{P}(\mathcal{H}^t),\RR)$, respectively. We then conclude universal approximation theorems for MPNNs on graph-signals and graphon-signals.  
\subsection{Universality of MPNNs over the Spaces of IDMs and DIDMs}
The proofs of \cref{lemma:universalApproximation}, \cref{theorem:universalApproximation}, follow the proofs of Lemma 25, Theorem 4, and Theorem 6 in \cite{expr23}, respectively. This follows by inductively applying Stone–Weierstrass theorem, cf. \cref{stone-weierstrass}, to the set $\cN^1_t.$ Given that $\mathcal{N}^1_t$ satisfies all requirements of the Stone–Weierstrass theorem, \cref{corollary:universalApproximation} yields that  $\mathcal{N}^1_{t+1}$ separates points, which allows us to show that $\mathcal{N}^1_{t+1}$ again satisfies all requirements of the Stone-Weierstrass theorem. We recall the canonical projections were denoted by $p_{L,j}: \mathcal{H}^L \mapsto \mathcal{H}^j$ and  $p_{L} : \mathcal{H}^L \rightarrow \mathcal{M}^L$, when $j\leq L < \infty$. We first introduce \emph{function Cartesian product.}
\begin{definition}[Function Cartesian Product]
    Let $f:\X_1 \mapsto \Y$ and $g :\X_2 \mapsto \cZ$ be two functions. We define \emph{function Cartesian product} as the function $f \times g: \X_1 \times \X_2 \mapsto \Y \times \cZ$ such that $(f\times g) ((x_1,x_2))= (f(x_1),g(x_2))$ for $(x_1,x_2)\in \X_1 \times \X_2$.
\end{definition}
Given a set $\A$, recall that we denote by $1_\A:\A\to \R$ a non-zero constant function. Let $\varphi$, $\varphi'$ be two $L$-layer MPNN models. Define $\Xi_{\rm mul} ((x,y)^T):=x\cdot y$ and $\Xi_{\rm add} ((x,y)^T):=x +c\cdot y$ for all $x,y \in \RR$, where $c \in \RR$ is fixed. Then, define 
$$\varphi_{\rm mul}:=(\varphi^{(0)} \times {\varphi'}^{(0)},\ldots, \varphi^{(t)} \times {\varphi'}^{(t)} , \Xi_{\rm mul}  \circ (\varphi^{(t+1)} \times {\varphi'}^{(t+1)}))$$ 
and define $\varphi_{\rm add}$ analogously via $\Xi_{\rm add} $.
\begin{lemma}
\label{lemma:universalApproximation}
Let $0 \leq t < \infty$. The set  $\mathcal{N}^1_{t}$ are closed under multiplication and linear combinations, contains ${1}_{\mathcal{H}^t}$ and separates points of $\mathcal{H}^t$. 
\end{lemma}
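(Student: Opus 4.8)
\textbf{Proof plan for Lemma~\ref{lemma:universalApproximation}.}
The plan is to argue by induction on $t$, exactly in the spirit of \cite{expr23}. The key observation is that the four properties — containing the constant function $1_{\mathcal{H}^t}$, closure under linear combinations, closure under multiplication, and point separation — interlock: once $\mathcal{N}^1_t$ has all four, \cref{theorem:stone-weierstrass} (Stone--Weierstrass) gives density of $\mathcal{N}^1_t$ in $C(\mathcal{H}^t,\RR)$, and then \cref{separatingMeasures} (separating measures) upgrades this to point separation of the associated linear functionals on $\mathscr{M}_{\leq 1}(\mathcal{H}^t)=\mathcal{M}^{t+1}$, which is precisely the ingredient needed to show $\mathcal{N}^1_{t+1}$ separates points of $\mathcal{H}^{t+1}$. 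Thus the induction self-propagates.

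\emph{Base case $t=0$.} Here $\mathcal{H}^0=\mathbb{B}^p_r$ and $\mathcal{N}^1_0=\{\varphi^{(0)}:\mathbb{B}^p_r\to\RR \mid \varphi^{(0)}\text{ Lipschitz}\}$. Constants are Lipschitz, so $1_{\mathcal{H}^0}\in\mathcal{N}^1_0$. Linear combinations and products of Lipschitz functions on the bounded set $\mathbb{B}^p_r$ are Lipschitz, giving the two closure properties. Point separation is immediate: for $x\neq y$ the coordinate projection (or $z\mapsto\langle z,x-y\rangle$) is Lipschitz and separates them.

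\emph{Inductive step.} Assume $\mathcal{N}^1_t$ has all four properties; by Stone--Weierstrass it is dense in $C(\mathcal{H}^t,\RR)$, and by \cref{separatingMeasures} the functionals $\mu\mapsto\int_{\mathcal{H}^t} g\,d\mu$, $g\in\mathcal{N}^1_t$, separate points of $\mathcal{M}^{t+1}$. Now for $\mathcal{N}^1_{t+1}$: constants arise by taking $\varphi^{(t+1)}$ to be the appropriate constant map, so $1_{\mathcal{H}^{t+1}}\in\mathcal{N}^1_{t+1}$. Closure under linear combinations follows by running two MPNN models in parallel via the function Cartesian product $\varphi^{(s)}\times\varphi'^{(s)}$ and composing the last update function with the affine map $\Xi_{\mathrm{add}}$; closure under multiplication is the same construction with $\Xi_{\mathrm{mul}}$ — these are exactly the $\varphi_{\mathrm{add}},\varphi_{\mathrm{mul}}$ defined just above the lemma, and one checks $\Xi_{\mathrm{add}},\Xi_{\mathrm{mul}}$ are Lipschitz on the (bounded, by \cref{theorem:fet_bound}) range of the penultimate features, so these are legitimate MPNN models. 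For point separation of $\mathcal{N}^1_{t+1}$: take $\alpha\neq\beta\in\mathcal{H}^{t+1}$. If they already differ in their first $t+1$ coordinates, i.e. $p_{t+1,t}(\alpha)\neq p_{t+1,t}(\beta)$, apply the induction hypothesis to the length-$t$ MPNN acting on $p_{t+1,t}$ — formally, precompose a separating $g\in\mathcal{N}^1_t$ with the canonical projection, which is realized by ignoring the aggregation branch in layer $t+1$. Otherwise $\alpha,\beta$ agree on the first $t+1$ entries but $p_{t+1}(\alpha)\neq p_{t+1}(\beta)$ as measures in $\mathcal{M}^{t+1}=\mathscr{M}_{\leq 1}(\mathcal{H}^t)$; by the separating-measures conclusion of the induction hypothesis there is $g=\hh(\varphi)^{(t)}_-\in\mathcal{N}^1_t$ with $\int g\,d\,p_{t+1}(\alpha)\neq\int g\,d\,p_{t+1}(\beta)$, and the $(t+1)$-layer MPNN whose final layer outputs (a coordinate of) $\int_{\mathcal{H}^t}\hh(\varphi)^{(t)}_-\,d\,p_{t+1}(\cdot)$ — built by taking $\varphi^{(t+1)}$ to read off only the aggregation coordinate — separates $\alpha$ from $\beta$.

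\emph{Main obstacle.} The real content, and the step to be most careful with, is verifying point separation for $\mathcal{N}^1_{t+1}$ in the second case: one must confirm that the map $\tau\mapsto\int_{\mathcal{H}^t}\hh(\varphi)^{(t)}_-\,d\,p_{t+1}(\tau)$ is genuinely of the form $\hh(\varphi')^{(t+1)}_-$ for some MPNN model $\varphi'$ — i.e. that the update-function formalism in \cref{definition:idmFeat} is flexible enough to select the aggregation branch alone and to absorb the vector-valued $\hh(\varphi)^{(t)}_-$ as a hidden feature (this needs $d_t\geq$ the relevant dimension and a Lipschitz update that projects appropriately) — and that \cref{separatingMeasures} applies, which requires $\mathcal{H}^t$ to be a \emph{compact metric} space: this is supplied by \cref{theorem:compactness} together with \cref{theorem:wellDefined}. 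All other verifications (Lipschitzness of $\Xi_{\mathrm{add}},\Xi_{\mathrm{mul}}$ on bounded domains, that Cartesian products of MPNN models are MPNN models) are routine given \cref{theorem:fet_bound}.
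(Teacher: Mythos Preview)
Your proposal is correct and follows essentially the same approach as the paper's proof: induction on $t$, with the base case handled by the Lipschitz function algebra on $\mathbb{B}^p_r$, the closure properties via the parallel constructions $\varphi_{\mathrm{add}},\varphi_{\mathrm{mul}}$, and point separation split into the two cases $p_{t+1,t}(\alpha)\neq p_{t+1,t}(\beta)$ (use the inductive hypothesis with $\varphi^{(t+1)}(x,y)=x$) versus $p_{t+1}(\alpha)\neq p_{t+1}(\beta)$ (use Stone--Weierstrass plus \cref{separatingMeasures} with $\varphi^{(t+1)}(x,y)=y$). The paper's write-up is slightly terser, but the logical skeleton and the key constructions are the same.
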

\begin{proof}
We will now prove the lemma inductively.

\emph{Induction Base.} For $t = 0$, the claim trivially holds as $\mathcal{N}^1_0$ contains precisely the functions $f:\mathcal{H}^0\mapsto\RR$ that are Lipschitz continuous which contains $1_{\mathcal{H}^0}$, and closed to multiplication and addition. 

\emph{Induction Assumption.} We assume the sets $\mathcal{N}^1_t$ is closed under multiplication and linear combinations, contains ${1}_{\mathcal{H}^t}$ and separates points of $\mathcal{H}^t$. 

\emph{Induction Step.} Let $t+1$. Clearly $\mathcal{N}^1_{t+1}$ contains the all-one function $1_{\mathcal{H}^{t+1}}$ since we can always choose $\varphi^{(t+1)}$ in an MPNN model to be the all-one function on any of the two inputs. Let $\varphi$, $\varphi'$ be two $(t+1)$-layer MPNN models. Note that $\varphi_{\rm mul}$ and $\varphi_{\rm add} $ are in fact MPNN models since multiplication and addition on a compact, and hence, a bounded subset of $\RR^2$ is Lipschitz continuous.

Let $\alpha, \beta \in \mathcal{H}^{t+1}$ with $\alpha \neq \beta$. We consider two cases: either $p_{t+1}(\alpha) \neq p_{t+1}(\beta)$ or $p_{t+1,t} (\alpha) \neq p_{t+1,t}(\beta)$. We start with the first case, i.e., $p_{t+1}(\alpha) \neq p_{t+1}(\beta)$. By the induction hypothesis, the set $\mathcal{N}^1_{t}$ is closed under multiplication and linear combinations, contains $1_{\mathcal{H}^t}$, and separates points of $\mathcal{H}^t$. Hence, it is a sub-algebra of $C(\mathcal{H}^t, \RR)$ that separates points and contains the constants. By the Stone–Weierstrass theorem, $\mathcal{N}^1_{t}$ is dense in $C(\mathcal{H}^t, \RR)$. \cref{separatingMeasures} then entails that there is a $t$-layer MPNN model $\varphi$ with output dimension one such that
$$\int_{\mathcal{H}^t} \hh(\varphi)^{(t)}_- dp_{t+1}(\alpha) \neq \int_{\mathcal{H}^t} \hh(\varphi)^{(t)}_- dp_{t+1}(\beta).$$ Define the $(t+1)$-layer MPNN model $\varphi' := (\varphi'^{(i)})^{t+1}_{i=0}$, where $\varphi'^{(i)}=\varphi^{(i)}$ for $i\in[t]$ and $\varphi'^{(t+1)}(x,y) := y$  for every $(x,y) \in \RR^2$. Then, $\varphi'\in \mathcal{N}^1_{t}$ , separates $\alpha$ and $\beta$ since
\begin{align*}
\hh^{(t+1)}(\varphi',\alpha)&= \int_{\mathcal{H}^t} \hh(\varphi')^{(t)}_- dp_{t+1}(\alpha)\\&= \hh(\varphi)^{(t)}_- dp_{t+1}(\alpha) \neq\int_{\mathcal{H}^t} \hh(\varphi)^{(t)}_- dp_{t+1}(\beta) \\&=\int_{\mathcal{H}^t} \hh(\varphi')^{(t)}_- dp_{t+1}(\beta) = \hh^{(t+1)}(\varphi',\beta).
\end{align*}
In the second case, where $p_{t+1,t} (\alpha) \neq p_{t+1,t}(\beta)$, we have that, from the induction assumption, there exists a $t$-layer MPNN model $\hat{\varphi}$ such that $\hh(\hat{\varphi})^{(t)}_{p_{t+1,t}(\alpha)} \neq \hh(\hat{\varphi})^{(t)}_{p_{t+1,t} (\beta)}.$ Define the $(t+1)$-layer MPNN model $\hat{\varphi}' := (\hat{\varphi}'^{(i)})^{t+1}_{i=0}$, where $\hat{\varphi}'^{(i)}=\hat{\varphi}^{(i)}$ for $i\in[t]$ and $\hat{\varphi}'^{(t+1)}(x,y) := x$ for every $(x,y) \in \RR^2$. Then, $\hat{\varphi}'\in \mathcal{N}^1_{t}$ separates $\alpha$ and $\beta$ since
\begin{align*}
\hh(\hat{\varphi}')^{(t+1)}_{\alpha}&= \hh(\hat{\varphi}')^{(t)}_{p_{t+1,t}(\alpha)} \\&= \hh(\hat{\varphi})^{(t)}_{p_{t+1,t}(\alpha)}\neq \hh(\hat{\varphi})^{(t)}_{p_{t+1,t}(\beta)}\\&=\hh(\hat{\varphi}')^{(t)}_{p_{t+1,t}(\beta)}=\hh(\hat{\varphi}')^{(t+1)}_{\beta}.
\end{align*}
\end{proof}
 With \cref{lemma:universalApproximation}, we immediately obtain \cref{theorem:universalApproximation}, which we restate here for better readability.
\universalApproximation*
\begin{proof}
By \cref{lemma:universalApproximation}, the Stone–Weierstrass theorem is applicable to $\cN^1_L$, and hence, $\cN^1_L$ is dense in $C(\mathcal{H}^L, \RR)$. We can then use this to show that $\cN\cN^1_L$ is dense in $C(\mathscr{P}(\mathcal{H}^L), \RR)$. By the same arguments as in the first case of the inductive step in the proof of \cref{lemma:universalApproximation}, $\cN\cN^1_L$ is closed under multiplication and linear combinations, contains the all-one function, and separates
points of $\mathscr{P} (\mathcal{H}^L)$. Hence, an application \cref{separatingMeasures} yields that $\cN\cN^1_L$ is dense in $C(\mathscr{P}(\mathcal{H}^L), \RR)$.
\end{proof}

\cref{theorem:universalApproximation} then yields \cref{corollary:universalApproximation}. To prove \cref{corollary:universalApproximation}, we follow the proof of Corollary 5. in \cite{expr23}.

\universalApproximationCorollary*

\begin{proof} First, let $n = 1$. When restricted to functions $\HH(\varphi,\psi,-) \in \mathcal{N}\mathcal{N}_L^n$ of the form $\HH(\varphi,\psi,\nu) = \int_{\mathbb{M}_L} \hh^{(L)}_- d\nu$, i.e., the readout $\psi$ is the identity, the claim follows since $\mathcal{N}_L^1$ is dense in $C(\mathbb{M}_L,\mathbb{R})$ by \cref{theorem:universalApproximation} and the definition of the weak$^*$ topology on $\mathscr{P}(\mathbb{M}_L)$, cf. \cref{section:background}. Since the readout function $\psi$ is continuous, the equivalence also holds when considering all functions in the set $\mathcal{N}\mathcal{N}_L^d$. Finally, since one can always consider the projection to a single component and conversely map a single real number to a vector of these numbers, the equivalence also holds in the case $n>1$. 
\end{proof}
\subsection{Proof of Fine-Grained Expressivity of MPNNs}\label{appendix:LipConverse}
Here, we present the proof of \cref{theorem:deltaEpsilon}, which we copy here for the convenience of the reader.

\deltaEpsilonTheorem*

\begin{proof}
Assume that there is an $\varepsilon > 0$ such that such $L \in \NN$, $C > 0$, and $\delta > 0$ do not exist. Then, for every $L\in \mathbf{N}$ and $C>0$ $\delta_k:=1/k \geq 0$, there are $L$-layer DIDMs $\mu_k$ and $\nu_k$ such that $\|\HH(\varphi,\psi,\mu_k) - \HH(\varphi,\psi,\nu_k)\|_2 \leq \delta_k$ for every 
 $L$-layer MPNN model $(\varphi,\psi)$ with readout and output dimension $d$, and $C_{(\varphi,\psi)} \leq C$ but also $\OT_{\TD^L}(\mu_k, \nu_k) > \varepsilon$. By the compactness of $\mathscr{P}(\mathcal{H}^L)$, there are subsequences $(\mu_{k_i})_i$ and $(\nu_{k_i})_i$ converging to DIDMs $\widetilde{\mu}$ and $\widetilde{\nu}$, respectively, in the weak$^*$ topology. Let $\hat{\varphi}$ be an $L$-layer MPNN model and a readout fucntion $\hat{\psi}: \mathbb{R}^{d_L} \to \mathbb{R}^d$. Then, by \cref{corollary:universalApproximation}, also $(\HH(\hat{\varphi},\hat{\psi},\mu_{k_i}))_i$ and $(\HH(\hat{\varphi},\hat{\psi},\nu_{k_i}))_i$ converge to $\HH(\hat{\varphi},\hat{\psi},\widetilde{\mu})$ and $\HH(\hat{\varphi},\hat{\psi},\widetilde{\nu})$, respectively. Hence,
\begin{align*}
\|\HH(\hat{\varphi},\hat{\psi},\widetilde{\nu}) - \HH(\hat{\varphi},\hat{\psi},\widetilde{\mu})\|_2 \leq&\|\HH(\hat{\varphi},\hat{\psi},\widetilde{\nu}) - \HH(\hat{\varphi},\hat{\psi},\nu_{k_i})\|_2 \\&+ \|\HH(\hat{\varphi},\hat{\psi},\nu_{k_i}) - \HH(\hat{\varphi},\hat{\psi},\mu_{k_i})\|_2 \\&+ \|\HH(\hat{\varphi},\hat{\psi},\mu_{k_i}) - \HH(\hat{\varphi},\hat{\psi},\widetilde{\mu})\|_2 \xrightarrow{i \to \infty} 0
\end{align*}
by the assumption, i.e., $\HH(\hat{\varphi},\hat{\psi},\widetilde{\mu})=\HH(\hat{\varphi},\hat{\psi},\widetilde{\nu})$. Since this holds for every MPNN model and Lipschitz $\psi$, we have $\OT_{\TD^L}(\widetilde{\mu}, \widetilde{\nu}) = 0$ by \cref{corollary:universalApproximation} and \cref{theorem:wellDefined} with the fact that $\mathscr{P}(\mathcal{H}^L)$ is Hausdorff. Then, however
\[\OT_{\TD^L}(\mu_{k_i}, \nu_{k_i}) \leq \OT_{\TD}(\mu_{k_i}, \widetilde{\mu}) + \OT_{\TD^L}(\widetilde{\mu}, \widetilde{\nu}) + \OT_{\TD^L}(\widetilde{\nu}, \nu_{k_i}) \xrightarrow{k \to \infty} 0\]
since $(\nu_{k_i})_i$ and $(\mu_{k_i})_i$ converge to $\widetilde{\nu}$ and $\widetilde{\mu}$, respectively, also in $\OT_{\TD^L}$ by by \cref{corollary:universalApproximation} and \cref{theorem:wellDefined}. This contradicts the assumption that $\OT_{\TD^L}(\mu_{k_i}, \nu_{k_i}) > \varepsilon$ for every $k \geq 0$.
\end{proof}
\subsection{Universality of MPNNs over the Spaces of Graph-Signals and Graphon-Signals}\label{appendix:universalitygraphs}
Note that \cref{theorem:universalApproximation} states that any continuous function from DIDMs to scalars can be approximated by an MPNN on DIDMs. To infer a universal approximation result for functions from graph-signals to vector we emphasize the following considerations.  
Recall we define the set 
\begin{align*}
\mathcal{NN}^{d}_L&(\GS^d_r)\\&=\{\FF(\varphi,\psi,-,-):\GS^d_r\mapsto\R^d|(\varphi,\psi) \text{ is an $L$-layer MPNN model with readout}\}.
\end{align*}
First, note that the space of computation DIDMs, $\Gamma_{(W,f),L}(\GS^d_r)$, is a strict subset of $\mathscr{P}(\mathcal{H}^L)$, which is not dense (w.r.t $\TMD^L$) in view of \cref{theorem:graphon_signals_compactness}. Indeed, there are DIDMs that do not come from any graphon-signal, and a closed strict subset cannot be dense. Hence, the space of DIDMs of graph-signals is also not dense. Hence, \cref{theorem:universalApproximation} does not imply that any continuous function (w.r.t $\TMD^L$)  from graph-signal to vector can be approximated by $\mathcal{NN}^{d}_L$. Rather, any function on graph-signals that can be extended to a continuous function on DIDMs can be approximated by $\mathcal{NN}^{d}_L$, which is a weaker form of universality. Fortunately, we can directly prove a universal approximation theorem directly for the space $\GS^d_r$, which in terms gives a universal approximation theorem for continuous functions from graph-signals to vectors by a density argument. For this,  we prove \cref{theorem:graphDense} stating that graph-signals are dense in $\GS^d_r$ w.r.t. $\TMD^L$.
\universalApproximationForGraphons*
\begin{proof}
By \cref{theorem:graphon_signals_compactness}, the space of graphon-signals is compact. With similar arguments presented in the proof of \cref{lemma:universalApproximation}, $\cN\cN^1_L(\GS^d_r)$ is a subalgebra of $C(\GS^d_r, \RR)$. \cref{lemma:MPNN_eqvie} together with the fact that $\cN\cN^1_L$ separate points (see the proof of \cref{theorem:universalApproximation}) yields that $\cN\cN^1_L(\GS^d_r)$  separate points. We can thus apply the Stone–Weierstrass theorem to $\cN\cN^1_L(\GS^d_r)$, hence, $\cN\cN^1_L(\GS^d_r)$ is uniformly dense in $C(\GS^d_r, \RR)$. 
\end{proof}
\begin{theorem}\label{theorem:graphDense}
Let $L\in\NN$. Graph-signals are dense in $\GS^d_r$ w.r.t. $\TMD^L$.  
\end{theorem}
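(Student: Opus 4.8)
# Proof Proposal for Theorem~\ref{theorem:graphDense}

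\textbf{Overall strategy.} The plan is to show that every graphon-signal $(W,f)\in\GS^d_r$ can be approximated arbitrarily well in $\TMD^L$ by a graph-signal, by composing two approximation steps: first approximate $(W,f)$ by a step graphon-signal (a stochastic block model with rational block sizes and quantized signal values), and then sample a large graph-signal from that step graphon-signal so that its induced graphon-signal is $\TMD^L$-close to the step graphon-signal. Crucially, I would \emph{not} try to bound $\TMD^L$ directly; instead I would exploit \cref{formalDistanceInequivalency}, which tells us that $\delta_\square$-convergence implies $\TMD^L$-convergence. So it suffices to produce, for each $\epsilon>0$, a graph-signal that is $\delta_\square$-close to $(W,f)$, and then the same holds in $\TMD^L$.

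\textbf{Key steps, in order.} First, I would invoke the regularity lemma for graphon-signals (\cref{lem:gs-reg-lem3}): for the equipartition $\mathcal{I}_n$ with $n\geq 2^{\lceil 8c/\epsilon^2\rceil}$, the projected step graphon-signal $(W,f)_{\mathcal{I}_n}$ satisfies $\delta_\square\big((W,f),(W,f)_{\mathcal{I}_n}\big)\leq\epsilon$. This reduces the problem to approximating a step graphon-signal $(W',f')=(W,f)_{\mathcal{I}_n}$ by graph-signals in $\delta_\square$. Second, I would quantize: replace each signal value $f'_i\in\mathbb{B}^d_r$ by a nearby value in a finite $\epsilon$-net of $\mathbb{B}^d_r$, incurring at most $\epsilon$ additional error in $\|f-g\|_\square$ (hence in $d_\square$, hence in $\delta_\square$ by the triangle inequality); a similar quantization of the edge weights $W'_{ij}\in[0,1]$ changes $\|W\|_\square$ by at most $\epsilon$. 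Third — the core construction — given a step graphon-signal on $n$ blocks of equal measure $1/n$ with edge weights $w_{ij}\in[0,1]$ and signal values $c_i\in\mathbb{B}^d_r$, I would produce a graph-signal on $N=nm$ vertices (for large $m$) that realizes the step structure: partition $[N]$ into $n$ groups of size $m$, place an edge between vertices in groups $i,j$ independently with probability $w_{ij}$ (or use a deterministic quasirandom construction), and assign signal value $c_i$ to all vertices in group $i$. By the standard sampling concentration for the cut metric (W-random graphs converge to their generating graphon in $\delta_\square$; see Lovász's book, and its graphon-signal extension in \cite{signal23,signal25}), with high probability the induced graphon-signal $(W_G,f_{\mathbf{f}})$ satisfies $\delta_\square\big((W_G,f_{\mathbf{f}}),(W',f')\big)\leq\epsilon$ for $m$ large enough. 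Fourth, chain the three estimates by the triangle inequality for $\delta_\square$ to get a graph-signal within $O(\epsilon)$ of $(W,f)$ in $\delta_\square$, and finally apply \cref{formalDistanceInequivalency} to conclude $\TMD^L$-closeness, which proves density.

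\textbf{Anticipated main obstacle.} The delicate step is the third one: producing a graph-signal whose \emph{induced graphon-signal} (via the canonical identification using the equipartition into intervals $I_k=[(k-1)/N,k/N)$) is genuinely $\delta_\square$-close to the target step graphon-signal. The subtlety is that the induced graphon uses a specific equipartition, so one must be careful that the block structure of the sampled graph aligns with sub-intervals of the $\mathcal{I}_n$ partition; choosing $N=nm$ makes each original block $[(i-1)/n,i/n)$ split into exactly $m$ of the fine intervals, so the alignment is clean and the signal $f_{\mathbf{f}}$ is literally constant on each block interval. The concentration argument then needs the graphon-signal version of "sampling from a graphon converges in cut distance," which is available from \cite{signal23,signal25}; one must verify the signal part converges too, but since we chose the signal deterministically block-constant, the signal contribution to $d_\square$ is controlled by the quantization error alone and does not require any probabilistic argument. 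An alternative, fully deterministic route avoids all randomness: approximate $w_{ij}$ by rationals $a_{ij}/m$ and, within each pair of blocks, install a fixed bipartite graph with exactly that edge density chosen to be quasirandom (e.g., via a blow-up of a small fixed expander-like pattern), which gives a deterministic graph-signal with $\delta_\square$-error $O(1/\sqrt{m})+O(\epsilon)$; I would present this deterministic version to sidestep "with high probability" caveats entirely.
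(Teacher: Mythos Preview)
Your proposal is correct and follows essentially the same two-step route as the paper: first establish that graph-signals are $\delta_\square$-dense in $\GS^d_r$, then invoke \cref{formalDistanceInequivalency} to conclude $\TMD^L$-density since the $\TMD^L$ topology is coarser. The only difference is granularity: the paper treats the $\delta_\square$-density of graph-signals as an immediate consequence of the regularity lemma (\cref{lem:gs-reg-lem3}) and the surrounding graphon-signal theory from \cite{signal23,signal25}, whereas you spell out the passage from the step graphon-signal $(W,f)_{\mathcal{I}_n}$ to an honest $\{0,1\}$-valued graph-signal via sampling or a deterministic blow-up. That extra step is indeed needed (the regularity lemma alone only yields step graphon-signals with real edge weights), so your write-up is more self-contained, but it is not a different argument---just a filled-in version of what the paper gestures at.
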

\begin{proof}
A direct conclusion of the graphon-signals regularity lemma (\cref{lem:gs-reg-lem3}) w.r.t. the cut distance, $\delta_{\square}$, is that graph-signals are dense in the space of graphon-signals w.r.t. $\delta_{\square}$, i.e. $(\GS^d_r,\delta_{\square})$. For any $L\in\NN$, \cref{formalDistanceInequivalency} implies that the pseudometric topology of $(\GS^d_r,\TMD^L)$ is cooraser than pseudometric topology of $(\GS^d_r,\delta_{\square})$. Thus, since graph-signals are dense in $(\GS^d_r,\delta_{\square})$, graph-signals are dense in raph-signals are dense in the space of graphon-signals w.r.t. $\TMD^L$, i.e., $(\GS^d_r,\TMD^L)$.
\end{proof}

\section{Proximity Relations of MPNNs}\label{appendix:distanceEquivalency}

Here, we summarize how proximity of any MPNN’s outputs on any two different DIDMs is related to the proximity of the two DIDMs.
\begin{restatable}[]{theorem}{formalDistanceEquivalencyTheorem}~\label{formalDistanceEquivalency}
 Let $L\in \NN$ and $(\mu_i)_i$ be a sequence of order-$L$ DIDMs, and let $\mu\in\mathscr{P}(\cH^L)$ be a DIDM. Then, the following are equivalent:
\begin{enumerate}
    \item $\OT_{\TD^L}(\mu_i, \mu) \to 0$.
    \item $\mathbf{h}_{\mu_i} \to \mathbf{h}_\mu$ for any MPNN model $\varphi$ with readout $\psi: \mathbb{R}^{d_L} \mapsto \mathbb{R}^d$, where $d > 0$.
    \item $\mu_{i} \to \mu$.
\end{enumerate}
\end{restatable}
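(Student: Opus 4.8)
\textbf{Proof proposal for \cref{formalDistanceEquivalency}.}

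The plan is to prove the three statements equivalent by establishing the cycle $(1)\Rightarrow(2)\Rightarrow(3)\Rightarrow(1)$, leaning entirely on results already developed in the excerpt. The implication $(3)\Rightarrow(1)$ is the cleanest: by \cref{theorem:wellDefined}, $\OT_{\TD^L}$ metrizes the weak$^*$ topology on $\mathscr{P}(\mathcal{H}^L)$, so $\mu_i\to\mu$ in the weak$^*$ topology is by definition the same as $\OT_{\TD^L}(\mu_i,\mu)\to 0$. (In fact this makes $(1)$ and $(3)$ literally the same statement, so the only genuine content is the equivalence of either of them with $(2)$.) For $(1)\Rightarrow(2)$, I would invoke \cref{theorem:lipschitz}: for any $L$-layer MPNN model $\varphi$ with readout $\psi$, there is a constant $C_{(\varphi,\psi)}$ with $\|\HH(\varphi,\psi,\mu_i)-\HH(\varphi,\psi,\mu)\|_2\le C_{(\varphi,\psi)}\cdot\OT_{\TD^L}(\mu_i,\mu)$, so the right-hand side tending to $0$ forces $\mathbf{h}_{\mu_i}\to\mathbf{h}_\mu$. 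One should note that the notation $\mathbf{h}_{\mu_i}$ in the statement refers to $\HH(\varphi,\psi,\mu_i)$ for an arbitrary MPNN with readout, consistent with the usage in \cref{corollary:universalApproximation} and \cref{formalDistanceInequivalency}.

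For $(2)\Rightarrow(3)$, the key is \cref{corollary:universalApproximation}, which states precisely that $\nu_i\to\nu$ in $\mathscr{P}(\mathcal{H}^L)$ if and only if $\HH(\varphi,\psi,\nu_i)\to\HH(\varphi,\psi,\nu)$ for all $L$-layer MPNN models $\varphi$ with readout $\psi:\mathbb{R}^{d_L}\to\mathbb{R}^d$. Applying the ``if'' direction of that corollary with $\nu_i=\mu_i$ and $\nu=\mu$ immediately yields $\mu_i\to\mu$. So $(2)\Rightarrow(3)$ is essentially a citation, and combined with $(3)\Rightarrow(1)\Rightarrow(2)$ closes the loop. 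Alternatively, one can observe that $(2)\Leftrightarrow(3)$ is \cref{corollary:universalApproximation} verbatim and $(1)\Leftrightarrow(3)$ is \cref{theorem:wellDefined} verbatim, giving the full equivalence at once; I would present it as the short cycle for uniformity of exposition.

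There is no real obstacle here — the theorem is a repackaging of \cref{theorem:wellDefined}, \cref{theorem:lipschitz}, and \cref{corollary:universalApproximation} into a single convenient statement, and the proof is a few lines citing those three results. The only points requiring a word of care are (a) making explicit that the weak$^*$ convergence appearing in $(3)$ is, by \cref{theorem:wellDefined}, the same topology metrized by $\OT_{\TD^L}$ in $(1)$, so that $(1)$ and $(3)$ coincide, and (b) confirming that the quantifier ``for any MPNN model $\varphi$ with readout $\psi$'' in $(2)$ matches the quantifier in \cref{corollary:universalApproximation}; both range over all $L$-layer MPNN models with readout functions into some $\mathbb{R}^d$, $d>0$, so they agree. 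With those identifications stated, the proof is complete.
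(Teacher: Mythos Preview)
Your proposal is correct and follows essentially the same approach as the paper: the paper's proof also cites \cref{theorem:lipschitz} for $(1)\Rightarrow(2)$, \cref{theorem:wellDefined} for $(1)\Leftrightarrow(3)$, and \cref{corollary:universalApproximation} for $(2)\Leftrightarrow(3)$. The only minor difference is that the paper additionally invokes \cref{theorem:deltaEpsilon} for $(2)\Rightarrow(1)$, which, as you implicitly observe, is redundant once $(2)\Rightarrow(3)\Rightarrow(1)$ is established via the other two results.
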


\begin{proof}
The implication $(1) \Rightarrow (2)$ is just a result of \cref{theorem:lipschitz}, and its converse is \cref{theorem:deltaEpsilon}. 
 Properties (1) and (2) are equivalent to (3) by \cref{theorem:wellDefined} and \cref{corollary:universalApproximation}.
\end{proof}
 We further note that the following variant of \cref{formalDistanceEquivalency} holds as well.

\begin{theorem}\label{formalDistanceEquivalency2}
Let $\mu$, $\nu\in \mathscr{P}(\mathcal{H}^L)$. Then, the following are equivalent:
\begin{enumerate}
    \item $\OT_{\TD^L}(\mu, \nu) = 0$.
    \item $\hh_\mu = \hh_\nu$ for every MPNN model $\varphi$ and a readout function $\psi: \mathbb{R}^d \to \mathbb{R}^n$, where $n > 0$.
    \item $\mu = \nu$.
\end{enumerate}
\end{theorem}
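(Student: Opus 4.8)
The plan is to prove the cycle of implications $(1)\Rightarrow(2)\Rightarrow(3)\Rightarrow(1)$, leaning almost entirely on results already established in the excerpt; no genuinely new machinery should be needed. For $(1)\Rightarrow(2)$: assume $\OT_{\TD^L}(\mu,\nu)=0$. Apply the second inequality of \cref{theorem:lipschitz}: for every MPNN model $\varphi$ with readout $\psi$ there is a constant $C_{(\varphi,\psi)}$ with $\|\HH(\varphi,\psi,\mu)-\HH(\varphi,\psi,\nu)\|_2\leq C_{(\varphi,\psi)}\cdot\OT_{\TD^L}(\mu,\nu)=0$, hence $\hh_\mu=\hh_\nu$ (in the notation of the statement, $\hh$ with a readout denoting $\HH$). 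This is immediate.

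For $(2)\Rightarrow(3)$: suppose $\HH(\varphi,\psi,\mu)=\HH(\varphi,\psi,\nu)$ for all MPNN models with readout. I would first reduce to readout $\psi=\mathrm{id}$, so that $\HH(\varphi,\mathrm{id},\cdot)=\int_{\cH^L}\hh^{(L)}_-\,d(\cdot)$, giving $\int_{\cH^L} g\,d\mu=\int_{\cH^L} g\,d\nu$ for every $g\in\cN^1_L$. By \cref{theorem:universalApproximation}, $\cN^1_L$ is uniformly dense in $C(\cH^L,\RR)$, so $\int_{\cH^L} g\,d\mu=\int_{\cH^L} g\,d\nu$ for all $g\in C(\cH^L,\RR)$. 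Since $\cH^L$ is compact metrizable (\cref{theorem:compactness}), measures on $\cH^L$ are determined by their integrals against $C(\cH^L,\RR)$ (Riesz representation / the discussion of the weak$^*$ topology in \cref{subsection:theWeakTopology}), whence $\mu=\nu$. Alternatively, one can invoke \cref{corollary:universalApproximation} directly: $\HH(\varphi,\psi,\mu)=\HH(\varphi,\psi,\nu)$ for all $(\varphi,\psi)$ forces $\mu=\nu$ in the weak$^*$ topology, and since $\mathscr{P}(\cH^L)$ with the weak$^*$ topology is Hausdorff this yields genuine equality.

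For $(3)\Rightarrow(1)$: if $\mu=\nu$ then trivially $\OT_{\TD^L}(\mu,\nu)=0$ since $\OT_{\TD^L}$ is a metric on $\mathscr{P}(\cH^L)$ by \cref{theorem:wellDefined} (one could also note directly that the identity coupling has zero cost and equal masses). This closes the cycle. I do not anticipate a real obstacle here — the theorem is essentially the "static" (single-pair) shadow of \cref{formalDistanceEquivalency}, and every ingredient (Lipschitzness, universal approximation, well-definedness of $\OT_{\TD^L}$, compactness/Hausdorffness of $\mathscr{P}(\cH^L)$) is already in hand. The only point requiring minor care is the reduction in $(2)\Rightarrow(3)$ to the identity readout and the passage from "integrals against $\cN^1_L$ agree" to "the measures agree," which is exactly where density of $\cN^1_L$ in $C(\cH^L,\RR)$ together with the fact that finite Borel measures on a compact metric space are separated by continuous functions is used; this mirrors the corresponding step in the proof of \cref{corollary:universalApproximation}.
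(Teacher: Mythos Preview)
Your proposal is correct and matches the paper's approach: the paper simply says the equivalences follow as in \cref{formalDistanceEquivalency} using that $\mathscr{P}(\mathcal{H}^L)$ is Hausdorff, which amounts precisely to your cycle via \cref{theorem:lipschitz} for $(1)\Rightarrow(2)$, \cref{corollary:universalApproximation} plus Hausdorffness for $(2)\Rightarrow(3)$, and \cref{theorem:wellDefined} for $(3)\Rightarrow(1)$. Your more explicit density-plus-Riesz argument for $(2)\Rightarrow(3)$ is a valid unpacking of exactly the mechanism behind \cref{corollary:universalApproximation}.
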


\begin{proof} The equivalences follow as in \cref{formalDistanceEquivalency} since $\mathscr{P}(\mathcal{H}^L)$ is Hausdorff.
\end{proof}
\section{Generalization Theorem for MPNNs}
\label{Appendix:generalization}
We expand the generalization analysis in \cite{signal23} to a more general setting as done in \cite{signal25} and adjust it to meet our definitions. 
\subsection{Statistical Learning and Generalization Analysis}
In statistical learning theory, usually we consider a product probability space $\mathcal{P}=\X\times\Y$, which represents all possible data. We call any arbitrary probability measure on $(\mathcal{P}, \mathcal{B}(\mathcal{P}))$ a \emph{data distribution}. We presume we have a fixed and unknown data distribution $\tau$. As the completeness of our measure space does not affect our construction, we may assume that we complete $\mathcal{B}(\mathcal{P})$ with respect to $\tau$ to a complete $\sigma$-algebra $\Sigma$ or just denote $\Sigma=\mathcal{B}(\mathcal{P}).$ Additionally, let $\mathbf{X}\subseteq\mathcal{P}$ be a dataset of independent random samples from $(\mathcal{P},\tau)$. Additionally, we presume $\Y$ contains values that relate to every point in $\X$, according to a fixed and unknown conditional distribution function $\tau_{\Y|\X}\in\mathscr{P}(\Y)$. In its essence, the problem of learning is choosing from some set of function, the one that best approximate the relation between the points in $\Y$ and the points in $\X$. 

Let $\mathcal{E}$ be a Lipschitz loss function with a Lipschitz constant denote by ${\rm C}_{\mathcal{E}}$. Note that the loss $\mathcal{E}$ can have a learnable component that depends on the dataset $\mathbf{X}$ as long as it is Lipschitz with a constant ${\rm C}_{\mathcal{E}}$. Our objective is to find the optimal model $\mathfrak{M}$ from some \emph{hypothesis space} $\mathcal{Z}$ that has a low \emph{statistical risk}
\begin{equation*}
 \mathcal{R}(\mathfrak{M})= \mathbb{E}_{(\nu,y) \sim \tau}[\mathcal{E}(\mathfrak{M}(\nu), y)] = \int\mathcal{E}(\mathfrak{M}(\nu),y) d\tau(\nu,y), \quad \mathfrak{M} \in \mathcal{Z}
\end{equation*}
However, as stated before, the true distribution $\tau$ is not directly observable. Instead, we have access to 
a set of independent, identically distributed (i.i.d) samples $\mathbf{X} = (X_1, \ldots, X_N)$ from the data distribution $(\mathcal{P}, \tau)$. Instead of minimizing the statistical risk with an unknown data distribution $\tau$, we try to approximation the optimal model by minimizing the \emph{empirical risk}:
\[
\hat{\mathcal{R}}_\mathbf{X}(\mathfrak{M}_{\mathbf{X}}) = \frac{1}{N} \sum_{i=1}^N  \mathcal{E}(\mathfrak{M}_\mathbf{X}(\nu_i),Y_i),
\]
where $0<i\leq N:X_i=(\nu_i,Y_i)$ and  $\mathfrak{M}_{\mathbf{X}}$ is a model with some possible dependence on the sampled data, e.g., through training.

Generalization analysis goal is to show that low empirical risk of a a network entails low statistical risk as well. One approach to bounding the statistical risk involves using the inequality:

\[
\mathcal{R}(\mathfrak{M}) \leq \hat{\mathcal{R}}(\mathfrak{M}) + E
\]

where $E$ is called the generalization error, defined as:

\[
E = \sup_{\Theta \in \mathcal{H}} |\mathcal{R}(\Theta) - \hat{\mathcal{R}}(\Theta)|
\]

It is important to note that the trained network $\mathfrak{M} := \mathfrak{M}_\mathbf{X}$ depends on the dataset $\mathbf{X}$. This essantially means that the empirical risk is not truly a Monte Carlo approximation of the statistical risk in the learning context, as the network is not constant when varying the dataset. If the model $\mathfrak{M}$ was fixed, Monte Carlo theory would provide us an order $\mathcal{O}(\sqrt{\zeta(p)/N})$ bound for $E$ with probability $1 - p$, where $\zeta(p)$ depends on the specific inequality used (e.g., $\zeta(p) = \log(2/p)$ in Hoeffding's inequality).

Such events are called \emph{good sampling events} and depend on the model $\mathfrak{M}$. This dependence, result in the requirement of intersecting all good sampling events in $\mathcal{Z}$, in order to compute a naive
bound to the generalization error.

Uniform convergence bounds are employed to intersect appropriate sampling events, allowing for more efficient bounding of the generalization error. This intersection introduces a term in the generalization bound called the \emph{complexity} or \emph{capacity}.This concept describes the richness of the hypothesis space $\mathcal{Z}$ and underlies approaches such as VC-dimension, Rademacher dimension, fat-shattering dimension, pseudo-dimension, and uniform covering number (see, e.g., \cite{statBackground}).
\subsection{Uniform Monte Carlo Estimation for Lipschitz Continuous Functions}
Recall that the covering number of a metric space $(\X, d)$ is the smallest number of open balls of radius \(\epsilon\) needed to cover $\X$ (see \cref{appendix:coveringNumber}). We call any metric space with a probability Borel measure (where we either take the completion of the measure space
with respect to $\mu$, i.e. we add all subsets of null-sets to the $\sigma$-lgebra, or not) a probability metric spaces. The proof of \cref{theorem:generalization} relies on \cref{theorem:monteCarlo}, which examines uniform Monte Carlo estimations of Lipschitz continuous functions over probability metric spaces with finite covering. \cref{theorem:monteCarlo} is an extended version of \citealp[Lemma B.3]{maskey2022generalization} taken from \cite{signal23}.
\begin{theorem}[\cite{signal23}, Theorem G.3, Uniform Monte Carlo Estimation for Lipschitz Continuous Functions]
\label{theorem:monteCarlo}
Let $\mathcal{P}$ be a probability metric space with probability measure $\mu$ and covering number $\kappa(\epsilon)$. 
Let $X_1, \ldots, X_N$ be drawn i.i.d. from $\mathcal{P}$. Then, for any $p > 0$, there exists an event $\mathcal{E}^p_{Lip} \subset \mathcal{P}^N$ (regarding the choice of $(X_1, \ldots, X_N)$), with probability
\[
\mu^N(\mathcal{E}^p_{Lip}) \geq 1-p
\]
such that for every $(X_1, \ldots, X_N) \in \mathcal{E}^p_{Lip}$, for every bounded Lipschitz continuous function $f : \mathcal{P} \to \mathbb{R}^d$ with Lipschitz constant $L_f$, we have
\begin{equation*}
\left\|\int f(x) d\mu(x) - \frac{1}{N} \sum_{i=1}^N f(X_i)\right\|_\infty \leq 2 \xi^{-1}(N)L_f + \frac{1}{\sqrt{2}}\xi^{-1}(N)\|f\|_\infty \left(1+\sqrt{\log(2/p)}\right),
\end{equation*}
where $\xi(r) = \frac{\kappa(r)^2\log(\kappa(r))}{r^2}$, $\xi^{-1}$ is the inverse function of $\xi$ and $\kappa(\epsilon)$ is the covering number of $\mathcal{P}$.
\end{theorem}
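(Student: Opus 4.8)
\textbf{Proof plan for Theorem~\ref{theorem:monteCarlo} (Uniform Monte Carlo Estimation for Lipschitz Continuous Functions).}

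The plan is to reduce the uniform estimate over the infinite-dimensional class of bounded Lipschitz functions to a finite estimate on a net, using the covering number of $\mathcal{P}$. First I would fix $p>0$ and, for a parameter $\epsilon>0$ to be optimized at the end, take a minimal $\epsilon$-cover of $\mathcal{P}$: a finite set of centers $c_1,\dots,c_{\kappa(\epsilon)}$ whose $\epsilon$-balls cover $\mathcal{P}$, and let $A_1,\dots,A_{\kappa(\epsilon)}$ be the induced measurable partition (disjointify the balls). The empirical measure $\hat\mu_N = \frac1N\sum_i \delta_{X_i}$ and $\mu$ both assign masses to these cells; the key point is that for a Lipschitz function $f$ with constant $L_f$ and $\|f\|_\infty\le B$, the error $\|\int f\,d\mu - \frac1N\sum_i f(X_i)\|_\infty$ splits into (i) a \emph{discretization error}, bounded by $2\epsilon L_f$, coming from replacing $f$ by its value at the cell centers (each point is within $\epsilon$ of its center in both $\mu$ and $\hat\mu_N$); and (ii) a \emph{sampling error} $\sum_{j} f(c_j)\,(\mu(A_j)-\hat\mu_N(A_j))$, which is controlled by $B\sum_j |\mu(A_j)-\hat\mu_N(A_j)|$, i.e. by (twice) the total variation distance between $\mu$ and $\hat\mu_N$ restricted to the fixed finite partition.

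The second step is to control $\sum_{j=1}^{\kappa(\epsilon)} |\mu(A_j)-\hat\mu_N(A_j)|$ uniformly over the sampling, on a high-probability event. Since the partition is fixed (it depends only on $\mathcal{P}$, $d$, and $\epsilon$, not on $f$ or the sample), $\hat\mu_N(A_j)$ is an average of i.i.d.\ Bernoulli$(\mu(A_j))$ variables. By a concentration inequality for the $\ell_1$ deviation of an empirical multinomial — e.g.\ McDiarmid's bounded-differences inequality applied to the function $(X_1,\dots,X_N)\mapsto \sum_j|\mu(A_j)-\hat\mu_N(A_j)|$, whose expectation is $O(\sqrt{\kappa(\epsilon)/N})$ by Jensen and coordinatewise variance bounds, and which changes by at most $2/N$ under a single coordinate change — we get that with probability at least $1-p$,
\[
\sum_{j=1}^{\kappa(\epsilon)} |\mu(A_j)-\hat\mu_N(A_j)| \;\le\; C\sqrt{\tfrac{\kappa(\epsilon)}{N}} + \sqrt{\tfrac{\log(2/p)}{2N}}
\]
for an absolute constant $C$. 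Crucially this event $\mathcal{E}^p_{Lip}$ is a single event not depending on $f$, which is exactly what makes the final bound uniform over the whole Lipschitz class. Combining with step one, on $\mathcal{E}^p_{Lip}$ every bounded Lipschitz $f$ satisfies
\[
\Bigl\|\textstyle\int f\,d\mu - \tfrac1N\sum_i f(X_i)\Bigr\|_\infty \;\le\; 2\epsilon L_f + B\Bigl(C\sqrt{\tfrac{\kappa(\epsilon)}{N}} + \sqrt{\tfrac{\log(2/p)}{2N}}\Bigr).
\]

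The third step is the optimization over $\epsilon$ that produces the stated form with $\xi(r)=\kappa(r)^2\log(\kappa(r))/r^2$ and $\xi^{-1}$. Here I would choose $\epsilon = \xi^{-1}(N)$: with this choice one checks $\sqrt{\kappa(\epsilon)/N} \asymp \xi^{-1}(N)\sqrt{\log \kappa(\epsilon)}$ (from $\xi(\epsilon)=N$ we get $\kappa(\epsilon)^2\log\kappa(\epsilon) = N\epsilon^2$, hence $\kappa(\epsilon)/N = \epsilon^2/\log\kappa(\epsilon)$, so $\sqrt{\kappa(\epsilon)/N} = \epsilon/\sqrt{\log\kappa(\epsilon)} \le \epsilon$), which collapses the discretization term and the sampling term into multiples of $\xi^{-1}(N)$, yielding the claimed
\[
2\xi^{-1}(N)L_f + \tfrac{1}{\sqrt2}\xi^{-1}(N)\,\|f\|_\infty\bigl(1+\sqrt{\log(2/p)}\bigr),
\]
after absorbing constants (the factor $\tfrac{1}{\sqrt2}$ and the ``$1+$'' come from keeping the $\sqrt{\log(2/p)/2N}$ term separate and bounding the $C\sqrt{\kappa/N}$ term by $\tfrac1{\sqrt2}\xi^{-1}(N)$ for $N$ large / by adjusting the absolute constant in the covering-multinomial bound). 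I would also note the monotonicity/invertibility of $\xi$ needed for $\xi^{-1}$ to make sense — $\xi$ is decreasing in $r$ on the relevant range since $\kappa$ is nonincreasing — matching the hypothesis implicitly used when the statement writes ``$\xi^{-1}$''.

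The main obstacle I expect is step two: getting the $\ell_1$ multinomial-deviation bound with the right $\sqrt{\kappa(\epsilon)/N}$ scaling \emph{and} a clean sub-Gaussian $\sqrt{\log(2/p)/N}$ tail simultaneously, so that after the $\epsilon=\xi^{-1}(N)$ substitution the $\kappa$-dependence is fully absorbed into $\xi^{-1}(N)$ and only the advertised $(1+\sqrt{\log(2/p)})$ factor remains. This is a careful but standard combination of a bound on $\mathbb{E}\sum_j|\mu(A_j)-\hat\mu_N(A_j)| \le \sum_j \sqrt{\mathrm{Var}(\hat\mu_N(A_j))} \le \sqrt{\kappa(\epsilon)\cdot \tfrac{1}{N}\sum_j \mu(A_j)(1-\mu(A_j))} \le \sqrt{\kappa(\epsilon)/(4N)}$ (Cauchy--Schwarz) with McDiarmid for the deviation above the mean; the bookkeeping of constants to land exactly on the stated coefficients is the only delicate part, and since the theorem is quoted from \cite{signal23} I would either reproduce their argument or cite it directly while presenting the reduction above as the conceptual skeleton.
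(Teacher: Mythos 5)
This theorem is not proved in the paper at all: it is quoted verbatim from \cite{signal23} (Theorem G.3 there), so there is no in-paper argument to compare against, only the cited source. Your skeleton is the same robustness-style argument that source uses — fix an $\epsilon$-cover, disjointify it into a partition that depends only on $\mathcal{P}$ and $\epsilon$ (hence the single event $\mathcal{E}^p_{Lip}$ is independent of $f$, which is what makes the bound uniform), split the error into a $2\epsilon L_f$ discretization term plus $\|f\|_\infty$ times the $\ell_1$ deviation of the empirical histogram, and finally set $\epsilon=\xi^{-1}(N)$. Where you differ is the concentration step: the form $\xi(\epsilon)=\kappa(\epsilon)^2\log(\kappa(\epsilon))/\epsilon^2$ and the exact coefficient $\tfrac{1}{\sqrt2}\bigl(1+\sqrt{\log(2/p)}\bigr)$ come from bounding the histogram deviation cell-by-cell with Hoeffding and a union bound over the $\kappa(\epsilon)$ cells (giving $\ell_1$ error $\lesssim \kappa\sqrt{(\log\kappa+\log(2/p))/(2N)}$, which with $\kappa\sqrt{\log\kappa}=\epsilon\sqrt{N}$ collapses exactly to the stated constants), whereas you use an expectation bound plus McDiarmid, which gives the sharper $\sqrt{\kappa/N}$ scaling but then only recovers the stated coefficients after extra bookkeeping. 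Two small slips in that bookkeeping: $\mathbb{E}\sum_j|\mu(A_j)-\hat\mu_N(A_j)|\le\sqrt{\kappa/(4N)}$ is not valid in general since $\sum_j\mu(A_j)(1-\mu(A_j))$ can be close to $1$ (the correct bound is $\sqrt{\kappa/N}$), and from $\xi(\epsilon)=N$ one gets $\kappa/N=\epsilon^2/(\kappa\log\kappa)$, not $\epsilon^2/\log\kappa$ — the latter error is in your favor, and neither breaks the argument qualitatively, but landing exactly on $2\xi^{-1}(N)L_f+\tfrac{1}{\sqrt2}\xi^{-1}(N)\|f\|_\infty(1+\sqrt{\log(2/p)})$ also requires checking $\xi^{-1}(N)\ge c/\sqrt{N}$, i.e.\ $\kappa\sqrt{\log\kappa}$ bounded below, so for the precise constants you would indeed need to either tighten these estimates or simply follow the per-cell Hoeffding computation of \cite{signal23}, as you propose.
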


\subsection{A Global Lipschitz Constant and Upper Bound of Features of MPNNs}\label{appendix:glob_lip}
Recall that we consider the space of IDMs of order-$0$ to be $\cH^0=\mathbb{B}^d_r= \{x\in \mathbb{R}^d : \|x\|_2\leq r \}\subset \mathbb{R}^d$ (for a fixed $r>0$). Moreover, recall that we defined the formal bias of a function $f: \R^{d_1}\mapsto\R^{d_2}$ to be $\norm{f(0)}_2$ \citep{maskey2022generalization,signal23} and the smallest Lipschitz constants of $f$ as $\norm{f}_{\rm L}$ (see \cref{lpischit}).
To apply \cref{theorem:monteCarlo} in our setting, we phrase the following theorems, which show that, under some assumptions, features of MPNNs are bounded Lipschitz Continuous Functions with a bound and Lipschitz constant that do not depend on the specific MPNN model. \cref{theorem:fet_bound} and \cref{theorem:lipschitz} straightforwardly lead to \cref{corollary:bias} and \cref{corollary:global_lipschitz}, respectively.
\begin{corollary}\label{corollary:bias}
Let $r>0$ (such that $\cH^0=\mathbb{B}^d_r$). Assume there exist constants ${\rm A}_1>0$ and ${\rm A}_2>0$, such that the Lipschitz constants $\norm{\varphi^{(t)}}_{\rm L}$ of $\varphi^{(t)}$ satisfy \(\norm{\varphi^{(t)}}_{\rm L}\leq {\rm A}_1\) and the formal biases $\norm{\varphi^{(t)}(0)}_{2}$ of $\varphi^{(t)}$ satisfy \(\norm{\varphi^{(t)}(0)}_{2}\leq {\rm A}_2\), for any $t\in[L]$ and any $L$-layer MPNN model $\varphi=\left(\varphi^{(t)}\right)^L_{t=0}$. Then there exists a constant ${\rm B_1}$ that depends only on $L, {\rm A}_1$ and ${\rm A}_2$, such that
\[
\|\hh(\varphi)^{(t)}_-\|_\infty \leq {\rm B_1}.\]
If, in addition, the Lipschitz constant $\norm{\psi}_{\rm L}$ of $\psi$ satisfies \(\norm{\psi}_{\rm L}\leq {\rm A}_1\) and the formal bias $\norm{\psi(0)}_{2}$ of $\psi$ satisfies $\norm{\psi(0)}_{2}\leq {\rm A}_2$, for any readout function $\psi$, then, there exists a constant ${\rm B_2}$ that depends only on $L,{\rm A}_1$ and ${\rm A}_2$, such that
\[
\|\HH(\varphi,\psi,-)\|_\infty \leq {\rm B_2}.
\]
\end{corollary}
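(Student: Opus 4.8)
The plan is to derive \cref{corollary:bias} as an immediate consequence of \cref{theorem:fet_bound}, which already gives an upper bound $\|\hh(\varphi)^{(t)}_-\|_\infty \leq {\rm B}_\varphi$ depending only on the number of layers $L$, the Lipschitz constants $\norm{\varphi^{(t)}}_{\rm L}$, and the formal biases ${\rm B}_{\varphi^{(t)}}$ of the update functions (and similarly $\|\HH(\varphi,\psi,-)\|_\infty \leq {\rm B}_{(\varphi,\psi)}$ in the presence of a readout). The only new content here is to observe that the explicit constant ${\rm B}_\varphi = {\rm B}^L_\varphi$ defined recursively in \cref{appendix:formalbias} is a \emph{monotone nondecreasing} function of each of the quantities $\norm{\varphi^{(t)}}_{\rm L}$ and ${\rm B}_{\varphi^{(t)}}$, so that imposing uniform bounds ${\rm A}_1, {\rm A}_2$ on all of these quantities yields a single constant that works for every MPNN model in the class.

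Concretely, first I would recall the recursion ${\rm B}^0_\varphi = r\cdot\norm{\varphi^{(0)}}_{\rm L} + {\rm B}_{\varphi^{(0)}}$ and ${\rm B}^t_\varphi = 2\norm{\varphi^{(t)}}_{\rm L}{\rm B}^{(t-1)}_\varphi + {\rm B}_{\varphi^{(t)}}$ for $0 < t \leq L$. Substituting the hypotheses $\norm{\varphi^{(t)}}_{\rm L} \leq {\rm A}_1$ and $\norm{\varphi^{(t)}(0)}_2 = {\rm B}_{\varphi^{(t)}} \leq {\rm A}_2$ into this recursion and using the fact that ${\rm B}^{(t-1)}_\varphi \geq 0$, one gets by a trivial induction on $t$ that ${\rm B}^t_\varphi \leq {\rm \widetilde B}^t$, where ${\rm \widetilde B}^0 := r{\rm A}_1 + {\rm A}_2$ and ${\rm \widetilde B}^t := 2{\rm A}_1{\rm \widetilde B}^{t-1} + {\rm A}_2$; in closed form ${\rm \widetilde B}^L$ depends only on $L$, ${\rm A}_1$, ${\rm A}_2$ (and the fixed radius $r$). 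Setting ${\rm B}_1 := {\rm \widetilde B}^L$ gives $\|\hh(\varphi)^{(t)}_-\|_\infty \leq {\rm B}_\varphi = {\rm B}^L_\varphi \leq {\rm B}_1$ for every such $\varphi$ and every $t \in [L]$ (using also that ${\rm B}^t_\varphi \leq {\rm B}^L_\varphi$, which is clear from the recursion since each step only adds a nonnegative term and multiplies by $2\norm{\varphi^{(t)}}_{\rm L}$; if instead one prefers a $t$-independent bound directly, take the max over $t$).

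For the second assertion, recall ${\rm B}_{(\varphi,\psi)} = \norm{\psi}_{\rm L}{\rm B}^L_\varphi + {\rm B}_\psi$ from \cref{appendix:formalbias}. Under the additional hypotheses $\norm{\psi}_{\rm L} \leq {\rm A}_1$ and $\norm{\psi(0)}_2 = {\rm B}_\psi \leq {\rm A}_2$, and using ${\rm B}^L_\varphi \leq {\rm B}_1$ from the previous paragraph, we obtain ${\rm B}_{(\varphi,\psi)} \leq {\rm A}_1{\rm B}_1 + {\rm A}_2 =: {\rm B}_2$, which again depends only on $L$, ${\rm A}_1$, ${\rm A}_2$. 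Then \cref{theorem:fet_bound} gives $\|\HH(\varphi,\psi,-)\|_\infty \leq {\rm B}_{(\varphi,\psi)} \leq {\rm B}_2$ for every MPNN model with readout in the class. I do not anticipate any genuine obstacle: the argument is a routine monotonicity-plus-induction bookkeeping exercise, and the only point requiring minor care is to verify that the recursion for ${\rm B}^t_\varphi$ is indeed coordinatewise monotone in the Lipschitz constants and formal biases, which is immediate from the nonnegativity of all the terms appearing.
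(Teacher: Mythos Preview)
Your proposal is correct and follows essentially the same approach as the paper: the paper also derives the corollary directly from \cref{theorem:fet_bound} by substituting the uniform bounds ${\rm A}_1,{\rm A}_2$ into the recursion for ${\rm B}^t_\varphi$, obtaining exactly your ${\rm \widetilde B}^t$ (which the paper calls ${\rm B}^t$), and setting ${\rm B}_1={\rm \widetilde B}^L$ and ${\rm B}_2={\rm A}_1{\rm B}_1+{\rm A}_2$. Your remark about taking the max over $t$ to get a $t$-independent bound is actually a small point of extra care not made explicit in the paper.
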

Let ${\rm A}_1,{\rm A}_2$. Define by 
 ${\rm B}_1: = {\rm B}^L$ and ${\rm B}_2:={\rm A}_1{\rm B}_1+{\rm A}_2$, when ${\rm B}^t\geq0$ is inductively defined for $t\in \{0,\ldots,L\}$ by
\begin{align}\label{eq:Bconst}
{\rm B}^t:=\begin{cases}
r{\rm A}_1+{\rm A}_2 &\text{ }:\text{if } t=0,\\2{\rm A}_1{\rm B}^{(t-1)}+{\rm A}_2&\text{ }:\text{if } 0<t\leq L,
\end{cases}
\end{align}
Based on \cref{theorem:fet_bound}, it easily follows that ${\rm B}_1$ and ${\rm B}_2$ provide upper bounds for $\hh(\varphi)^{(L)}_-$ and $\HH(\varphi,\psi,-)$ for all $(\varphi,\psi)$ that satisfy the above assumptions, respectively. 
\begin{corollary}\label{corollary:global_lipschitz}
Assume there exist constants ${\rm A}_1>0$ and ${\rm A}_2>0$, such that the Lipschitz constants $\norm{\varphi^{(t)}}_{\rm L}$ of $\varphi^{(t)}$ satisfy \(\norm{\varphi^{(t)}}_{\rm L}\leq {\rm A}_1\) and the formal biases $\norm{\varphi^{(t)}(0)}_{2}$ of $\varphi^{(t)}$ satisfy \(\norm{\varphi^{(t)}(0)}_{2}\leq {\rm A}_2\), for any $t\in[L]$ and any $L$-layer MPNN model $\varphi=\left(\varphi^{(t)}\right)^L_{t=0}$. Then there exists a constant ${\rm C}_1$ that depends only on $L$ and ${\rm A}_1$ and ${\rm A}_2$, such that
\[
\|\hh(\varphi,\alpha)^{(L)} - \hh(\varphi,\beta)^{(L)}\|_2 \leq {\rm C}_1 \cdot \TD^L(\alpha, \beta),
\]
for all $\alpha, \beta \in \mathcal{H}^L$. If, in addition, the Lipschitz constant $\norm{\psi}_{\rm L}$ of $\psi$ satisfies \(\norm{\psi}_{\rm L}\leq {\rm A}_1\) and the formal bias $\norm{\psi(0)}_{2}$ of $\psi$ satisfies $\norm{\psi(0)}_{2}\leq {\rm A}_2$, for any readout function $\psi$, then, for all $\mu, \nu \in \mathscr{P}(\mathcal{H}^L)$, there exists a constant ${\rm C}_2$ that depends only on $L, {\rm A}$ and ${\rm B}$, such that
\[
\|\HH(\varphi,\psi,\mu) - \HH(\varphi,\psi,\nu)\|_2 \leq {\rm C}_2 \cdot \OT_{\TD^L}(\mu, \nu).
\]
\end{corollary}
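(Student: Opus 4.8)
The plan is to obtain this as a direct \emph{uniformization} of \cref{theorem:lipschitz} over the class of $L$-layer MPNN models whose update (and readout) functions have Lipschitz constants bounded by ${\rm A}_1$ and formal biases bounded by ${\rm A}_2$. Recall that the proof of \cref{theorem:lipschitz} exhibits, for a model $\varphi$, the explicit Lipschitz constant $C_\varphi = C_\varphi^L$ given by the recursion $C_\varphi^0 = \norm{\varphi^{(0)}}_{\rm L}$ and $C_\varphi^t = 2\norm{\varphi^{(t)}}_{\rm L}\bigl(\norm{\hh(\varphi)^{(t-1)}_-}_\infty + C_\varphi^{t-1}\bigr)$ for $0 < t \le L$, and, in the presence of a readout $\psi$, the constant $C_{(\varphi,\psi)} = \norm{\psi}_{\rm L}\bigl(\norm{\hh(\varphi)^{(L)}_-}_\infty + C_\varphi\bigr)$. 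Since \cref{theorem:lipschitz} already gives the two inequalities with these constants, it suffices to bound $C_\varphi$ and $C_{(\varphi,\psi)}$ by quantities depending only on $L$, ${\rm A}_1$, ${\rm A}_2$ (the fixed radius $r$ being a global constant).

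First I would invoke \cref{corollary:bias} (equivalently, \cref{theorem:fet_bound} together with the hypotheses): under the stated bounds, $\norm{\hh(\varphi)^{(t)}_-}_\infty \le {\rm B}^t$ for every $t \in [L]$, where ${\rm B}^t$ is the uniform recursion \eqref{eq:Bconst}. Substituting this into the recursion for $C_\varphi^t$, together with $\norm{\varphi^{(t)}}_{\rm L} \le {\rm A}_1$, a one-line induction on $t$ gives $C_\varphi^t \le {\rm C}^t$, where ${\rm C}^0 := {\rm A}_1$ and ${\rm C}^t := 2{\rm A}_1\bigl({\rm B}^{t-1} + {\rm C}^{t-1}\bigr)$ for $0 < t \le L$. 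As ${\rm C}^L$ is the value of a finite recursion in ${\rm A}_1$, ${\rm A}_2$, $r$, and $L$ only, setting ${\rm C}_1 := {\rm C}^L$ and applying the first inequality of \cref{theorem:lipschitz} yields the first claim. For the readout part, imposing in addition $\norm{\psi}_{\rm L} \le {\rm A}_1$ and $\norm{\psi(0)}_2 \le {\rm A}_2$ and using $\norm{\hh(\varphi)^{(L)}_-}_\infty \le {\rm B}_1 = {\rm B}^L$ from \cref{corollary:bias}, we get $C_{(\varphi,\psi)} \le {\rm A}_1\bigl({\rm B}^L + {\rm C}^L\bigr)$; setting ${\rm C}_2 := {\rm A}_1({\rm B}_1 + {\rm C}_1)$ and applying the second inequality of \cref{theorem:lipschitz} finishes the proof.

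I do not expect a substantial obstacle here: the corollary is purely bookkeeping on top of \cref{theorem:lipschitz}. The only point that needs care is the legitimacy of replacing each per-layer quantity by its worst-case bound, which rests on the observation that all coefficients in the recursions for $C_\varphi^t$ and ${\rm B}_\varphi^t$ are non-negative, so these recursions are monotone non-decreasing in $\norm{\varphi^{(s)}}_{\rm L}$ and $\norm{\varphi^{(s)}(0)}_2$ for $s \le t$. The feature-bound half of this monotonicity is already encapsulated in \cref{corollary:bias}, so in practice the argument reduces to the short induction producing ${\rm C}^L$ and the final substitution for the readout term.
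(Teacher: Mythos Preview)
Your proposal is correct and follows essentially the same approach as the paper: define ${\rm C}^t$ by the uniform recursion ${\rm C}^0={\rm A}_1$, ${\rm C}^t=2{\rm A}_1({\rm B}^{t-1}+{\rm C}^{t-1})$, set ${\rm C}_1={\rm C}^L$ and ${\rm C}_2={\rm A}_1({\rm B}_1+{\rm C}_1)$, and observe via \cref{theorem:lipschitz} and \cref{corollary:bias} that these dominate $C_\varphi$ and $C_{(\varphi,\psi)}$ for all admissible models. Your explicit remark about monotonicity of the recursions is a bit more careful than the paper, which simply asserts that the claim ``easily follows.''
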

Let ${\rm A}_1,{\rm A}_2>0$. Define by 
 ${\rm C}_1: = {\rm C}^L$ and ${\rm C}_2:={\rm A}_1 ({\rm B}_1 + {\rm C}_1)$, when ${\rm C}^t\geq0$ is inductively defined for $t\in \{0,\ldots,L\}$ by
\begin{align*}
{\rm C}^t :=\begin{cases}
     {\rm A}_1 &\text{ }:\text{if } t=0,\\
     2{\rm A}_1( B^{t-1}+ C^{t-1}) &\text{ }:\text{if } 0<t\leq L,
\end{cases}
\end{align*}
${\rm B}^{t}$ is defined for $t\in\{0,\cdot,L\}$ in \cref{eq:Bconst}, and ${\rm B}_1:={\rm B}^L$. Based on \cref{theorem:lipschitz}, it easily follows that ${\rm C}_1$ and ${\rm C}_2$ are Lipschitz constants of $\hh(\varphi)^{(L)}_-$ and $\HH(\varphi,\psi,-)$ for all $(\varphi,\psi)$ that satisfy the above assumptions, respectively.

\subsection{A Generalization Theorem for MPNNs}
In classification tasks our goal is to classify the input space into K classes. We look at the product probability metric space $\mathcal{P}=\X\times\mathbb{R}^K$, when the metric space $(\X,d)$ is either $(\mathscr{P}(\cH^L),\OT_{\TD^L})$ or $(\GS^d_r,\TMD^{L})$, and use $L$-layer MPNNs with readout. Our loss $\mathcal{E}$ is a Lipschitz loss function with a Lipschitz constant ${\rm C}_{\mathcal{E}}$ and our output vectors are vectors $\vec{v}\in\R^K$. Each entry $(\vec{v})_k$ of an output vector $\vec{v}$, depicts the probability that the input belongs to class $0<k\leq K$. Although loss functions like cross-entropy are not Lipschitz continuous, composing cross-entropy on softmax is Lipschitz continuous, which is usually how cross-entropy is being used. Recall that we defined the formal bias of a function $f: \R^{d_1}\mapsto\R^{d_2}$ to be $\norm{f(0)}_2$ \citep{maskey2022generalization,signal23} and the smallest Lipschitz constants of $f$ as $\norm{f}_{\rm L}$ (see \cref{lpischit}).

Fix $L\in\mathbb{N}$ and ${\rm A}_2,{\rm A}_1>0$. Let $\Theta$ be the set of all $L$-layer MPNN models with readout $\left(\left(\varphi^{(t)}\right)_{t\in[L]},\psi\right)$ such that the Lipschitz constants 
$\norm{\varphi^{(t)}}_{\rm L}$
and $\norm{\psi}_{\rm L}$
are bounded by \({\rm A}_1\) and the formal biases 
\(\norm{\varphi^{(t)}(0)}_{2}\) and \(\norm{\psi(0)}_{2}\) are bounded by \({\rm A}_2\). Consider a Lipschtiz continuous loss $\mathcal{E}$ with Lipschtiz constant $C_{\mathcal{E}}$.
In \cref{appendix:glob_lip}, We show that there exist ${\rm C}_{\Theta},{\rm B}_{\Theta}>0$ that depend on $L,A_1,A_2$ such that $\Theta\subseteq\mathrm{Lip}(\X, {\rm C}_{\Theta},{\rm B}_{\Theta})$. Here, $\mathrm{Lip}(\X, {\rm C}_{\Theta},{\rm B}_{\Theta})$ is the set of all bounded Lipschitz continuous functions $f:\X\mapsto\R^{d_L}$ with bounded Lipschitz constants $\norm{f}_{\rm L}\leq{\rm C}_{\Theta}$ and with bounded norms $\norm{f}_\infty\leq{\rm B}_{\Theta}$. As a result, if we prove a generalization bound using $\mathrm{Lip}(\X, {\rm C}_{\Theta},{\rm B}_{\Theta})$ as the hypothesis class, the bound would also be satisfied for the hypothesis class $\Theta$. 
\begin{lemma}\label{lemma:lipinfbound_for_gen}
Let $\mathfrak{M}\in{\rm Lip}(\X,{\rm C}_{\Theta},{\rm B}_{\Theta})$ and $\mathcal{E}$ a loss function with a Lipschitz constant ${\rm C}_{\mathcal{E}}$. Then $$\norm{\mathcal{E}(\mathfrak{M}(\cdot),\cdot)}_\infty\leq {\rm C}_{\mathcal{E}}({\rm B}_{\Theta}+1)+\abs{\mathcal{E}(0,0)}\quad {\rm and} \quad \norm{\mathcal{E}(\mathfrak{M}(\cdot),\cdot)}_{\rm L}\leq{\rm C}_{\mathcal{E}}\max({\rm C}_{\Theta},1).$$
\end{lemma}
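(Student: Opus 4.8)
The plan is to bound the two seminorms of the composed function $g := \mathcal{E}(\mathfrak{M}(\cdot),\cdot) : \X\times\{0,1\}^K \to \R$ directly, using only that $\mathfrak{M}\in{\rm Lip}(\X,{\rm C}_{\Theta},{\rm B}_{\Theta})$, that $\mathcal{E}$ is Lipschitz with constant ${\rm C}_{\mathcal{E}}$, and elementary properties of the product metric on $\X\times\{0,1\}^K$ (the $\ell_1$ product metric, so that $d((\nu,y),(\nu',y')) = d_\X(\nu,\nu') + \|y-y'\|$, and note $\|y-y'\|\le 1$ whenever $y\ne y'$ in $\{0,1\}^K$, or more conservatively $\|y-y'\|_1 \le K$ — I will use whichever bound the paper's loss convention makes cleanest; since the statement writes just $({\rm B}_{\Theta}+1)$ and $\max({\rm C}_{\Theta},1)$, the intended normalization treats a one-hot label change as contributing at most $1$).

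\textbf{Bound on $\norm{g}_\infty$.} For any $(\nu,y)$, write $\mathcal{E}(\mathfrak{M}(\nu),y) = \big(\mathcal{E}(\mathfrak{M}(\nu),y) - \mathcal{E}(0,0)\big) + \mathcal{E}(0,0)$. By the Lipschitz property of $\mathcal{E}$ on the product of its two arguments,
\[
\big|\mathcal{E}(\mathfrak{M}(\nu),y) - \mathcal{E}(0,0)\big| \le {\rm C}_{\mathcal{E}}\big(\norm{\mathfrak{M}(\nu)}_2 + \norm{y}\big) \le {\rm C}_{\mathcal{E}}\big({\rm B}_{\Theta} + 1\big),
\]
using $\norm{\mathfrak{M}(\nu)}_2 \le \norm{\mathfrak{M}}_\infty \le {\rm B}_{\Theta}$ and $\norm{y}\le 1$ for $y\in\{0,1\}^K$ one-hot. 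Adding $\abs{\mathcal{E}(0,0)}$ gives the first claimed inequality. Taking the essential supremum (or plain supremum, since the bound is uniform in $(\nu,y)$) over $\X\times\{0,1\}^K$ finishes this part.

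\textbf{Bound on $\norm{g}_{\rm L}$.} For two points $(\nu,y)$, $(\nu',y')$,
\[
\big|\mathcal{E}(\mathfrak{M}(\nu),y) - \mathcal{E}(\mathfrak{M}(\nu'),y')\big| \le {\rm C}_{\mathcal{E}}\big(\norm{\mathfrak{M}(\nu) - \mathfrak{M}(\nu')}_2 + \norm{y - y'}\big) \le {\rm C}_{\mathcal{E}}\big({\rm C}_{\Theta}\, d_\X(\nu,\nu') + \norm{y-y'}\big),
\]
where the first inequality is the Lipschitz property of $\mathcal{E}$ and the second uses $\norm{\mathfrak{M}}_{\rm L}\le{\rm C}_{\Theta}$. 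Since $\norm{y-y'} \le 1 \le 1\cdot\|y - y'\|$ when $y\ne y'$ (and the term vanishes when $y = y'$), the right-hand side is bounded by ${\rm C}_{\mathcal{E}}\max({\rm C}_{\Theta},1)\big(d_\X(\nu,\nu') + \norm{y-y'}\big) = {\rm C}_{\mathcal{E}}\max({\rm C}_{\Theta},1)\cdot d_{\X\times\{0,1\}^K}\big((\nu,y),(\nu',y')\big)$, so ${\rm C}_{\mathcal{E}}\max({\rm C}_{\Theta},1)$ is a valid Lipschitz constant of $g$, hence an upper bound for $\norm{g}_{\rm L}$ (the optimal one).

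The only real point of care — and the step I expect to need a line of justification rather than being fully routine — is matching the metric conventions: I must make sure the product metric used on $\X\times\{0,1\}^K$ in the generalization theorem is the $\ell_1$-type product metric of the paper's standing convention (so the two contributions simply add), and that the loss $\mathcal{E}$ is assumed Lipschitz jointly in $(\text{prediction}, \text{label})$ with respect to the $\ell_2$-norm on predictions — both of which are consistent with the setup preceding \cref{theorem:generalization} and with the definitions in \cref{lpischit}. Once those conventions are pinned down, both inequalities are immediate consequences of the triangle inequality and the defining bounds of ${\rm Lip}(\X,{\rm C}_{\Theta},{\rm B}_{\Theta})$, with no further estimates required.
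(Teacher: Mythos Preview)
Your proposal is correct and follows essentially the same approach as the paper's proof: triangle inequality plus Lipschitz of $\mathcal{E}$ against the anchor point $(0,0)$ for the $\|\cdot\|_\infty$ bound, and the direct two-point Lipschitz estimate for the $\|\cdot\|_{\rm L}$ bound, followed by absorbing the two coefficients ${\rm C}_\Theta$ and $1$ into $\max({\rm C}_\Theta,1)$. One small cleanup: your sentence ``Since $\|y-y'\|\le 1 \le 1\cdot\|y-y'\|$ when $y\ne y'$'' is garbled and unnecessary --- the inequality ${\rm C}_\Theta\, d_\X(\nu,\nu') + \|y-y'\| \le \max({\rm C}_\Theta,1)\big(d_\X(\nu,\nu') + \|y-y'\|\big)$ follows immediately from ${\rm C}_\Theta \le \max({\rm C}_\Theta,1)$ and $1 \le \max({\rm C}_\Theta,1)$, with no case analysis on $y$.
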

\begin{proof}
For the first inequality,
\begin{align*}
\abs{\mathcal{E}(\mathfrak{M}(x),y)}&\leq\abs{\mathcal{E}(\mathfrak{M}(x),y)-\mathcal{E}(0,0)}+\abs{\mathcal{E}(0,0)}\\&\leq {\rm C}_{\mathcal{E}}(\norm{\mathfrak{M}(x)}_2+\norm{y}_2)+\abs{\mathcal{E}(0,0)}\\&\leq {\rm C}_{\mathcal{E}}(\norm{\mathfrak{M}(x)}_2+1)+\abs{\mathcal{E}(0,0)}.
\end{align*}
Thus, $$\norm{\mathcal{E}(\mathfrak{M}(\cdot),\cdot)}_\infty\leq {\rm C}_{\mathcal{E}}(\norm{\mathfrak{M}(\cdot)}_\infty+1)+\abs{\mathcal{E}(0,0)}\leq{\rm C}_{\mathcal{E}}({\rm B}_{\Theta}+1)+\abs{\mathcal{E}(0,0)}.$$
For the second inequality,
\begin{align*}
\abs{\mathcal{E}(\mathfrak{M}(x),y)-\mathcal{E}(\mathfrak{M}(x'),y')}&\leq {\rm C}_{\mathcal{E}}(\norm{\mathfrak{M}(x)-\mathfrak{M}(x')}_2+\norm{y-y'}_2)\\&\leq {\rm C}_{\mathcal{E}}({\rm C}_{\Theta}d(x,x')+\norm{y-y'}_2)\\&\leq{\rm C}_{\mathcal{E}}\max({\rm C}_{\Theta},1)(d(x,x')+\norm{y-y'}_2).
\end{align*}
\end{proof}
Although the covering numbers of the DIDMs' spaces and the graphon-signals space are currently unknown, the graphon-signal space might have a smaller covering number. This can potentially improve the generalization bound, thus stating the theorem for the  graphon-signal space is indeed a meaningful fact. Thus, we express our generalization bounds on the space of DIDMs, which is mathematically more general than a formulation on the space of graphon-signals and on the space of graphon-signals. Following the proof of Theorem G.4 in \cite{signal23}, we prove the next theorem via \cref{theorem:monteCarlo}.
\MPNNsGeneralizationTheorem*
\begin{proof} 
From \cref{theorem:monteCarlo} we get the following. For every $p > 0$, there exists an event $\mathcal{E}_i^p \subset \X^N$ regarding the choice of $(X_1,\ldots,X_N) \subseteq \left(\X\times\{0,1\}^K\right)^N$, where $X_i=(\nu_i,Y_i)\in\X\times\{0,1\}^K$, with probability
\[
\tau^N(\mathcal{E}^p) \geq 1 - p,
\]
such that for every function $\mathfrak{M}$ in the hypothesis class $\text{Lip}(\X, C_\Theta,B_\Theta)$, we have
\begin{align} \label{inequality:generalizationProof}
&\left|\int {\mathcal{E}}(\mathfrak{M}(\nu),y)d\tau(\nu,y) - \frac{1}{N}\sum_{i=1}^N\mathcal{E}(\mathfrak{M}(\nu_i),Y_i)\right|\\
&\leq \xi^{-1}(N)\left(2\norm{\mathcal{E}(\mathfrak{M}'(\cdot),\cdot)}_{\rm L} + \frac{1}{\sqrt{2}}(\norm{\mathcal{E}(\mathfrak{M}'(\cdot),\cdot)}_\infty\left(1 + \sqrt{\log(2/p)}\right)\right) 
\\
&\leq \xi^{-1}(N)\left(2{\rm C} + \frac{1}{\sqrt{2}}{\rm B}\left(1 + \sqrt{\log(2/p)}\right)\right), 
\end{align}

where $\xi(N) = \frac{ \kappa(N)^2 \log(\kappa(N))}{N^2}$, $\kappa(\epsilon)$ is the covering number of $\X\times\{0,1\}^K$, and $\xi^{-1}$ is the inverse function of $\xi$. In the last inequality, we used \cref{lemma:lipinfbound_for_gen}. 

Since \cref{inequality:generalizationProof} is true for any $\mathfrak{M} \in \text{Lip}(\X, C_{\Theta},B_{\Theta})$, it is also true for $\mathfrak{M}_\mathbf{X}$ for any realization of $\mathbf{X}$, so we have 
\[
\left| \mathcal{R}(\mathfrak{M}_{\mathbf{X}}) - \hat{\mathcal{R}}_\mathbf{X}(\mathfrak{M}_{\mathbf{X}}) \right| \leq \xi^{-1} (N) \left( 2{\rm C} + \frac{1}{\sqrt{2}} {\rm B} \left(1 + \sqrt{\log(2/p)}\right) \right)
\]
\end{proof}
\section{Prokhorov Distance for DIDM Metrics}\label{appendix:TPD}
For completeness, we show an alternative approach to define a metric on graphons through IDMs and DIDMs using Prokhorv metric.

\subsection{Definition and Basic Properties of Prokhorov Distance}\label{appendix:prokhorov}
Let $\X$ be a complete separable metric space with Borel $\sigma$-algebra $\mathcal{B}$. We define $A^\epsilon := \{y \in S \mid d(x,y) \leq \epsilon \text{ for some } x \in A\}$  for a subset $A \subseteq \X$ and $\epsilon \geq 0$. Then, the Prokhorov metric $\PP$ on $\mathscr{M}_{\leq1}(\X,\mathcal{B})$ is given by

\begin{equation*}
\PP_d(\mu,\nu) := \inf\{\epsilon > 0 \mid \mu(A) \leq \nu(A^\epsilon) + \epsilon \text{ and } \nu(A) \leq \mu(A^\epsilon) + \epsilon \text{ for every } A \in \mathcal{B}\}.
\end{equation*}

The following theorem shows that Prokhorov metric is topologically equivalent to $\OT_d$ on any complete separable metric space $(\X,d)$.

\begin{lemma}[
\cite{prok99}, Theorem 1.11]\label{lemma:Pmetric} Let $(\X,d)$ be a complete separable metric space. Then, $(\mathscr{M}(\X),\PP_d)$ is a complete separable metric space, and convergence in $\PP_d$ is equivalent to weak$^*$ convergence of measures.
\end{lemma}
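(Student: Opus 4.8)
This is the classical L\'evy--Prokhorov theorem (in the statement $\mathscr{M}(\X)$ should be read as $\mathscr{M}_{\leq1}(\X)$, the sub-probability measures on which $\PP_d$ was defined). A quick route would invoke \cref{lemma:metricinequalities}: since there $\PP_d$ is sandwiched in a bi-H\"older fashion between $\OT_d$ and ${\rm \mathbf{BL}}$ on $\mathscr{M}_{\leq1}(\X)$, the facts that $\PP_d$ is a metric and metrizes the weak$^*$ topology would follow from the corresponding statements for $\OT_d$ in \cref{lemma:Wmetric}. For a self-contained argument, though, I would prove it directly in three stages: (i) $\PP_d$ is a metric; (ii) $\PP_d$ metrizes weak$^*$ convergence; (iii) $(\mathscr{M}_{\leq1}(\X),\PP_d)$ is complete and separable. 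Only (ii) and (iii) use completeness and separability of $\X$.

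For (i), non-negativity and symmetry are built into the definition; the triangle inequality I would get from the inclusion $(A^{\epsilon_1})^{\epsilon_2}\subseteq A^{\epsilon_1+\epsilon_2}$, which turns $\PP_d(\mu,\nu)<\epsilon_1$ and $\PP_d(\nu,\rho)<\epsilon_2$ into $\mu(A)\leq\nu(A^{\epsilon_1})+\epsilon_1\leq\rho(A^{\epsilon_1+\epsilon_2})+\epsilon_1+\epsilon_2$ for every Borel $A$, and symmetrically; and definiteness from letting $\epsilon\downarrow0$ on closed sets (using $\bigcap_{\epsilon>0}F^\epsilon=F$) to conclude $\mu=\nu$ on closed sets, hence on $\mathcal{B}(\X)$. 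The easy half of (ii): if $\PP_d(\mu_n,\mu)<\epsilon_n\downarrow0$ then for closed $F$, $\limsup_n\mu_n(F)\leq\mu(F^{\epsilon_n})+\epsilon_n\to\mu(F)$, and $F=\X$ gives $\|\mu_n\|\to\|\mu\|$; the portmanteau characterization of weak$^*$ convergence of sub-probability measures then gives $\mu_n\to\mu$.

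The hard half of (ii) — weak$^*$ convergence implies $\PP_d\to0$ — I expect to be the main obstacle, and it is where separability and completeness of $\X$ enter. Fixing $\epsilon>0$, I would use Ulam's theorem (finite Borel measures on Polish spaces are tight) to pick a compact $K$ with $\mu(\X\setminus K)<\epsilon$, cover $K$ by finitely many $\epsilon/3$-balls, and refine to a finite Borel partition $\{A_1,\dots,A_m\}$ of $K$ with $\mathrm{diam}(A_i)<\epsilon$ and $\mu(\partial A_i)=0$ (only countably many concentric radii have positive-$\mu$-mass boundary spheres), putting $A_0=\X\setminus K$. By the continuity-set form of portmanteau, $\mu_n(A_i)\to\mu(A_i)$ for each $i\geq1$. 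Then for an arbitrary Borel $A$, with $J=\{i\geq1:A_i\cap A\neq\emptyset\}$ one has $\bigcup_{i\in J}A_i\subseteq A^\epsilon$, so for large $n$, $\mu_n(A)\leq\mu_n(A_0)+\sum_{i\in J}\mu_n(A_i)\leq\mu(A^\epsilon)+(m+2)\epsilon$, and symmetrically for $\mu(A)$; since $m$ depends only on $\epsilon$ and not on $A$, this forces $\PP_d(\mu_n,\mu)\to0$.

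Finally, for (iii): separability comes from observing that the (countably many) measures with rational weights supported on a fixed countable dense subset of $\X$ are weak$^*$-dense in $\mathscr{M}_{\leq1}(\X)$, hence $\PP_d$-dense by (ii). For completeness I would show that a $\PP_d$-Cauchy sequence $(\mu_n)$ is \emph{uniformly} tight — given $\delta>0$, pick $N$ with $\PP_d(\mu_n,\mu_N)<\delta$ for $n\geq N$, a compact $K_0$ with $\mu_j(\X\setminus K_0)<\delta$ for all $j\leq N$ (finite union of tight sets), so that $\mu_n(\X\setminus K_0^{\delta})<2\delta$ for all $n$ — then apply Prokhorov's compactness theorem to extract a weak$^*$-convergent subsequence with limit $\mu$, where $\|\mu\|\leq1$ by the mass bound from the easy half of (ii); a Cauchy sequence with a convergent subsequence converges, so $\mu_n\to\mu$ in $\PP_d$. (Completeness also transfers along the sandwich of \cref{lemma:metricinequalities} from completeness of $(\mathscr{M}_{\leq1}(\X),\OT_d)$.)
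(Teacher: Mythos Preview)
The paper gives no proof of this lemma; it is cited verbatim as Theorem~1.11 of \cite{prok99}. Your self-contained argument is the standard L\'evy--Prokhorov proof and is correct in substance. Two small bookkeeping points are worth tightening. In the hard direction of (ii), the bound $\mu_n(A)\leq\mu(A^\epsilon)+(m+2)\epsilon$ does not by itself force $\PP_d(\mu_n,\mu)\to0$, since $m=m(\epsilon)$ may grow as $\epsilon\downarrow0$; the fix is to choose the per-cell error below $\epsilon/(m+1)$ rather than $\epsilon$, which gives $\mu_n(A)\leq\mu(A^{2\epsilon})+2\epsilon$ uniformly in $A$ for all large $n$. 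In (iii), the set $K_0^\delta$ is closed but generally not compact, so it does not yet witness uniform tightness; iterate your construction over $\delta_k=2^{-k}\eta$ and take $K=\bigcap_k K_0(\delta_k)^{\delta_k}$, which is closed and totally bounded (each $K_0(\delta_k)^{\delta_k}$ is covered by finitely many $2\delta_k$-balls), hence compact in the complete space $\X$, with $\mu_n(\X\setminus K)<2\eta$ for all $n$.

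One caveat on your ``quick route'': in this paper's logical order, \cref{lemma:Wmetric} is itself deduced \emph{from} the fact that $\PP_d$ metrizes weak$^*$ (i.e., from the present lemma, cited there as \cite{prok99}, Theorem~1.11) together with \cref{lemma:metricinequalities}; see the paragraph immediately preceding \cref{lemma:Wmetric}. Invoking \cref{lemma:Wmetric} to prove \cref{lemma:Pmetric} would therefore be circular.
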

The following definition presents an alternative metric to $\TMD^L$ (see \cref{section:metricsDefinition}).
\begin{definition}[DIDM Prokhorov’s Distance]
Given $(W_a,f_a)$, $(W_b,f_b)$, two graphon-signals and $L \geq 1$, the tree prokhorov’s distance between $(W_a, f_a)$ and $(W_b, f_b)$ is defined as 
\begin{equation*}
    \TPD^L((W_a,f_a), (W_b,f_b)) =\PP_{\PD^L} (\Gamma_{(W_a,f_a),L}, \Gamma_{(W_b,f_b),L}),
\end{equation*}
\end{definition}
By following the proof of \cref{theorem:wellDefined}, with \cref{lemma:Pmetric}, the following result is obtained.
\begin{theorem}  
 Let $L\in\NN$. The metrics  $\PD^L$ on $\mathcal{H}^L$, $\PP_{\PD^{L}}$ on $\mathscr{P}(\mathcal{H}^L)$ and $\mathscr{M}_{\leq1}(\cH^L)$ are well-defined. Moreover, $\PP_{\PD^{L}}$ metrizes the weak$^*$ topology of $\mathscr{M}_{\leq1}(\mathcal{H}^L)$ and $\mathscr{P}(\mathcal{H}^L)$.
\end{theorem}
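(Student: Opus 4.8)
The plan is to run the same induction on $L$ as in the proof of \cref{theorem:wellDefined}, simply replacing every invocation of \cref{lemma:Wmetric} (which says that $\OT_d$ metrizes the weak$^*$ topology on $\mathscr{M}_{\leq1}(\X)$ and $\mathscr{P}(\X)$ for a complete separable metric space $(\X,d)$) by \cref{lemma:Pmetric} (the analogous statement for the Prokhorov metric $\PP_d$). The inductive hypothesis I would carry is: $(\cH^L,\PD^L)$ is a compact — hence complete and separable — metric space whose topology is exactly the product topology used to define $\mathcal{B}(\cH^L)$.

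For the base case $L=0$, we have $(\cH^0,\PD^0)=(\mathbb{B}^p_r,\|\cdot\|_2)$, a compact metric space, so \cref{lemma:Pmetric} gives that $\PP_{\PD^0}$ is a genuine metric on $\mathscr{M}_{\leq1}(\cH^0)$ and on $\mathscr{P}(\cH^0)$ and that it induces the weak$^*$ topology; in particular the Borel $\sigma$-algebra generated by $\PP_{\PD^0}$ agrees with the standard Borel structure on these measure spaces, which is precisely the ``well-defined'' assertion. For the induction step, assuming the claim at level $L-1$: the space $\cM^L=\mathscr{M}_{\leq1}(\cH^{L-1})$ carries the weak$^*$ topology, which by the induction hypothesis together with \cref{lemma:Pmetric} is metrized by $\PP_{\PD^{L-1}}$; moreover, since $\cH^{L-1}$ is compact metrizable, so are $\mathscr{M}_{\leq1}(\cH^{L-1})$ and $\mathscr{P}(\cH^{L-1})$ (as recalled in the Background and in \cref{appendix:compactness}), hence they are complete and separable. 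Then $\cH^L=\prod_{j\le L}\cM^j$ is a finite product of complete separable metric spaces, so with the $\ell_1$ product metric $\PD^L$ it is again complete and separable, it metrizes the product topology on $\cH^L$ because $\PD^L$ is a product metric, and it is compact by Tychonoff. Applying \cref{lemma:Pmetric} once more to the complete separable space $(\cH^L,\PD^L)$ yields that $\PP_{\PD^L}$ is well-defined and metrizes the weak$^*$ topology on both $\mathscr{M}_{\leq1}(\cH^L)$ and $\mathscr{P}(\cH^L)$, closing the induction and giving the theorem.

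The only point requiring care — and the analogue of the subtlety discussed in the ``Weak$^*$ Topology'' subsection — is to keep the product-of-weak$^*$ topology on $\cH^L$ (which is what defines $\mathcal{B}(\cH^L)$, and hence gives meaning to the very symbols $\mathscr{M}_{\leq1}(\cH^L)$, $\mathscr{P}(\cH^L)$) literally equal, at every level, to the topology induced by $\PD^L$. This hinges on two things that the induction must propagate: that $\PP_{\PD^{j-1}}$ metrizes the weak$^*$ topology on each $\cM^j$ (which needs separability and completeness to be inherited at each stage — supplied by compact metrizability of spaces of sub-probability measures on compact metric spaces), and that forming the $\ell_1$ product of these metrics preserves the product topology. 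Neither introduces a new difficulty beyond this bookkeeping, which is why the statement follows by following the proof of \cref{theorem:wellDefined}; the expected ``hard part'' is merely making sure the compactness/metrizability facts for measure spaces are invoked before, not after, one asserts that $\PP_{\PD^{j-1}}$ is a metric at level $j$.
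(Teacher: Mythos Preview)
Your proposal is correct and takes essentially the same approach as the paper: the paper's proof is a one-liner that says ``by following the proof of \cref{theorem:wellDefined}, with \cref{lemma:Pmetric}, the following result is obtained,'' and you have carried out precisely that substitution-and-induction, with the added (and helpful) bookkeeping about compactness, completeness, and separability being propagated at each level.
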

Except for the computability (\cref{theorem:TMDcomputability}), all the results in this paper, can be rephrased using $\PD^L$, $\PP_{\PD^L}$, and $\TPD^L$ instead of $\TD^L$, $\OT_{\TD^L}$, and $\TMD^L$. The only that remains, is to discuss $\TPD^L$ compatibility.

\subsection{The Computability of the DIDM Prokhorov Distance}
\label{appendix:computability}
 The metric $\TPD^L$ is polynomial-time computable. \cite{expr23} prove the \cref{helperComputability} by generalizing an observation in Theorem 1 \cite{schay_1974} and Lemma \cite{garcia_palomares_gine_1977} to finite measures to finite measures and show that value of $\rho(\varepsilon) := \inf\{\eta > 0 \mid \mu(A) \leq \nu(A^\varepsilon) + \eta \text{ for every } A \subseteq S\}$
 can be computed through a linear program. Additionally, they based their conclusions on \cite{garel_masse_2009}, which deals with the computation of $\mathbf{P}_d$ on (possibly non-discrete) probability distributions.

 \begin{lemma}[\cite{expr23}, Theorem 16.]\label{helperComputability} Let $\mu, \nu \in \mathscr{M}(\X)$, where $(\X,d)$ is a finite metric space with $\X = \{x_1, \dots, x_n\}$. Then, the Prokhorov metric $\PP_d(\mu, \nu)$ can be computed in time polynomial in $n$ and the number of bits needed to encode $d$, $\mu$ and $\nu$.
\end{lemma}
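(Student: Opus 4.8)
The plan is to reduce the evaluation of $\PP_d(\mu,\nu)$ to solving polynomially many linear programs of polynomial size. Recall that $\PP_d(\mu,\nu)=\inf\{\varepsilon>0:\mu(A)\le\nu(A^\varepsilon)+\varepsilon \text{ and } \nu(A)\le\mu(A^\varepsilon)+\varepsilon \text{ for all } A\subseteq\X\}$. Since $\X=\{x_1,\dots,x_n\}$ is finite, for a fixed $\varepsilon\ge 0$ encode the $\varepsilon$-neighbourhood relation by the $0/1$ matrix $R_\varepsilon$ with $R_\varepsilon(i,j)=1$ iff $d(x_i,x_j)\le\varepsilon$, so that $A^\varepsilon=\{x_j:\exists\, x_i\in A,\ R_\varepsilon(i,j)=1\}$. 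Introduce the one-sided quantity $\rho_{\mu,\nu}(\varepsilon):=\max_{A\subseteq\X}\big(\mu(A)-\nu(A^\varepsilon)\big)$, so that $\PP_d(\mu,\nu)=\inf\{\varepsilon>0:\max(\rho_{\mu,\nu}(\varepsilon),\rho_{\nu,\mu}(\varepsilon))\le\varepsilon\}$. The whole computation then reduces to: (i) evaluating $\rho_{\mu,\nu}(\varepsilon)$ for given $\varepsilon$ via an LP, and (ii) resolving the infimum over $\varepsilon$ by a parametric search.

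For step (i), the a priori difficulty is that $\rho_{\mu,\nu}(\varepsilon)$ is a maximum over the $2^n$ subsets $A$. The key — a Strassen / max-flow–min-cut duality, generalising the probability-measure arguments of \cite{schay_1974, garcia_palomares_gine_1977, garel_masse_2009} to arbitrary finite measures — is the identity $\rho_{\mu,\nu}(\varepsilon)=\|\mu\|-v^\star(\varepsilon)$, where $v^\star(\varepsilon)$ is the optimum of the partial-transportation LP $\max\{\sum_{i,j}\gamma_{ij}:\gamma_{ij}\ge 0,\ \sum_j\gamma_{ij}\le\mu(\{x_i\}),\ \sum_i\gamma_{ij}\le\nu(\{x_j\}),\ \gamma_{ij}=0 \text{ whenever } R_\varepsilon(i,j)=0\}$. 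The inequality $\rho_{\mu,\nu}(\varepsilon)\le\|\mu\|-v^\star(\varepsilon)$ follows by exhibiting, for each $A$, an integral feasible dual solution (a weighted bipartite vertex cover: put $a_i=1$ for $x_i\notin A$ and $b_j=1$ for $x_j\in A^\varepsilon$, all other dual variables $0$, with objective $\mu(A^c)+\nu(A^\varepsilon)$); the reverse inequality follows by reading off the set $A=\{x_i:a_i=0\}$ from an integral optimal dual solution, which exists since the constraint matrix is totally unimodular (König's theorem on the bipartite graph of allowed pairs). This $v^\star(\varepsilon)$ is a transportation LP with $O(n^2)$ variables and $O(n)$ constraints, hence solvable in time polynomial in $n$ and in the number of bits needed to encode $d,\mu,\nu$, by any polynomial-time LP algorithm. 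Note that no assumption $\|\mu\|=\|\nu\|$ is needed; equivalently one may add a reservoir node absorbing the surplus mass, cf. \cite{chapel2020partial}.

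For step (ii), observe that as $\varepsilon$ grows $R_\varepsilon$ only gains entries, so $\varepsilon\mapsto\rho_{\mu,\nu}(\varepsilon)$ and $\varepsilon\mapsto\rho_{\nu,\mu}(\varepsilon)$ are nonincreasing step functions whose jumps occur only at the at most $n^2$ values in $D=\{d(x_i,x_j):1\le i,j\le n\}$. On each interval between consecutive points of $D$ the function $g(\varepsilon):=\max(\rho_{\mu,\nu}(\varepsilon),\rho_{\nu,\mu}(\varepsilon))$ is constant while $\varepsilon\mapsto\varepsilon$ is strictly increasing, and since $g$ is nonincreasing the feasibility set $\{\varepsilon>0:g(\varepsilon)\le\varepsilon\}$ is an up-interval. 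Consequently $\PP_d(\mu,\nu)=\min\{\max(g(d_k),d_k):d_k\in D,\ g(d_k)<d_{k+1}\}$ up to routine boundary bookkeeping (ordering $D$ as $0\le d_1<d_2<\cdots$, and the degenerate case $\mu=\nu$ giving $0$): it suffices to evaluate $g$ at the $O(n^2)$ breakpoints, which is two LP solves each by step (i), and take the minimum. Altogether this is $O(n^2)$ polynomial-size LP solves, hence total time polynomial in $n$ and the bit-length of $d,\mu,\nu$, as claimed.

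The main obstacle is the duality identity $\rho_{\mu,\nu}(\varepsilon)=\|\mu\|-v^\star(\varepsilon)$ in step (i) — collapsing the family of $2^n$ set-indexed constraints into a polynomial-size flow LP. This is exactly where the classical arguments for probability measures must be adapted to (sub-)probability or unbalanced masses: one has to verify that the LP-duality / König vertex-cover certificate still recovers the extremal set $A$ when $\|\mu\|$ and $\|\nu\|$ need not coincide. Once that is in hand, the parametric search of step (ii) and the polynomial-time solvability of each LP are routine.
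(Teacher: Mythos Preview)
Your proposal is correct and follows essentially the same approach that the paper attributes to \cite{expr23}: compute the one-sided discrepancy $\rho(\varepsilon)=\inf\{\eta>0:\mu(A)\le\nu(A^\varepsilon)+\eta\text{ for all }A\}$ via a linear program (the Strassen/partial-transport duality, generalized from probability measures to arbitrary finite measures following \cite{schay_1974,garcia_palomares_gine_1977}), and then resolve the infimum over $\varepsilon$ by the breakpoint search of \cite{garel_masse_2009}. The paper does not spell out the argument beyond this description, so your write-up is in fact more detailed than what appears there.
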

By following \cref{theorem:TMDcomputability}, with \cref{helperComputability}, the following result is obtained.
\begin{theorem}\label{theorem:ProkComputability}
For any fixed $L\in \NN$, $\TMD^L$ between any two graph-signals $(G,\f)$ and $(H,\g)$
 can be computed in time polynomial in $h$ and the size of $G$ and $H$, namely $O(L\cdot N^7)$ where $N= \max(\abs{V(G)},\abs{V(H)})$.
\end{theorem}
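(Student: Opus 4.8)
The plan is to transcribe the proof of \cref{theorem:TMDcomputability} essentially line by line, replacing every optimal transport computation (justified there via \cref{lemma:computabilityOT}) by a Prokhorov distance computation justified via \cref{helperComputability}. Recall that here the ground metric on $\mathcal{H}^L$ is $\PD^L$, the product ($\ell_1$) metric built from $\PP_{\PD^{j-1}}$ on each factor $\mathscr{M}_{\leq1}(\mathcal{H}^{j-1})$, and that $\TPD^L((G,\f),(H,\g)) = \PP_{\PD^L}(\Gamma_{(G,\f),L},\Gamma_{(H,\g),L})$, where $\Gamma_{(G,\f),L}$ and $\Gamma_{(H,\g),L}$ are the pushforwards of the uniform node measures $\lambda_{V(G)},\lambda_{V(H)}$ along $\gamma_{(G,\f),L},\gamma_{(H,\g),L}$.

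First I would record the Prokhorov analogue of \cref{lemma:computabilityOT}: for finite sets $\X,\Y$, a Polish space $(\cZ,d_\cZ)$, measurable maps $\phi_\X:\X\to\cZ$ and $\phi_\Y:\Y\to\cZ$, and $\mu_\X\in\mathscr{M}_{\leq1}(\X)$, $\mu_\Y\in\mathscr{M}_{\leq1}(\Y)$, the value $\PP_{d_\cZ}\big((\phi_\X)_*\mu_\X,(\phi_\Y)_*\mu_\Y\big)$ depends only on the finite matrix of cross-distances $d_\cZ(\phi_\X(x),\phi_\Y(y))$, and equals the output of the finite linear-programming procedure of \cref{helperComputability} applied to these data. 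This holds because the pushforwards are finitely supported, and in each defining inequality $\mu'(A)\leq\nu'(A^{\varepsilon})+\varepsilon$ one may assume $A$ is contained in the (finite) support of the measure being tested, so the relevant enlargements $A^{\varepsilon}$ are determined by cross-distances alone; \cref{helperComputability} then yields polynomial-time computability in the size of the supports.

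Next, following \cref{theorem:TMDcomputability} exactly, I would build the cost matrices recursively. Put $C_0(x,y):=\|\f(x)-\g(y)\|_2$. Given the $|V(G)|\times|V(H)|$ matrix $C_{i-1}$ with $C_{i-1}(x,y)=\PD^{i-1}(\gamma_{(G,\f),i-1}(x),\gamma_{(H,\g),i-1}(y))$, compute, for every pair $(x,y)$, $D_i(x,y):=\PP_{\PD^{i-1}}\big(\gamma_{(G,\f),i}(x)(i),\gamma_{(H,\g),i}(y)(i)\big)$; by the previous step this is a Prokhorov computation between two finitely supported measures (pushforwards of $\lambda_{V(G)},\lambda_{V(H)}$) using $C_{i-1}$ as distance data. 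Then set $C_i:=C_{i-1}+D_i$, matching the $\ell_1$-sum defining $\PD^i$. After $L$ iterations, $\TPD^L((G,\f),(H,\g))$ is obtained by one final Prokhorov computation between $\lambda_{V(G)}$ and $\lambda_{V(H)}$ with $C_L$ as distance data. For the running time: there are $L$ levels; at each level one solves $O(N^2)$ Prokhorov problems on spaces with $O(N)$ atoms, each solved in polynomial time by \cref{helperComputability} — a direct count of that procedure (scanning the $O(N^2)$ candidate thresholds and, for each, solving a transportation-feasibility linear program of size $O(N^2)$) gives $O(N^5)$ per problem — plus one final such computation; in total $L\cdot O(N^2)\cdot O(N^5)+O(N^5)=O(L\cdot N^7)$.

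The one step requiring genuine care, and hence the main obstacle, is the Prokhorov analogue of \cref{lemma:computabilityOT}: one must verify precisely that the $\varepsilon$-enlargements taken in the ambient space $\cZ$ never require knowledge of within-$G$ or within-$H$ distances, only of the cross-distance matrix $C_{i-1}$. (If this turns out to be delicate, a safe fallback is to also maintain the within-$G$ and within-$H$ cost matrices recursively, which costs no more asymptotically.) Everything else — the recursion, the correctness of the update $C_i=C_{i-1}+D_i$ as the product metric $\PD^i$, and the polynomial bookkeeping — is a routine transcription of the proof of \cref{theorem:TMDcomputability}.
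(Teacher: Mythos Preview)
Your proposal is correct and follows exactly the approach the paper takes: its proof is the single sentence ``By following \cref{theorem:TMDcomputability}, with \cref{helperComputability}, the following result is obtained.'' Your treatment is in fact more careful than the paper's one-liner --- the cross-distance subtlety you flag is real (since \cref{helperComputability} as a black box asks for the full metric on the union of supports), and your fallback of also maintaining the within-$G$ and within-$H$ distance matrices at each level resolves it at no extra asymptotic cost.
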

The computational advantage of using unbalanced optimal transport, tipped the scales in favor it, making it the main focus of this paper.

\section{Additional Experiments and Details}
\label{appendix:experiments}
We present here additional experimental results. We evaluate $\TMD^2$ in graph classification
tasks, i.e., graphs separation tasks.
We follow the same set up in \cite{chen2022weisfeiler,expr23} for comparison. {The goal of our experiments is to support the theoretical results which formulate a form of equivalence between GNN outputs and DIDM mover's distance. We emphasize that our proposed DIDM mover's distance metric is mainly a tool for theoretical analysis and the proposed experiments are not designed to compete with state-of-the-art methods. Although our metric is not intended to be used directly as a computational tool, our results suggest that we can roughly approximate the DIDM mover's distance between two graphs by the Euclidean distance between their outputs under random MPNNs. This Euclidean distance can be used in practice as it is less computationally expensive than DIDM mover's distance.}

\subsection{1-Nearest-Neighbor Classifier}
{
The goal of the experiment in this section is to show that the geometry underlying the metric $\TMD$ captures in some sense the underlying data-driven similarity related to the classification task.
We consider the problem of classifying   attributed graphs, and a solution based on the 1-nearest neighbor. }

{The \emph{1-Nearest Neighbor} (1-NN) \emph{classifier} is a non-parametric, instance-based machine learning method.  Given a dataset $\mathcal{D} = \{((G_i,\f_i), y_i)\}_{i=1}^n$, where $(G_i,\f_i)$ represents graph-signals and $y_i \in \mathcal{C}$ denotes class labels from a finite set $\mathcal{C}$, the goal is to classify a new input $(G_i,\f_i)$.} 
{
The classification process of classifying the input $ (G,\f)$ involves:
\begin{enumerate}
    \item Computing the distance between the input $(G,\f)$ and every point $(G_i,\f_i)$ in the dataset using a distance metric $d$.
    \item Identifying the nearest neighbor $(G_k,\f_k)$ such that:
    \[
    (G_k,\f_k) = \arg\min_{i \in n} d((G,\f), (G_i,\f_i)).
    \]
    \item Assigning the label $y_k$ of the nearest neighbor as the label $y$ of $(G,\f)$:
    \[
    y \xleftarrow{} y_k.
    \]
\end{enumerate}}

{Here we chose to compare $\TMD^2$ with other optimal-transport-based iteratively defined metrics. Tree Mover's Distance $\mathbf{TMD}^L$ from \cite{tree22} is defined via optimal transport between finite attributed graphs through so called computation trees. The Weisfeiler-Lehman (WL) distance $d^{(L)}_{\text{WL}}$ and its lower bound distance $d^{(L)}_{\text{WLLB},\geq L}$ from \cite{chen2022weisfeiler, pmlr-v221-chen23a} are defined via optimal transport between finite attributed graphs through hierarchies of probability measures. The metric $\delta_{W,L}$ from \cite{expr23} is define via optimal transport on a variant of IDMs and DIDMs where the IDMs are not concatenated. The metric $\delta_{W,\geq L}$, from \cite{expr23}, is a variation of $\delta_{W,L}$ where the maximum number of iterations performed after having obtained a stable coloring is bounded by 3. Unlike $\TMD$, all the above metrics cannot be used to unify expressivity, uniform approximation and generalization for attributed graphs. Namely, the above pseudometrics are either restricted to graphs without attributes or are not compact.}

\cref{table:perform} compares
the mean classification accuracy of $\delta_{\text{W},3}$, $\delta_{\text{W},\geq 3}$ \citep{expr23}, $d^{(3)}_{\text{WL}}$,  $d^{(3)}_{\text{WLLB},\geq 3}$ \citep{chen2022weisfeiler, pmlr-v221-chen23a}, $\mathbf{TMD}^3$ \citep{tree22}, and $\TMD^2$ in a $1$-NN classification task using node degrees as initial labels. 
We used the MUTAG dataset \citep{Morris2020TUDATA} and followed the
same random data split as in \cite{chen2022weisfeiler,expr23}: { 90  percent of the data for training and 10 percent of the data for testing.} We repeat the random split ten times. We started by computing the pairwise distances for all the graphs in the dataset. We continued by performing graph classification using a 1-nearest-neighbor classifier ($1$-NN). 

{We note that the $1$-NN classification experiment ``softly'' supports our theory, in the sense that this experiment shows that our metric clusters the graphs quite well with respect to their task-driven classes. We stress that this experiment does not directly evaluates any rigorous theoretical claim. We moreover note that while the metric in \cite{chen2022weisfeiler} achieves better accuracy, the space of all graphs under their metric is not compact, so this metric does not satisfy our theoretical requirements: a compact metric which clusters graphs well.}

\subsection{MPNNs' Input and Output Distance Correlation}

{As a proof of concept, we empirically test the correlation between $\TMD^L$ and distance in the output of MPNNs. We hence chose well-known and simple MPNN architectures, varying the hidden dimensions and number of layers. We do not claim that GIN \cite{xu2018powerful} and GraphConv \cite{comptreeImportant1} are representative of the variety of all types of MPNNs. Nevertheless, they are proper choices for demonstrating our theory in practice.}
{
\subsubsection{MPNN Architectures}
The \texttt{GIN\_meanpool} model is a variant of the Graph Isomorphism Network (GIN) (see \cref{appendix:gin}) designed for graph-level representation learning. Each layer consists of normalized sum aggregation and a multi layer perceptron (MLP). The first MLP consists of two linear transformations, ReLU activations, and batch normalization. Each MLP that follows has additionally a skip connection and summation of the input features and output features. The readout after $L$ layers is mean pooling (with no readout function).}

{The \texttt{GC\_meanpool} model is a realization of graph convolution network (GCN) for graph-level representation learning.  Each layer consists of normalized sum aggregation with a linear message and update functions (see \cref{appendix:ExtAgg} for the definition of message function and for the equivalency between MPNNs that use message functions and MPNNs with no message functions).
All layers except the first layer have additionally a skip connection and a summation of the input features and output features. The readout after $L$ layers is mean pooling (with no readout function).}

{
\subsubsection{Correlation Experiments on Graphs Generated from a Stochastic Block Model}
We extend here the experiments presented in \cref{section:MPNNs}, with the same experimental procedure and also offer an extended discussion and description of the experiments. We empirically test the correlation between $\TMD^L$ and the distance in the output distance of an MPNN. We use stochastic block models (SBMs), which are generative models for graphs, to generate random graph sequences. We generated a sequence of $50$ random graphs $\{G_i\}^{49}_{i=0}$, each with 30 vertices. Each graph is generated from an SBM with two blocks (communities) of size $15$ with $p=0.5$ and $q_i=0.1+0.4i/49$ probabilities of having an edge between each pair of nodes from the same block different blocks, respectively. We denote $G:=G_{49}$, which is an Erdős–Rényi model. We plot $\TMD^2(G_i,G)$ against distance in the output of randomly initialized MPNNs, i.e., once against $$\norm{\HH(\texttt{GIN\_meanpool},G_i) - \HH(\texttt{GC\_meanpool},G)}_2$$ and once against $$\norm{\HH(\texttt{GIN\_meanpool},G_i)- \HH(\texttt{GC\_meanpool},G)}_2.$$ Note that in each experiment, we initialize \texttt{GIN\_meanpool} and \texttt{GC\_meanpool} only once with random weights and then compute the hidden representations of all graphs.}

We conducted the entire procedure twice, once with a constant feature attached to all nodes and once with a signal which has a different constant value on each community of the graph. Each value is randomly sampled from a uniform distribution over $[0,1].$ In \cref{section:MPNNs} We present the results of the experiments when varying hidden dimension (see \cref{figure:SBM}). \cref{figure:SBM2} and \cref{figure:SBM3} show the results when varying the number of layers when the signal is constant and when the signal has a different randomly generated constant value on each community, respectively. The results still show a strong correlation between input distance and GNN outputs. When increasing the number of layer, the correlation slightly weakens.

We conducted the experiment one more time with signal values sampled from a normal distribution $\cN(\mu , \sigma_i)$ with mean $\mu = 1$ and variance $\sigma_i=\frac{49-i}{49}$. \cref{figure:SBM4} and \cref{figure:SBM5} show the results when varying the number of dimensions and the number of layers, respectively. The results still show a  correlation between input distance and MPNN outputs, but with a higher variance. As we interpret this result, the increased variance could be an artifact of using random noise as signal in our experiments. The specific MPNNs we used, have either a linear activation or a ReLU activation function. Thus, they have a ``linear'' averaging effect on the signal, which cancels in a sense the contribution of the noise signal to the output of the MPNN, while the metric takes the signal into full consideration. This leads to a noisy correlation.

\subsubsection{Real Datasets Correlation Experiments}
We empirically test the correlation between $\TMD^L$ and distance in the output of MPNNs on MUTAG and PROTEINS databases. In the following, we present the results that showcase insightful relations.

\textbf{Correlation experiments using a single randomized MPNN.} Denote by $\mathcal{D}$ a generic dataset. For the entire dataset we randomly initialized one MPNN with random weights. We randomly picked an attributed graph from the dataset $(\hat{G},\hat{\f})\in\mathcal{D}$. For each $(G,\f)\in\mathcal{D}$ we computed $\TMD^2((\hat{G},\hat{\f}),(G,\f)).$ We plotted the distance in the output of the randomly initialized MPNNs on each $(G,\f)\in\mathcal{D}$ against $\TMD^2((\hat{G},\hat{\f}),(G,\f))$. We conducted the experiment multiple times. \cref{figure:MUTAG} and \cref{figure:MUTAG2} show the results on MUTAG when varying the number of dimensions and the number of layers, respectively. \cref{figure:protainDim} and \cref{figure:protainLayer} show the results on PROTEINS when varying the number of dimensions and the number of layers, respectively.

\textbf{Correlation between DIDM model's distance and maximal MPNN distance.} 

\cref{corollary:universalApproximation} states that ``convergence in DIDM mover's distance'' is equivalent to ``convergence in the MPNN's output  \emph{for all MPNNs}''. The previous experiment depicts \cref{corollary:universalApproximation} only vaguely, since the experiment uses a single MPNN, and does not check the output distance for all MPNNs. Instead, in this experiment we would like to depict the ``for all'' part of \cref{corollary:universalApproximation} more closely. Since one cannot experimentally apply all MPNNs on a graph,  we instead randomly choose 100 MPNNs for the whole dataset. Denote the set containing the 100 MPNNs by $\cN'$. To verify the ``for all''  part, given each pair of graphs, we evaluate the distance between the MPNN's outputs on the two graphs for each MPNN and return the maximal distance. We plot this maximal distance  against the DIDM mover's distance. Namely, we plot  $\max_{\HH\in\cN'}\norm{\HH(\texttt{GIN\_meanpool},G_i) - \HH(\texttt{GC\_meanpool},G)}_2$ and  $\max_{\HH\in\cN'}\norm{\HH(\texttt{GIN\_meanpool},G_i)- \HH(\texttt{GC\_meanpool},G)}_2$ against $\TMD^2(G_i,G)$. Note that in each experiment, we initialize \texttt{GIN\_meanpool} and \texttt{GC\_meanpool} only once with random weights and then compute the hidden representations of all graphs.

{In more details, we checked the extent to which $\TMD^2$ correlates with the maximal distance of 100 MPNNs' vectorial representation distances on MUTAG dataset and marked the Lipschitz relation. Here, we randomly generated 100 MPNNs for the entire dataset. \cref{figure:lipschitz} showcase different Lipschitz relation. Note that the results are normalized.}

{From this, one can estimate a bound on the  Lipschitz constants of all MPNNs from the family.}

{\textbf{The Random MPNN Distance conjecture.} Observe that in our experiments we plotted the MPNN's output distance for random MPNNs, not for ``all MPNNs,'' and still got a nice correlation akin to \cref{corollary:universalApproximation}. This leads us to the hypothesis that randomly initialized MPNNs have a fine-grained expressivity property: for some distribution over the space of MPNNs, a sequence of graph-signals converges in DIDM mover's distance if and only if the output of the sequence under a random MPNN converges in high probability. See \cref{figure:maxVSmean} for a comparison of $\TMD^2$'s correlation with the maximal distance of 100 MPNNs' vectorial representation distances, with $\TMD^2$'s correlation with the mean distance of 100 MPNNs' vectorial representation distances, and with $\TMD^2$'s correlation with a single MPNN's vectorial representation distances on MUTAG dataset.}
\clearpage
\begin{table}
\caption{Graph distances classification accuracy of $1$-NN. $\delta_{\text{W},3}$ and $\delta_{\text{W},\geq 3}$ results are
taken from \cite{expr23}. $d^{(3)}_{\text{WL}}$ and $d^{(3)}_{\text{WLLB},\geq 3}$ results are
taken from \cite{chen2022weisfeiler} using node degrees as initial labels. {The table shows the mean classification accuracy of $1$-NN using different graph distances using node degrees as initial labels.}}
\label{table:perform}
\begin{center}
\begin{tabular}{ll}
\multicolumn{1}{c}{\bf Accuracy $\uparrow$}  &\multicolumn{1}{c}{MUTAG}
\\ \hline \\
\cite{chen2022weisfeiler} $d^{(3)}_{\text{WL}}$         &91.1 $\pm$ 4.3 \\ \cite{chen2022weisfeiler} $d^{(3)}_{\text{WLLB},\geq 3}$ &85.2 $\pm$ 3.5 \\
\cite{expr23} $\delta_{\text{W},3}$
&87.89 $\pm$ 4.11\\
\cite{expr23} $\delta_{\text{W},\geq 3}$
&86.32 $\pm$ 4.21\\
\cite{tree22} $\mathbf{TMD}^3$             &89.47 $\pm$ 7.81\\
$\TMD^2$             &89.47 $\pm$ 7.81\\
\end{tabular}
\end{center}
\end{table}
\begin{figure}
\begin{center}
\includegraphics[width=11cm]{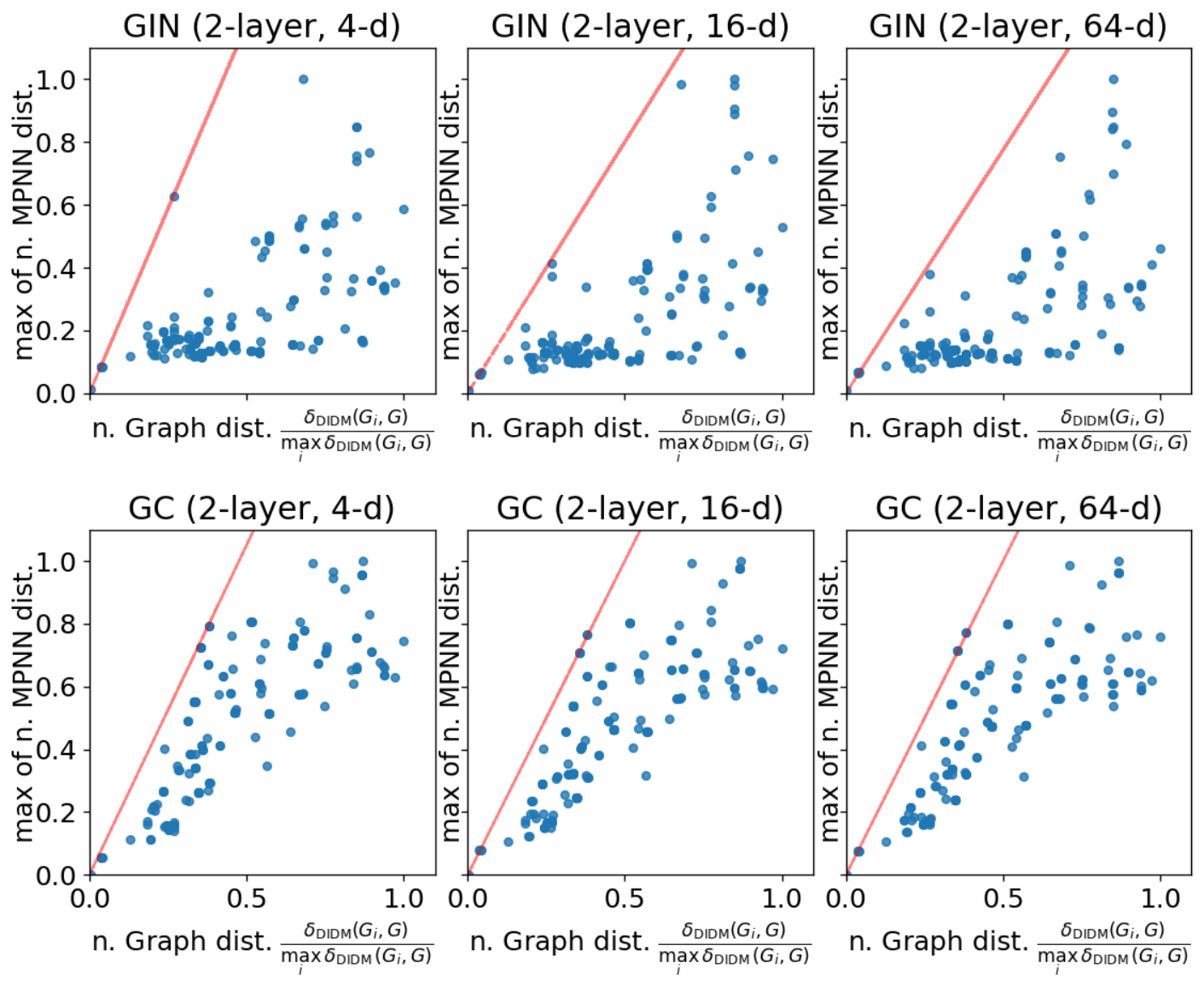}
\vspace{-0.5cm}
\end{center}
\caption{Correlation between $\TMD^2$ and the maximum over distances in the outputs of 100 randomly initialized MPNNs with a varying number of hidden dimensions. The Lipschitz bound is marked by the red line.}
\label{figure:lipschitz}
\end{figure}
\begin{figure} 
\begin{center}
\includegraphics[width=11cm]{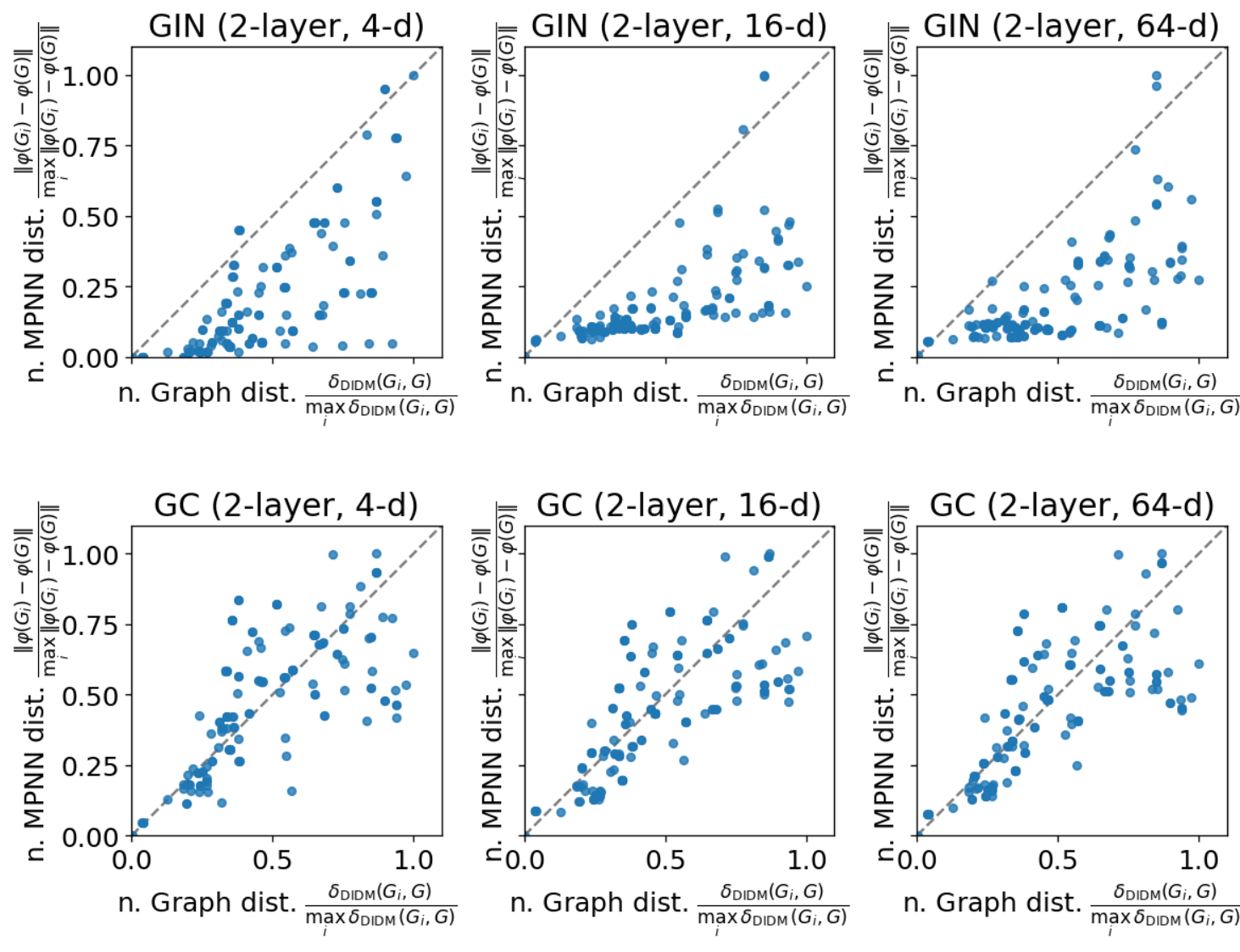}
\vspace{-0.5cm}
\end{center}
\caption{Correlation between $\TMD^2$ and distance in the output of a randomly initialized MPNN  with a varying number of hidden dimensions. GraphConv embeddings preserve graph distance better than GIN on MUTAG dataset.}
\label{figure:MUTAG}
\end{figure}
\begin{figure}
\begin{center}
\includegraphics[width=11cm]{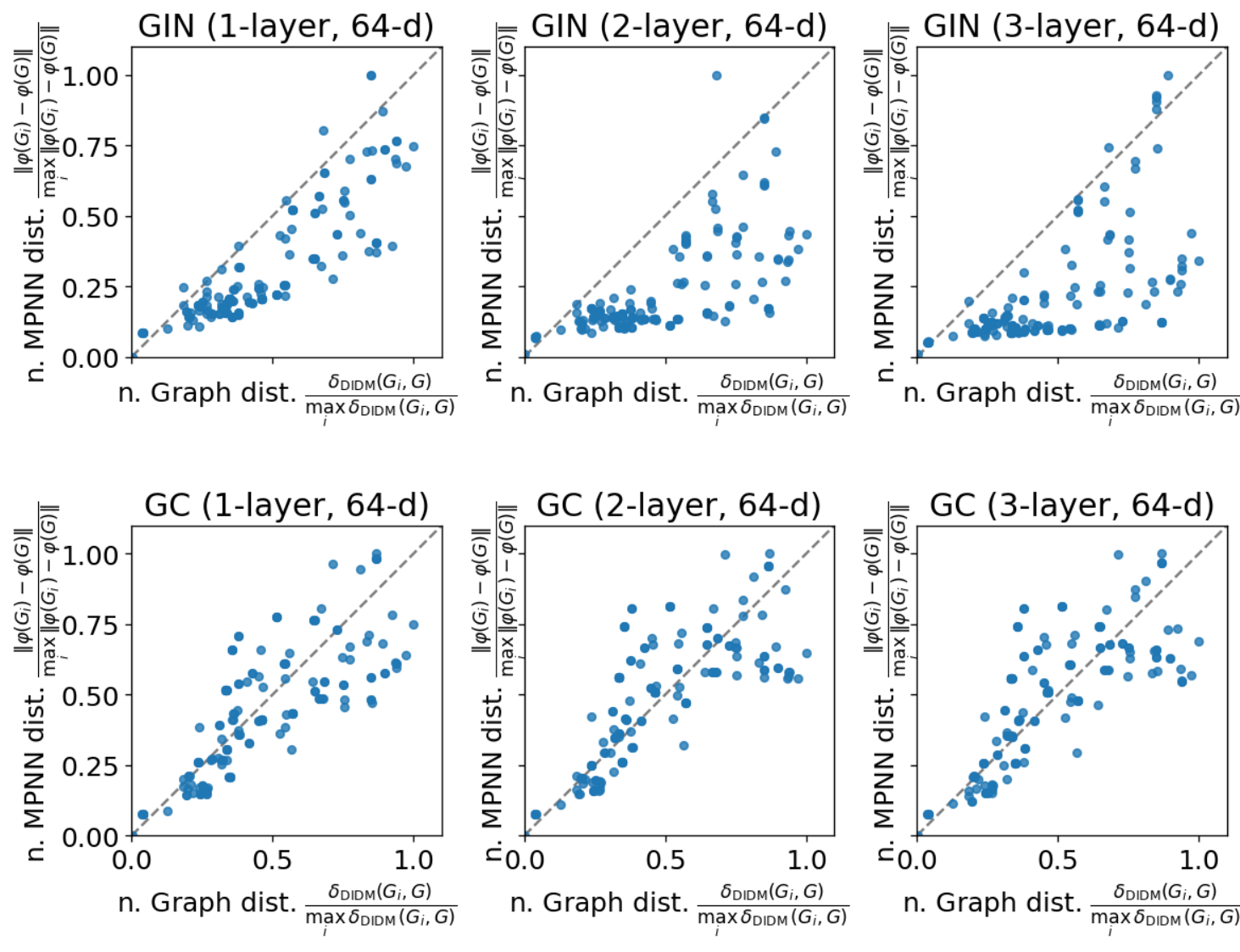}
\vspace{-0.5cm}
\end{center}
\caption{Correlation between $\TMD^2$ and distance in the output of a randomly initialized MPNN with a varying number of layers. GraphConv embeddings preserve graph distance better than GIN on MUTAG dataset.}
\label{figure:MUTAG2}
\end{figure}
\begin{figure}
\begin{center}
\includegraphics[width=11cm]{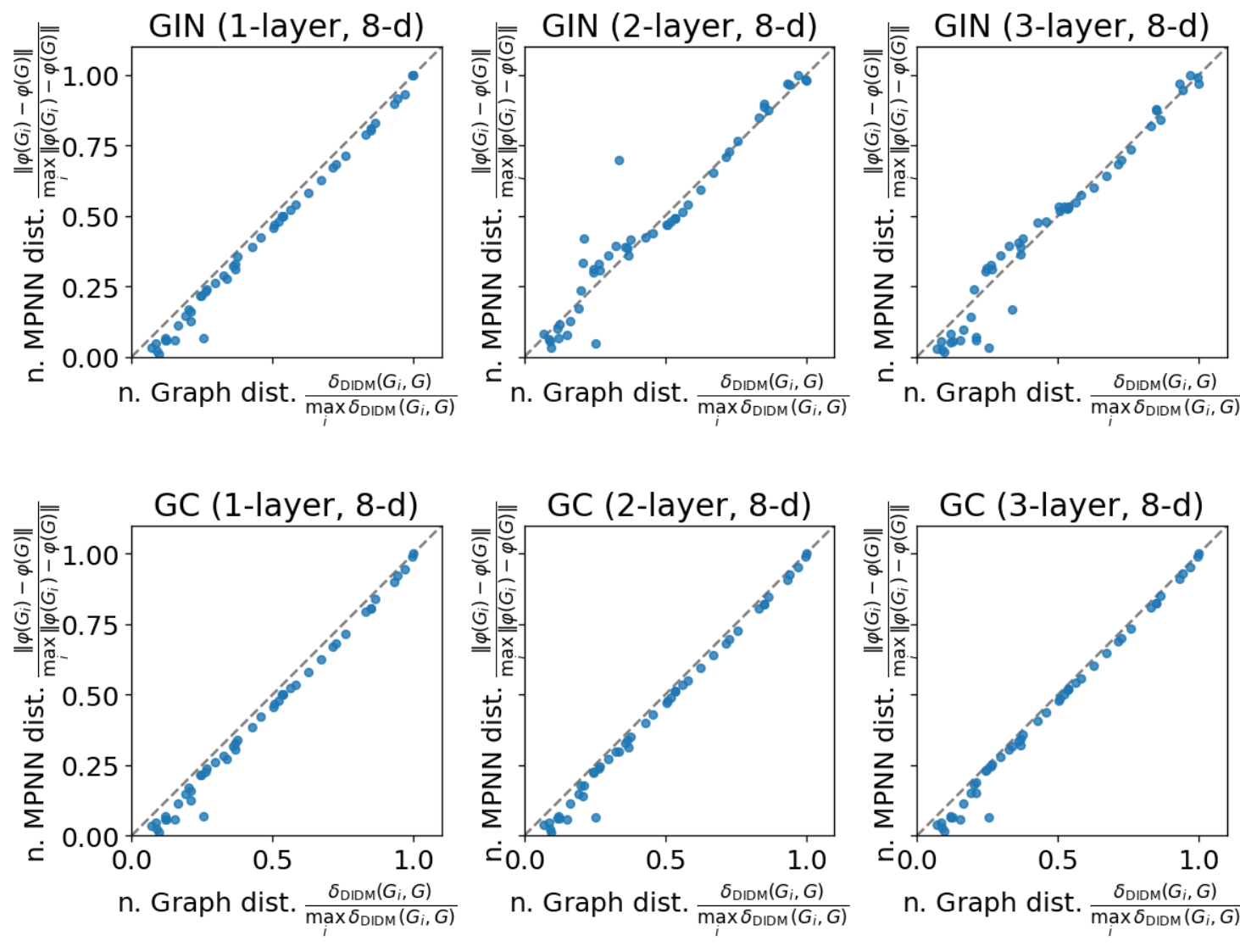}  
\end{center}
\caption{Correlation between $\TMD^2$ and distance in the output of a randomly initialized MPNN with a varying number of layers. A convergent sequence of graphs with a constant signal. The graphs are generated by a stochastic block models.}
\label{figure:SBM2}
\end{figure}
\begin{figure}
\begin{center}
\includegraphics[width=11cm]{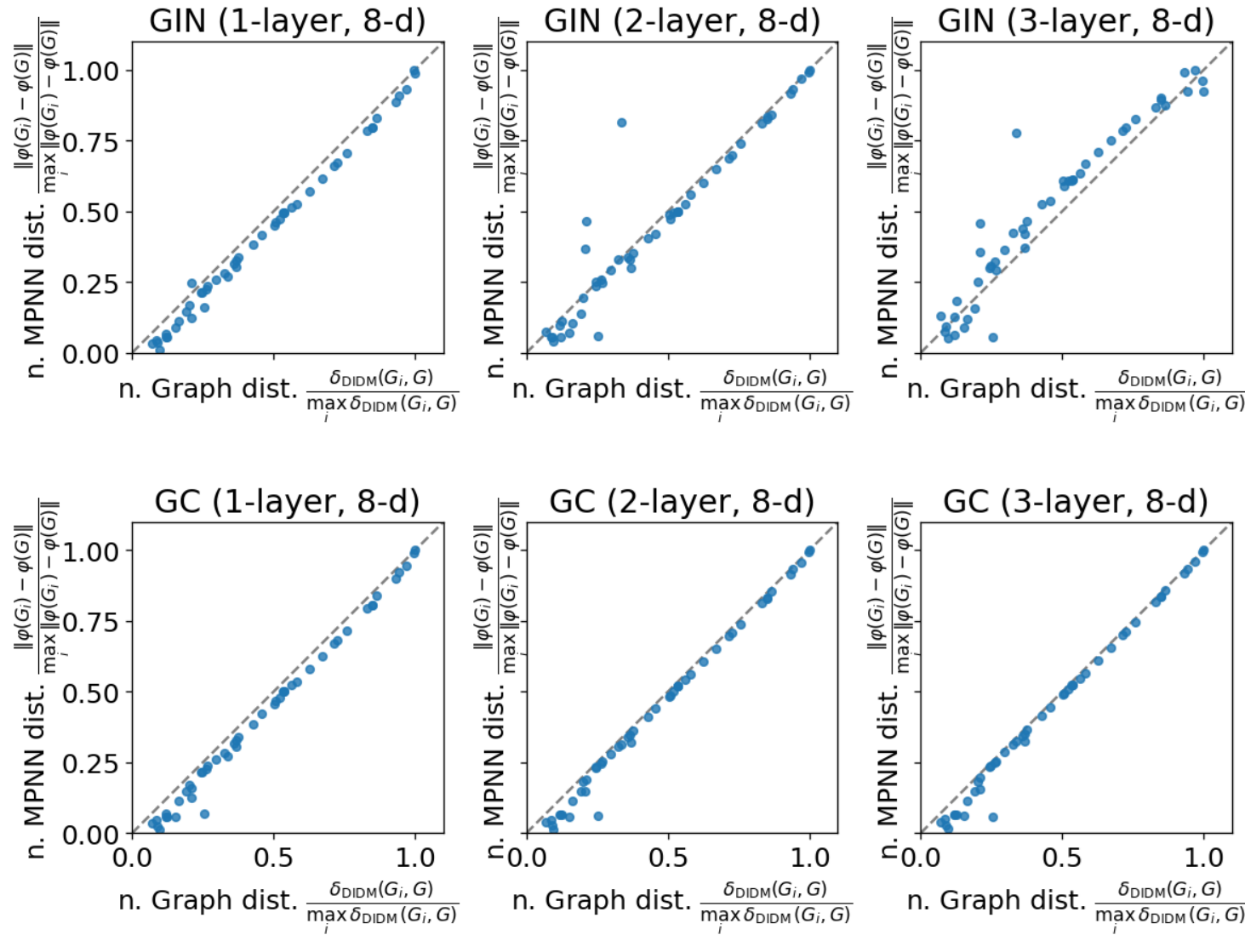}  
\end{center}
\caption{Correlation between $\TMD^2$ and distance in the output of a randomly initialized MPNN with a varying number of layers. A convergent sequence of graphs with a  signal, such that the signal values are constant each graph's community. Each signal value is sampled from a uniform distribution over $[0,1]$. The graphs are generated by a stochastic block models.}
\label{figure:SBM3}
\end{figure}
\begin{figure}
\begin{center}
\includegraphics[width=11cm]{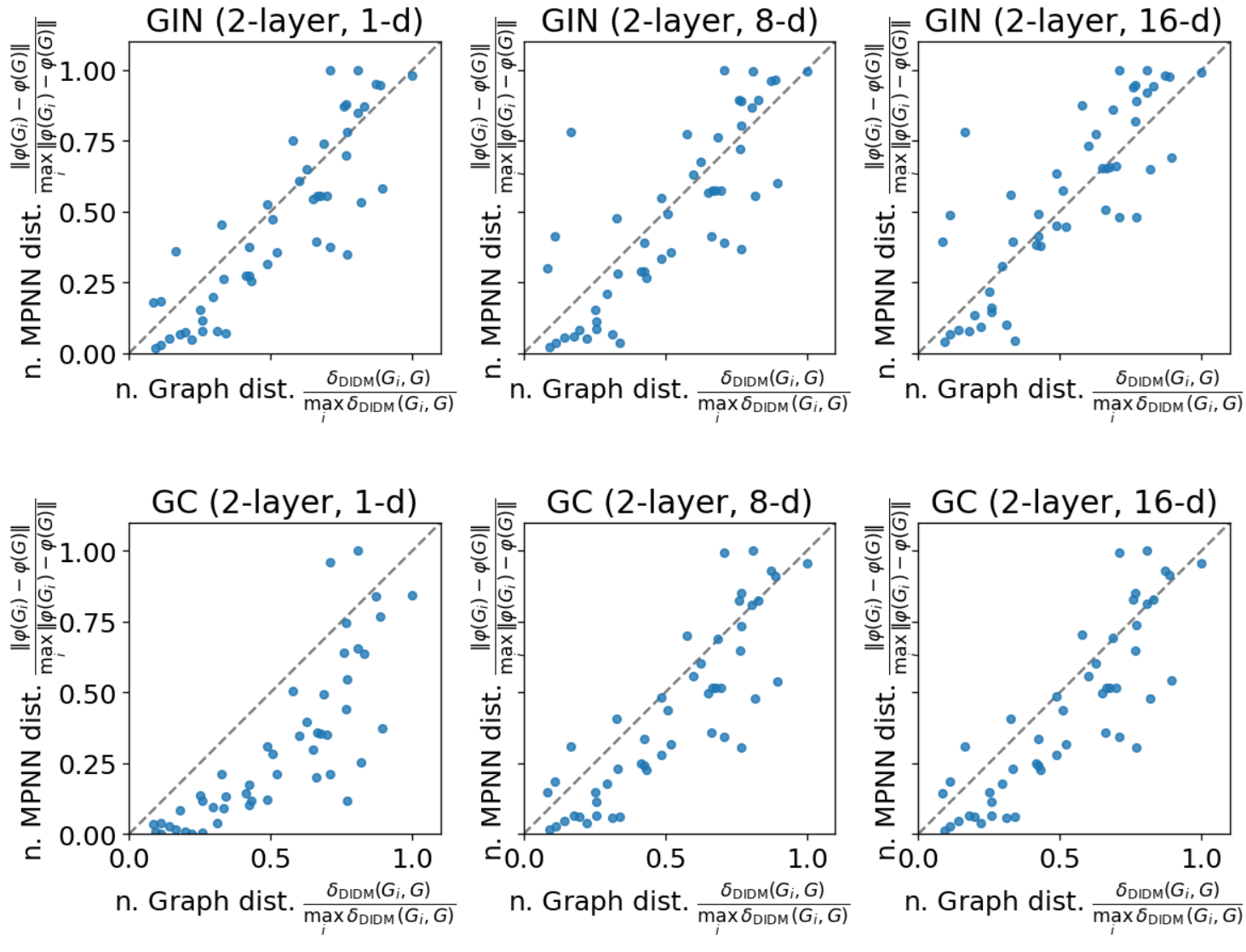}  
\end{center}
\caption{Correlation between $\TMD^2$ and distance in the output of a randomly initialized MPNN with a varying number of hidden dimensions. A convergent sequence of graphs with a  signal, such that  the signal values are sampled from a normal distribution $\cN(\mu , \sigma_i)$ with mean $\mu = 1$ and linearly decreasing variance. The graphs are generated by a stochastic block models.}
\label{figure:SBM4}
\end{figure}
\begin{figure}
\begin{center}
\includegraphics[width=11cm]{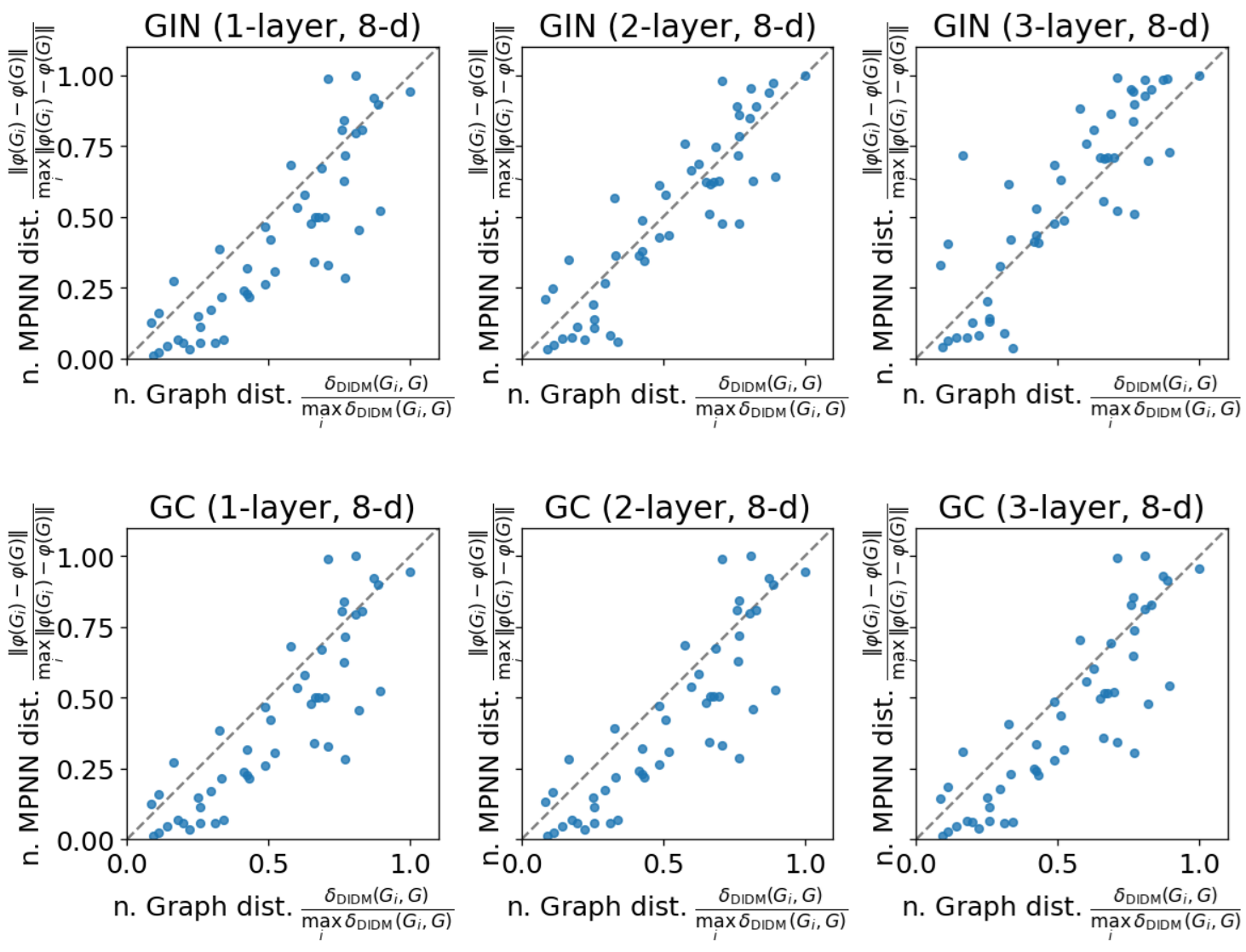}  
\end{center}
\caption{Correlation between $\TMD^2$ and distance in the output of a randomly initialized MPNN with a varying number of layers. A convergent sequence of graphs with a  signal, such that  the signal values are sampled from a normal distribution $\cN(\mu , \sigma_i)$ with mean $\mu = 1$ and linearly decreasing variance. The graphs are generated by a stochastic block models.}
\label{figure:SBM5}
\end{figure}
\begin{figure}
\begin{center}
\includegraphics[width=11cm]{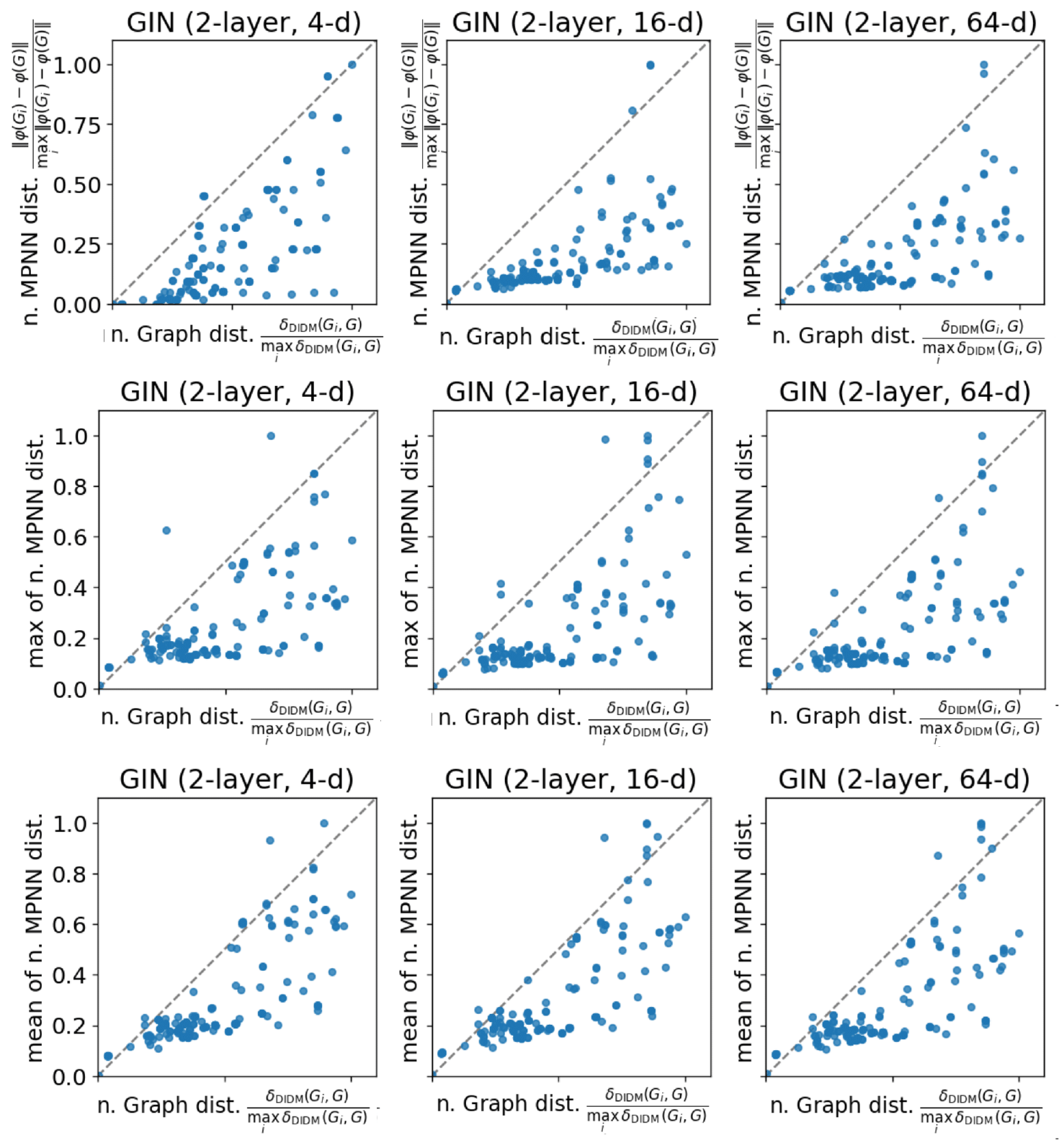}
\end{center}
\caption{Correlation between $\TMD^2$ and distance in the output of randomly initialized MPNNs with a varying number of hidden dimensions. On the top row of figures, the correlation between $\TMD^2$ and distance in the output of a single randomly initialized MPNNs is presented. On the middle row of figures, the correlation between $\TMD^2$ and the maximum over distances in the outputs of 100 randomly initialized MPNNs is presented. On the bottom row of figures, the correlation between $\TMD^2$ and the mean over distances in the outputs of 100 randomly initialized MPNNs is presented.}
\label{figure:maxVSmean}
\end{figure}
\begin{figure}
\begin{center}
\includegraphics[width=11
cm]{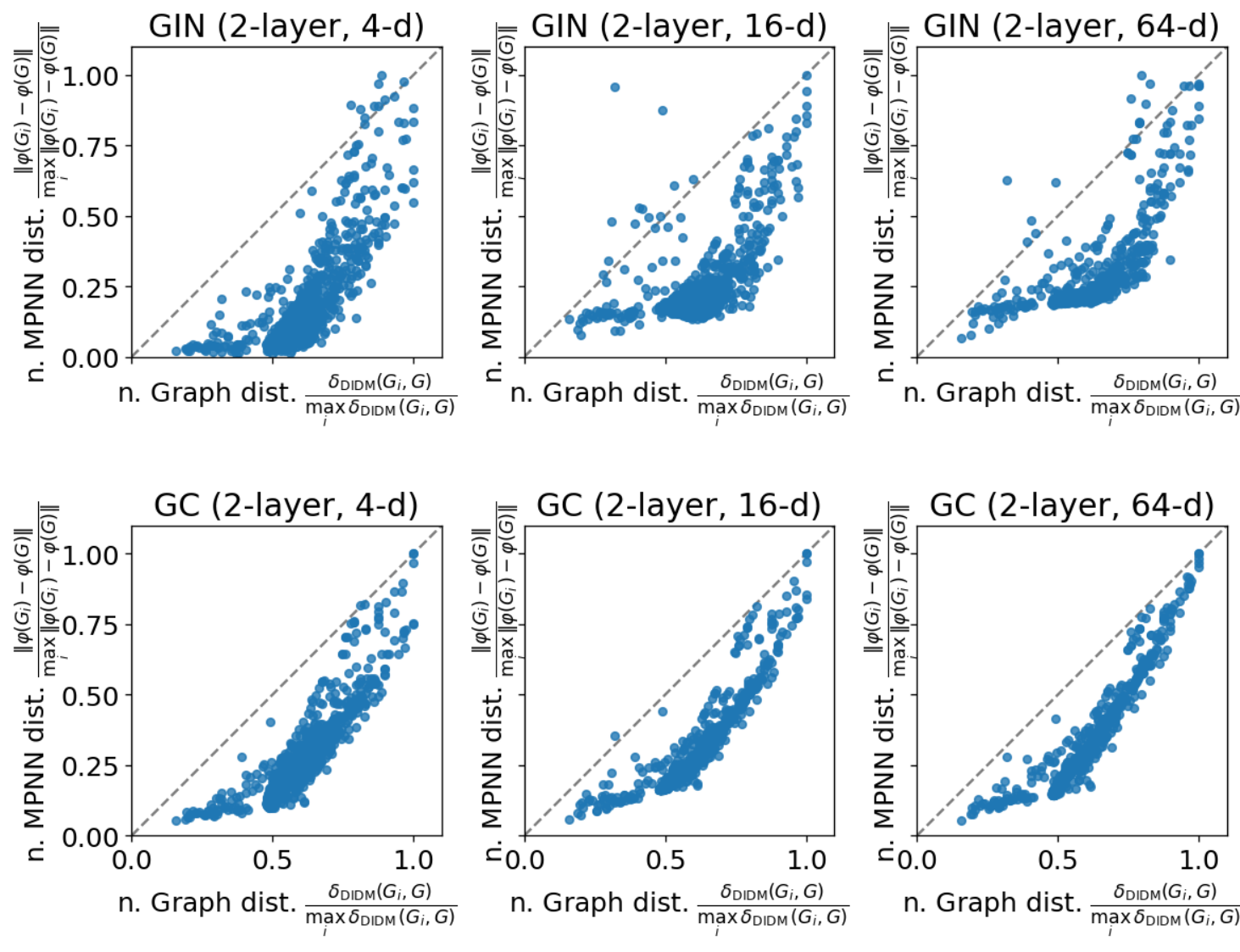}  
\end{center}
\caption{Correlation between $\TMD^2$ and distance in the output of a randomly initialized MPNN with a varying number of dimensions. GraphConv embeddings preserve graph distance better than GIN on PROTEINS dataset.}
\label{figure:protainDim}
\end{figure}
\begin{figure}
\begin{center}
\includegraphics[width=11cm]{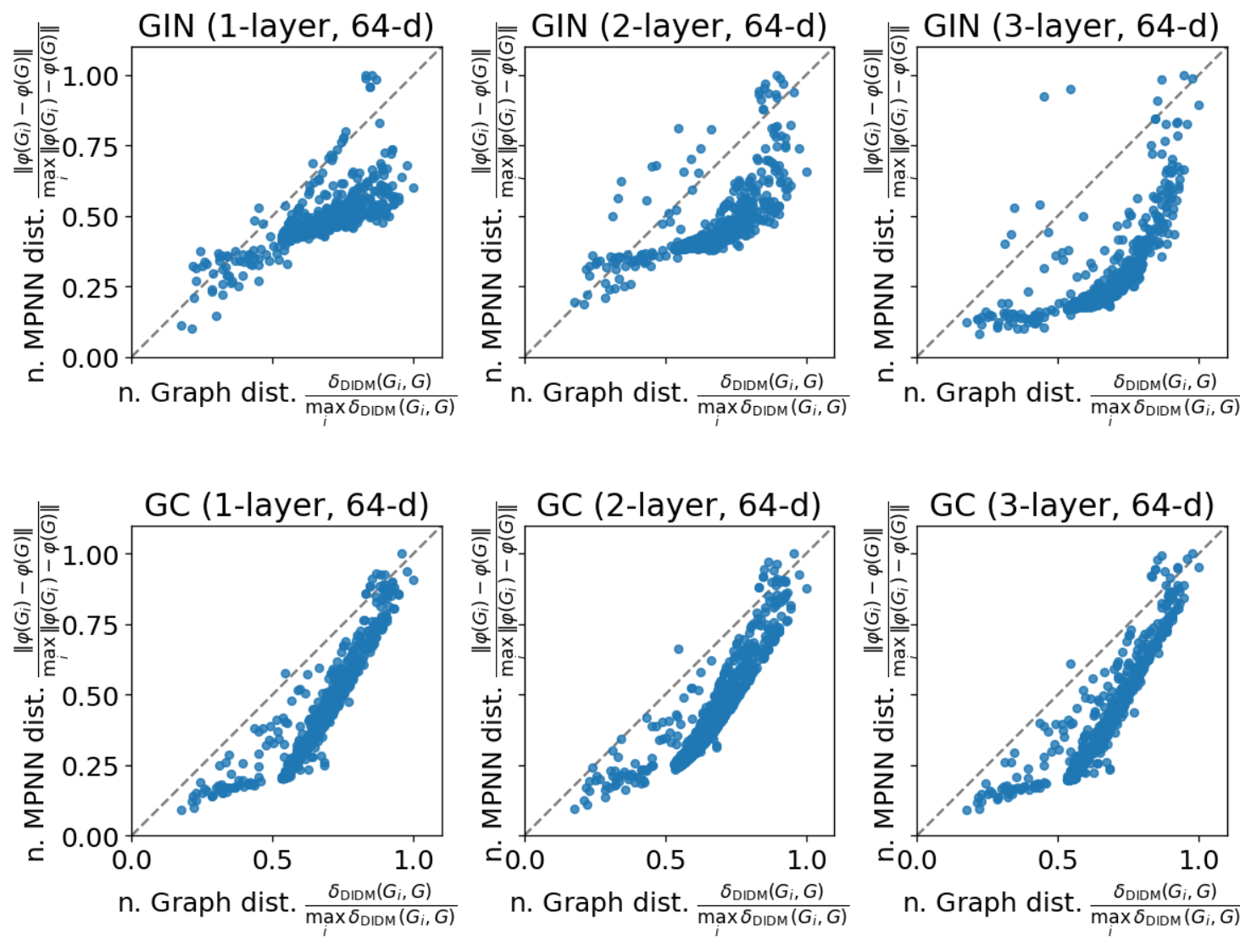}  
\end{center}
\caption{Correlation between $\TMD^2$ and distance in the output of a randomly initialized MPNN with a varying number of layers. GraphConv embeddings preserve graph distance better than GIN on PROTEINS dataset.}
\label{figure:protainLayer}
\end{figure}
\clearpage
\section{List of Notations}
\label{List of Notations}
\centerline{\bf Sets and Graphs}
\bgroup
\def\arraystretch{1.5}
\begin{tabular}{p{1.25in}p{3.25in}}
$\displaystyle \A \times \cC$ & The cartesian product of two sets $\A$ and $\cC$  \\
$\displaystyle \NN$ & The set of natural numbers including 0.\\
$\displaystyle \R$ & The set of real numbers \\
$\displaystyle \R^d$ & The set of real vectors of length d \\
$\displaystyle \mathbb{B}^{d}_r$ & A fixed compact sub-set of $\R^d$, (\cpageref{page:notation}) \\
$\displaystyle C_b(\mathcal{X})$ & The set of all bounded continuous real-valued functions on $\mathcal{X}$, (\cpageref{subsection:theWeakTopology}) \\
$\displaystyle C(\X,\Y)$ & The set of all continuous functions from $\mathcal{X}$ to $\Y$ \\
$\displaystyle \cK$ & A compact space \\
$\displaystyle \{0, 1\}$ & The set containing 0 and 1 \\
$\displaystyle \{0, 1, \dots, n \}$ & The set of all integers between (and including) $0$ and $n$\\
$\displaystyle[n]$ & The set of all integers between (and including) $0$ and $n$\\
$\displaystyle [a, b]$ & The real interval including $a$ and $b$\\
$\displaystyle I$ & A real interval \\
$\displaystyle (a, b]$ & The real interval excluding $a$ but including $b$\\
$\displaystyle (G,\f)$ & A graph-signal (\cpageref{notations:graphon-signal})\\
$\displaystyle (H,,\f)$ & A graph-signal\\
$\displaystyle (W,f)$ & A graphon-signal (\cpageref{notations:graphon-signal})\\
$\displaystyle (Q,g)$ & A graphon-signal\\
$\displaystyle V(G)$ & The set of nodes of the graph G\\
$\displaystyle V(W)$ & The set of nodes of the graphon W\\
$\displaystyle E(W)$ & The set of edges of the graphon W\\
$\displaystyle \cN(v)$ & The set of all neighboring nodes of a graph node $v$
\end{tabular}

\centerline{\bf Calculus}
\bgroup
\def\arraystretch{1.5}
\begin{tabular}{p{1.25in}p{3.25in}}
$\displaystyle \int f(\vx) d\vx $ & Definite integral over the entire domain of $\vx$ \\
$\displaystyle \int_\cS f(\vx) d\vx$ & Definite integral with respect to $\vx$ over the set $\cS$ \\
\end{tabular}
\egroup

\centerline{\bf Numbers and Arrays} 
\bgroup
\def\arraystretch{1.5}
\begin{tabular}{p{1in}p{3.25in}}\label{notatinos}
$\displaystyle x$ & Element of a set, can be both a scalar or a vector\\
$\displaystyle \eva_i$ & Element $i$ of vector or a sequence\\
$\displaystyle C$ & A matrix\\
$\displaystyle D$ & A matrix\\
$\displaystyle \mX$ & A random (either multi or single) variable\\
\end{tabular}
\egroup

\centerline{\bf Measure Theory and Iterated degree Measures}
\bgroup
\def\arraystretch{1.5}
\begin{tabular}{p{1.25in}p{3.25in}}
$\displaystyle \gamma_{(W,f),L}$ & Computation iterated degree measure (\cpageref{notation:comp_D_IDM})\\
$\displaystyle \Gamma_{(W,f),L}$ & Computation distribution of iterated degree measure (\cpageref{notation:comp_D_IDM})\\
$\displaystyle f_*\mu$ &  the push-forward of a measure $\mu\in \mathscr{M}(\X)$ via a measurable map $f : \mathcal{X} \to \mathcal{Y}$ \\
$\displaystyle \mu_i$ & The $i$'th element of an IDM $\mu$\\
$\displaystyle \mu(i)$ & The $i$'th element of an IDM $\mu$\\
$\displaystyle \mathcal{F}$ & A $\sigma$-algebra\\
$\displaystyle \Sigma$ & A $\sigma$-algebra\\
$\displaystyle \mathcal{B}(\mathcal{X})$ & The standard Borel $\sigma$-algebra of a measurable space $\X$\\
$\displaystyle (\mathcal{X}, \Sigma)$ & a mesurable space \\
$\displaystyle (\mathcal{X}, \Sigma, \mu)$ & a measure space\\
$\displaystyle \mathscr{M}_{\leq 1}(\mathcal{X})$ & The space of all non negative Borel measures with total mass at most one  on $(\mathcal{X}, \mathcal{B}(\X))$ (\cpageref{subsection:theWeakTopology})\\
$\displaystyle \mathscr{P}(\mathcal{X})$ & The space of all Borel probability measures on $(\mathcal{X}, \mathcal{B}(\X))$ (\cpageref{subsection:theWeakTopology})\\
$\displaystyle \cH^d$ & The space of iterated degree measures (IDMs) of order $d$ (\cpageref{notation:comp_D_IDM})\\
$\displaystyle \mathscr{P}(\cH^d)$ & The space of distributions of iterated degree measures (DIDMs) of order $d$ (\cpageref{notation:comp_D_IDM})\\

\end{tabular}
\egroup

\centerline{\bf Metrics}
\bgroup
\def\arraystretch{1.5}
\begin{tabular}{p{1.25in}p{3.25in}}
$\displaystyle d $ & A metric \\
$\displaystyle || \vx ||_p $ & $\ell_p$ norm of $\vx$ \\
$\displaystyle ||\vx ||_\infty$ & the infinity norm of $\vx$ \\
$\displaystyle \OT_d$ & Optimal transport with respect to the distance function $d$ (\cpageref{notation:OT})\\
$\displaystyle \TD$ & IDM distance (\cpageref{notation:TD})\\
$\displaystyle \TMD$ & DIDM mover's distance (\cpageref{notation:TD})\\
$\displaystyle \PP_d$ & Prokhorov metric with respect to the distance function $d$ (\cpageref{appendix:TPD})\\
$\displaystyle \PD$ & IDM Prokhorov distance (\cpageref{appendix:TPD})\\
$\displaystyle \TPD$ & DIDM Prokhorov distance (\cpageref{appendix:TPD})\\
\end{tabular}
\egroup

\centerline{\bf Probability and Information Theory}
\bgroup
\def\arraystretch{1.5}
\begin{tabular}{p{1.25in}p{3.25in}}
$\displaystyle \mathcal{N} (\vmu , \mSigma)$ & Gaussian distribution with mean $\vmu$ and covariance $\mSigma$ \\
\end{tabular}
\egroup

\centerline{\bf Functions}
\bgroup
\def\arraystretch{1.5}
\begin{tabular}{p{1.25in}p{3.25in}}
$\displaystyle f: \A \rightarrow \cC$ & The function $f$ with domain $\A$ and range $\cC$\\
$\displaystyle f(x)$ & The function $f$ evaluated at a point $x$ \\
$\displaystyle f(-)$ & The function $f$ evaluated at some point \\
$\displaystyle f_-$ & The function $f$ evaluated at some point \\
$\displaystyle f \circ g $ & Composition of the functions $f$ and $g$ \\
$\displaystyle \log x$ & Natural logarithm of $x$ \\
$\displaystyle \mathds{1}_\mathrm{condition}$ & is 1 if the condition is true, 0 otherwise\\
$\displaystyle \hat{\mathcal{R}}$ & the empirical risk (\cpageref{p:risk})\\
$\displaystyle \mathcal{R}$ & the statistical risk (\cpageref{p:risk})\\
\end{tabular}
\egroup

\centerline{\bf Message Passing Neural Networks (\cpageref{section:MPNNs})}
\bgroup
\def\arraystretch{1.5}
\begin{tabular}{p{1.25in}p{3.25in}}

$\displaystyle \varphi$ & A $L$-layer MPNN model\\
$\displaystyle (\varphi,\psi)$ & A $L$-layer MPNN model with $\psi$ a readout function \\
$\displaystyle \varphi^{(t)}$ & An update function \\
$\displaystyle \psi$ & An readout function \\
$\displaystyle \gf^{(t)}_-$ & $L$-layer MPNN model graph-signal feutures  for $t\in[L]$\\
$\displaystyle \GF$ & $L$-layer MPNN model with readout graph-signal feutures\\
$\displaystyle \ff^{(t)}_-$ & $L$-layer MPNN model graphon-signal feutures  for $t\in[L]$\\
$\displaystyle \FF$ & $L$-layer MPNN model with readout graphon-signal feutures\\
$\displaystyle \hh^{(t)}_-$ & $L$-layer MPNN model IDM feutures  for $t\in[L]$\\
$\displaystyle \HH$ & $L$-layer MPNN model with readout DIDM feutures\\
\end{tabular}
\egroup
\end{document}